\documentclass[10pt]{article}

\usepackage{setspace}

%%%%%%%%%%%%%%%%%%%%%%%%%%%%%%%%%%%%%%%%%%%%%%%%%%%%%%%%%%%%%%%%%%%%%%%%%%%%%%%%
%%%%%%%%%%%%%%%%%%%%%%%%%%%%%%
%TCIDATA{OutputFilter=LATEX.DLL}
%TCIDATA{Version=4.00.0.2321}
%TCIDATA{LastRevised=Wednesday, January 21, 2004 13:08:39}
%TCIDATA{<META NAME="GraphicsSave" CONTENT="32">}

%\documentclass{article}
\usepackage{amsmath,amsthm, amssymb, latexsym}
\usepackage{amsfonts}
\usepackage{graphicx}
\usepackage{multicol}
\usepackage{caption}
\usepackage{color}
\usepackage{xcolor}
\usepackage{graphicx}
\usepackage{amsmath}
\usepackage{multicol}
\usepackage{float}
\usepackage[a4paper, total={6in, 8in}]{geometry}
\usepackage{color}
\usepackage{subfigure}
\usepackage{url}
\usepackage{bbm}
\usepackage{tabularx}
\usepackage{xcolor}
\usepackage{grffile}
\usepackage[english]{babel}
\usepackage{hyperref}
\usepackage{tikz}
\usepackage{verbatim}
\usepackage{algorithm}
\usepackage{algpseudocode}
\usetikzlibrary{tikzmark,calc,,arrows,shapes,decorations.pathreplacing}
\tikzset{every picture/.style={remember picture}}
\setlength{\textwidth}{16cm} \setlength{\textheight}{22.5cm}
\setlength{\topmargin}{-1.5cm} \setlength{\oddsidemargin}{-1mm}
\setlength{\abovedisplayskip}{3mm}
\setlength{\belowdisplayskip}{3mm}
\setlength{\abovedisplayshortskip}{0mm}
\setlength{\belowdisplayshortskip}{2mm}
\setlength{\baselineskip}{12pt}
\setlength{\normalbaselineskip}{12pt} 
\pagestyle{plain} \raggedbottom
\newtheorem{Theorem}{Theorem}[section]
\newtheorem{Definition}[Theorem]{Definition}
\newtheorem{Proposition}[Theorem]{Proposition}
\newtheorem{Lemma}[Theorem]{Lemma}
\newtheorem{Notation}[Theorem]{Notation}
\newtheorem*{lemma-non}{Lemma}
\newtheorem{Corollary}[Theorem]{Corollary}
\newtheorem*{corollary-non}{Corollary}

\newtheorem{Remark}[Theorem]{Remark}
\newtheorem*{remark-non}{Remark}
\newtheorem{Example}[Theorem]{Example}
\newtheorem{Assumption}[Theorem]{Assumption}

\newcommand{\N}{\mathbb{N}}
\newcommand{\Z}{\mathbb{Z}}
\newcommand{\R}{\mathbb{R}}

\newcommand{\E}{\mathbb{E}}
\newcommand{\bb}[1]{\boldsymbol{#1}}

\newcommand{\ra}{\rightarrow}
\newcommand{\norm}[1]{\lVert #1 \rVert}

\definecolor{crimson}{RGB}{220, 20, 60}

\begin{document}
	
	\title{Mixed moving average field guided learning for spatio-temporal data}
	\author{\small{Imma Valentina Curato\footnote{TU Chemnitz, Faculty of Mathematics, Reichenhainer Str. 39, 09126 Chemnitz, Germany. {\sc E-mail:} imma-valentina.curato@math.tu-chemnitz.de.} \, , Orkun Furat\footnote{Ulm University, Institute of Stochastics, Helmholtzstrae 18, 89069 Ulm, Germany. {\sc E-mail:} orkun.furat@uni-ulm.de.} \,,  Lorenzo Proietti \footnote{TU Chemnitz, Faculty of Mathematics, Reichenhainer Str. 39, 09126 Chemnitz, Germany. {\sc E-mail:} lorenzo.proietti@math.tu-chemnitz.de.} and Bennet Str\"oh \footnote{Imperial College, Department of Mathematics, South Kensington Campus, SW7 2AZ London, United Kingdom. {\sc E-mail:} b.stroh@imperial.ac.uk.}}}
	
\onehalfspacing
	
	%\date{}
	\maketitle
	
	\textwidth=160mm \textheight=225mm \parindent=8mm \frenchspacing
	\vspace{3mm}

\begin{abstract}
Influenced mixed moving average fields are a versatile modeling class for spatio-temporal data. However, their predictive distribution is not generally known. Under this modeling assumption, we define a novel spatio-temporal embedding and a theory-guided machine learning approach that employs a generalized Bayesian algorithm to make ensemble forecasts. We use Lipschitz predictors to determine fixed-time and any-time PAC Bayesian bounds in the batch learning setting. Performing causal forecast is a highlight of our methodology as its potential application to data with temporal and spatial short and long-range dependence. We then test the performance of our learning methodology by using linear predictors and data sets simulated from a spatio-temporal Ornstein-Uhlenbeck process.
\end{abstract}
{\it \textbf{MSC 2020}: primary 60E07, 60E15, 60G25, 60G60; 
	secondary 62C10.}  	
\\
{\it \textbf{Keywords}: stationary models, weak dependence, randomized estimators, ensemble forecast, causal forecasts.}

	\section{Introduction}
\label{intro}

Analyzing spatio-temporal data introduces various methodological challenges. These include determining models that can account for the serial correlation observed along their temporal and spatial dimensions and that, at the same time, can also enable forecasting tasks. 
Statistical models as Gaussian processes \cite{Bai12}, \cite{DK13}, \cite{GP}, and \cite{Stein}; spatio-temporal kriging \cite{CW11}, and \cite{Monte}; space-time autoregressive moving average models \cite{MovingAverage}; point processes \cite{Point}, and hierarchical models \cite{CW11} are very versatile in modeling the spatio-temporal correlation observed in the data and can deliver forecasts once the variogram or the data distribution (up to a set of parameters) is carefully chosen in relation to the studied phenomenon and practitioners' experience. In a nutshell, such choices allow access to the models' predictive distribution. 

Suppose we want to avoid making any explicit choice regarding the data distribution. In this case, we can alternatively use deep learning methodologies to perform forecasting tasks in a spatio-temporal framework, see \cite{Amato}, \cite{Nature}, \cite{Shi17}, \cite{Shi18} for a review, or a video frame prediction algorithm as in \cite{MS16} and \cite{AImink}. Deep learning techniques can successfully extract spatio-temporal features and learn the \emph{inner law} of an observed spatio-temporal system. However, these models lack interpretability, i.e., it is not possible to infer the correlation and causal relationship between variables in different space-time points that the models consider, and typically, no proof of their generalization performances is available in a spatio-temporal framework for dependent data. On the other hand, video prediction algorithms presented in \cite{MS16} and \cite{AImink} retain a \emph{causal interpretation} of the relationship between different space-time points. However, as in the case of deep learning algorithms, there is no  proof of their generalization performances. 

This paper proposes a novel theory-guided machine learning methodology for spatio-temporal data that enables one-time ahead ensemble forecasts based on moment assumptions (no further assumptions on the data distribution are needed), an opportune spatio-temporal embedding, and a generalized Bayesian algorithm. A theory-guided machine learning methodology is a hybrid procedure that employs a stochastic model in synergy with a learning algorithm. Such methodologies have started to gain prominence in several scientific disciplines such as earth science, quantum chemistry, bio-medical science, climate science, and hydrology modeling as, for example, described in \cite{T24}, \cite{theoryg}, \cite{PI}, \cite{Intrinsic}, and \cite{Nature}. It is important to emphasize that in these works the data are typically considered outputs of (deterministic) dynamical systems driven by partial differential equations. We define a theory guided machine learning methodologies for data generated by a random field.

We call our methodology \emph{mixed moving average field guided learning} or MMAF-guided learning. In particular, we analyze \emph{raster data cubes} \cite{Raster}, which are described for dimension $d=2$ in Section \ref{sec4.2}, and that are nowadays generated in environmental monitoring, from satellite observations, and climate and weather numerical models' outputs. Our methodology applies, in general, to raster data having spatial dimension $d \geq 1$, and we assume that the such data are generated by an \emph{influenced mixed moving average field} (MMAF, in short), see Definition \ref{mmaf}. Such a class of random fields has been introduced in \cite{CSS20} and allows modeling the correlation and the causal relationship in different space-time points using ambit sets, see \cite{Ambit} \cite{BNS11} \cite{STOU}, and \cite{MSTOU}. Such models have been so far employed to model data in environmental monitoring \cite{HIG02,STOU}, imaging analysis \cite{brain}, and electricity networks \cite{Graph22}. They allow modeling Gaussian and non-Gaussian distributed data; they can be non-Markovian and have non-separable covariance functions. Moreover, they are stationary and $\theta$-lex weakly dependent, as proven in \cite[Section 3.3]{CSS20}, and allow modeling temporal and spatial short and long-range dependence. A drawback of employing MMAF in forecasting tasks is that their predictive distribution is not explicitly known. To our knowledge, the only available results on the predictive distribution of an MMAF in a spatio-temporal framework can be found in \cite[Theorem 13]{STOU} for a Gaussian spatio-temporal Ornstein-Uhlenbeck process.

We then select a generalized Bayesian algorithm, i.e., a so-called \emph{randomized estimator}, which is a regular conditional probability on the class of the Lipschitz functions $\mathcal{H}$, see Definition \ref{datadep}. The latter is then employed to make ensemble forecasts. We call a function $h \in \mathcal{H}$ a \emph{predictor}. Linear models, neural network architectures with feed-forward and convolutional modules \cite{LipschitzConv,DeepLipschitz}, and Lipschitz modifications of the transformer architecture \cite{LipschitzSelfAttention, Lipformer} are between the predictors belonging to $\mathcal{H}$. The selection of a \emph{randomized estimator} is guided by the design of a spatio-temporal embedding of an observed \emph{raster data cube}, which gives us the training data set on which the estimator can be learned. The precise construction of the spatio-temporal embedding is given in Section \ref{sec4.2} and depends on several parameters that must be opportunely tuned. Therefore, given a \emph{raster data cube}, there exist different ways to pre-process them into a training data set. Such a procedure ensures casual forecasts and good generalization performance, which, in the context of our paper, means obtaining non-vacuous PAC Bayesian bounds. In the paper, we present how to \emph{guide} the design of a Dirac delta mass concentrated on the Empirical Risk Minimizer and a \emph{randomized Gibbs estimator}. 

%, which minimizes the bounds and has the best generalization performance in the estimators' class.

%Extending bibliography on PAC Bayesian bounds.

\subsection{Setting}
\label{setting}
Let $\bb{S}:= ((\bb{X}_i,\bb{Y}_i)^\top)_{i \in \Z}$ be a random vector defined on the canonical probability space $(\Omega, \mathcal{F}, \mathbb{P})$, see \cite[Chapter 7]{Billi} for more details on its definition. Here, each $(\bb{X}_i,\bb{Y}_i)$ is identically distributed and has values in $\mathcal{X}\times \mathcal{Y}$ (Euclidean Spaces). In particular, $\Omega$ is the space of all possible trajectories or realizations of the process $\bb{S}$. We further assume that $\bb{S}$ is a sample from an MMAF with finite second moments as carefully described in Section \ref{sec4.2}. $\bb{S_m}$ indicates a finite dimensional distribution of the process $\bb{S}$ of length $m$, and a \emph{training data set} is one of its realization which we indicate with $S_m:=((X_i,Y_i)^{\top})_{i=1}^m$ throughout. We also call $S:=((X_i,Y_i)^{\top})_{i \in \Z}$ a realization from $\bb{S}$.

Let $\mathcal{H}$ be the set of all Lipschitz functions $h:\mathcal{X}\to\mathcal{Y}$, and $L:\mathcal{X}\times \mathcal{Y} \to [0,\infty]$ a loss function. We define the \emph{generalization error (out-of-sample risk)} as
\begin{equation}
	\label{gen_err}
	R(h)= \E[ L(h(\bb{X}),\bb{Y}) ], \
\end{equation}
where $(\bb{X},\bb{Y})$ indicate a general example belonging to $\bb{S}$, and the \emph{empirical error (in-sample risk)}
\begin{equation}
	\label{emp_err}
	r(h,\omega)=\frac{1}{m} \sum_{i=1}^m L(h(\bb{X_i}(\omega)),\bb{Y_i}(\omega)),
\end{equation}
which is determined for a particular realization $\omega \in \Omega$ of the finite dimensional distribution $\bb{S}_m$. 
The function $L$ is used to measure the discrepancy between a predicted output $h(\bb{X})$ and the true output $\bb{Y}$. Using the in-sample risk, we measure the performance of a given predictor $h$ just over an observed training data set $S_m$. In contrast, the out-of-sample risk gives us the performance of a predictor depending on the unknown distribution of the data $\mathbb{P}$. We then need a guarantee that a selected predictor will perform well when used on a set of out-of-sample observations, i.e., not belonging to $S_m$. We can also rephrase the problem as finding a predictor $h$ for which the difference between the out-of-sample and in-sample risk $R(h)-r(h,\omega)$ is as small as possible. We call the latter \emph{generalization gap}. The classical PAC framework aims to find a bound on the generalization gap that holds with high probability $\mathbb{P}$; see, for example, \cite{SRM} and \cite{VP2000}. Such probability inequality is also called a \emph{generalization bound}. The acronym PAC stands for Probably Approximately Correct and may be traced back to \cite{V84}. A PAC inequality states that with an arbitrarily high probability (hence "probably"), the performance (as provided by the generalization gap) of a learning algorithm is upper-bounded by a term decaying to an optimal value as more data is collected (hence "approximately correct"). Note that we drop the dependence on $\omega \in \Omega$ in $r(h)$ to ease the notations in the following.

In the paper, we use a PAC Bayesian approach, also known as \emph{generalized Bayesian approach}. First, we select a \emph{reference distribution} $\pi$ on the space $(\mathcal{H},\mathcal{T})$, where $\mathcal{T}$ indicates a $\sigma$-algebra on the space $\mathcal{H}$. The reference distribution gives a \emph{structure} on the space $\mathcal{H}$, which we can interpret as our belief that certain predictors will perform better than others. The choice of $\pi$, therefore, is an indirect way to make the size of $\mathcal{H}$ come into play; see \cite[Section 3]{C2004} for a detailed discussion on the latter point. Therefore, $\pi$ belongs to $\mathcal{M}_+^1(\mathcal{H})$, which denotes the set of all probability measures on the measurable set $(\mathcal{H},\mathcal{T})$. We then aim to determine a \emph{randomized estimator}. To introduce the latter, we need first the following definition.
\begin{Definition}
	\label{regular}
	Let $(\Omega, \mathcal{F},\mathbb{P})$ be a probability space,   $(\mathcal{H},\mathcal{T})$ a measurable space,  $\bb{\hat{h}}:\Omega \to \mathcal{H}$ a random element, and $\mathcal{G}$ a sub-$\sigma$-algebra of $ \mathcal{F}$. The function   
	
	\begin{align*}
		\mathbb{P}_{\bb{\hat{h}}|\mathcal{G}}(\cdot|\mathcal{G})(\cdot): & \,\Omega \times \mathcal{T} \to [0,1]\\
		&(\omega,E) \to \mathbb{P}_{\bb{\hat{h}}|\mathcal{G}}(E|\mathcal{G})(\omega)
	\end{align*}
	
	is a \emph{regular conditional distribution} of $\bb{\hat{h}}$ given $\mathcal{G}$ if:
	\begin{itemize}
		\item for any $E \in \mathcal{T}$, the map $\omega \to \mathbb{P}_{\bb{\hat{h}}|\mathcal{G}}(E|\mathcal{G})(\omega) $ is $\mathcal{G}$-measurable and a variant of the conditional probability $\mathbb{P}(\bb{\hat{h}} \in E | \mathcal{G})$, i.e.,
		\[
		\mathbb{P}_{\bb{\hat{h}}|\mathcal{G}}(E|\mathcal{G})(\omega)= \mathbb{P}(\bb{\hat{h}} \in E | \mathcal{G})(\omega) \,\,\, \text{a.s.}
		\]
		\item for any $\omega \in \Omega$, $\mathbb{P}_{\bb{\hat{h}}|\mathcal{G}}(\cdot|\mathcal{\mathcal{G}})(\omega)$  is a probability measure on $(\mathcal{H},\mathcal{T})$.
		%	\item for every $A \in \mathcal{T}$ and $G \in \mathcal{G}$, the measure $\mathbb{P}$ satisfies:
		%	\[
		%	\mathbb{P}( G \cap \bb{\hat{h}}^{-1}(A)) =\int_G \mathbb{P}_{\bb{\hat{h}}|\mathcal{G}}(A|\mathcal{\mathcal{G}})(\omega) \, \, \mathbb{P}(d\omega).
		%	\]
		%	In particular, for $\Omega \in \mathcal{G}$, we obtain the definition of a distribution function on $\mathcal{H}$, namely 
		%	\[
		%	\mathbb{P} \circ \bb{\hat{h}}^{-1}(A) =\int_\Omega \mathbb{P}_{\bb{\hat{h}}|\mathcal{G}}(A|\mathcal{\mathcal{G}})(\omega) \, \, \mathbb{P}(d\omega).
		%	\]
	\end{itemize}
\end{Definition}

We assume throughout that the measurable space $(\mathcal{H},\mathcal{T})$ is a Borel space, i.e. $\mathcal{T}$ is a countably generated $\sigma$-algebra. Then, the regular conditional distribution of $\bb{\hat{h}}$ given $\mathcal{G}$  exists, see  \cite[Theorem 5, Chapter II.7]{shiryaev}. We can now give a formal definition of a \emph{randomized estimator} that follows \cite[Section 1]{C2007}.

\begin{Definition}
	\label{datadep}
	Let $\bb{S}=((\bb{X}_i,\bb{Y}_i)^{\top})_{i \in \Z}$ be a random vector on the probability space $(\Omega, \mathcal{F}, \mathbb{P})$, $(\mathcal{H},\mathcal{T})$ a Borel space, and  $\mathcal{G}_m=\sigma\{\bb{S}_m\}$.  We define a \emph{randomized estimator} $\hat{\rho}$ as the regular conditional distribution of $\bb{\hat{h}}$ given $\mathcal{G}_m$, i.e., for all $\omega \in \Omega$, and $E \in \mathcal{T}$, 
	\[
	\hat{\rho}(E,\omega):= \mathbb{P}_{\bb{\hat{h}}|\mathcal{G}_m}(E|\mathcal{G}_m)(\omega).
	\] 
\end{Definition}

From now on, we indicate with $\pi[\cdot]$, $\hat{\rho}[\cdot]$ the expectations with respect to the reference distribution and the randomized estimator. The latter is $\mathbb{P}$-almost surely a conditional expectation w.r.t. $\mathcal{G}_m$. We simply indicate with $\E[\cdot]$ the expectation w.r.t. the probability distribution $\mathbb{P}$. Moreover, we call $\hat{\rho}[R(h)]$ and $\hat{\rho}[r(h)]$ \emph{the average generalization error and the average empirical error}, respectively. 

To evaluate the generalization performance of a randomized estimator $\hat{\rho}$, we determine a so-called \emph{PAC Bayesian bound}, which is a bound on the \emph{(average) generalization gap} defined as  $\hat{\rho}[R(h)]-\hat{\rho}[r(h)]$ holding with high probability $\mathbb{P}$. PAC-Bayesian bounds have proven over the past two decades successful in addressing various learning problems such as classification, sequential or batch learning, and deep learning \cite{Deep1,Primer}. 

MMAF-guided learning applies to bounded loss functions, see Remark \ref{accuracy} for more details on this point. Throughout, for $\epsilon >0$ called the \emph{accuracy level}, we define the \emph{truncated absolute loss} as
\begin{align}
	&L^{\epsilon}(h(\bb{X}),\bb{Y}))= L(h(\bb{X}),\bb{Y})) \wedge \epsilon. \label{loss_const}
\end{align}
The generalization error is then indicated with $R^{\epsilon}(h)= \E[ L^{\epsilon}(h(\bb{X}),\bb{Y}) ]$ and the empirical error with $r^{\epsilon}(h,\omega)=\frac{1}{m} \sum_{i=1}^m L^{\epsilon}(h(\bb{X_i}(\omega)),\bb{Y_i}(\omega))$ for $\omega \in \Omega$. We also drop in this case the dependence on $\omega \in \Omega$ in the empirical error's notation, and indicate the average generalization gap with $\hat{\rho}[R^{\epsilon}(h)]-\hat{\rho}[r^{\epsilon}(h)]$. We then derive PAC Bayesian bound for the latter.

\subsection{Contributions and Outline}

The PAC Bayesian bounds proven in the paper are the first results in the literature for $\theta$-lex weakly dependent data, i.e., data generated by a stationary $\theta$-lex weakly dependent random field $\bb{Z}=(\bb{Z}_t(x))_{(t,x) \in \R \times \R^d}$. The latter is a novel notion of dependence introduced in \cite{CSS20}. When considering a random field sampled on $\mathbb{Z} \times \mathbb{Z}^d$ such that $\E[|\bb{Z}_0(0)|^p] < \infty$ for $p >1$, then $\theta$-lex weak dependence is a more general notion than $\alpha_{\infty,v}$ and $\phi_{\infty,v}$-mixing for random fields being $v \in \N \cup \{\infty\}$, and $\alpha$ and $\phi$-mixing in the particular case of stochastic processes, see \cite[Section 2.3]{CSS20}. In particular, an MMAF is a $\theta$-lex weakly dependent random field, which can then be used to model very general frameworks; see Remark \ref{rate_literature} and Appendix A.

We then analyze fixed-time and any-time PAC Bayesian bounds for data generated by an MMAF. In particular, our fixed-time bounds are explicitly stated in the function of \emph{one single} $\theta$-lex coefficient of the underlying field. In the literature, two comparable bounds exist for time series data which are \emph{explicitly} stated in the function of $\alpha$-mixing  and $\theta_{1,\infty}$-coefficient. The former is presented in \cite[Section 3.2]{Hostile}, and is a bound where it appears a series of $\alpha$-mixing coefficients, which cannot be estimated from observed data and may diverge for power decaying coefficients. The latter is analyzed in \cite[Section 3]{AW12} for bounded $\theta_{1,\infty}$-coefficient, which also cannot be estimated from observed data. Instead, we can estimate the decay rate of the $\theta$-lex-coefficients for specific MMAF models, as for example, the spatio-temporal Ornstein Uhlenbeck (STOU, in short) process and its mixed version called MSTOU process defined in \cite{STOU, MSTOU}, respectively. In the paper, we also discuss the range of applicability of such estimation methodology to other types of MMAFs. The knowledge of the decay rate of the $\theta$-lex coefficient of an MMAF is fundamental in our methodology and allows us to \emph{guide} the choice of a \emph{randomized estimator}, as detailed in Section \ref{sec4.4}, and to assess its generalization performance. 

We start by proving a  \textit{fixed-time} PAC Bayesian bound that holds for all $\bb{S_m}$ with $m \geq 2$ and employs a novel exponential inequality for sums of weakly dependent processes. $\theta$-lex weak dependence is a notion of \emph{projective type} related to an $L_1$-norm, see \cite{CSS20} and Remark \ref{mix_rule1}, and \ref{mix_rule2}. In regards to projective type dependence notions for $L_p$-norm and $p \in [1,\infty]$, there have been proven moment inequalities for partial sums of weakly dependent random fields in \cite{C23} for $p \in [2,\infty]$ and for stochastic processes and $p=1$ in \cite{DN07}. To the best of our knowledge, another exponential inequality for a projective type dependence notion was obtained just for $p=\infty$ in \cite{AW12}. 

In order to give a complete overview of the range of applicability of MMAF-guided learning in the function of different choices of the spatio-temporal embedding, it is necessary to introduce a second fixed-time PAC Bayesian bound. The proof of such result involves the use of an any-time PAC Bayesian bound. The latter is proven using the Ville's maximal inequality for non-negative supermartingales \cite{Ville}, and it holds for all countable sequences of examples; this means \emph{simultaneously} for each $\bb{S_m}$ and $m \geq 1$. Several any-time PAC Bayesian bounds exist in the literature for general dependent data frameworks as discussed in \cite{SMartingale1}. However, they do not hold in the so-called \textit{batch learning case for dependent data}, which our proof covers for bounded losses. 

We then combine the any-time bound with the results proven in \cite{Hostile} applied to a particular residual process. Hence, we obtain a fixed-time PAC Bayesian bound where a moment inequality is involved in its proof instead of an exponential one. In our examples and simulation results, the latter bound allows us to define \emph{randomized estimators} using realistic spatio-temporal embeddings, which means training data sets that can best preserve the serial correlation observed in a given \emph{raster data cube}.

The paper is structured as follows. First, we review the MMAF framework in Section \ref{sec1} and describe its causal interpretation. In this section, we also introduce the STOU and MSTOU processes. These are isotropic random fields for which we compute novel bounds for their $\theta$-lex coefficients. The paper uses the latter to show feasible examples of MMAF-guided learning in the case of temporal and spatial short and long-range dependence. We introduce in Section \ref{sec4} the spatio-temporal embedding applied to raster data cubes and the PAC Bayesian bounds. In Section \ref{sec5}, we give a step-by-step description on how to apply in the practice MMAF-guided learning, and discuss the causal interpretation of the ensemble forecasts. We conclude by analyzing the performance of our methodology for a randomized Gibbs estimator on six simulated data sets from an STOU process with a Gaussian and a normal-inverse-Gaussian distributed L\'evy seed. Appendix A contains further details on the dependence notions discussed in the paper and a review of the estimation methodologies for STOU and MSTOU processes. Appendix B contains detailed proofs of the theoretical results presented in the paper. 

\section{Mixed moving average fields}
\label{sec1}

\subsection{Notations}
\label{sec2.1}
Throughout the paper, we indicate with $\N$ the set of positive integers, $\N_0$ the set of non-negative integers, and $\R^+$ the set of non-negative real numbers. As usual, we write $L^p(\Omega)$ for the space of (equivalence classes of) measurable functions $f:\Omega \to \R$ with finite $L_p$-norm $\|f\|_p$. When $\Omega=\R^n$ and $x \in \Omega$,  $\|x\|_1$ and $\norm{x}$ denote the $L_1$-norm and the Euclidean norm, respectively, and we define $\norm{x}_\infty=\max_{j=1,\ldots,n}|x^{(j)}|$, where $x^{(j)}$ represents the component $j$ of the vector $x$. 

To ease the notations in the following sections, we sometimes indicate the index set $\R\times \R^d$ by $ \R^{1+d}$.
$E\subset E^{\prime}$ denotes a not necessarily proper subset $E$ of a set $E^{\prime}$, $|E^{\prime}|$ denotes the cardinality of $E^{\prime}$ and $dist(E,E^{\prime})=\inf_{i\in E, j\in E^{\prime}} \norm{i-j}_\infty$ indicates the distance of two sets $E, E^{\prime}\subset \R^{1+d}$.  Let $n, k \geq 1$, and $F:\R^n\ra \R^k$, we define $\norm{F}_\infty=\sup_{t\in\R^n}\norm{F(t)}$. 
We indicate with $\Gamma=\{i_1,\ldots,i_u\} \in \R^{1+d}$ for $u \in \N$, a sequence of elements in $\R^{1+d}$. We then define the random vector $\bb{Z}_\Gamma=(\bb{Z}_{i_1},\ldots,\bb{Z}_{i_u})$. In general, we use bold notations when referring to random elements.

In the following, Lipschitz continuous is understood to mean globally Lipschitz. For $u\in\N$, $\mathcal{G}_u^*$ is the class of bounded functions from $\R^u$ to $\R$ and $\mathcal{G}_u$ is the class of bounded, Lipschitz continuous functions from $\R^u$ to $\R$ with respect to the distance $\|\cdot\|_1$ and define the Lipschitz constant as
\begin{gather}
	\label{Lipcost}
	Lip(h)=\sup_{x\neq y}\frac{|h(x)-h(y)|}{\|x-y \|_1}.
\end{gather} 
%\textbf{The above means that we consider the space of the Lipschitz functions equipped with the norm $\|z\|=\sum_{i=1}^u |z|_i$. Note that we use this norm also for proving all the hereditary properties in the paper. Are our definitions and calculations really independent on the norm we choose?}

Hereafter, we often use the lexicographic order on $\R^{1+d}$. Let $t$ and $s$ be indicating a temporal and spatial coordinate. For distinct elements $y=( y_{1,t},y_{1,s},\ldots,y_{d,s}) \in  \R^{1+d}$ and $z=(z_{1,t},z_{1,s},\ldots,z_{d,s})\in \R^{1+d}$ we say $y<_{lex}z$ if and only if $y_{1,t}<z_{1,t}$ or $y_{p,s} <z_{p,s}$ for some $p\in\{1,\ldots,d\}$ and $y_{1,t}=z_{1,t}$ and $y_{q,s}=z_{q,s}$ for $q=1,\ldots,p-1$. Moreover, $y\leq_{lex}z$ if $y<_{lex}z$ or $y=z$ holds. Finally, let $z \in \R^{1+d}$, we define the set $V_z=\{y \in \R^{1+d}: y\leq_{lex}z\}$ and  $V_z^r=V_z\cap \{y\in \R^{1+d}: \norm{z-y}_\infty\geq r \}$ for $r>0$. The definition of the set $V_z^r$ is also used when referring to the lexicographic order on $\Z^{1+d}$. 

\subsection{Definition and properties of MMAF}
\label{sec1.2}

Let $I=H \times \R \times \R^d  $, where $H \subset \R^q$ for $q \geq 1$, and the Borel $\sigma$-algebra of $I$ be denoted  by $\mathcal{B}(I)$ and let $\mathcal{B}_b(I)$ contain all its Lebesgue bounded sets.

\begin{Definition}\label{def:levybasis}
	A family of $\R$-valued random variables $\Lambda=\{\Lambda(B):B \in \mathcal{B}_b(I)\}$ is called a L\'evy basis on $(I,\mathcal{B}_b(I))$ if it is an independently scattered and infinitely divisible random measure. This means that:
	\begin{itemize}
		\item[(i)]  For a sequence of pairwise disjoint elements of $\mathcal{B}_b(I)$, say $\{B_i, i \in \N \}:$
		\begin{itemize}
			\item $\Lambda (\bigcup_{i\in\N} B_i)= \sum_{i\in\N}\Lambda(B_i)$ almost surely when $\bigcup_{i\in\N} B_i \in \mathcal{B}_b(I)$
			\item and $\Lambda(B_i)$ and $\Lambda(B_j)$ are independent for $i\neq j$.
		\end{itemize}
		\item[(ii)] Let $B \in \mathcal{B}_b(I)$. Then, the random variable $\Lambda(B)$ is infinitely divisible, i.e., for any $i \in \N$, there exists a law $\mu_i$ such that the law $\mu_{\Lambda(B)}$ can be expressed as $\mu_{\Lambda(B)}=\mu_i^{*i}$, the $i$-fold convolution of $\mu_i$ with itself.
	\end{itemize}
\end{Definition}

\noindent For more details on infinitely divisible distributions, we refer the reader to \cite{S}. In the following, we will restrict ourselves to L\'evy bases which are homogeneous in space and time and factorizable, i.e., L\'evy bases with characteristic function
\begin{equation}\label{equation:fact}
	\E\left[e^{\text{i} u \Lambda(B)} \right]=e^{\Phi(u)\Pi(B)}
\end{equation}
for all $u\in \R$ and $B\in\mathcal{B}_b(I)$, where $\Pi=\pi\times\lambda_{1+d}$ is the product measure of the probability measure $\pi$ on $H$ and the Lebesgue measure $\lambda_{1+d}$ on $\R \times \R^d$.  Note that when using a L\'evy basis defined on $I=\R \times \R^d$, $\Pi= \lambda_{1+d}$. Furthermore, 
\begin{align}\label{equation:phi}
	\Phi(u)=\text{i}\gamma \,u -\frac{1}{2}  \sigma^2 u^2 +\int_{\R} \left(e^{\text{i} u x}-1-\text{i} u x \bb{1}_{[0,1]}(|x|)\right)\nu(dx)
\end{align}
is the cumulant transform of an infinitely divisible distribution with characteristic triplet $(\gamma,\sigma^2,\nu)$, where $\gamma \in \R$, $\sigma^2 \geq 0$ and $\nu$ is a L\'evy-measure on $\R$, i.e.,
\begin{align*}	
	\nu(\{0\})=0 \quad \text{and} 
	\int_{\R}\left(1\wedge x^2\right)\nu(dx)<\infty.
\end{align*}

The quadruplet $(\gamma, \sigma^2,\nu,\pi)$ determines the distribution of the L\'evy basis, and therefore it is called its \emph{characteristic quadruplet}.
An important random variable associated with the L\'evy basis, is the so-called \emph{L\'evy seed}, which we define as the random variable $\Lambda^{\prime}$ having as cumulant transform (\ref{equation:phi}), that is 
\begin{equation}
	\label{equation:fact2}
	\E\left[e^{\text{i} u\Lambda^{\prime}}\right]=e^{\Phi(u)}.
\end{equation}

By selecting different L\'evy seeds, it is easy to compute the distribution of $\Lambda(B)$ for $B \in \mathcal{B}_b(I)$, for example, when $I= \R \times \R^d$. In the following two examples, we compute the L\'evy bases used in generating the data sets in Section \ref{sec5.1}. 
\begin{Example}[Gaussian L\'evy basis]
	\label{gau}
	Let $\Lambda^ {\prime} \sim \mathcal{N}(\gamma,\sigma^2)$, then its characteristic function is equal to $\exp(\text{i}\gamma u-\frac{1}{2}\sigma^2 u^2) $. Because of  (\ref{equation:fact}), we have, in turn, that the characteristic function of $\Lambda(B)$ is equal to 
	$\exp(\text{i}\gamma u \lambda_{1+d}(B)-\frac{1}{2}\sigma^2 \lambda_{1+d}(B) u^2) $. In conclusion,
	$\Lambda(B)\sim \mathcal{N}(\gamma \lambda_{1+d}(B), \sigma^2 \lambda_{1+d}(B))$ for any $B \in \mathcal{B}_b(I)$.  
\end{Example}

\begin{Example}[Normal Inverse Gaussian L\'evy basis]
	\label{nig}
	Let $K_1$ denote the modified Bessel function of the third order and index $1$. Then, for $x \in \R$, the NIG distribution is defined as
	\[
	f(x:\alpha,\beta,\mu,\delta)=\alpha \delta (\pi^2(\delta^2+(x-\mu)^2))^{-\frac{1}{2}} \exp(\delta \sqrt{\alpha^2-\beta^2}+\beta(x-\mu)) K_1 (\alpha\sqrt{\delta^2+(x-\mu)^2}),
	\] 
	where $\alpha, \beta,\mu$ and $\delta$ are parameters such that $\mu \in \R$, $\delta >0$ and $0\leq |\beta|<\alpha.$
	Let $\Lambda^{\prime}\sim NIG(\alpha,\beta,\mu,\delta)$, then by (\ref{equation:fact}) we have that $\Lambda(B) \sim NIG ( \alpha, \beta, \mu \lambda_{1+d}(B), \delta\lambda_{1+d}(B))$ for all $B \in \mathcal{B}_b(I)$.
\end{Example}

We now follow \cite{Ambit}, and \cite{BNS04} to formally define ambit sets. 

\begin{Definition}
	A family of ambit sets $(A_t(x))_{(t,x) \in \R \times \R^d}\subset \R\times \R^d$
	satisfies the following properties:
	\begin{gather}\label{equation:ambitset}
		\begin{cases}
			A_t(x)=A_0(0)+(t,x), \text{ (Translation invariant)}\\
			A_s(x)\subset A_t(x), \,\, \text{for $s<t$}\\
			A_t(x)\cap (t,\infty)\times\R^d=\emptyset \text{ (Non-anticipative)}.
		\end{cases}
	\end{gather} 
\end{Definition} 
We further assume that the random fields $\bb{Z}:=(\bb{Z}_t(x))_{(t,x)\in \R \times \R^d}$ in the paper are \emph{influenced}. By this name we mean random fields defined on a given complete probability space $(\Omega, \mathcal{F}^{\prime},P)$, equipped with the \emph{filtration of influence} (in the sense of Definition 3.8 in \cite{CSS20}) $\mathbb{F}=(\mathcal{F}_{(t,x)})_{ (t,x) \in \R \times \R^d}$ generated by $\Lambda$ and the family of ambit sets $(A_t(x))_{(t,x) \in \R \times \R^d}\subset \R\times \R^d$, i.e., each $\mathcal{F}_{(t,x)}$ is the $\sigma$-algebra generated by the set of random variables $\{ \Lambda(B) : B \in \mathcal{B}_b(H\times A_t(x))\}$, which are adapted to $\mathbb{F}$. We call our field \emph{adapted} to the filtration of influence $\mathbb{F}$ if it is measurable with respect to the $\sigma$-algebra $\mathbb{F}$ for each $(t,x) \in \R \times \R^d$. 

Moreover, we work with spatio-temporal stationary random fields in the following. 

\begin{Definition}[Spatio-temporal stationarity]
	\label{statio}
	%		\begin{itemize}
		%			\item \textbf{Temporal stationarity}: We say that $\bb{Z}_t(x)$ is temporal stationary if for every $x \in  \R^d$, $\{\bb{Z}_t(x)\}_{t \in \R}$ is a strictly stationary temporal process. That is, for every $n \in \N$ and $\tau \in \R$, such that $t_1,\ldots, t_n \in \R$, the joint distribution of $(\bb{Z}_{t_1}(x),\ldots,\bb{Z}_{t_n}(x))$ is the same as that of $(\bb{Z}_{t_1+\tau}(x),\ldots,\bb{Z}_{t_n+\tau}(x))$.
		%			\item \textbf{Spatial stationarity}: We say that $\bb{Z}_t(x)$ is spatial stationary if for every $t \in  \R$, $\{\bb{Z}_t(x)\}_{x \in \R^d}$ is a strictly stationary temporal process. That is, for every $n \in \N$ and $u \in \R^d$, such that $x_1,\ldots,x_n \in \R^d$, the joint distribution of $(\bb{Z}_{t}(x_1),\ldots,\bb{Z}_{t}(x_n))$ is the same as that of $(\bb{Z}_{t}(x_1+u),\ldots,\bb{Z}_{t}(x_n+u))$. 
		%			\item 
		%		\end{itemize}
	We say that $\bb{Z}$ is spatio-temporal stationary if for every $n \in \N$, $\tau \in \R$, $u \in \R^d$, $t_1,\ldots,t_n \in \R$ and $x_1,\ldots,x_n \in \R^d$, the joint distribution of $(\bb{Z}_{t_1}(x_1),\ldots,\bb{Z}_{t_n}(x_n))$ is the same as that of $(\bb{Z}_{t_1+\tau}(x_1+u),\ldots,\bb{Z}_{t_n+\tau}(x_n+u))$. 
\end{Definition}

We use simply the term \emph{stationary} throughout when referring to processes satisfying Definition \ref{statio}.
We can now formally define the stochastic model underlying our learning methodology.

\begin{Definition}[MMAF]
	\label{mmaf}
	Let $\Lambda=\{\Lambda(B), B\in \mathcal{B}_b(I)\}$ a L\'evy basis, $f:H\times\R \times \R^d \rightarrow \R$ a $\mathcal{B}(I)$-measurable function and $A_t(x)$ an ambit set. Then, the stochastic integral
	\begin{equation}
		\label{mmaf_i}
		\textbf{Z}_t(x)=\int_{H} \int_{A_t(x)} \, f(A,x-\xi,t-s) \,\,  \Lambda(dA,d\xi,ds),~ (t,x)\in \mathbb{R} \times \mathbb{R}^{d},
	\end{equation}
	is adapted to the filtration $\mathbb{F}$, stationary, and its distribution is infinitely divisible. We call the  $\R$-valued random field $\bb{Z}$ an (influenced) mixed moving average field and $f$ its kernel function. 
\end{Definition}

\begin{Remark}
	On a technical level, we assume all stochastic integrals in this paper to be well defined in the sense of Rajput and Rosinski \cite{RR1989}. For more details, including sufficient conditions on the existence of the integral as well as the explicit representation of the characteristic triplet of the MMAF's infinitely divisible distribution (which can be directly determined from the characteristic quadruplet of $\Lambda$), we refer to \cite[Section 3.1]{CSS20}. In the latter, there can also be found a multivariate definition of a L\'evy basis and an MMAF.
\end{Remark}

Important examples of MMAFs are the spatio-temporal Ornstein-Uhlenbeck field (STOU) and the mixed spatio-temporal Ornstein-Uhlenbeck field (MSTOU), whose properties have been thoroughly analyzed in \cite{STOU} and \cite{MSTOU}. There are also interesting time series models in the MMAF framework, which we present in section \ref{timeseries_app}.

\begin{Example}[STOU process]
	\label{prop_stou}
	Let $\Lambda=\{\Lambda(B), B\in \mathcal{B}_b(I)\}$ be a L\'evy basis, $f:\R \times \R^d \rightarrow \R$ a $\mathcal{B}(I)$-measurable function defined as $f(s,\xi)=\exp(-A s)$ for $A>0$, and $A_t(x)$ be defined as in (\ref{lightcone}). Then, the STOU is defined as 
	\begin{equation}
		\label{stou}
		\bb{Z}_t(x):=\int_{A_t(x)} \exp(-A (t-s)) \Lambda(ds,d\xi).
	\end{equation}
	The STOU is a stationary and Markovian random field. Moreover, an STOU exhibits exponential temporal autocorrelation (just like the temporal Ornstein-Uhlenbeck process) and has a spatial autocorrelation structure determined by the shape of the ambit set. In addition, this class of fields admits non-separable autocovariances, which are desirable in practice, see Example \ref{corr_stou1}. 
\end{Example}

\begin{Example}[MSTOU process]
	\label{prop_mstou}
	An MSTOU process is defined by \emph{mixing} the parameter $A$ in the definition of an STOU process; that is, we assume that $\pi$ has support in $H=(0,\infty)$. This modification allows the determination of random fields with power-decaying autocovariance functions, see example \ref{ex_mstou}. Let $\Lambda=\{\Lambda(B), B\in \mathcal{B}_b(I)\}$ be a L\'evy basis, $f:(0,\infty) \times\R \times \R^d \rightarrow \R$ a $\mathcal{B}(I)$-measurable function defined as $f(A,s,\xi)=\exp(-A s)$, and $A_t(x)$ be defined as in (\ref{lightcone}). Moreover, let $l(A)$ be the density of $\pi$ with respect to the Lebesgue measure such that
	\[
	\int_{0}^{\infty} \frac{1}{A^{d+1}} l(A) dA < \infty.
	\]
	Then, the MSTOU is defined as 
	\begin{equation}
		\label{mstou}
		\bb{Z}_t(x):=\int_0^{\infty}\int_{A_t(x)} \exp(-A (t-s)) \Lambda(dA, ds,d\xi).
	\end{equation}
\end{Example}

The MMAF framework has a \emph{causal interpretation} under the following assumption.
\begin{Assumption}
	\label{ass_light}
	For a $c>0$, we consider
	\begin{equation}
		\label{lightcone}
		A_t(x) := \big\{(s,\xi)\in\mathbb{R}\times\mathbb{R}^d: s\leq t \,\, \textit{and}\,\, \|x-\xi\| \leq c |t-s|  \big\}.
	\end{equation}
\end{Assumption}	
To explain why using cone-shaped ambit sets allows to have such interpretation, we borrow the concept of \emph{lightcone} from special relativity. 

A lightcone  describes the possible paths that the light can make in space-time leading to a space-time point $(t,x)$ and the ones that lie in its future. In the context of our paper, we use their geometry to identify the space-time points having a causal relationship. For a point $(t,x)$, $c >0$ and by using the Euclidean norm to assess the distance between different space-time points, we define a \emph{lightcone} as the set
\begin{equation*}
	\mathcal{A}^{light}_t(x)=\big\{(s,\xi)\in\mathbb{R}\times\mathbb{R}^d: \|x-\xi\| \leq c |t-s| \big\}.
\end{equation*}
The set $\mathcal{A}^{light}_t(x)$ can be split into two disjoint sets, namely, $A_t(x)$ and $A_t(x)^+$. The set $A_t(x)$ is called \emph{past lightcone}, and its definition corresponds to the one of  a cone-shaped ambit set (\ref{lightcone}).

The set
\begin{equation}
	\label{future_lightcone}
	A_t(x)^+=\{(s,\xi)\in \R\times \R^d: s >t \, \textrm{and} \, \,  \|x-\xi\| \leq c |t-s| \},
\end{equation}
is called instead the \emph{future lightcone}. By using an influenced MMAF on a cone-shaped ambit set as the underlying model, we implicitly assume that the following sets
\begin{equation}
	\label{lightcone2}
	l^{-}(t,x)=\{\bb{Z}_s(\xi): (s,\xi) \in A_t(x) \setminus (t,x) \} \,\,\, \textrm{and} \,\,\, l^{+}(t,x)=\{\bb{Z}_s(\xi): (s,\xi) \in A_t(x)^+ \}
\end{equation}
are respectively describing the values of the field that have a direct influence on the determination of $\bb{Z}_t(x)$ and the future field values influenced by $\bb{Z}_t(x)$. We can then uncover the causal relationship between space-time points described above by estimating the constant $c$ from observed data, which we call \emph{the speed of information propagation} in the physical system under analysis. A similar approach to the modeling of causal relationships can be found in several machine learning frameworks, as in \cite{MS16}, \cite{Disco}, and \cite{AImink}. In \cite{MS16} and \cite{Disco}, the sets (\ref{lightcone2}) are considered and employed to discover coherent structures, as defined in \cite{Coherent}, in spatio-temporal physical systems and to perform video frame prediction, respectively. In \cite{AImink}, forecasts are performed by embedding spatio-temporal information on a Minkowski space-time. Hence, the concept of lightcones enters into play in the definition of their algorithm. In statistical modeling, we typically have two equivalent approaches towards causality: structural causal models, which rely on the use of directed acyclical graphs (DAG) \cite{Pearl}, and Rubin causal models, which rely upon the potential outcomes framework \cite{Rubin}. The concept of causality employed in this paper can be inscribed into the latter. In fact, by using MMAFs on cone-shaped ambit sets, the set $l^{+}(t,x)$ describes the possible future outcomes that can be observed starting from the spatial position $(t,x)$. 

Finally, we consider the following definitions of temporal and spatial short and long-range dependence in the paper.

\begin{Definition}[Short and long range dependence]
	A random field $(\bb{Z}_t(x))_{(t,x)\in \R\times \R^d}$ is said to have \emph{temporal short-range dependence} if 
	\[
	\int_{0}^{\infty} Cov(\bb{Z}_t(x),\bb{Z}_{t+\tau}(x)) \,\, d\tau < \infty,
	\]
	and \emph{temporal long-range dependence} if the integral above is infinite.
	Similarly, an isotropic random field, see Definition \ref{isotropy}, has \emph{spatial short-range dependence} if 
	\[
	\int_{0}^{\infty} C(r) \,\, dr <\infty,
	\]
	where $Cov(\bb{Z}_t(x),\bb{Z}_t(x+u))=C(|u|)$ and $r=|u|$. It is said to have \emph{spatial long-range dependence} if the integral is infinite.
\end{Definition}

Under Assumption \ref{ass_light}, an STOU process admits temporal and spatial short-range dependence, whereas an MSTOU process can admit temporal and spatial short and long-range dependence by carefully modeling the parameter $A$.

\begin{Example}
	\label{memory}
	Let $\bb{Z}$ be an MSTOU process as defined in Example \ref{prop_mstou} for $d=1$, Assumption \ref{ass_light} hold, and $l(A)=\frac{\beta^{\alpha}}{\Gamma(\alpha)} A^{\alpha-1} \exp(-\beta A)$ be the Gamma density with shape and rate parameters $\alpha >d+1$ and $\beta>0$. From the calculations in Example \ref{ex_mstou} and by setting $u=0$, then $\bb{Z}$ has temporal short-range dependence because for $\alpha >3$, 
	\begin{align*}
		\int_0^{\infty} Cov(\bb{Z}_t(x),\bb{Z}_{t+\tau}(x)) \, d\tau&= \frac{c\beta^{\alpha}Var(\Lambda^{\prime})}{2(\alpha-2)(\alpha-1)} \int_0^{\infty} (\beta+\tau)^{-(\alpha-2)} \, d\tau\\
		&= \frac{c\beta^3 Var(\Lambda^{\prime})}{2(\alpha-1)(\alpha-2)(\alpha-3)}.
	\end{align*}
	is finite.
	This integral is infinite for $2<\alpha\leq 3$, and we then say that the MSTOU process has temporal long-range dependence. We obtain spatial short or long-range dependence for the same choice of parameters. In fact, for $r=|u|$ and $\tau=0$, and $\alpha>3$
	\begin{align*}
		\int_0^{\infty} C(r) \, dr &=\frac{c\beta^{\alpha}Var(\Lambda^{\prime})}{2(\alpha-2)(\alpha-1)} \int_0^{\infty} (\beta+r/c)^{-(\alpha-2)} \, dr\\
		&= \frac{c\beta^3 Var(\Lambda^{\prime})}{2(\alpha-1)(\alpha-2)(\alpha-3)},
	\end{align*}
	converges, whereas the integral diverges for $2<\alpha\leq 3$.
\end{Example}

\subsection{Weak dependence coefficients in MMAF-guided learning}
\label{sec2.3}

We start by giving the definitions of the dependence notions involved in our learning methodology.

\begin{Definition}\label{thetaweaklydependent}
	Let $\bb{Z}$ be an $\R$-valued random field. Then, $\bb{Z}$ is called $\theta$-lex-weakly dependent if
	\begin{gather*}
		\theta_{lex}(r)=\sup_{u,v\in\N}\theta_{u,v}(r) \underset{r\ra\infty}{\longrightarrow} 0,
	\end{gather*} 
	where
	\begin{equation*}
		\theta_{u,v}(r)=\sup\bigg\{\frac{|Cov(F(\bb{Z}_{\Gamma}),G(\bb{Z}_{\Gamma^{\prime}}))|}{\norm{F}_{\infty}v Lip(G)},\, F\in\mathcal{G}^*_u,\, G\in\mathcal{G}_v,\, \Gamma,\, \Gamma^{\prime},\, |\Gamma|= u,\, |\Gamma^{\prime}|=v \bigg\}
	\end{equation*}
	for $\Gamma=\{t_{i_1},\ldots,t_{i_u}\} \in \R^{1+d}$, $\Gamma^{\prime}=\{t_{j_1},\ldots,t_{j_{v}}\} \in \R^{1+d}$ such that $\Gamma \in V_{\Gamma^{\prime}}^{r}= \bigcap_{l=1}^v V_{t_{j_l}}^r$ for $t_{j_l} \in \Gamma^{\prime}$. We call $(\theta_{lex}(r))_{r\in\R^+}$ the $\theta$-lex-coefficients. 
\end{Definition}

MMAFs are $\theta$-lex weakly dependent random fields, as proven in Proposition \ref{proposition:mmathetaweaklydep}. Moreover, Definition \ref{thetaweaklydependent} is an extension to the random field case of a dependence notion developed for \emph{causal processes} called $\theta$-weak dependence. 

\begin{Definition}\label{thetadependent}
	Let $\bb{Z}$ be an $\R$-valued stochastic process. Then, $\bb{Z}$ is called $\theta$-weakly dependent if
	\begin{gather*}
		\theta(k)=\sup_{u\in\N}\theta_{u}(k) \underset{k\ra\infty}{\longrightarrow} 0,
	\end{gather*} 
	where
	\begin{equation*}
		\theta_{u}(k) =\sup\bigg\{\frac{|Cov(F(\bb{Z}_{\Gamma} ),G(\bb{Z}_{j_1}))|}{\norm{F}_{\infty}Lip(G)},\, F\in\mathcal{G}_u^*, \, G\in\mathcal{G}_1, \, \Gamma, |\Gamma|=u \bigg\}.
	\end{equation*}
	for $\Gamma=\{t_{i_1},\ldots,t_{i_u}\} \in \R$ such that $i_1 \leq i_2 \leq \ldots \leq i_u \leq i_u +k \leq j_1$. We call $(\theta(k))_{k\in\R^+}$ the $\theta$-coefficients. 
\end{Definition}

We use extensively in the proofs of Section \ref{sec4}, the following result.

\begin{Remark}[Projective-type representation of $\theta$-weak dependence] 
	\label{mix_rule1}
	Let $(X_t)_{t \in \Z}$ be a real-valued $\theta$-lex weakly dependent  process, $\mathcal{L}_1=\{g:\R\to\R, \, g \in \mathcal{G}_1,\,\, Lip(g)\leq 1 \}$, $s \in \N$, $j_1 \in \Z$, and $\mathcal{M}=\sigma\{ \bb{X}_t: t \leq j_1 \,\, \text{and} \,\, |t-j_1|\geq s \}$, then it is showed in \cite[Proposition 2.3]{D07} that
	\begin{equation}
		\label{mix2}
		\theta(s)=\sup_{j_1 \in \Z} \sup_{g \in \mathcal{L}_1} \|\E[g(\bb{X}_{j_1})|\mathcal{M}]-\E[g(\bb{X}_{j_1})]\|_1.
	\end{equation}
	An alternative proof of this result can be also found in \cite[Lemma 5.1]{CSS20}.
\end{Remark}

\begin{Remark}[About the parameter $k$ and $r$ in Definition \ref{thetadependent} and \ref{thetaweaklydependent}]
	\label{aboutk}
	Let us start by assuming that $\bb{Z}=(\bb{Z}_i)_{i \in \R}$ is a sequence of independent and identically distributed (in short, i.i.d.) random variables, then for any $F \in \mathcal{G}_u^*$ with $u \in \N$, $G \in \mathcal{G}_1$, and selecting a set $\Gamma=\{i_1,\ldots,i_u\} \in \R$ such that $i_1\leq \ldots\leq i_u\leq i_u+k\leq j$, we have that
	\[
	Cov(F(\bb{Z}_{\Gamma}), G(\bb{Z}_{j}))=0,
	\]
	and $\theta(k)=0$ for all $k \in \R^+$. The process $\bb{Z}$ is $\theta$-weakly dependent and the parameter $k$ is encoding the distance between the marginals
	\[
	\bb{P}:=(\bb{Z}_{i_1},\ldots, \bb{Z}_{i_u}),\,\,\, \textrm{and} \,\,\,
	\bb{F}:=(\bb{Z}_{j}).
	\]
	In terms of the process $\bb{Z}$, the $\sigma$-algebras generated by $\bb{P}$ and $\bb{F}$ represent \emph{past} and \emph{future} events. Obviously, in the case of a sequence of independent random variables, the past plays no role in the unfolding of the future. However, if we consider a $\theta$-weakly dependent process $\bb{Z}$, then it satisfies the inequality
	\[
	Cov(F(\bb{Z}_{i_1},\ldots,\bb{Z}_{i_u}), G(\bb{Z}_j))\leq 2 \|F\|_{\infty} Lip(G) \theta(k).
	\]
	The result above has been proven, for example, in \cite{CS19} and gives a criteria to \emph{measure explicitly} the dependence between past and future. Here, the past is progressively forgotten for $k \to \infty$.
	The parameter $k$ expresses the distance at which we are evaluating the influence of the past on how the future unfolds, and the coefficients $\theta(k)$ is a measure of how fast the past is forgotten.
	
	Similar considerations can be done in the case in which we consider $\bb{Z}=(\bb{Z}_i)_{i \in \R^{1+d}}$ a $\theta$-lex weakly dependent random field for the constant $r \in \R^+$. However, $\bb{P}$ and $\bb{F}$ represent the marginals of lexicographically ordered elements in this case. The lexicographic order in $\R^{1+d}$ substitutes the natural temporal order for stochastic processes defined on $\R$.
	
\end{Remark}

In the MMAF modeling framework, we can show general formulas for the computation of upper bounds of the $\theta$-lex coefficients. The latter is given as a function of the characteristic quadruplet of the driving L\'evy basis $\Lambda$ and the kernel function $f$ in (\ref{mmaf_i}), see Proposition \ref{proposition:mmathetaweaklydep}.

For $d=1,2$, when the MMAF has a kernel function with no spatial component, we can compute a bound for the $\theta$-lex coefficients expressed in terms of the covariances of the field $\bb{Z}$, which can be computed as shown in Section \ref{cov}. These bounds have an expression that allows us to use standard statistical inference tools to infer the decay rate parameter of the coefficients; see, for example, the estimators (\ref{plugin}) and (\ref{plugin2}). In the general framework described in Proposition \ref{proposition:mmathetaweaklydep}, however, similar estimation methodologies are not yet available and remain an interesting open problem.

\begin{Proposition}
	\label{example:spatiotemporaldata}
	Let $\Lambda$ be an $\R$-valued L\'evy basis with characteristic quadruplet $(\gamma, \sigma^2,\nu,\pi)$ and $f:H\times\R\rightarrow\R$ a $\mathcal{B}(H\times\R)$-measurable function not depending on the spatial dimension, i.e.,
	\begin{align}\label{eq:MMAFnospatialdependence}
		\bb{Z}_t(x)=\int_H\int_{A_t(x)} f(A,t-s) \Lambda(dA,ds,d\xi),\quad (t,x)\in\R^{1+d}.
	\end{align}
	%	Let us consider an MSTOU field as in (\ref{mstou}) with values in $\R$ defined on a cone-shaped ambit set (\ref{lightcone}). Then, for
	\begin{enumerate}
		\item[(i)] For $d=1$, if $\int_{|x|>1} x^2 \nu(dx)<\infty$ and $\gamma+\int_{|x|>1}x\nu(dx)=0$, then $\bb{Z}$ is $\theta$-lex weakly dependent and 
		\begin{align*}
			\theta_{lex}(r) \!&\leq\! 2 \left(\!2 Var(\Lambda^{\prime}) \int_0^\infty \int_{A_0(0)\cap A_0(r\min(2,c))}\!\!f(A,-s)^2 dsd\xi\pi(dA) \right)^{1/2}\\
			&=\! 2 \sqrt{2Cov(\bb{Z}_0(0),\bb{Z}_0(r\min(2,c)))},
		\end{align*}
		where $Var(\Lambda^{\prime}) = \sigma^2 + \int_{\R} x^2 \,\nu(dx)$.
		\item[(ii)] For $d=2$, if $\int_{|x|>1} x^2 \nu(dx)<\infty$ and $\gamma+\int_{|x|>1}x\nu(dx)=0$, then $\bb{Z}$ is $\theta$-lex weakly dependent and 
		\begin{align*}
			\theta_{lex}(r) &\leq 2 \Bigg( 2Cov\left(\bb{Z}_0(0,0),\bb{Z}_0\left(r\min\left(1,\frac{c}{\sqrt{2}}\right),r\min\left(1,\frac{c}{\sqrt{2}}\right)\right)\right)\\
			&\qquad +2Cov\left(\bb{Z}_0(0,0),\bb{Z}_0\left(r\min\left(1,\frac{c}{\sqrt{2}}\right),-r\min\left(1,\frac{c}{\sqrt{2}}\right)\right)\right)\Bigg)^{1/2}.
		\end{align*}
	\end{enumerate}
\end{Proposition}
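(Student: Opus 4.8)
The plan is to bound the covariance appearing in $\theta_{u,v}(r)$ by the $L^2$ truncation error of a single future field value, and then turn that error into the stated covariance by a purely geometric argument. (One may alternatively specialise the general bound of Proposition~\ref{proposition:mmathetaweaklydep}; I sketch the self-contained coupling since it exposes where the constant $r\min(2,c)$ comes from.) Fix admissible $F\in\mathcal{G}_u^*$, $G\in\mathcal{G}_v$ and index sets $\Gamma\in V_{\Gamma'}^r$. For each future index $q\in\Gamma'$ I would split the integral defining $\bb{Z}_q$ into its contribution over the part $D_r\subset A_q$ of the ambit set whose generating randomness is shared with the lexicographic past $\Gamma$ at $\|\cdot\|_\infty$-distance $\ge r$, and an independent remainder $\bb{Z}_q^{\mathrm{ind}}$. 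Because $\Lambda$ is independently scattered, $(\bb{Z}_q^{\mathrm{ind}})_{q\in\Gamma'}$ is independent of $\bb{Z}_\Gamma$, so $Cov(F(\bb{Z}_\Gamma),G(\bb{Z}^{\mathrm{ind}}_{\Gamma'}))=0$; subtracting this, bounding $|Cov|$ by $2\|F\|_\infty\,\E|G(\bb{Z}_{\Gamma'})-G(\bb{Z}^{\mathrm{ind}}_{\Gamma'})|$, and using $Lip(G)$ with respect to $\|\cdot\|_1$ together with Cauchy--Schwarz collapses the definition to
\[
\theta_{lex}(r)\le 2\max_{q\in\Gamma'}\big(\E|\bb{Z}_q-\bb{Z}_q^{\mathrm{ind}}|^2\big)^{1/2},
\]
the factor $v$ in the denominator absorbing the sum over the $v$ future indices.

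Next I would evaluate this truncation error through the isometry for integrals against $\Lambda$ (Rajput--Rosinski, \cite{RR1989}). The two moment hypotheses are exactly what is needed here: $\int_{|x|>1}x^2\nu(dx)<\infty$ together with $\sigma^2$ makes $Var(\Lambda')=\sigma^2+\int_\R x^2\nu(dx)$ finite, while $\gamma+\int_{|x|>1}x\nu(dx)=0$ makes the seed centred, so the discarded part is genuinely mean-zero and its second moment is clean. Placing $q$ at the origin by translation invariance of the ambit sets and stationarity of the MMAF (so now $D_r\subset A_0(0)$), this gives
\[
\E\big|\bb{Z}_q-\bb{Z}_q^{\mathrm{ind}}\big|^2=Var(\Lambda')\int_H\int_{D_r}f(A,-s)^2\,ds\,d\xi\,\pi(dA).
\]

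The main obstacle is the geometry of $D_r$ and its reduction to a single covariance. For the cone $A_0(0)$, a point lies in $D_r$ precisely when its forward lightcone still meets the lexicographic past at $\|\cdot\|_\infty$-distance $\ge r$, and this is driven by two competing mechanisms: a temporal one (every point at time-depth $\ge r$ is shared) and a spatial one (points near the cone's lateral boundary become shared only once the cone has spread far enough, i.e.\ below time-depth $r/(2c)$). Optimising over all admissible $\Gamma,\Gamma'$ and comparing the $f^2$-mass of $D_r$ with that of the overlap $A_0(0)\cap A_0(x)$ of two equal-time cones separated spatially by $x$, I expect the worst case to be dominated by the separation $x=r\min(2,c)$ at which the temporal rate $1$ and the spatial spreading rate $1/(2c)$ are balanced, yielding $\int_{D_r}f^2\le 2\int_{A_0(0)\cap A_0(r\min(2,c))}f^2$; the extra factor $2$ accounts for the temporal slab of $D_r$ not covered by a single equal-time overlap. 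Getting the constant $\min(2,c)$ and this factor exactly right is the delicate part.

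Finally I would identify $Var(\Lambda')\int_H\int_{A_0(0)\cap A_0(r\min(2,c))}f(A,-s)^2\,ds\,d\xi\,\pi(dA)$ with $Cov(\bb{Z}_0(0),\bb{Z}_0(r\min(2,c)))$ via the MMAF second-order formula of Section~\ref{cov}: both field values sit at time $0$ and share the kernel $f(A,-s)$ over the cone intersection. Combining the three displays gives exactly $\theta_{lex}(r)\le 2\sqrt{2\,Cov(\bb{Z}_0(0),\bb{Z}_0(r\min(2,c)))}$. For $d=2$ only the geometric step changes: the spatial plane carries a two-dimensional lexicographic order while the ambit cone uses the Euclidean norm ($\|x-\xi\|\le c|t-s|$), so the extremal shared directions are the two admissible spatial diagonals $(\rho,\rho)$ and $(\rho,-\rho)$, each spreading at $\|\cdot\|_\infty$-rate $c/\sqrt2$; this produces the factor $\min(1,c/\sqrt2)$ and the two covariance summands. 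Arguing that exactly these two diagonal directions are extremal, and that no other direction consistent with the lexicographic order contributes more, is the corresponding obstacle in the $d=2$ case.
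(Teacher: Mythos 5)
Your reductions are exactly the paper's scheme: the splitting of each future value into the part of its ambit set reachable from past ambit sets plus an independent remainder, the use of the independently scattered property of $\Lambda$ to annihilate the covariance with that remainder, the reduction to a single future point by stationarity (with the $v$ in the denominator cancelling the sum), and the evaluation of the truncation error through Proposition \ref{proposition:MMAmoments}(ii) all appear in the setup preceding Proposition \ref{proposition:mmathetaweaklydep} and in the proof of Proposition \ref{example:spatiotemporaldata}. The genuine gap is the geometric step you defer as ``the delicate part'': it is the entire content of the proposition, and in the form you state it, it cannot be proved when $c>1$. Since the lexicographic order places equal-time, spatially smaller points in the past, we have $(0,-r)\in V^r_{(0,0)}$, so your set $D_r$ (all points of $A_0(0)$ whose forward lightcone meets $V^r_{(0,0)}$) necessarily contains $A_0(0)\cap A_0(-r)$. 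For the STOU kernel $f(A,s)=e^{-As}$ one computes
\[
\int_{A_0(0)\cap A_0(u)} f(A,-s)^2\, ds\, d\xi=\frac{c}{2A^2}\,e^{-Au/c},
\qquad\text{hence}\qquad
\frac{\int_{A_0(0)\cap A_0(r)}f^2}{\int_{A_0(0)\cap A_0(r\min(2,c))}f^2}
=\exp\Big(\frac{Ar\,(\min(2,c)-1)}{c}\Big),
\]
which is unbounded in $A$ and $r$ whenever $c>1$. No universal factor, let alone $2$, can absorb this, so the comparison $\int_{D_r}f^2\le 2\int_{A_0(0)\cap A_0(r\min(2,c))}f^2$ on which your argument hinges is false for $c>1$: with your (correct) $D_r$, the dominant shared mass comes from same-time neighbours at spatial separation $r$, and this route only yields the constant $r\min(1,c)$. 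The same mechanism undermines your $d=2$ sketch, where the extremal same-time directions are the axis directions (e.g.\ $(0,-r,0)$), not only the diagonals.

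For $c\le 1$, where $r\min(2,c)=rc\le r$, your outline does complete and then coincides with the paper's proof: the paper sets $B_0^{\psi}(0)=A_0(0)\setminus(A_0(\psi)\cup A_0(-\psi))$ with $\psi=r\min(2,c)$, the factor $2$ arising from the union bound over the two mirror-image overlaps (not from a ``temporal slab''), and verifies that the corner points of $B_0^{\psi}(0)$ lie in $(V^r_{(0,0)})^c$. Note that your formulation of what must be checked---the forward cone of every retained point must avoid $V^r_{(0,0)}$, equivalently the retained set must avoid the ambit sets of all admissible past points---is the requirement actually needed for the independence step, and it is strictly stronger than the corner-point inclusion $B_0^{\psi}(0)\subset (V^r_{(0,0)})^c$ verified in the paper; it is exactly this stronger requirement that fails for $c>1$ (for $c=2$ the point $(-r/2,-r/2)$ survives the paper's truncation yet lies in $A_0(-r)$, the ambit set of the past point $(0,-r)$). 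So the obstacle you flagged is real, it is where all the work lies, and it is not settled either by your proposal or by the corner-point check; a complete argument along these lines delivers the stated bound only with $\min(1,c)$ in place of $\min(2,c)$ for $d=1$, with the corresponding restriction in the $d=2$ case.
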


The proof of the results above is given in Appendix \ref{appb1}. 

\begin{Notation}
	\label{tildetheta}
	In general, we indicate the bounds of the $\theta$-lex coefficients determined in Proposition \ref{proposition:mmathetaweaklydep}, Corollary \ref{gen_coeff} or Proposition \ref{example:spatiotemporaldata} using the sequence $(\tilde{\theta}_{lex}(r))_{r \in \R^+}$ where
	\[
	\theta_{lex}(r)\leq 2\tilde{\theta}_{lex}(r).
	\]
\end{Notation}

\begin{Definition}
	\label{decay}
	For $r \in \R^+$, $\bar{\alpha} >0$,  if $ \tilde{\theta}_{lex}(r) \leq \bar{\alpha} \, \exp(-\lambda r)$ we say that $\bb{Z}$ admits exponentially decaying $\theta$-lex coefficients, whereas if $ \tilde{\theta}_{lex}(r) \leq \bar{\alpha} r^{-\lambda}$, we say that $\bb{Z}$ admits power decaying $\theta$-lex coefficients.
\end{Definition}	

We give below examples of MMAF with exponentially and power-decaying $\theta$-lex coefficients.
\begin{Example}
	\label{coeff_stou1}
	Let $d=1$ and $\bb{Z}$ be an STOU as in Definition \ref{prop_stou}. If $\int_{|x| >1} x^2 \, \nu(dx) < \infty$, $\gamma+\int_{|x| >1} x \, \nu(dx)=0$, then $\bb{Z}$ is $\theta$-lex weakly dependent with 
	\begin{align*}
		\tilde{\theta}_{lex}(r)&= \Big( Var(\Lambda^{\prime}) \int_{A_0(0) \cap (A_0(\psi) \cup A_0(-\psi))} \exp(2As) \, ds\, d\xi \Big)^{\frac{1}{2}} \\
		&\leq \Big( 2 Var(\Lambda^{\prime}) \int_{A_0(0) \cap A_0(\psi)}  \exp(2As) \, ds\, d\xi \Big)^{\frac{1}{2}}\\
		&=\Big( 2 Var(\Lambda^{\prime}) \int_{-\infty}^{-\frac{\psi}{2c}} \int_{\psi+cs}^{-cs} \exp(2As) \, ds\, d\xi \Big)^{\frac{1}{2}}=  \Big( \frac{c}{A^2} Var(\Lambda^{\prime}) \, \exp\Big( \frac{-A \psi}{c} \Big) \Big)^{\frac{1}{2}} \\
		&= \Big( \,\frac{c}{A^2} Var(\Lambda^{\prime})  \,\exp\Big( -\,\underset{2\lambda}{\underbrace{\frac{A \min(2,c)}{c}}}\,r \Big) \Big)^{\frac{1}{2}}\\
		&= \sqrt{2Cov(\bb{Z}_0(0),\bb{Z}_0(r\min(2,c)))}:= \bar{\alpha} \exp(-\lambda r),
	\end{align*}
	
	where $\lambda >0$ and $\bar{\alpha}>0$. Because the temporal and spatial autocovariance functions of an STOU are exponential, see (\ref{cov_stou}), the model admits temporal and spatial short-range dependence.

	%		\item If $\int_{\R} |x| \, \nu(dx) < \infty$, $\sigma^2=0$, $\nu(\R^-)=0$ with $\gamma_0 >0 $ defined in (\ref{gamma0}), then $\bb{Z}$ is $\theta$-lex weakly dependent with coefficients 
	%		\begin{align*}
		%			\theta_{lex}(r) &\leq 2 \int_{A_0(0) \cap (A_0(\psi) \cup A_0(-\psi))} \exp(As) \gamma_0 \, ds\, d\xi\\
		%			&+2 \int_{A_0(0) \cap (A_0(\psi) \cup A_0(-\psi))} \int_{\R^{+}} \exp(As) y \,\nu(dy) \, ds \, d\xi\\
		%			&= 2\E(\Lambda^{\prime}) \int_{A_0(0) \cap (A_0(\psi) \cup A_0(-\psi))} \exp(As) \, ds\, d\xi \\
		%			&\leq 4 \int_{A_0 \cap A_0(\psi)} \E(\Lambda^{\prime}) \exp(As) \, ds\, d\xi \\
		%			&=4 \E(\Lambda^{\prime}) \int_{-\infty}^{-\frac{\psi}{2c}} \int_{x+\psi+cs}^{x-cs} \exp(As) \, ds\, d\xi= \frac{4c}{A^2} \E(\Lambda^{\prime}) \, \exp\Big( \frac{-A \psi}{2c} \Big).
		%		\end{align*}
	%		Therefore, in the case in which $\E(\Lambda^{\prime})=0$, then $\theta_{lex}(r)=0$ for all $r \in \R^+$.
	%		Moreover, if $\int_{|x|>1} x^2 \, \nu(dx) < \infty$, then
	%		\begin{align*}
		%		 \theta_{lex}(r) &\leq \frac{4\E(\Lambda^{\prime})}{Var(\Lambda^{\prime})} \, \underset{\bar{\alpha}}{\underbrace{\frac{c}{A^2} Var(\Lambda^{\prime})}}  \, \exp\Big( -\,\underset{\lambda}{\underbrace{\frac{A \min(2,c)}{2c}}}\, r \Big)= \frac{4\E(\Lambda^{\prime})}{Var(\Lambda^{\prime})} \bar{\alpha} \exp(-\lambda r)\\
		%		 &= \frac{4\E(\Lambda^{\prime})}{Var(\Lambda^{\prime})} \, Cov(\bb{Z}_0(0),\bb{Z}_0(r \min(2,c)/2))
		%		\end{align*}
	
	By estimating the parameter vector $\theta_0=\{A,c, Var(\Lambda^{\prime})\}$ using the methodologies revised in Appendix \ref{sec_stou}, we can estimate the parameter $\lambda$  using the following plug-in estimator
	\begin{equation}
		\label{plugin}
		\bb{\lambda^{*}}=\frac{\min(2,\bb{c^*})}{2\bb{c^*}},
	\end{equation}
	where the estimators $\bb{A^*}$ and  $\bb{c^*}$ are defined in (\ref{est_stou}).
	This estimator is consistent because of \cite[Theorem 12]{STOU} and the continuous mapping theorem. Furthermore, by using an estimator of the parameter $Var(\Lambda^{\prime})$, we can also obtain a consistent estimator for the parameter $\bar{\alpha}$.
	
\end{Example}

\begin{Example}
	\label{coeff_mstou1}
	Let $d=1$ and $\bb{Z}$ be an MSTOU as defined in Example \ref{memory}. 
	If $\int_{|x| >1} x^2 \, \nu(dx) < \infty$, $\gamma+\int_{|x| >1} x \, \nu(dx)=0$, then $\bb{Z}$ is $\theta$-lex weakly dependent with 
	\begin{align*}
		\tilde{\theta}_{lex}(r)&\leq  \Big( \frac{c}{A^2} Var(\Lambda^{\prime}) \, \int_0^{\infty} \exp\Big( \frac{-A \psi}{c} \Big) \pi(dA) \Big)^{\frac{1}{2}}\\
		&=  \Big( \frac{Var(\Lambda^{\prime}) c \beta^{\alpha}}{(\beta+\psi/c)^{\alpha-2} (\alpha-2)(\alpha-1)} \Big)^{\frac{1}{2}}\\
		&=  \Big( \frac{Var(\Lambda^{\prime}) c \beta^{\alpha}}{(\alpha-2)(\alpha-1)}  \Big(\beta+\frac{r\min(2,c)}{c} \Big)^{-(\alpha-2)}  \Big)^{\frac{1}{2}}\\
		&=  \sqrt{2Cov(\bb{Z}_0(0),\bb{Z}_0(r\min(2,c)))} :=  \bar{\alpha}  r^{-\lambda},
	\end{align*}
	where $\lambda=\frac{\alpha-2}{2}$ and $\bar{\alpha}>0$. As already addressed in Example \ref{memory}, for $2 < \alpha \leq 3$, that is $0<\lambda \leq \frac{1}{2}$, the model admits temporal and spatial long-range dependence. Instead, for $\alpha >3$, that is $\lambda > \frac{1}{2}$, the model admits temporal and spatial short range dependence.

	%		\item If $\int_{\R} |x| \, \nu(dx) < \infty$, $\sigma^2=0$, $\nu(\R^-)=0$ with $\gamma_0 >0 $ defined in (\ref{gamma0}), then $\bb{Z}$ is $\theta$-lex weakly dependent with coefficients 
	%		\begin{align*}
		%			\theta_{lex}(r) &\leq \frac{8c}{A^2} \E(\Lambda^{\prime}) \, \int_0^{\infty} \exp\Big( \frac{-A \psi}{2c} \Big) \,\pi(dA).\\
		%			&= \frac{4\E(\Lambda^{\prime}) c \beta^{\alpha}}{(\beta+\psi/2c)^{\alpha-2} (\alpha-2)(\alpha-1)}\\
		%			&= \frac{4\E(\Lambda^{\prime}) c \beta^{\alpha}}{(\alpha-2)(\alpha-1)} (\beta+\psi/2c)^{- (\alpha-2)}
		%		\end{align*}
	%		Therefore, in the case in which $\E(\Lambda^{\prime})=0$, then $\theta_{lex}(r)=0$ for all $r \in \R^+$.
	%		Moreover, if $\int_{|x|>1} x^2 \, \nu(dx) < \infty$, then
	%		\begin{align*}
		%			\theta_{lex}(r) &\leq \frac{4\E(\Lambda^{\prime}) c \beta^{\alpha}}{(\alpha-2)(\alpha-1)} \Big(\beta+\frac{r\min(2,c)}{2c} \Big)^{- (\alpha-2)}= \frac{4\E(\Lambda^{\prime})}{Var(\Lambda^{\prime})} \bar{\alpha} r^{-\lambda} \\
		%			&= \frac{4\E(\Lambda^{\prime})}{Var(\Lambda^{\prime})} \, Cov(\bb{Z}_0(0),\bb{Z}_0(r\min(2,c)/2)),
		%		\end{align*}
	%	where $\lambda=\alpha-2$ and $\bar{\alpha}>0$.

	For the model used in Example \ref{coeff_mstou1}, an estimator for 
	\begin{equation}
		\label{plugin2}
		\bb{\lambda}^*=\frac{\bb{\alpha}^*-2}{2}
	\end{equation}
	where the vector of parameters $\theta_1=\{\alpha,\beta,c,Var(\Lambda^{\prime})\}$ is estimated using a GMM estimator $\bb{\theta}_1^*=\{\bb{\alpha}^*,\bb{\beta}^*,\bb{c}^*,\bb{Var(\Lambda^{\prime})}^*\}$.
	
\end{Example}

The estimator of $\lambda$ depends on the chosen data generating process and the spatial dimension of the data. For further details on the parametric estimation methodologies nowadays available for other fields belonging to the MMAF class, we refer the reader to the Appendices \ref{sec_stou}, \ref{sec_mstou} and \ref{timeseries_app}.

\section{Mixed moving average field guided learning}
\label{sec4}

\subsection{Pre-processing $N$ frames}
\label{sec4.2}

\begin{figure}[h!]
	\centering \scalebox{1.0}
	{\includegraphics[width=5.8cm]{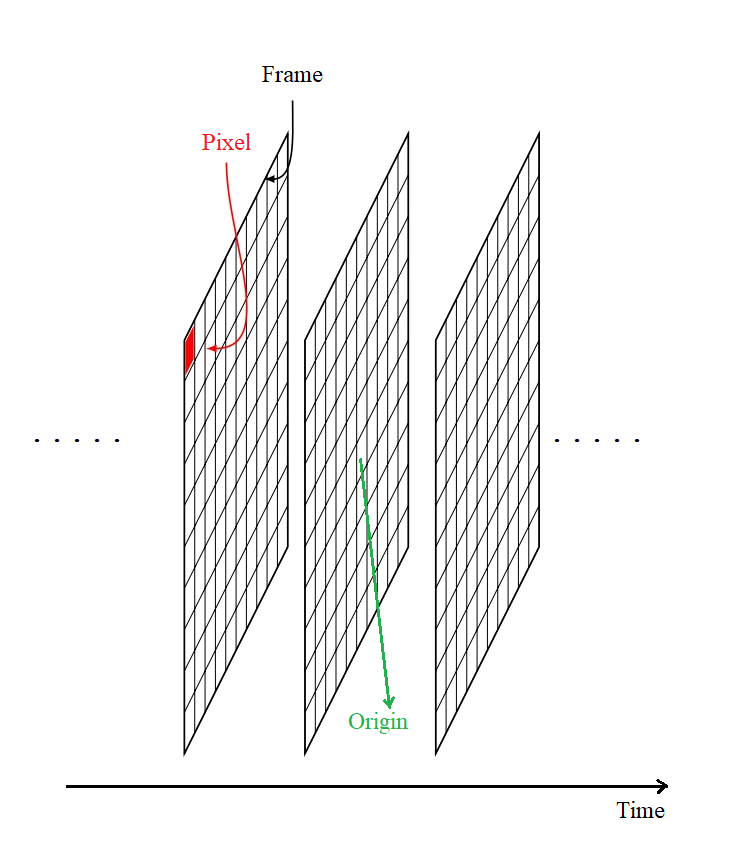}}
	\caption{Raster data cube's spatio-temporal index set with origin in $(t_0,x_0)$.} \label{frame}
\end{figure}

In this section, we describe MMAF-guided learning for a spatial dimension $d=2$. Let  $(\tilde{Z}_t(x))_{(t,x)\in  \mathbb{T} \times \mathbb{L}}$ be an observed data set on a regular lattice $\mathbb{L} \subset \mathbb{R}^2$ across times $\mathbb{T}=\{t_0+h_t,\ldots,t_0+h_tN\}$ for $h_t \in \R$, such that 
\begin{equation}
	\label{decomposition}
	\tilde{Z}_t(x)= \mu_t(x)+ Z_t(x)
\end{equation}
holds and no measurement errors are present in the observations.
Here, $\mu_t(x)$ is a deterministic function, and $Z_t(x)$ are considered realizations from a zero mean stationary (influenced) MMAF. 

We represent graphically the regular spatial lattice $\mathbb{L}$ as a \emph{frame made of a finite amount of pixels}, i.e., squared-cells representing each of them a unique spatial position $x \in \mathbb{L}$, see Figure \ref{frame}. In several applications, such as satellite imagery, a pixel refers to a spatial cell of several square meters.
In the paper, we assume that a pixel represents the spatial point $x \in \R^2$ corresponding to the center of the pictured squared cell. We then use the name \emph{pixel} and spatial position throughout interchangeably. In total, $N$ frames represent the spatio-temporal index set of the observed data set. This terminology is often used to describe \textit{raster data cubes} \cite{Raster}. We define in Section \ref{st_embedding} a spatio-temporal embedding for data with such structure.
For dimension $d=1$, we consider that the pixel collapses in the point $x \in \R$ that describes, see Figure \ref{causal}. MMAF-guided learning also applies to spatial index set of dimension $d >2$. However, we do not represent the spatial positions using pixels in such cases.

We call $(t_0,x_0)$ the origin of the space-time grid, see Figure \ref{frame}, and $h_t$ and $h_s$ the time and space discretization step in the observed data set, i.e. the distance between two pixels along the temporal and spatial dimensions.

MMAF-guided learning has the target to determine one-time ahead ensemble forecasts of the field $\bb{Z}$ in a pixel $x^*$. We do not consider further the problem of estimating the deterministic function $\mu_t(x)$ when performing forecasting tasks, i.e., we assume our data set to be generated by a zero mean MMAF from now on. We refer the reader to \cite{CW11} for a review of how to estimate the function $\mu_t(x)$.

We need to define a \emph{training data set} to employ in our learning methodology in the following sections. Therefore, we pre-process the set of indices represented by the $N$ frames to select a set of different examples, i.e., input-output pairs, to include in the training data set $S_m$.

\subsubsection{Spatio-temporal embedding}
\label{st_embedding}
Let us consider a stationary random field $(\bb{Z}_t(x))_{(t,x)\in \Z \times \mathbb{L}}$, and select a pixel position $x^*$ in $\mathbb{L}$. We define the input-output vectors 
\begin{equation}
	\label{st_emb}
	\bb{X}_i=\bb{L^{-}_p}(t_0+ia,x^*), \, \,\,\, \textrm{and} \,\,\,\,\, \bb{Y}_i=\bb{Z}_{t_0+ia}(x^*), \,\,\,\textrm{for $i \in \Z$},
\end{equation}
where
\begin{equation}
	\label{feature}
	\bb{L^{-}_p}(t,x^*)=(\bb{Z}_{i_1}(\xi_{1}), \ldots, \bb{Z}_{i_{a(p,c)}}(\xi_{a(p,c)}))^{\top}, 
\end{equation}
with indices selected in the set
\begin{align}
	\label{index}
	\mathcal{I}(t,x^*):=\{ (i_s,\xi_s): \|x^*-\xi_s \| \leq c \, (t-i_s)\,\,&\textrm{for}\,\, 0< t-i_s\leq p, \nonumber \\
	&\textrm{and}\,\, (i_s,\xi_s)<_{lex} (i_{s+1},\xi_{s+1})  \},
\end{align} 
for $t=t_0+i a$ with $i \in \Z$, and  $c>0$. We call $a(p,c):=|\mathcal{I}(t,x^*)|$ and assume that is constant for all $t=t_0+ia$ and $i \in \Z$. The parameters $a >0$ and $p>0$ are multiples of $h_t$ such that $a=a_t h_t$, $p=p_th_t$, $a_t,p_t \in \N$ and $a_t \geq p_t+1$. We note that each element of $\bb{L^-_p}(t,x^*) \subset l^-(t,x^*)$ for $t=t_0+ia$ and $i \in \Z$, where $l^-(t,x^*)$ is defined in (\ref{lightcone2}) and identifies the set of all points in $\R \times \R^d$ that could possibly influence the realization $\bb{Z}_t(x^*)$. The sampling leading to (\ref{st_emb}) can be performed starting by a pixel $x^*$ for which the index set $\mathcal{I}(t_0+ia,x^*) \subseteq \mathbb{Z} \times \mathbb{L} $ for all $i$. Further details on why the examples need to be structured following the index sets $\mathcal{I}(t,x^*)$ are given in the next remark.

\begin{Remark}[Geometry and lexicographic order of the examples]
	\label{geolex}
	
	The sets $\mathcal{I}(t,x^*)$ are chosen with a geometry that is inherited by the definition of the cone-shaped ambit set in (\ref{lightcone}). Such a geometry allows us to give a causal interpretation of the one-time ahead ensemble forecast, as shown in Section \ref{sec5}. Moreover, we store in the input vectors $\bb{X}_i$ values of the fields $\bb{Z}$ with indices in lexicographic order. This choice implies that $((\bb{X}_{i_1},\bb{Y}_{i_1}),\ldots,(\bb{X}_{i_u},\bb{Y}_{i_u}))$ and $(\bb{X}_{j},\bb{Y}_{j})$ for $u \in \N$  and $j \in \Z$ are lexicographically ordered marginals of the field $\bb{Z}$. This allows, in turn, to precisely assess how the $\theta$-lex weakly dependence of the data generating process $\bb{Z}$ (defined w.r.t. lexicographically ordered marginals, see Definition \ref{thetaweaklydependent}) is inherited by the process $\bb{S}$ and to understand how the $\theta$-lex weakly dependence plays a role in the definition of the randomized estimators in Section \ref{sec4.4}. Modifications of this representation may be needed when using convolutional or transformer architectures as predictors. This latter issue is outside the scope of the present paper but constitutes an important future research direction of MMAF-guided learning.
	
	Last but not least, it is important to notice that the considerations above rule out the choice of \emph{overlapping} examples, i.e., the possible choice of examples that maintain the cone-shaped geometry but have indices not following the lexicographic order. If we were to make this choice, we would then work with a training data set that does not inherit the dependence structure of the data generating process $\bb{Z}$; see Section \ref{inheritance} for more details.
\end{Remark}

The sequence $((\bb{X}_i,\bb{Y}_i)^{\top})_{i \in \Z}$ is composed of identically distributed random vectors for all $i \in \Z$. We call $\bb{S}:=((\bb{X}_i,\bb{Y}_i)^{\top})_{i \in \Z}$ \emph{a cone-shaped sampling process}. An example of a realization $S$ of the sampling scheme, together with its related spatio-temporal embedding's description can be found in Figure \ref{cone}. The distribution of $\bb{S}$ is indicated throughout by $\mathbb{P}$. 

\begin{figure}[h!]
	\centering 
	\subfigure[]{\includegraphics[width=.37\textwidth]{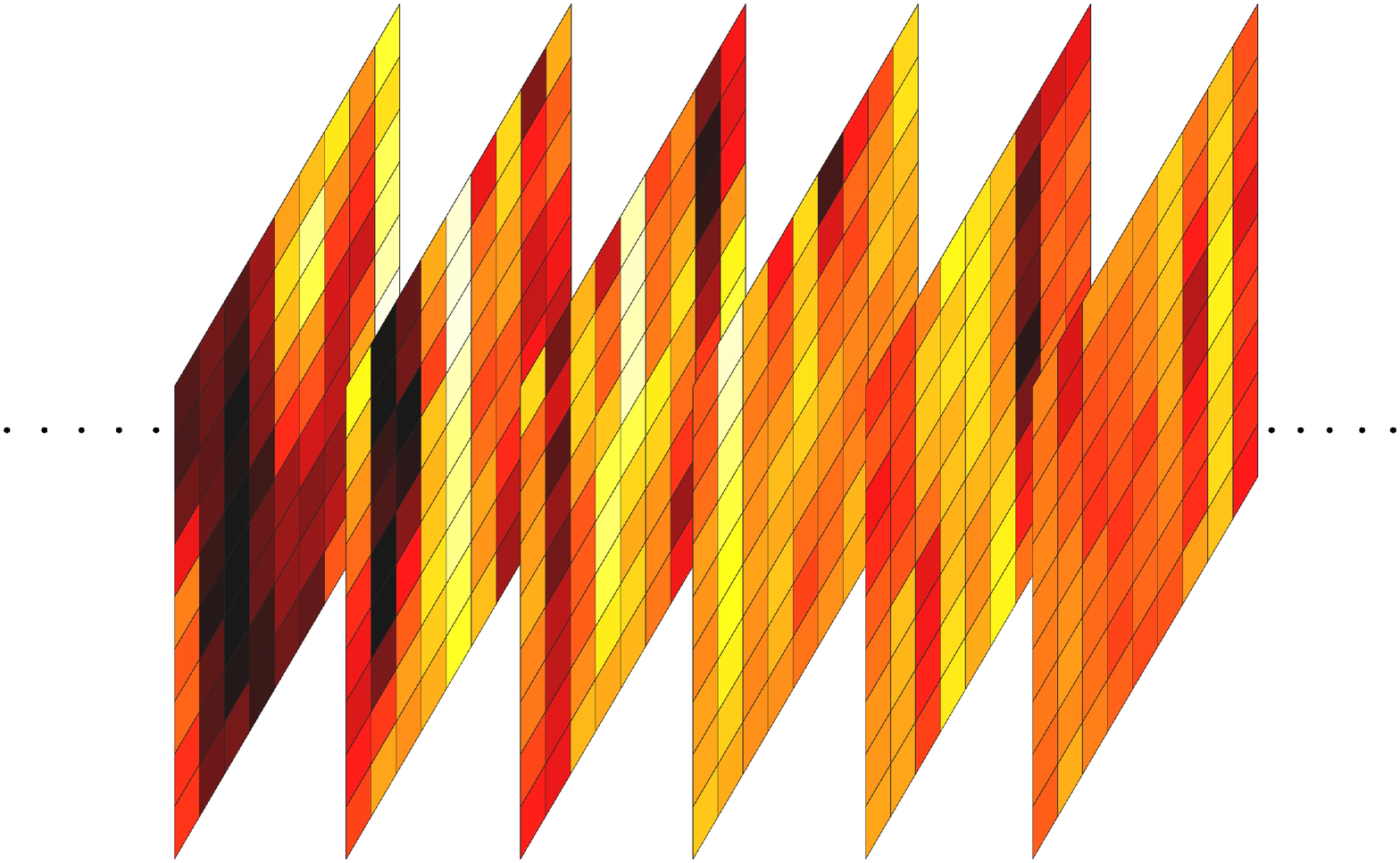}} \hspace{.02\textwidth}
	\subfigure[]{\includegraphics[width=.40\textwidth]{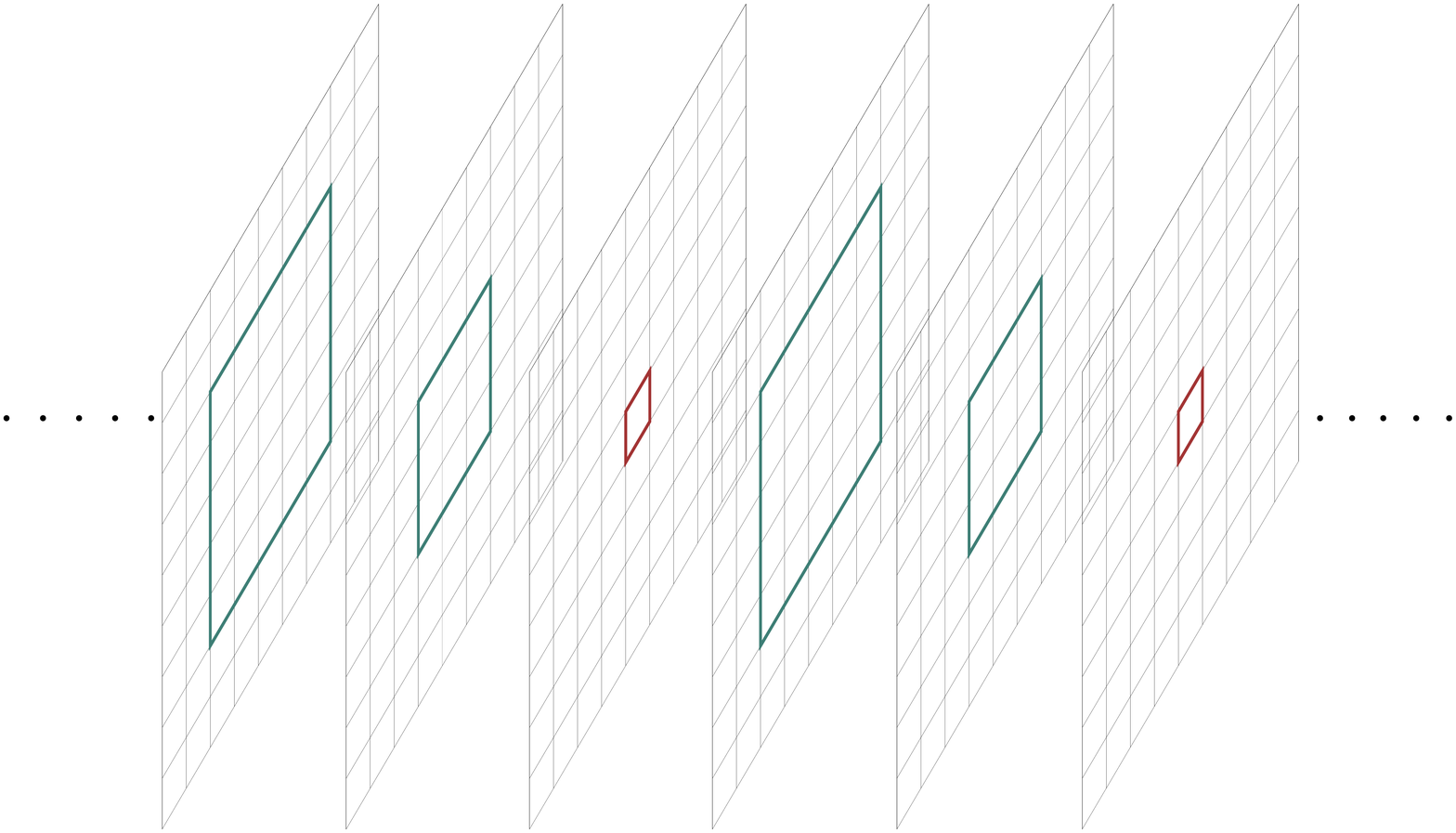}}\hspace{.02\textwidth}
	\subfigure[]{\includegraphics[width=.40\textwidth]{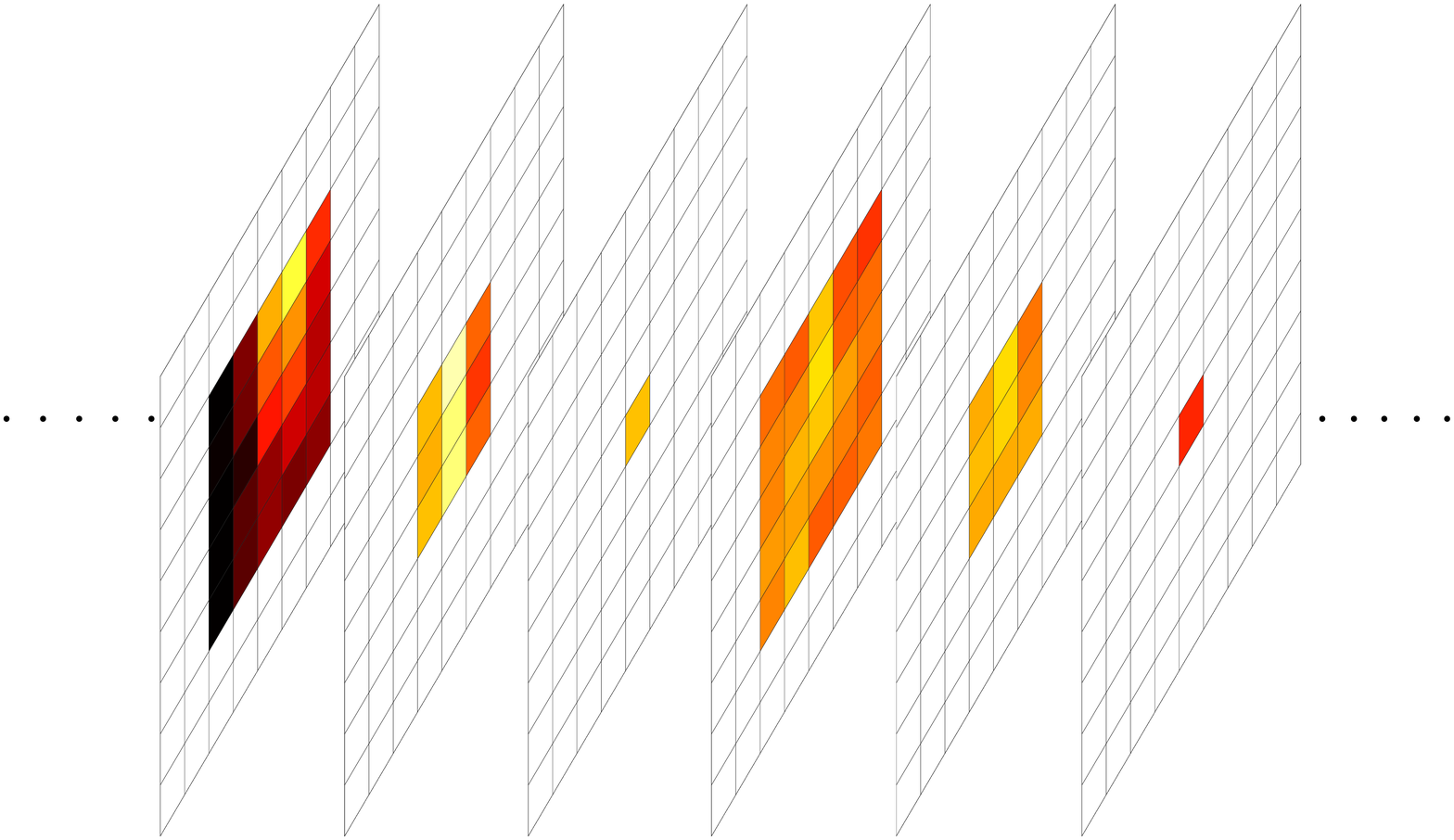}}  \vspace{-0.5cm}
	\caption{(a) Observed raster data cube. (b) Spatio-temporal embedding: the red pixel identifies the spatial point $x^*$, and for $i \in \Z$ the set $\mathcal{I}(t_0+ia,x^*)$ defined in (\ref{index}) is represented by the pixels in the green boxes. The parameters in use in this sampling are $c=\sqrt{2}$, $p_t=2$, $a_t=3$, $h_t=h_s=1$. (c) A realization $S$ from the cone-shaped sampling process.}\label{cone}
\end{figure}

Next, let us assume to observe a data set $(Z_t(x))_{(t,x) \in \mathbb{T} \times \mathbb{L}}$, and that we want to determine a one-time ahead ensemble forecast in the pixel $x^*$. We define $m:=\Big \lfloor \frac{N}{a_t} \Big \rfloor$ and a training data set $S_m =((X_i,Y_i)^{\top}))_{i=1}^m$ as a realization from the \emph{cone-shaped sampling process} $\bb{S}$ of fixed length $m$. In particular, 

\begin{equation}
	\label{sampling}
	X_i=L^{-}_p(t_0+ia,x^*), \, \,\,\, \textrm{and} \,\,\,\,\, Y_i=Z_{t_0+ia}(x^*)\,\,\,\,\textrm{for $i=1,\ldots,m$},
\end{equation}
where 
\begin{equation}
	\label{to work}
	L^{-}_p(t,x^*)=(Z_{i_1}(\xi_{1}), \ldots, Z_{i_{a(p,c)}}(\xi_{a(p,c)}))^{\top}, \,\,\,\textrm{and $(i_s,\xi_s) \in \mathcal{I}(t,x^*)$}
\end{equation}
for $s=1,\ldots,a(p,c)$ and $t=t_0+ia$ with $i=1,\ldots,m$.
We assume that the parameters $a$ and $p$ follow the constraints in Table \ref{Tab1}. The index set used to define the training data set (\ref{sampling}) is a spatio-temporal embedding in the set $ \R \times \R^2$. A similar interpretation can be given for the index set defining the cone-shaped sampling process (\ref{st_emb}).

\begin{table}[h!]
	\small
	\centering
	\begin{tabular}{|c|c|c|}
		\hline
		\textbf{ Parameters} & \textbf{Constraints} & \textbf{Interpretation}\\
		\hline
		$a:= a_t h_t$  & $p_t+1 \leq a_t \leq \Big \lfloor \frac{N}{2} \Big \rfloor$ & \emph{translation vector} \\
		\hline
		$p:= p_t h_t$  &  $1 \leq p_t < \Big \lfloor \frac{N}{2} \Big \rfloor -1$ &\emph{past time horizon}\\
		\hline
		$m:= \Big\lfloor \frac{N}{a_t} \Big\rfloor$  &  $2< m < N$  &  \emph{number of examples in $S_m$}\\
		%			\hline 
		%			$c$ & $c>0$ & \emph{speed of information propagation} \\
		%			\hline
		%			$\lambda$ & $\lambda >0 $& \emph{decay rate of the $\theta$-lex coefficients of $\bb{Z}$}\\
		\hline
	\end{tabular}
	\caption{Parameters involved in pre-processing $N$ observed frames.}
	\label{Tab1}
\end{table}

For an observed raster data cube, we know the value of the constants $N$, $h_t$, and $h_s$, and it remains to select the parameters $a_t$ and $p_t$. We discuss the selection of the parameter $a_t$ in Section \ref{sec4.4} and of $p_t$ in  Section \ref{sec5.1}, respectively. 

\subsubsection{Study of the dependence structure of $\bb{L}$ and $\bb{L}^{\epsilon}$}
\label{inheritance}

We analyze in this section the dependence structure of the processes $\bb{L}:=(L(h(\bb{X}_i),\bb{Y}_i))_{i \in \Z}$ and $\bb{L^{\epsilon}}:=(L^{\epsilon}(h(\bb{X}_i),\bb{Y}_i))_{i \in \Z}$, where  $L$ and $L^{\epsilon}$ are the loss functions defined in Section \ref{setting}, and $h$ is a Lipschitz predictor.
\begin{Proposition}
	\label{heredithary}
	Let $\bb{S}$ be the cone-shaped sampling process defined by (\ref{st_emb}), then  $\bb{L}$ is a $\theta$-weakly dependent process for all $h \in \mathcal{H}$. Moreover, for $a, p>0$, $k \in \N$, and $r = ka-p > 0$, it has coefficients  
	\begin{equation}
		\label{bound_gen}
		\theta(k) \leq \tilde{d} (Lip(h)a(p,c) + 1)\Big( \frac{2}{\tilde{d}} \E[|\bb{Z}_t(x)-\bb{Z}_t^{(r)}(x)|] + \theta_{lex}(r) \Big),
	\end{equation}
	where $\tilde{d} >0$ is a constant independent of $r$, and $\bb{Z}_t^{(r)}(x):=\bb{Z}_t(x) \wedge r $.
\end{Proposition}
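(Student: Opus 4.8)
The plan is to verify Definition \ref{thetadependent} directly for the process $\bb L$, tracking every constant so that the estimate collapses to (\ref{bound_gen}). The starting observation is that each $\bb L_i=L(h(\bb X_i),\bb Y_i)$ is a deterministic function of the field values restricted to the finite index set $\Gamma_i:=\mathcal I(t_0+ia,x^*)\cup\{(t_0+ia,x^*)\}$, which by construction (\ref{feature})--(\ref{index}) has exactly $a(p,c)+1$ elements. Writing $\bb L_i=\phi(\bb Z_{\Gamma_i})$, the map $\phi$ is Lipschitz with respect to $\|\cdot\|_1$: since $h\in\mathcal H$ is Lipschitz and $L$ is Lipschitz (with constant $1$ for the absolute loss, otherwise absorbed below), a unit change in an \emph{input} coordinate moves $\phi$ by at most $Lip(h)$ and a unit change in the single \emph{output} coordinate by at most $1$. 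This is precisely the origin of the prefactor $(Lip(h)\,a(p,c)+1)$, once the weights are summed over the $a(p,c)$ input coordinates and the one output coordinate.

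Next I would fix $F\in\mathcal G_u^*$, $G\in\mathcal G_1$ and indices $i_1\le\cdots\le i_u\le i_u+k\le j_1$ as in Definition \ref{thetadependent}, and establish the key geometric fact that the past support $\Gamma:=\bigcup_{l=1}^u\Gamma_{i_l}$ and the future support $\Gamma':=\Gamma_{j_1}$ are lexicographically separated at $\|\cdot\|_\infty$-distance $r:=ka-p$, i.e. $\Gamma\subset V^{r}_{\Gamma'}$. The latest time occurring in $\Gamma$ is $t_0+i_ua$ (the output time of example $i_u$) while the earliest time occurring in $\Gamma'$ is $t_0+j_1a-p$; their gap is $(j_1-i_u)a-p\ge ka-p=r$, and the constraint $a_t\ge p_t+1$ forces $a>p$, so every point of $\Gamma$ lies strictly below every point of $\Gamma'$ in time, hence lexicographically. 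Since the lexicographic order is dominated by the temporal coordinate, the same temporal gap bounds $\|\cdot\|_\infty$ from below, giving $\Gamma\subset V^{r}_{\Gamma'}$. This is exactly where the non-overlapping, lexicographically ordered construction of Remark \ref{geolex} is used: without it the supports would interleave and no single coefficient $\theta_{lex}(r)$ could control the covariance.

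With the separation established, the plan is to split the target covariance through the truncated field $\bb Z^{(r)}=\bb Z\wedge r$, setting $\bb L_{j_1}^{(r)}:=\phi(\bb Z^{(r)}_{\Gamma'})$, so that
\begin{equation*}
	Cov\big(F(\bb L_{i_1},\dots,\bb L_{i_u}),G(\bb L_{j_1})\big)=Cov\big(F(\bb L_{i_1},\dots,\bb L_{i_u}),G(\bb L_{j_1}^{(r)})\big)+Cov\big(F(\bb L_{i_1},\dots,\bb L_{i_u}),G(\bb L_{j_1})-G(\bb L_{j_1}^{(r)})\big).
\end{equation*}
For the second term I would use $|Cov(A,B)|\le 2\|A\|_\infty\E|B|$ with the Lipschitz property of $G$ and of $\phi$; the bookkeeping of the first paragraph together with stationarity of $\bb Z$ reduces the per-coordinate truncation errors to a single quantity, yielding
\begin{equation*}
	\big|Cov\big(F(\bb L_{i_1},\dots,\bb L_{i_u}),G(\bb L_{j_1})-G(\bb L_{j_1}^{(r)})\big)\big|\le 2\,\|F\|_\infty\,Lip(G)\,(Lip(h)\,a(p,c)+1)\,\E\big[|\bb Z_t(x)-\bb Z_t^{(r)}(x)|\big].
\end{equation*}
For the first term, $G(\bb L_{j_1}^{(r)})=(G\circ\phi)(\bb Z^{(r)}_{\Gamma'})$ is a bounded Lipschitz function of $\bb Z_{\Gamma'}$ and $F(\bb L_{i_1},\dots,\bb L_{i_u})$ is bounded, so the $\theta$-lex weak dependence of $\bb Z$ (Definition \ref{thetaweaklydependent}) applies with $v=|\Gamma'|=a(p,c)+1$ and separation $r$, giving a bound of the form $\|F\|_\infty\,Lip(G)\,\tilde d\,(Lip(h)a(p,c)+1)\,\theta_{lex}(r)$ after absorbing $v$ and the Lipschitz constant of $L$ into the $r$-independent constant $\tilde d$. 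Dividing by $\|F\|_\infty Lip(G)$ and taking the suprema in Definition \ref{thetadependent} then produces (\ref{bound_gen}).

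I expect the main obstacle to be the geometric/lexicographic step: one must check that the union of the $u$ past cones stays below the single future cone in the lexicographic order and at $\|\cdot\|_\infty$-distance at least $r=ka-p$ for \emph{every} admissible configuration of $i_1\le\cdots\le i_u$, since this is what allows a single coefficient $\theta_{lex}(r)$ to enter. A secondary technical point is the truncation bookkeeping—confirming that $G\circ\phi$ evaluated at $\cdot\wedge r$ is genuinely bounded and Lipschitz so that Definition \ref{thetaweaklydependent} is applicable, and that the per-coordinate errors collapse, through stationarity, to the single term $\E[|\bb Z_t(x)-\bb Z_t^{(r)}(x)|]$ with the correct prefactor $(Lip(h)a(p,c)+1)$.
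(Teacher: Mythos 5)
Your proposal is correct and follows essentially the same route as the paper's own proof: the same decomposition of the covariance through a truncated field at level $M=r$, the same bound $2\|F\|_\infty Lip(G)(Lip(h)a(p,c)+1)\E[|\bb{Z}_t(x)-\bb{Z}_t^{(r)}(x)|]$ on the truncation term via the Lipschitz properties of $L$, $h$ and stationarity, and the same application of Definition \ref{thetaweaklydependent} to the composed bounded Lipschitz function $G\circ\phi$ on $\bb{Z}_{\Gamma'}$ with separation $r=ka-p$. The only cosmetic differences are that the paper truncates two-sidedly ($\bb{Z}\vee(-M)\wedge M$) while you follow the statement's one-sided $\wedge\, r$, and that you spell out the lexicographic/$\|\cdot\|_\infty$ separation of $\Gamma$ and $\Gamma'$ which the paper states tersely.
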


\begin{Remark}[Locally Lipschitz predictor]
	\label{loclip}
	Let the predictor $h$ be a locally Lipschitz function such that
	$h(0)=0$ and 
	\[
	|h(x)-h(y)|\leq \tilde{c} \, \|x-y\|_1 (1+\|x\|_1+\|y\|_1) \,\, \textrm{for $x,y \in \R^{a(p,c)}$},
	\]
	for $\tilde{c}>0$. Moreover, let $\bb{Z}$ be a stationary and $\theta$-lex weakly dependent random field such that $|\bb{Z}|\leq C$ almost surely. An easy generalization of Proposition \ref{heredithary}, leads to show that $\bb{L}$ is $\theta$-weakly dependent with coefficients
	\begin{equation}
		\label{bound_locally}
		\theta(k) \leq \tilde{d} (\tilde{c}(1+2 C)a(p,c)+1) \Big( \frac{2}{\tilde{d}} \E[|\bb{Z}_t(x)-\bb{Z}_t^{(r)}(x)|] + \theta_{lex}(r) \Big),
	\end{equation}	
	where  $r = ka-p > 0$ for $a, p>0$ and $k \in \N$, $\tilde{d}>0$ is a constant independent of $r$, and $\bb{Z}_t^{(r)}(x):=\bb{Z}_t(x) \wedge r $.
\end{Remark}

\begin{Remark}
	\label{timeseries}
	The spatio-temporal embedding discussed in Section \ref{st_embedding} also apply to $\theta$-weakly dependent time series models $\bb{Z}$. In such case, the parameter $c=0$,  $\bb{S}$ is \emph{a flat cone-shaped sampling process}, and (straightforwardly) a $\theta$-weakly dependent process with coefficients satisfying the bound (\ref{bound_gen}).
\end{Remark}

In the case of MMAFs, we have obtained explicit bounds for the $\theta$-lex coefficients in Propositions \ref{proposition:mmathetaweaklydep} and \ref{example:spatiotemporaldata}. We now prove that a more refined bound than (\ref{bound_gen}) for the $\theta$-coefficients of the process $\bb{L}$ can be given in this setting.

We consider the following assumption.

\begin{Assumption}
	\label{ass1}
	Let $\bb{Z}$ be an MMAF under the Assumption \ref{ass_light} and such that $\E[|\bb{Z}_t(x)|^2] <\infty$.
	Let $N \in \N$, $\mathbb{T}=\{t_0+h_t,\ldots,t_0+h_tN\}$ and $\mathbb{L} \subset \R^2$. $S_m$ and $S$ are realization from $(\bb{Z}_t(x))_{(t,x)\in \Z \times \mathbb{L}}$ following the spatio-temporal embedding defined in Section \ref{st_embedding}.
\end{Assumption}

\begin{Proposition}
	\label{mmaf_absolute}
	Let Assumption \ref{ass1} hold. Then $\bb{L}$ is a $\theta$-weakly dependent process for all $h \in \mathcal{H}$ with coefficients  
	\begin{equation}
		\label{bound3}
		\theta(k) \leq 2 (Lip(h) a(p,c) + 1)  \widetilde{\theta}_{lex}(r),
	\end{equation}
	where $r = ka-p > 0$ for $a, p>0$ and $k \in \N$.
	In particular, for linear predictors, i.e., $h_{\beta}(X)=\beta_0+ \beta_1^T X,\, \textrm{for}\,\, \beta:=(\beta_0,\beta_1)^{\top} \in  B $ and $B= \R^{a(p,c)+1}$, we have that $\bb{L}$ is a $\theta$-weakly dependent process for all $\beta \in B$ with coefficients  
	\begin{equation}
		\label{bound_linear}
		\theta(k) \leq 2 (\|\beta_1\|_1 + 1)  \widetilde{\theta}_{lex}(r).
	\end{equation}	
	
\end{Proposition}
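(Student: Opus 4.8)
The plan is to prove Proposition \ref{mmaf_absolute} by specializing the general hereditary bound \eqref{bound_gen} from Proposition \ref{heredithary} to the MMAF setting, where the additional structure permits replacing the truncation term $\E[|\bb{Z}_t(x)-\bb{Z}_t^{(r)}(x)|]$ by something absorbed into the $\theta$-lex coefficient. First I would recall that Proposition \ref{heredithary} already establishes that $\bb{L}$ is $\theta$-weakly dependent with coefficients controlled by a sum of two terms: a truncation error $\frac{2}{\tilde{d}}\E[|\bb{Z}_t(x)-\bb{Z}_t^{(r)}(x)|]$ and the field coefficient $\theta_{lex}(r)$. The key point is that under Assumption \ref{ass1}, the MMAF bounds from Propositions \ref{proposition:mmathetaweaklydep} and \ref{example:spatiotemporaldata} give $\theta_{lex}(r) \leq 2\widetilde{\theta}_{lex}(r)$ directly in terms of the covariance structure of the field, \emph{without} needing the auxiliary truncation step that forced the appearance of $\E[|\bb{Z}_t(x)-\bb{Z}_t^{(r)}(x)|]$ in the general case.

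The heart of the argument is therefore to re-derive the hereditary bound along the lines of Proposition \ref{heredithary}, but for MMAFs bounding the relevant covariance directly by $\widetilde{\theta}_{lex}(r)$ rather than through the truncated field $\bb{Z}_t^{(r)}$. Concretely, I would fix $h \in \mathcal{H}$, take past-marginals $(\bb{X}_{i_1},\bb{Y}_{i_1}),\ldots,(\bb{X}_{i_u},\bb{Y}_{i_u})$ and a future example $(\bb{X}_j,\bb{Y}_j)$ with lexicographic separation $r = ka - p$, and estimate the covariance $\mathrm{Cov}(F(\bb{L}_{\Gamma}), g(\bb{L}_j))$ for $F \in \mathcal{G}_u^*$ and $g \in \mathcal{G}_1$. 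The composed map $(\bb{X}_j,\bb{Y}_j) \mapsto g(L(h(\bb{X}_j),\bb{Y}_j))$ is Lipschitz as a function of the underlying field values entering $\bb{X}_j$ and $\bb{Y}_j$; its Lipschitz constant picks up a factor $Lip(h)$ from the predictor acting on the $a(p,c)$ coordinates of $\bb{X}_j$ plus the unit contribution from the $\bb{Y}_j$-coordinate, yielding the factor $(Lip(h)\,a(p,c)+1)$. Applying the projective-type characterization of $\theta$-lex weak dependence (Remark \ref{mix_rule1}) together with the MMAF covariance bound then produces $\theta(k) \leq 2(Lip(h)\,a(p,c)+1)\,\widetilde{\theta}_{lex}(r)$. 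The factor $2$ tracks the constant in Notation \ref{tildetheta} relating $\theta_{lex}(r)$ and $\widetilde{\theta}_{lex}(r)$.

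For the linear case, the specialization is immediate: with $h_\beta(X) = \beta_0 + \beta_1^\top X$ we have $Lip(h_\beta) = \|\beta_1\|_1$ with respect to the $\|\cdot\|_1$-norm used in \eqref{Lipcost}, so the product $Lip(h)\,a(p,c)$ in \eqref{bound3} should instead be read through the direct computation of the Lipschitz constant of the composed loss. Here the loss $L^{\epsilon}$ composed with $h_\beta$ depends on the field values only through $\beta_1^\top X$ and the scalar $Y$, giving a combined Lipschitz factor $\|\beta_1\|_1 + 1$ rather than $\|\beta_1\|_1\, a(p,c) + 1$; substituting this into the covariance estimate yields \eqref{bound_linear}.

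I expect the main obstacle to be the careful bookkeeping of the Lipschitz constant of the composed map and justifying that the MMAF covariance bound may be used in place of the truncation term. The subtlety is that the general Proposition \ref{heredithary} needed truncation precisely because a generic $\theta$-lex weakly dependent field only controls covariances of \emph{bounded} Lipschitz functionals, whereas $L(h(\bb{X}),\bb{Y})$ need not be bounded; under Assumption \ref{ass1} the second-moment condition $\E[|\bb{Z}_t(x)|^2] < \infty$ combined with the explicit MMAF covariance formulas lets one bound the relevant covariance via Cauchy--Schwarz directly by $\widetilde{\theta}_{lex}(r)$, bypassing the truncation and thereby eliminating the $\E[|\bb{Z}_t(x)-\bb{Z}_t^{(r)}(x)|]$ term that survives in \eqref{bound_gen}. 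Verifying that this substitution is valid and produces exactly the constant $2$ is where the real work lies.
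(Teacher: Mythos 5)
Your high-level aim is in the right direction---the MMAF structure is indeed what upgrades Proposition \ref{heredithary}---but the mechanism you propose cannot produce the stated constants, and it misidentifies where they come from. The bookkeeping is the first problem: the composed map $(\bb{X}_j,\bb{Y}_j)\mapsto g(L(h(\bb{X}_j),\bb{Y}_j))$ is Lipschitz with constant at most $Lip(g)\max(Lip(h),1)$ with respect to $\|\cdot\|_1$; it does \emph{not} ``pick up a factor $a(p,c)$'' (and, similarly, $Lip(h_\beta)=\|\beta_1\|_\infty$ with respect to $\|\cdot\|_1$, not $\|\beta_1\|_1$). If you then apply the definition of $\theta$-lex weak dependence, or its projective form in Remark \ref{mix_rule2}, to the future marginal $\bb{Z}_{\Gamma^{\prime}}$ with $|\Gamma^{\prime}|=a(p,c)+1$, the cardinality $v$ enters multiplicatively against the \emph{whole} Lipschitz constant, yielding at best $\theta(k)\le 2\,(a(p,c)+1)\max(Lip(h),1)\,\tilde{\theta}_{lex}(r)$, which is strictly weaker than the claimed $2\,(Lip(h)a(p,c)+1)\,\tilde{\theta}_{lex}(r)$ whenever $Lip(h)\neq 1$, and likewise weaker than $2(\|\beta_1\|_1+1)\tilde{\theta}_{lex}(r)$ in the linear case. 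Your fallback, Cauchy--Schwarz on $Cov(F(\bb{L}_\Gamma),g(\bb{L}_j))$, gives a bound with no decay in $r$ at all, so the ``bypass the truncation'' step has no valid replacement in your plan.

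The paper's proof does not bypass truncation; it replaces the clipping $\bb{Z}\wedge r$ of Proposition \ref{heredithary} by the MMAF-specific \emph{ambit-set} truncation $\bb{Z}^{(\psi)}$ of (\ref{truncated_rf}), i.e.\ the stochastic integral restricted to a set $B^{\psi}$ disjoint from the ambit sets generating the past marginals. Because the L\'evy basis is independently scattered, $L_j^{(\psi)}$ is then \emph{exactly independent} of $F(L_{i_1},\ldots,L_{i_u})$, so the covariance term (\ref{cinque}) vanishes identically---this is precisely what a generic $\theta$-lex argument cannot achieve and is the source of the sharper constant. The surviving term (\ref{quattro}) is bounded by $2\|F\|_\infty Lip(G)\,\E[|L_j-L_j^{(\psi)}|]$; expanding $|L_j-L_j^{(\psi)}|\le |h(X_j)-h(X_j^{(\psi)})|+|Y_j-Y_j^{(\psi)}|$ coordinate-wise and invoking stationarity gives $a(p,c)$ identical per-coordinate $L^1$ truncation errors from $X_j$ plus one from $Y_j$, i.e.\ the factor $(Lip(h)a(p,c)+1)$ times $\E[|\bb{Z}_{t_1}(x_1)-\bb{Z}_{t_1}^{(\psi)}(x_1)|]\le\tilde{\theta}_{lex}(r)$, the exact quantity controlled in the proofs of Propositions \ref{proposition:mmathetaweaklydep} and \ref{example:spatiotemporaldata}. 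The linear refinement (\ref{bound_linear}) comes from the same expansion, $\E[|\sum_{l}\beta_{1,l}(Z_{t_l}(x_l)-Z_{t_l}^{(\psi)}(x_l))|]\le\|\beta_1\|_1\,\E[|Z_{t_1}(x_1)-Z_{t_1}^{(\psi)}(x_1)|]$, not from a Lipschitz-constant computation. Without the independence-by-disjointness step and this coordinate-wise stationarity argument, neither stated inequality is reachable.
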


Lemma \ref{truncated} straightforwardly implies that the process $\bb{L}^{\epsilon}$ is $\theta$-weakly dependent with the same $\theta$-coefficients as the process $\bb{L}$ under the assumptions of Proposition \ref{heredithary}, Remark \ref{loclip} and Proposition \ref{mmaf_absolute}.

%\begin{Definition}
%	\label{neuralnetwork}
%	Let $\sigma:\R \to \R $ an activation function, we define a $1$-layer neural network as
%	$$h_{net,w}(\bb{X})=\alpha_0+\sum_{l=1}^{K} \alpha_l \sigma(\beta_l^T \bb{X}+\gamma_l),$$
%	where $w=(\alpha_0,\alpha_1,\ldots,\alpha_{K},\beta_1,\ldots,\beta_K,\gamma_1,\ldots,\gamma_K) \in \R^{2K+1+Ka(p,c)}:=B^{\prime}$ for $K \geq 1$.
%\end{Definition}
%
%Many frequently used activation functions are 1-Lipschitz continuous functions, meaning that their Lipschitz constant equals one. Such property is, for example, satisfied by the activation functions ReLU, Leaky ReLU, SoftPlus, Tanh, Sigmoid, ArcTan, or Softsign.
%Proposition \ref{mmaf_absolute} applies to \emph{feed-forward neural network} predictors as long as we can compute the Lipschitz function of the neural network. There exist numerical methods to compute the Lipschitz constants of (deep) neural networks, see \cite{DeepLipschitz2} and \cite{DeepLipschitz}, which enable the computation of the bound (\ref{bound3}) for a given network. (See Lipschitz continuous master thesis and Lipschitz constant for the transformers).

\subsection{PAC Bayesian bounds for MMAF generated data}
\label{sec4.4}

Three essential results, namely, the change of measure theorem of Donsker and Varadhan \cite{DV76}, the Markov's inequality, and an exponential inequality, are typically employed to prove a \emph{fixed-time PAC Bayesian bound}, which holds for a given choice of $m$ (which in our framework is related to a given number $N$ of frames). We find in \cite{BG16} a \emph{scheme of proof} for PAC Bayesian bounds that summarizes the above. 
There is, however, another scheme of proof described in \cite{Uniform}, which allows us to obtain any-time PAC Bayesian bounds, which hold simultaneously for all $m$. Such methodology avoids using the Markov's and the exponential inequalities by substituting them with the Ville's inequality for non-negative supermartingale \cite{Ville}. To the best of our knowledge, in the dependent case, the existing proofs of such bounds are holding in the so-called \emph{online framework} and make use of martingale properties as detailed in \cite{Uniform} and \cite{SMartingale1}.

In this section, we prove a novel exponential inequality for $\theta$-weakly dependent processes (the dependence property of $\bb{L}^{\epsilon}$) such to be capable of applying the scheme of proof of \cite{BG16} and obtain a fixed-time bound. Moreover, we also prove an any-time bound for $\theta$-weakly dependent processes in the \emph{batch learning framework}. We then define a novel \emph{scheme of proof} for fixed time bounds that combines an any-time bound with the Markov's inequality and the projective-type property of the $\theta$-weak dependence (\ref{mix2}). The last part of this \emph{scheme of proof} is inspired from the analysis made in \cite{Hostile}. Our examples and simulation studies show that the latter bound allows us to \emph{guide} a randomized estimator employing realistic spatio-temporal embeddings, which means training data sets that can best preserve the serial correlation observed in a given \emph{raster data cube}. From the definition of the spatio-temporal embedding in Section \ref{st_embedding}, because the MMAFs are $\theta$-lex weakly dependent fields, the more the parameter $a_t$ becomes bigger, the more the examples in $S_m$ are less correlated. This phenomenon is a consequence of the dependence structure of the data generating process and significantly impacts the generalization performance of the employed estimators.

For a measurable space $(\mathcal{H}, \mathcal{T})$ and for any $(\rho,\pi) \in \mathcal{M}_+^1(\mathcal{H})^2$, where $\rho <<\pi$ means that $\rho$ is absolutely continuous respect to $\pi$ with Radon-Nikodym derivative $\frac{d\rho}{d\pi}$,
the PAC Bayesian bounds introduced in this section employ the \emph{Kullback-Leibler divergence} 
\[
KL(\rho,\pi) =\left\{ \begin{array}{ll}
	\rho\Big[ \log \frac{d\rho}{d\pi} \Big] \, \,\, &\textrm{if} \, \,\, \rho <<\pi	\\
	+\infty \, \, &\textrm{otherwise}
\end{array} \right.,
\]
and for $\phi_p(x)=x^p$, the \emph{f-divergences} defined as
\[
D_{\phi_p-1}(\rho,\pi)=\left\{ \begin{array}{ll}
	\pi\Big[ f\Big( \frac{d\rho}{d\pi}\Big) \Big] \, \,\, &\textrm{if} \, \,\, \rho <<\pi	\\
	+\infty \, \, &\textrm{otherwise}
\end{array} \right..
\]
For $p=2$, we have that $D_{\phi_2-1}(\rho,\pi)$ corresponds to the chi-square divergence.

We prove next an exponential inequality for $\theta$-weakly dependent processes using the notations introduced in Section \ref{sec4.2}.

\begin{Theorem}
	\label{new}
	Let $\bb{Z}$ be an $\R$-valued stationary $\theta$-weakly dependent process, and $f:\R \to [a,b]$, for $a,b \in \R$ such that $(f(\bb{Z}_i))_{i \in \Z}$ is itself $\theta$-weakly dependent. Let $l=\lfloor \frac{m}{k} \rfloor$, for $m,k \in \N$, such that $l\geq2$ and $0<s<\frac{3l}{|b-a|}$, then 
	\begin{align}
		&\E\Big[ \exp\Big( \frac{s}{m} \sum_{i=1}^m (f(\bb{Z}_i)-\E[f(\bb{Z}_i)])  \Big) \Big] \leq \exp \Bigg(\frac{s^2 Var(f(\bb{Z}_1))}{2 l \Big(1-\frac{s|b-a|}{3l} \Big)}  \Bigg) +  \exp( s \, |b-a|) \theta(k) s, 	\label{laplace}\\
		&\E\Big[ \exp\Big( \frac{s}{m} \sum_{i=1}^m (\E[f(\bb{Z}_i)]-f(\bb{Z}_i))  \Big) \Big] \leq \exp \Bigg(\frac{s^2 Var(f(\bb{Z}_i))}{2 l \Big(1-\frac{s|b-a|}{3l} \Big)}  \Bigg) + \exp( s \, |b-a|) \theta(k) s \label{laplace2}. 
	\end{align}
\end{Theorem}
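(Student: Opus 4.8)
The plan is to combine a blocking argument with a telescoping decoupling driven by the projective representation (\ref{mix2}). First I would reduce to a centered, single-block estimate. Writing $g_i := f(\bb{Z}_i) - \E[f(\bb{Z}_1)]$, each $g_i$ takes values in $[-|b-a|,|b-a|]$ and has variance $V := Var(f(\bb{Z}_1))$. The second inequality (\ref{laplace2}) is (\ref{laplace}) applied to $-f$, which maps into $[-b,-a]$, has the same range length and variance, and whose image process $(-f(\bb{Z}_i))_{i}$ is again $\theta$-weakly dependent (composition with the $1$-Lipschitz map $y\mapsto -y$); so I only treat (\ref{laplace}). Next I would split $\{1,\dots,m\}$ into $k$ interleaved sub-blocks $B_j=\{j,j+k,j+2k,\dots\}$, $j=1,\dots,k$, each of cardinality $\approx l=\lfloor m/k\rfloor$, so that consecutive indices inside a block are exactly $k$ apart. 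Setting $\lambda := s/l$ and applying the generalized H\"older inequality with all exponents equal to $k$, together with stationarity (which makes same-size blocks identically distributed), reduces the claim to the single-block bound
\[
\E\Big[\exp\Big(\lambda\sum_{t=0}^{l-1} W_t\Big)\Big]\le \exp\Big(\tfrac{\lambda^2 V/2}{1-\lambda|b-a|/3}\Big)^{l}+s\,\exp(s|b-a|)\,\theta(k),\qquad W_t:=g_{j+tk},
\]
the routine non-divisibility bookkeeping (blocks of size $l$ or $l+1$, exponent $ks/m\le s/l$) being absorbed into the same estimate.

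For the single block I would run a one-step-at-a-time decoupling. Let $c_t:=\E[\exp(\lambda\sum_{r=0}^{t-1}W_r)]$ and $\beta:=\E[\exp(\lambda W_0)]$ (index-free by stationarity). Conditioning on $\mathcal{M}_t:=\sigma\{W_0,\dots,W_{t-1}\}$ gives the exact identity $c_{t+1}=\beta\,c_t+E_t$, where $E_t=\E\big[\exp(\lambda\sum_{r=0}^{t-1}W_r)\,(\E[\exp(\lambda W_t)\mid\mathcal{M}_t]-\E[\exp(\lambda W_t)])\big]$. Since the past indices generating $\mathcal{M}_t$ all lie at distance $\ge k$ before $j+tk$, the projective representation (\ref{mix2}) applied to the $\theta$-weakly dependent process $(f(\bb{Z}_i))_i$ bounds $\|\E[\exp(\lambda W_t)\mid\mathcal{M}_t]-\E[\exp(\lambda W_t)]\|_1\le Lip(G)\,\theta(k)$ for $G(y)=\exp(\lambda(y-\E[f(\bb{Z}_1)]))$, whose Lipschitz constant on $[a,b]$ is at most $\lambda\exp(\lambda|b-a|)$. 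Combining this $L^1$ estimate with the deterministic bound $\|\exp(\lambda\sum_{r=0}^{t-1}W_r)\|_\infty\le\exp(\lambda t|b-a|)$ yields $|E_t|\le \lambda\exp(\lambda(t+1)|b-a|)\,\theta(k)$, while the classical Bernstein estimate for the bounded centered variable $W_0$ gives $\beta\le\exp\big(\tfrac{\lambda^2 V/2}{1-\lambda|b-a|/3}\big)$ on the stated range $0<\lambda<3/|b-a|$, i.e.\ $s<3l/|b-a|$.

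Finally I would unroll the recursion: $c_l=\beta^l+\sum_{t=0}^{l-1}\beta^{\,l-1-t}E_t$, so $\beta^l$ produces the Bernstein term $\exp\big(\tfrac{s^2V}{2l(1-s|b-a|/(3l))}\big)$. For the remainder, the crude but crucial bound $\beta\le\exp(\lambda|b-a|)$ (valid since $W_0\le|b-a|$) forces each summand to collapse, $\beta^{\,l-1-t}\exp(\lambda(t+1)|b-a|)\le\exp(\lambda l|b-a|)=\exp(s|b-a|)$, whence the total error is at most $l\,\lambda\,\exp(s|b-a|)\theta(k)=s\exp(s|b-a|)\theta(k)$ using $\lambda l=s$. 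This is exactly the advertised remainder. The main obstacle I anticipate is precisely this error-accumulation step: a naive telescoping produces $l-1$ covariance terms, and the nontrivial point is that they do \emph{not} blow up but collapse into one clean term — this hinges on pairing the geometric factor $\beta^{\,l-1-t}\le\exp(\lambda(l-1-t)|b-a|)$ against the growing sup-norm factor $\exp(\lambda(t+1)|b-a|)$ so that their product becomes index-free, and on using the one-sided projective representation (rather than the two-sided covariance bound of Remark \ref{aboutk}) to avoid a spurious factor $2$.
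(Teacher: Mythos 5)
Your proposal follows essentially the same route as the paper's proof: an interleaved blocking at spacing $k$, a Bernstein moment-generating-function bound for the individual centered bounded terms, and a one-step-at-a-time decoupling of the within-block product driven by the projective representation (\ref{mix2}), with exactly the cancellation the paper exploits in Lemma \ref{lem2} --- the geometric factor $\beta^{\,l-1-t}$ pairing against the growing sup-norm/Lipschitz factor $\exp(\lambda(t+1)|b-a|)$ to produce the index-free term $\exp(s|b-a|)$, whose $l$ copies sum to $l\lambda\,\theta(k)=s\,\theta(k)$ times that. Your recursion $c_{t+1}=\beta c_t+E_t$ is a streamlined version of the paper's two-stage organization (Lemma \ref{lem2} bounds the decoupling defect $M_{\bar l_j}$ by recursion, Lemma \ref{lem3} adds the Bernstein product), and your reduction of (\ref{laplace2}) to (\ref{laplace}) via $-f$ is equivalent to the paper's rerun of the argument with $\bb{U}_i=\E[f(\bb{Z}_i)]-f(\bb{Z}_i)$.

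The one step that does not work as stated is the block-combination step. The paper uses Hoeffding's convex-combination inequality (Lemma \ref{lem1}), in which block $j$ enters with the exponent $s/\bar l_j$ scaled by its own size $\bar l_j\in\{l,l+1\}$; you instead apply generalized H\"older with all exponents equal to $k$, which forces the same exponent $\mu=sk/m$ on every block regardless of its size. When $k \nmid m$ this is not mere bookkeeping: any oversized block ($\bar l_j=l+1$, which exists whenever the remainder $h=m-kl$ is at least $1$) has $\mu\bar l_j=sk(l+1)/m>s$ because $h<k$, so its error term $\mu\bar l_j\exp(\mu\bar l_j|b-a|)\theta(k)$ strictly exceeds the claimed $s\exp(s|b-a|)\theta(k)$; likewise $(l+1)\mu^2>s^2/l$ whenever $l(l+1)k^2>m^2$ (e.g. $m=21$, $k=10$, $l=2$), so the Bernstein factor also overshoots $\exp\big(\tfrac{s^2 Var(f(\bb{Z}_1))}{2l(1-s|b-a|/(3l))}\big)$. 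The fix stays entirely inside your framework: take H\"older exponents $q_j=m/\bar l_j$, so that $\sum_j 1/q_j=1$ and the within-block exponent becomes exactly $s/\bar l_j$; then every block satisfies the uniform bounds (using $\bar l_j\ge l$), and $\prod_j(A_j+B_j)^{1/q_j}\le A+B$ delivers the stated inequality. This weighted H\"older step is precisely the multiplicative counterpart of the paper's Lemma \ref{lem1}, and with it your argument is correct.
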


Note that increasing the parameter $k$ makes the bounds in (\ref{laplace}) and (\ref{laplace2}) become tighter because $\theta(k)$ goes to zero when $k$ goes to infinity. 

We have shown in the last section that the process $\bb{L}^{\epsilon}$ is $\theta$-weakly dependent. We then apply Theorem \ref{new} in the scheme of proof described in \cite{BG16} and determine a \emph{fixed-time PAC Bayesian bound}.

\begin{Theorem}[Fixed-time PAC Bayesian bound- \emph{Type I}]
	\label{prop_PAC1}
	Let $0<\epsilon<3$, $l=\Big\lfloor \frac{m}{k} \Big \rfloor$ such that $l \geq 2$, and Assumption \ref{ass1} holds. If $\pi \in \mathcal{M}_+^1(\mathcal{H})$ such that $\pi [\theta(k)] < \infty$, $\hat{\rho} \in \mathcal{M}_+^1(\mathcal{H})$ in the sense of Definition (\ref{datadep}), $\hat{\rho} <<\pi$, and $\delta \in (0,1)$, then
	\begin{align}
		\mathbb{P}\Big\{ \forall \hat{\rho}: |\hat{\rho}[R^{\epsilon}(h)] &- \hat{\rho}[r^{\epsilon}(h)] |\leq  \Big(KL(\hat{\rho},\pi)+\log \Big(\frac{1}{\delta}\Big) \Big) \frac{1}{\sqrt{l}} \nonumber\\ 
		&+\frac{1}{\sqrt{l}} \log \Big(\exp\Big(\frac{3\epsilon^2}{2(3-\epsilon)}\Big) +\pi \Big[ 3\sqrt{l} \exp(3\sqrt{l}) \theta(k) \Big]  \Big)  \Big\} \geq 1-2\delta, \label{PAC1}
	\end{align}
	where $\theta(k)$ is a $\theta$-coefficient of the process $\bb{L}^{\epsilon}$.
\end{Theorem}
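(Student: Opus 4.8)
The plan is to follow the classical three-ingredient PAC-Bayesian scheme of \cite{BG16} — the Donsker--Varadhan change of measure \cite{DV76}, Markov's inequality, and an exponential moment bound — but with Theorem \ref{new} supplying the exponential bound in the $\theta$-weakly dependent setting. The object to which I would apply everything is $f(\bb{Z}_i)=L^{\epsilon}(h(\bb{X}_i),\bb{Y}_i)$, which takes values in $[0,\epsilon]$, so that $\frac{1}{m}\sum_{i=1}^m f(\bb{Z}_i)=r^{\epsilon}(h)$ and $\E[f(\bb{Z}_i)]=R^{\epsilon}(h)$. By the remark following Proposition \ref{mmaf_absolute} (via Lemma \ref{truncated}), the process $\bb{L}^{\epsilon}$ is $\theta$-weakly dependent with the coefficients of Proposition \ref{mmaf_absolute}, so Theorem \ref{new} applies with $a=0$, $b=\epsilon$.

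First I would fix the scaling $s=\sqrt{l}$ and check the admissibility condition $0<s<3l/\epsilon$ of Theorem \ref{new}: since $\epsilon<3\le 3\sqrt{l}$ for $l\ge 2$, this holds. Writing $\Psi(s,h)$ for the right-hand side of (\ref{laplace})--(\ref{laplace2}), I would bound its two summands separately. For the first, I use $Var(L^{\epsilon}(h))\le \epsilon^2$ (since $L^{\epsilon}\in[0,\epsilon]$) together with $\sqrt{l}\ge 1$, which makes $1-\epsilon/(3\sqrt{l})\ge 1-\epsilon/3$ and hence bounds the first summand uniformly in $h$ by the constant $\exp\!\big(\tfrac{3\epsilon^2}{2(3-\epsilon)}\big)$. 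For the second summand $\exp(s\epsilon)\,\theta(k)\,s$, I use $\epsilon<3$ to replace $\exp(\sqrt{l}\epsilon)$ by $\exp(3\sqrt{l})$, keeping $\theta(k)$ — which depends on $h$ through $Lip(h)$ — inside the $\pi$-average; this yields $\pi[\Psi(\sqrt{l},h)]\le \exp\!\big(\tfrac{3\epsilon^2}{2(3-\epsilon)}\big)+\pi\big[3\sqrt{l}\exp(3\sqrt{l})\theta(k)\big]$, finite because $\pi[\theta(k)]<\infty$ is assumed, with the factor $3\sqrt{l}$ absorbing the crude constants coming from $\epsilon<3$.

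Next I would carry out the measure-change argument in both directions. Setting $M_{\pm}=\pi[\exp(\pm s(R^{\epsilon}(h)-r^{\epsilon}(h)))]$, Tonelli's theorem (the integrand is non-negative) and Theorem \ref{new} give $\E[M_{\pm}]\le\pi[\Psi(s,h)]$; Markov's inequality then yields $M_{\pm}\le \E[M_{\pm}]/\delta$, each with probability at least $1-\delta$. On these events, the Donsker--Varadhan inequality applied with $g(h)=\pm s(R^{\epsilon}(h)-r^{\epsilon}(h))$ gives, simultaneously for every $\hat{\rho}\ll\pi$,
\[
\pm s\big(\hat{\rho}[R^{\epsilon}(h)]-\hat{\rho}[r^{\epsilon}(h)]\big)\le KL(\hat{\rho},\pi)+\log M_{\pm}\le KL(\hat{\rho},\pi)+\log\tfrac{1}{\delta}+\log\pi[\Psi(s,h)].
\]
Dividing by $s=\sqrt{l}$, inserting the bound on $\pi[\Psi]$ from the previous step, and combining the two directions by a union bound (total failure probability $\le 2\delta$) produces exactly (\ref{PAC1}).

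The step I expect to be the main obstacle is the honest bookkeeping of the extra dependence term. In the i.i.d. case the exponential bound is a single Gaussian-type factor, but Theorem \ref{new} contributes the additional additive piece $\exp(s\epsilon)\theta(k)s$; the delicate point is that $\theta(k)$ is a function of $h$ (through $Lip(h)$ in Proposition \ref{mmaf_absolute}) and therefore cannot be pulled out of $\pi[\cdot]$, so it must be transported intact through the Tonelli, Markov, and change-of-measure steps and survives inside the logarithm of the final bound. This is precisely what forces the hypothesis $\pi[\theta(k)]<\infty$ and why the right-hand side of (\ref{PAC1}) contains a $\pi$-average of $\theta(k)$ rather than a purely variance-driven constant. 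The only other care needed is verifying the admissibility of $s=\sqrt{l}$ and checking that the uniform bound $Var(L^{\epsilon})\le\epsilon^2$ indeed yields the clean constant $\exp\!\big(\tfrac{3\epsilon^2}{2(3-\epsilon)}\big)$ that can be factored out of the $\pi$-average.
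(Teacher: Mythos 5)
Your proposal is correct and follows essentially the same route as the paper's proof: Theorem \ref{new} supplies the exponential moment bound for $\pm\sqrt{l}\,(R^{\epsilon}(h)-r^{\epsilon}(h))$, which is then pushed through the Donsker--Varadhan change of measure, Fubini/Tonelli, Markov's inequality, and a union bound, with the $h$-dependent $\theta(k)$ correctly kept inside the $\pi$-average as in (\ref{PAC1}). The only (immaterial) difference is normalization: the paper applies Theorem \ref{new} to $f(\bb{L}^{\epsilon})=\bb{L}^{\epsilon}/\epsilon\in[0,1]$ with $s=\epsilon\sqrt{l}$, whereas you work directly with $\bb{L}^{\epsilon}\in[0,\epsilon]$ and $s=\sqrt{l}$; the two computations coincide and produce the same constants.
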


For any possible choice of the parameters defining the spatio-temporal embedding leading to $\bb{S}_m$, Theorem \ref{prop_PAC1} gives us a PAC Bayesian bound for a randomized estimator $\hat{\rho}$. The fastest convergence rate that can be obtained in this framework is $\mathcal{O}(m^{-1/2})$ when choosing the parameter $k=1$. 

Given an accuracy level $\epsilon$, if the right-hand side of the PAC Bayesian bound is less than $\epsilon$ with a high probability (i.e., the bound is not vacuous), we say that a randomized estimator $\hat{\rho}$ has \emph{good generalization performance}. The smaller the right-hand side becomes, the better the performance of a randomized estimator in forecasting is. We then aim to select a spatio-temporal embedding such as the right-hand side of the inequality (\ref{PAC1}) is small as possible. For the reader's convenience, we have summarized all the parameters appearing in MMAF-guided learning in Table \ref{para}.

\begin{table}[h!]
	\footnotesize
	\centering
	\begin{tabular}{|c|c|c|}
		\hline
		\textbf{Parameters} & \textbf{Type} & \textbf{Interpretation}\\
		\hline
		$\epsilon$ & Hyperparameter &  Accuracy level \\
		\hline
		$N$ & Given Parameter & Number of frames \\
		\hline
		$h_t$  & Given Parameter & time step \\
		\hline
		$h_s$ & Given Parameter & space step \\
		\hline
		$\lambda$ & Unknown Parameter &  Decay rate of the $\theta$-lex coefficients \\
		\hline
		$c$ & Unknown Parameter & Speed of information propagation \\
		\hline 
		$p_t$ & Hyperparameter &  Length of the past included in each input $X_i$ \\
		\hline
		$a_t$ & User Choice Parameter  &  Translation vector \\
		\hline
		$k$  & User Choice Parameter & $k$-th $\theta$-coefficient of the process $\bb{L}^\epsilon$ \\
		\hline
	\end{tabular}
	\caption{Overview of parameters appearing in MMAF-guided learning.}
	\label{para}
\end{table}

In the following, we show in our examples and theorems how to \emph{guide} the design of two type of randomized estimators, see Definition \ref{datadep}: 

\begin{itemize}
	\item \emph{a Dirac delta mass concentrated on the empirical risk minimizer}, defined for all $E \in \mathcal{T}$ and $\omega \in \Omega$ as
	\begin{equation}
		\label{delta}
		\hat{\rho}(E,\omega):= \delta_{\bb{\hat{\beta}}(\omega)},
	\end{equation}
	where $\bb{\hat{\beta}}: \Omega \to \mathcal{H}$ defines the \emph{empirical risk minimizer}
	\begin{equation}
		\label{ols}
		\bb{\hat{\beta}}:=arg\inf_{\beta} r^{\epsilon}(\beta);
	\end{equation}
	\item and \emph{the randomized Gibbs estimator} $\bar{\rho}$ defined for all $E \in \mathcal{T}$ and $\omega \in \Omega$ as
	\begin{equation}
		\label{Gibbs}
		\bar{\rho}(E,\omega):=\frac{\int_E \exp(-\sqrt{m}r^{\epsilon}(h)) \pi(dh)}{\int_{\mathcal{H}} \exp(-\sqrt{m}r^{\epsilon}(h)) \pi(dh)},
	\end{equation}
	where $r^{\epsilon}(h):=r(h) \wedge \epsilon $, and $r(h)$ is defined in (\ref{emp_err}). 
\end{itemize}
Note that the definition of a randomized estimator depends on $\bb{S}_m$. Therefore, different values of the parameters $\epsilon, a_t, p_t, \lambda$ and $c$  give us different distributions to be used in forecasting tasks.

\begin{Remark}[Possible selection rule for $\sqrt{l}$ and the parameter $a_t$ in relation to the bound (\ref{PAC1})]
	\label{selection1}
	Differently from the classical PAC Bayesian for independently and identically distributed data, see reviews \cite{User} and \cite{Primer}, the parameter $l$ is tuned in the bounds (\ref{PAC1}) through the choice of the spatio-temporal embedding. We now give several examples of the latter that can be chosen to tighten the right-hand side of the bound (\ref{PAC1}). In Proposition \ref{mmaf_absolute}, we have determined for a Lipschitz predictor a bound from above of the $\theta$-coefficient $\theta(k)$ related to the process $\bb{L^{\epsilon}}$. 
	If we employ this result, we get a PAC Bayesian bound where we are capable of dividing in the last term the contribution of a predictor $h$ from the dependence structure of the underlying MMAF field $\bb{Z}$ expressed through the $\theta$-lex coefficients. By applying Proposition \ref{mmaf_absolute}, the $Lip(h)$ appears in the bound.
	
	Therefore for $l \geq2$, and $r=ka-p$, we obtain the bound
	\begin{small}
		\begin{align}
			\mathbb{P}\Big\{ \forall &\hat{\rho}: |\hat{\rho}[R^{\epsilon}(h)] - \hat{\rho}[r^{\epsilon}(h)] |\leq  \Big(KL(\hat{\rho},\pi)+\log \Big(\frac{1}{\delta}\Big) \Big) \frac{1}{\sqrt{l}} \nonumber\\ 
			&+\frac{1}{\sqrt{l}} \log \Big(\exp\Big(\frac{3\epsilon^2}{2(3-\epsilon)}\Big) +\pi \Big[ 3\sqrt{l} \exp(3\sqrt{l}) 2(Lip(h)a(p,c)+1) \bar{\alpha} \, \tilde{\theta}_{lex}(r) \Big]  \Big)  \Big\} \geq 1-2\delta. \label{PAC2}
		\end{align}
	\end{small}
	
	We now consider that the parameters $\epsilon, p_t, \lambda$, $c$, and $\bar{\alpha}$ (which determines how tight $\tilde{\theta}(r)$ is as a bound of $\theta(k)$) have been selected and we concentrate on the selection of the parameter $a_t$. We refer the reader to Section \ref{sec5} for a discussion on the selection of the other parameters.

	For $k=1$, we obtain a convergence rate of $\mathcal{O}(m^{-\frac{1}{2}})$, and we can pre-process the data choosing the smallest value of $a_t$ satisfying the following inequalities
	\begin{equation*}
		\label{sample1}
		\left\{ \begin{array}{ll}
			%\lambda  h_t a_t^{\frac{3}{2}}+\lambda h_t p_t a_t^{\frac{1}{2}}-3\sqrt{N}>0
			-\lambda h_t(a_t-p_t)+3\sqrt{\frac{N}{a_t}}<0  &\textrm{for exp. decaying $\theta$-lex coef., see Def. \ref{decay}}, 	\\
			-\lambda \log(h_t(a_t-p_t))+3\sqrt{\frac{N}{a_t}} <0 &\textrm{for power decaying $\theta$-lex coef., see Def, \ref{decay}}.
		\end{array} \right.
	\end{equation*}
	Under these choices, we obtain that $\exp(3\sqrt{l}) \tilde{\theta}_{lex}(r) \leq 1$, and the bound (\ref{PAC2}) tightens. 
	For $k >1$, we obtain a convergence rate of $\mathcal{O}((m/k)^{-\frac{1}{2}})$, and we can pre-process the data choosing the smallest value of $a_t$ satisfying the following inequalities
	\begin{equation*}
		\label{sample2}
		\left\{ \begin{array}{ll}
			-\lambda h_t(ka_t-p_t)+3\sqrt{\frac{N}{a_t}}<0 &\textrm{for exp. decaying $\theta$-lex coef.}, 	\\
			-\lambda \log(h_t(ka_t-p_t))+3\sqrt{\frac{N}{a_t}} <0 &\textrm{for power decaying $\theta$-lex coef.}.
		\end{array} \right.
	\end{equation*}
	By choosing $k>1$, we could however select a lower value of $a_t$.
	
	The choice of the parameter $a_t$ discussed in this section is inversely proportional to the parameter $k$ and proportional to the parameter $p_t$. This means we obtain more examples (i.e., longer training data sets $S_m$) when $k$ increases and $p_t$ decreases. Hence, a careful choice of the parameter $a_t$ must be done even to obtain training data sets with $m \geq 1$, especially when the data generating process $\bb{Z}$ admits power decaying $\theta$-lex coefficients.
\end{Remark}

\begin{Example}[\emph{Guided} randomized estimators of type (\ref{delta})]
	\label{finite}
	Let us assume to observe two data sets with $N=20000$ from an STOU and an MSTOU process as defined in Example \ref{coeff_stou1} and \ref{coeff_mstou1}, respectively. The parameters $\lambda=\frac{1}{2},p_t=1,c=1,h_t=1$ and $h_s=1$ are associated to the STOU data set and $\lambda=\frac{1}{2}, \alpha=3,p_t=1,c=1,h_t=1$, and $h_s=1$ to the MSTOU data set.
	The former is a serially correlated data set having temporal and spatial short-range dependence and exponentially decaying $\theta$-lex coefficients, whereas the latter possesses temporal and spatial long-range dependence and power decaying $\theta$-lex coefficients.
	Let us choose $a_t$ following the rule in Remark \ref{sample1} in the case of linear predictors.
	We call $\bb{S_m^1}$ the sampling from the STOU, where $a_t=92$ and $m=217$, and  $\bb{S_m^2}$ the one from the MSTOU, where $a_t=8742$ and $m=2$. We have that the dimension of the input space of our examples is $a(p,c)=3$ and we consider the parametric space $B\subset \{ \beta=(\beta_0,\beta_1): \beta_0 \in \R, \beta_1 \in \R^3 \,\text{and}\, \|\beta\|_1 \leq 1\}$ such that $card(B)=M < \infty$ for $M \in \N$. We look at the bound (\ref{PAC2}) for $k=1$. We consider as reference distribution a uniform distribution $\pi$ on $B$. For a realization $S_m^1$ or $S_m^2$, we have that
	\begin{equation}
		\label{reference}
		KL(\delta_{\hat{\beta}}||\pi)=\sum_{\beta \in B} \log \Big( \frac{\delta_{\hat{\beta}}\{\beta\}}{\pi\{\beta\}} \Big) \delta_{\hat{\beta}}\{\beta \}= \log \frac{1}{\pi\{\hat{\beta} \}}=\log(M).
	\end{equation}
	It is crucial to notice that the bigger the cardinality of the space $B$ is, the more the term $\log(M)$ and the bound increase. 
	
	Finally, for $\delta \in (0,1)$, we obtain for $\bb{S_m^1}$ and $\bb{S_m^2}$ (obviously w.r.t. different distribution $\mathbb{P}$ depending on their data generation process) the bound
	\begin{align}
		\label{erm_1}
		\mathbb{P}\Big\{ |R^{\epsilon}(\bb{\hat{\beta}}) - r^{\epsilon}(\bb{\hat{\beta}})| \leq &\log \Big( \frac{M}{\delta} \Big)  \frac{1}{\sqrt{m}} +\frac{1}{\sqrt{m}} \log \Big( \exp\Big(\frac{3\epsilon^2}{2(3-\epsilon)}\Big) \nonumber\\ &+ 3\sqrt{m}\pi [2(\|\beta_1\|_1 +1 )\bar{\alpha}] \Big) \Big\} \geq 1-2\delta.
	\end{align}
	Let us assume to work with an accuracy level $\epsilon=1$, $M=100$, $\bar{\alpha}=1$, $\delta=0.025$. Then, the generalization gap with respect to $\bb{S_m^1}$ is less than $0.98$ with at least $95 \%$ probability, whereas for $\bb{S_m^2}$ we can just prove that the generalization gap is less than $9.40$ with probability of at least $95 \%$. The latter is obviously a vacuous bound.
\end{Example}

The randomized Gibbs estimator is the \emph{minimizer} of the right hand-side of the bound (\ref{PAC1}) and gives the best generalization performance in the class $\mathcal{M}^1_+(\mathcal{H})$ and has the best possible rate of convergence for $k=1$. The result below is also called an \emph{oracle inequality} in the literature.

\begin{Theorem}[PAC Bayesian bound for the randomized Gibbs estimator- \emph{Type I}]
	\label{oracle}
	Let $0<\epsilon<3$, $m \geq 2$, and Assumption \ref{ass1} holds. If $\pi$ is a distribution on $\mathcal{M}^1_+(\mathcal{H})$ such that $\pi [\theta(1)] < \infty$, $\bar{\rho}$ is the randomized Gibbs estimator defined in (\ref{Gibbs}), and $\delta \in (0,1)$ 
	\begin{align}
		\mathbb{P}\Big\{ \bar{\rho}[R^{\epsilon}(h)] &\leq  \inf_{\hat{\rho}} \Big(  \hat{\rho}[R^{\epsilon}(h)] +\Big(KL(\hat{\rho},\pi)+\log \Big(\frac{1}{\delta}\Big)  \frac{2}{\sqrt{m}} \Big)\nonumber\\ 
		&+\frac{2}{\sqrt{m}} \log \Big(\exp\Big(\frac{3\epsilon^2}{2(3-\epsilon)}\Big) +\pi \Big[ 3\sqrt{m} \exp(3\sqrt{m}) \theta(1) \Big]  \Big)  \Big) \Big\} \geq 1-2\delta. \label{PAC_G1}
	\end{align}
\end{Theorem}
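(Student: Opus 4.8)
The plan is to combine the two one-directional versions of the fixed-time bound underlying Theorem \ref{prop_PAC1} (specialized to $k=1$, so that $l=m$) with the variational characterization of the Gibbs measure (\ref{Gibbs}). Recall that the proof of Theorem \ref{prop_PAC1} applies the exponential inequality of Theorem \ref{new} to the $\theta$-weakly dependent process $\bb{L}^{\epsilon}$, integrates the per-predictor Laplace bound against $\pi$, and passes to arbitrary posteriors through the Donsker--Varadhan change-of-measure inequality together with Markov's inequality. Because Theorem \ref{new} supplies the two symmetric estimates (\ref{laplace}) and (\ref{laplace2}) with identical right-hand sides, this procedure in fact yields two one-sided statements, each valid with probability at least $1-\delta$. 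Writing $C:=\frac{1}{\sqrt{m}}\log\big(\exp(\frac{3\epsilon^2}{2(3-\epsilon)})+\pi[3\sqrt{m}\exp(3\sqrt{m})\theta(1)]\big)$, the generalization direction gives $\hat\rho[R^{\epsilon}(h)]\le \hat\rho[r^{\epsilon}(h)]+\frac{KL(\hat\rho,\pi)+\log(1/\delta)}{\sqrt{m}}+C$ for all $\hat\rho$, while the reverse direction gives $\hat\rho[r^{\epsilon}(h)]\le \hat\rho[R^{\epsilon}(h)]+\frac{KL(\hat\rho,\pi)+\log(1/\delta)}{\sqrt{m}}+C$ for all $\hat\rho$.

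The key step is the variational identity for the Gibbs posterior. For each fixed realization $\omega$, the measure $\bar\rho(\cdot,\omega)\propto\exp(-\sqrt{m}\,r^{\epsilon}(h))\,\pi(dh)$ is exactly the minimizer over $\rho\in\mathcal{M}_+^1(\mathcal{H})$ of the free-energy functional $\rho\mapsto \sqrt{m}\,\rho[r^{\epsilon}(h)]+KL(\rho,\pi)$; this is the Gibbs variational principle, which follows from the Donsker--Varadhan formula $\log\pi[\exp(-\sqrt{m}\,r^{\epsilon}(h))]=-\inf_\rho(\sqrt{m}\,\rho[r^{\epsilon}(h)]+KL(\rho,\pi))$, with the infimum attained at $\bar\rho$. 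Consequently, for every competitor $\hat\rho$,
\[
\bar\rho[r^{\epsilon}(h)]+\tfrac{1}{\sqrt{m}}KL(\bar\rho,\pi)\le \hat\rho[r^{\epsilon}(h)]+\tfrac{1}{\sqrt{m}}KL(\hat\rho,\pi).
\]

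Chaining the three estimates then yields the oracle inequality. First I would apply the generalization bound to $\hat\rho=\bar\rho$, obtaining $\bar\rho[R^{\epsilon}(h)]\le \bar\rho[r^{\epsilon}(h)]+\frac{KL(\bar\rho,\pi)+\log(1/\delta)}{\sqrt{m}}+C$; next I invoke the variational inequality to replace $\bar\rho[r^{\epsilon}(h)]+\frac{1}{\sqrt{m}}KL(\bar\rho,\pi)$ by the smaller quantity $\hat\rho[r^{\epsilon}(h)]+\frac{1}{\sqrt{m}}KL(\hat\rho,\pi)$ for an arbitrary $\hat\rho$; finally I apply the reverse bound to that same $\hat\rho$ to trade $\hat\rho[r^{\epsilon}(h)]$ for $\hat\rho[R^{\epsilon}(h)]$. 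The two complexity terms and the two $KL$/$\log(1/\delta)$ contributions add, producing exactly $\hat\rho[R^{\epsilon}(h)]+\frac{2}{\sqrt{m}}(KL(\hat\rho,\pi)+\log(1/\delta))+2C$; taking the infimum over $\hat\rho$ gives (\ref{PAC_G1}). Since the generalization and reverse bounds are each used on a probability-$(1-\delta)$ event, a union bound delivers the stated confidence $1-2\delta$.

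The main obstacle is the careful bookkeeping that makes the two directions share the \emph{same} complexity constant $C$, so that they combine into the single term displayed in (\ref{PAC_G1}); this hinges on the symmetry of (\ref{laplace}) and (\ref{laplace2}) and on the hypothesis $\pi[\theta(1)]<\infty$, which guarantees $C<\infty$. A secondary point requiring care is that $\bar\rho$ is a randomized estimator in the sense of Definition \ref{datadep}, i.e.\ a regular conditional distribution; the variational identity must therefore be read $\mathbb{P}$-almost surely in $\omega$, and one should check that the minimizing property and the resulting bounds are jointly measurable so that the final probabilistic statement is well posed.
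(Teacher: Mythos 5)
Your proposal is correct and follows essentially the same route as the paper's own proof: the paper likewise applies one side of the fixed-time bound (\ref{PAC1}) with $k=1$ to $\bar{\rho}$, invokes Lemma \ref{kl} to identify $\bar{\rho}$ as the minimizer of $\hat{\rho}[r^{\epsilon}(h)]+KL(\hat{\rho},\pi)/\sqrt{m}$, substitutes the reverse side of (\ref{PAC1}) for $\hat{\rho}[r^{\epsilon}(h)]$, and concludes with a union bound yielding confidence $1-2\delta$. The only cosmetic difference is that you state the Gibbs variational principle as an explicit inequality against each competitor $\hat{\rho}$ rather than as an argmin identity, which is the same content.
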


We now focus on determining the any-time PAC Bayesian bound in our framework. Let us define the filtration $\mathbb{F}=(\mathcal{F}_m)_{m \in \N_0}$ where $\mathcal{F}_m=\sigma(\bb{S}_m)$ for all $m \geq 1$, i.e., the filtration generated by the cone-shaped sampling process, and $\mathcal{F}_0$ is equal to the trivial sigma-algebra. For $\epsilon >0$, we then define the process $(f_i(\bb{S},h))_{i \in \N_0}$ as

\[
\left\{ \begin{array}{ll}
	f_0(\bb{S},h)  =&0 \\
	f_m(\bb{S},h) =& \eta \sum_{i=1}^m (L^{\epsilon}(h(X_i),Y_i)-\E[L^{\epsilon}(h(\bb{X}_i),\bb{Y}_i)]) \\&- \eta \sum_{i=1}^m  (\E[L^{\epsilon}(h(X_i),Y_i)| \mathcal{F}_{i-1}] -\E[L^{\epsilon}(h(\bb{X}_i),\bb{Y}_i)]) \\
	&-\frac{\eta^2}{2} m \epsilon^2 , \,\, \textrm{for $m \in \N$}.
\end{array}\right.
\]
We can then prove an any-time PAC-Bayesian bound which allows a choice of the accuracy level $\epsilon>3$ and, therefore, it complements the results given in Theorem \ref{prop_PAC1} for a bounded loss.

\begin{Theorem}
	\label{anytimePAC}
	Let $\epsilon >0$ and Assumption \ref{ass1} holds. If $\pi \in \mathcal{M}_+^1(\mathcal{H})$ such that $\pi [\theta(1)] < \infty$, $\hat{\rho} \in \mathcal{M}_+^1(\mathcal{H})$ in the sense of Definition (\ref{datadep}), $\hat{\rho} <<\pi$, and $\delta \in (0,1)$, then 
	\begin{align}
		\mathbb{P} \Big \{ &\forall \hat{\rho}, \forall m \geq 1: \hat{\rho}[r^{\epsilon}(h)] - \hat{\rho}[R^{\epsilon}(h)]  \leq   \frac{KL(\hat{\rho},\pi)+\log \Big(\frac{1}{\delta} \Big) }{m \eta} +\frac{\eta}{2} \epsilon^2 \nonumber\\
		&+\hat{\rho} \Big[ \frac{1}{m} \sum_{i=1}^m \E[L^{\epsilon}(h(\bb{X}_i),\bb{Y}_i)|\mathcal{F}_{i-1}] -\E[L^{\epsilon}(h(\bb{X}_i),\bb{Y}_i)] \Big]  \Big \} \geq 1-\delta. \label{anytimePAC1}
	\end{align}
	
\end{Theorem}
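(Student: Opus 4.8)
The plan is to deduce the any-time bound \eqref{anytimePAC1} from Ville's maximal inequality applied to a suitable non-negative supermartingale, combined with the Donsker--Varadhan change-of-measure inequality \cite{DV76}. The starting point is the observation that the two sums defining $f_m(\bb{S},h)$ collapse into a single martingale-difference sum: setting $\Delta_i(h):=L^{\epsilon}(h(\bb{X}_i),\bb{Y}_i)-\E[L^{\epsilon}(h(\bb{X}_i),\bb{Y}_i)\mid\mathcal{F}_{i-1}]$, the stationary means $\E[L^{\epsilon}(h(\bb{X}_i),\bb{Y}_i)]$ cancel and $f_m(\bb{S},h)=\eta\sum_{i=1}^m\Delta_i(h)-\frac{\eta^2}{2}m\epsilon^2$.

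First I would fix $h\in\mathcal{H}$ and show that $M_m(h):=\exp(f_m(\bb{S},h))$ is a non-negative $\mathbb{F}$-supermartingale with $M_0(h)=1$. Each increment $\Delta_i(h)$ is $\mathcal{F}_i$-measurable, conditionally centered ($\E[\Delta_i(h)\mid\mathcal{F}_{i-1}]=0$ by construction), and bounded in absolute value by $\epsilon$ since $L^{\epsilon}\in[0,\epsilon]$; hence the conditional sub-Gaussian estimate $\E[\exp(\eta\Delta_m(h))\mid\mathcal{F}_{m-1}]\le\exp(\frac{\eta^2\epsilon^2}{2})$ holds, which is precisely the role of the $\frac{\eta^2}{2}m\epsilon^2$ correction and yields $\E[M_m(h)\mid\mathcal{F}_{m-1}]\le M_{m-1}(h)$. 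Next I would mix over the reference distribution, setting $\overline{M}_m:=\pi[M_m(h)]=\int_{\mathcal{H}}M_m(h)\,\pi(dh)$. Because $0\le M_m(h)\le\exp(\eta m\epsilon)$ uniformly in $h$ for each fixed $m$, Tonelli's theorem together with conditional Fubini let me interchange the $\pi$-integral and $\E[\cdot\mid\mathcal{F}_{m-1}]$, so that $\overline{M}_m$ is again a non-negative supermartingale with $\overline{M}_0=1$; the Borel-space assumption on $(\mathcal{H},\mathcal{T})$ and the joint measurability of $(h,\omega)\mapsto M_m(h)(\omega)$ guarantee the $\mathcal{F}_m$-measurability of the integral.

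The final step combines Ville's inequality with the change of measure. Ville's maximal inequality for the non-negative supermartingale $\overline{M}$ gives $\mathbb{P}\{\exists m\ge 1:\overline{M}_m\ge 1/\delta\}\le\delta\,\E[\overline{M}_0]=\delta$, so on an event of probability at least $1-\delta$ one has $\log\overline{M}_m<\log(1/\delta)$ simultaneously for all $m\ge1$. The Donsker--Varadhan inequality holds deterministically for every $\omega$, every $m$, and every $\hat{\rho}\ll\pi$ --- in particular for every data-dependent posterior --- in the form $\hat{\rho}[f_m(\bb{S},h)]\le KL(\hat{\rho},\pi)+\log\overline{M}_m$. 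Intersecting the two, dividing by $\eta m$, and using $\frac{1}{m}\sum_{i=1}^m L^{\epsilon}(h(X_i),Y_i)=r^{\epsilon}(h)$ together with $\frac{1}{m}\sum_{i=1}^m\E[L^{\epsilon}(h(\bb{X}_i),\bb{Y}_i)]=R^{\epsilon}(h)$ (stationarity), the terms rearrange exactly into \eqref{anytimePAC1}, the last summand being the $\hat{\rho}$-average of the conditional-expectation discrepancies.

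The decisive point --- and what makes the argument work in the batch setting rather than only online --- is the asymmetry in this last step: Ville's inequality controls $\overline{M}_m=\pi[M_m]$, a quantity not involving $\hat{\rho}$, uniformly over all $m$, while Donsker--Varadhan then transfers this single probabilistic statement into a guarantee valid for all posteriors $\hat{\rho}$ at once, with no need for the predictable online posterior sequences used in \cite{Uniform} and \cite{SMartingale1}. The main technical obstacle I anticipate is the measurability and integrability bookkeeping of the mixing step required to treat $\overline{M}_m$ as a bona fide supermartingale in the dependent, non-product setting; the standing assumption $\pi[\theta(1)]<\infty$ is what keeps the dependence-correction term finite, and its magnitude can in turn be controlled through the projective representation \eqref{mix2} of Remark \ref{mix_rule1}.
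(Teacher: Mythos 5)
Your proof is correct and takes essentially the same route as the paper's: collapse $f_m$ into a single sum of bounded martingale differences, establish the non-negative supermartingale property via the conditional Hoeffding lemma, and then obtain uniformity over $m$ and over all posteriors $\hat{\rho}$ from Ville's inequality combined with Donsker--Varadhan --- the paper simply invokes this last step as a black box (Theorem 4 of \cite{Uniform}), which you re-derive explicitly by mixing $M_m(h)$ over $\pi$. One cosmetic remark: the hypothesis $\pi[\theta(1)]<\infty$ plays no role in this proof (the residual term is carried to the right-hand side as-is, and is trivially bounded by $\epsilon$); it only becomes relevant when that residual is later controlled via the projective representation in Theorem \ref{new_fixedtime}, so your closing attribution of its role here is slightly misplaced but does not affect the argument.
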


We call the process $ \frac{1}{m} \sum_{i=1}^m (\E[L^{\epsilon}(h(\bb{X}_i),\bb{Y}_i)|\mathcal{F}_{i-1}] -\E[L^{\epsilon}(h(\bb{X}_i),\bb{Y}_i)])_{i \in \N}$ the \emph{residual process}.

Following the same line of proof, it can also be proven that an any-time PAC Bayesian bound holds for the average generalization gap $\hat{\rho}[R^{\epsilon}(h)]-\hat{\rho}[r^{\epsilon}(h)]$, by using the following definition of $f_m(\bb{S},h)$, namely,
\[
\left\{ \begin{array}{ll}
	f_0(\bb{S},h)  =&0 \\
	f_m(\bb{S},h) =& \eta \sum_{i=1}^m (\E[L^{\epsilon}(h(\bb{X}_i),\bb{Y}_i)]-L^{\epsilon}(h(X_i),Y_i)) \\ & -\eta \sum_{i=1}^m  (\E[L^{\epsilon}(h(\bb{X}_i),\bb{Y}_i)]-\E[L^{\epsilon}(h(X_i),Y_i)| \mathcal{F}_{i-1}]) \\
	&- \frac{\eta^2}{2} m \epsilon^2 , \,\, \textrm{for $m \in \N$}.
\end{array}\right.
\] 

The following result follows straightforwardly by applying a union bound.

\begin{Corollary}
	\label{any_time_full}
	Let $\epsilon >0$ and Assumption \ref{ass1} holds. If $\pi \in \mathcal{M}_+^1(\mathcal{H})$ such that $\pi [\theta(1)] < \infty$, $\hat{\rho} \in \mathcal{M}_+^1(\mathcal{H})$ in the sense of Definition (\ref{datadep}), $\hat{\rho} <<\pi$, and $\delta \in (0,1)$, then 
	\begin{align}
		\mathbb{P} \Big\{ \forall & \hat{\rho}, \forall m \geq 1: | \hat{\rho}[R^{\epsilon}(h)] - \hat{\rho}[r^{\epsilon}(h)]| \leq  \frac{KL(\hat{\rho},\pi)+\log \Big(\frac{1}{\delta}\Big)}{\eta m} +\frac{\eta \epsilon^2}{2} \nonumber\\
		&+ \hat{\rho} \Big[ \Big| \frac{1}{m} \sum_{i=1}^m \E[L^{\epsilon}(h(\bb{X}_i),\bb{Y}_i)|\mathcal{F}_{i-1}] -\E[L^{\epsilon}(h(\bb{X}_i),\bb{Y}_i)] \Big |\Big]   \Big \} \geq 1-2\delta. \label{PAC3} 
	\end{align}
\end{Corollary}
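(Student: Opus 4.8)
The plan is to deduce the two-sided bound (\ref{PAC3}) from two complementary one-sided any-time bounds, exactly as the sentence preceding the statement anticipates. Write $A_m(\hat{\rho}) := \frac{KL(\hat{\rho},\pi)+\log(1/\delta)}{\eta m} + \frac{\eta}{2}\epsilon^2$ and $D_m := \frac{1}{m}\sum_{i=1}^m\big(\E[L^{\epsilon}(h(\bb{X}_i),\bb{Y}_i)|\mathcal{F}_{i-1}] - \E[L^{\epsilon}(h(\bb{X}_i),\bb{Y}_i)]\big)$. Theorem \ref{anytimePAC} already provides, on an event $E_1$ of probability at least $1-\delta$ and simultaneously for every $\hat{\rho}$ and every $m\geq1$, the lower-tail inequality $\hat{\rho}[r^{\epsilon}(h)] - \hat{\rho}[R^{\epsilon}(h)] \leq A_m(\hat{\rho}) + \hat{\rho}[D_m]$. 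Running the identical supermartingale-plus-Ville argument with the second choice of $f_m(\bb{S},h)$ displayed just above the corollary yields, on an event $E_2$ of probability at least $1-\delta$ and again for all $\hat{\rho}$ and $m\geq1$, the upper-tail inequality $\hat{\rho}[R^{\epsilon}(h)] - \hat{\rho}[r^{\epsilon}(h)] \leq A_m(\hat{\rho}) - \hat{\rho}[D_m]$; the residual term flips sign precisely because the conditional-expectation correction enters this $f_m$ with the opposite orientation.

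The second step is a union bound over these two events: $\mathbb{P}(E_1\cap E_2) \geq 1 - \mathbb{P}(E_1^c) - \mathbb{P}(E_2^c) \geq 1-2\delta$. On $E_1\cap E_2$, setting $G_m := \hat{\rho}[R^{\epsilon}(h)] - \hat{\rho}[r^{\epsilon}(h)]$, the bound from $E_1$ reads $G_m \geq -A_m(\hat{\rho}) - \hat{\rho}[D_m]$ while the bound from $E_2$ reads $G_m \leq A_m(\hat{\rho}) - \hat{\rho}[D_m]$. Adding $\hat{\rho}[D_m]$ throughout gives $-A_m(\hat{\rho}) \leq G_m + \hat{\rho}[D_m] \leq A_m(\hat{\rho})$, that is $|G_m + \hat{\rho}[D_m]| \leq A_m(\hat{\rho})$.

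The last step converts this into a bound on $|G_m|$ alone. By the triangle inequality $|G_m| \leq |G_m + \hat{\rho}[D_m]| + |\hat{\rho}[D_m]| \leq A_m(\hat{\rho}) + |\hat{\rho}[D_m]|$, and since $\hat{\rho}$ is a probability measure, Jensen's inequality gives $|\hat{\rho}[D_m]| \leq \hat{\rho}[|D_m|]$, which is exactly the residual term appearing in (\ref{PAC3}). As everything holds simultaneously for all $\hat{\rho}$ and all $m\geq1$ on the event $E_1\cap E_2$ of probability at least $1-2\delta$, the corollary follows. I do not anticipate a genuine obstacle here, since the result is purely a combination of two already-established bounds; the only point demanding care is bookkeeping the sign of $\hat{\rho}[D_m]$ so that the two one-sided inequalities line up to cancel the residual inside $|G_m + \hat{\rho}[D_m]|$ and leave only $A_m(\hat{\rho})$ on the right, together with the elementary observation that passing to $\hat{\rho}[|D_m|]$ only enlarges the bound.
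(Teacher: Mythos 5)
Your proof is correct and follows essentially the same route as the paper: the two one-sided any-time bounds coming from the two choices of $f_m(\bb{S},h)$ (Theorem \ref{anytimePAC} and the variant stated just before the corollary), combined via a union bound, with the residual controlled through $|\hat{\rho}[D_m]|\leq\hat{\rho}[|D_m|]$. Your intermediate step $|G_m+\hat{\rho}[D_m]|\leq A_m(\hat{\rho})$ followed by the triangle inequality is only a cosmetic rearrangement of the paper's direct bounding of each one-sided gap by $A_m(\hat{\rho})+\hat{\rho}[|D_m|]$.
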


We can now give the proof of a fixed-time PAC Bayesian bound obtained by using the bound (\ref{PAC3}), a Markov's inequality and the projective-type property (\ref{mix2}). In this new scheme of proof, it is determined a fixed-time PAC Bayesian bound of the \emph{residual process} following \cite[Theorem 1]{Hostile}.

\begin{Theorem}[Fixed-time PAC Bayesian bound- \emph{Type II}]
	\label{new_fixedtime}
	Let $\epsilon >0$ and Assumption \ref{ass1} holds. If $\pi \in \mathcal{M}_+^1(\mathcal{H})$ such that $\pi [\theta(1)] < \infty$, $\hat{\rho} \in \mathcal{M}_+^1(\mathcal{H})$ in the sense of Definition (\ref{datadep}), $\hat{\rho} <<\pi$, and $\delta \in (0,1)$, then  
	\begin{align}
		\mathbb{P} \Big\{ \forall \hat{\rho}: |\hat{\rho}[r^{\epsilon}(h)] -\hat{\rho}[R^{\epsilon}(h)]| & \leq  \frac{KL(\hat{\rho},\pi)+\log \Big(\frac{1}{\delta}\Big)}{\eta m} \nonumber \\
		&+\frac{\eta \epsilon^2}{2} + \Big( \epsilon \pi\Big[\frac{\theta(1)}{\delta} \Big] (D_{\phi_2-1}(\hat{\rho},\pi)+1) \Big)^{\frac{1}{2}}  \Big \} \geq 1-3\delta, \label{the_best} 
	\end{align}
	where $\theta(1)$ is a $\theta$-coefficient of the process $\bb{L}^{\epsilon}$.
\end{Theorem}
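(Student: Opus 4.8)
The plan is to turn the any-time bound of Corollary~\ref{any_time_full} into the desired fixed-time statement by controlling its residual-process term with the $f$-divergence technique of \cite[Theorem~1]{Hostile}. Fix the parameters $m,\eta>0$ and abbreviate the absolute residual process by
\[
D_m(h):=\Big|\frac1m\sum_{i=1}^m\big(\E[L^\epsilon(h(\bb{X}_i),\bb{Y}_i)\mid\mathcal F_{i-1}]-\E[L^\epsilon(h(\bb{X}_i),\bb{Y}_i)]\big)\Big|.
\]
On the event of probability at least $1-2\delta$ carrying (\ref{PAC3}) (which holds for all $m\geq1$, hence in particular for our fixed $m$), we have simultaneously for every $\hat\rho$ that $|\hat\rho[R^\epsilon(h)]-\hat\rho[r^\epsilon(h)]|\le\frac{KL(\hat\rho,\pi)+\log(1/\delta)}{\eta m}+\frac{\eta\epsilon^2}{2}+\hat\rho[D_m(h)]$. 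The first two summands already appear in (\ref{the_best}), so it remains to bound $\hat\rho[D_m(h)]$ by the last summand, with high probability and uniformly in $\hat\rho$.

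For $\hat\rho[D_m(h)]$ I would follow the \cite{Hostile} scheme. Changing measure to the reference $\pi$ and applying H\"older's inequality with conjugate exponents $(2,2)$ gives, for every $\hat\rho\ll\pi$,
\[
\hat\rho[D_m(h)]=\pi\Big[\tfrac{d\hat\rho}{d\pi}\,D_m(h)\Big]\le\Big(\pi\Big[\big(\tfrac{d\hat\rho}{d\pi}\big)^2\Big]\Big)^{1/2}\big(\pi[D_m(h)^2]\big)^{1/2}=\big(D_{\phi_2-1}(\hat\rho,\pi)+1\big)^{1/2}\big(\pi[D_m(h)^2]\big)^{1/2},
\]
where the identification of the Radon--Nikodym moment with the chi-square divergence is just the definition of $D_{\phi_2-1}$. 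The factor $\pi[D_m(h)^2]$ is random but independent of $\hat\rho$, so Markov's inequality together with Fubini gives, with probability at least $1-\delta$, the deterministic bound $\pi[D_m(h)^2]\le\delta^{-1}\pi\big[\E[D_m(h)^2]\big]$.

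Establishing $\E[D_m(h)^2]\le\epsilon\,\theta(1)$ is the crux, and the only place where both boundedness of the truncated loss and the weak dependence enter. Because each summand of $D_m(h)$ lies in $[-\epsilon,\epsilon]$, we have $D_m(h)\in[0,\epsilon]$, whence $D_m(h)^2\le\epsilon\,D_m(h)\le\frac{\epsilon}{m}\sum_{i=1}^m|\E[L^\epsilon(h(\bb{X}_i),\bb{Y}_i)\mid\mathcal F_{i-1}]-\E[L^\epsilon(h(\bb{X}_i),\bb{Y}_i)]|$; taking expectations reduces the task to bounding the single term $\E|\E[L^\epsilon(h(\bb{X}_i),\bb{Y}_i)\mid\mathcal F_{i-1}]-\E[L^\epsilon(h(\bb{X}_i),\bb{Y}_i)]|$ by $\theta(1)$ uniformly in $i$. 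The main obstacle is that the conditioning $\sigma$-algebra in (\ref{PAC3}) is $\mathcal F_{i-1}=\sigma(\bb{S}_{i-1})$, the past of the full sampling process, which is strictly larger than the natural past of $\bb{L}^\epsilon$ for which the projective-type representation (\ref{mix2}) is literally stated; since conditioning on a larger $\sigma$-algebra can only enlarge the $L^1$ deviation, one cannot simply quote (\ref{mix2}) for $\bb{L}^\epsilon$. Instead I would re-run the argument behind Proposition~\ref{mmaf_absolute} in conditional form: writing $L^\epsilon(h(\cdot),\cdot)$ as a bounded Lipschitz function of the example $\bb{S}_i$ and using that the examples are lexicographically ordered marginals of the $\theta$-lex weakly dependent field $\bb{Z}$ (Remark~\ref{geolex}), one controls $\E|\E[L^\epsilon(h(\bb{X}_i),\bb{Y}_i)\mid\mathcal F_{i-1}]-\E[L^\epsilon(h(\bb{X}_i),\bb{Y}_i)]|$ directly by the single $\theta$-lex coefficient encoded in $\theta(1)$.

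Finally I would intersect the probability-$(1-2\delta)$ event carrying (\ref{PAC3}) with the probability-$(1-\delta)$ event from Markov's inequality; a union bound gives probability at least $1-3\delta$. On this intersection, substituting $\pi[D_m(h)^2]\le\delta^{-1}\pi[\E[D_m(h)^2]]\le\delta^{-1}\epsilon\,\pi[\theta(1)]$ into the H\"older estimate yields $\hat\rho[D_m(h)]\le\big(\epsilon\,\pi[\theta(1)/\delta]\,(D_{\phi_2-1}(\hat\rho,\pi)+1)\big)^{1/2}$ uniformly over $\hat\rho$, which combined with the reduction of the first paragraph is exactly (\ref{the_best}).
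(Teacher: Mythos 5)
Your proposal is correct and follows essentially the same route as the paper's proof: the any-time bound of Corollary \ref{any_time_full}, a change of measure to $\pi$ followed by Cauchy--Schwarz (producing the chi-square divergence $D_{\phi_2-1}(\hat{\rho},\pi)+1$), Markov's inequality plus Fubini, the pointwise reduction $D_m(h)^2\leq \epsilon D_m(h)$ exploiting boundedness of the truncated loss, the projective-type representation bounding the residual by $\theta(1)$, and a final union bound yielding $1-3\delta$; the only cosmetic difference is that you apply Markov to $\pi[D_m(h)^2]$ whereas the paper first reduces to $\epsilon\,\pi[|\Delta_m(h)|]$ and applies Markov to that quantity. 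You are in fact more careful than the paper on one point: the paper invokes Remark \ref{mix_rule1} directly even though the conditioning $\sigma$-algebra $\mathcal{F}_{i-1}=\sigma(\bb{S}_{i-1})$ is larger than the natural past of $\bb{L}^{\epsilon}$, while you correctly flag this mismatch and note that the covariance argument behind Proposition \ref{mmaf_absolute} must be re-run with $F$ a function of the full examples, which is exactly the fix needed to make this step rigorous.
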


\begin{Remark}[Selection rule for the parameter $a_t$ in relation to the bound (\ref{the_best})]
	\label{selection2}
	We now discuss an exemplary spatio-temporal embedding which can be chosen such to tighten the right hand side of the bound (\ref{the_best}).
	We consider that the parameters $\epsilon, p_t, \lambda$ and $c$ have been selected (the parameter $k$ does not enter in the determination of the spatio-temporal embedding in this case) and we concentrate on the selection of the parameter $a_t$. We refer the reader to Section \ref{sec5} for a discussion on the selection of the other parameters. Let us choose $\eta=\frac{1}{\sqrt{m}}$, and select $a_t$ as the smallest constant such that $\theta(1) \leq \frac{1}{2m}$, which implies
	\begin{equation*}
		\label{sample3}
		\left\{ \begin{array}{ll}
			%a_t \geq - \frac{\log\Big(\frac{a_t}{2N}\Big)}{\lambda h_t} + p_t
			-\lambda h_t(a_t-p_t)-\log\Big( \frac{a_t}{2N} \Big)\leq 0	 &\textrm{for exp. decaying $\theta$-lex coef., see Def. \ref{decay}}, 	\\
			-\lambda	\log(h_t(a_t-p_t)) -\log\Big( \frac{a_t}{2N} \Big)\leq 0 &\textrm{for power decaying $\theta$-lex coef., see Def. \ref{decay}}
		\end{array} \right.
	\end{equation*}
	So doing we have that the third addend in the right hand side of the bound has the same order of magnitude of the other terms, which gives us the convergence rate of $\mathcal{O}(m^{-\frac{1}{2}})$. We can notice that respect to the selection rule in Remark \ref{selection1}, we can guide the design of a randomized estimator using a lower value of the parameter $a_t$. This ultimately means that we are working with a training data set which admits a stronger serial correlation along the temporal and spatial dimension than the ones obtained using the selection rule in Remark \ref{selection1}.
	
	The bound (\ref{the_best}) can also be used in different ways to guide the selection of a randomized estimator.
	For example, when working with the estimator (\ref{delta}), we can choose a value of $\eta$ that minimizes the right-hand side of the bound and then selecting a spatio-temporal embedding that makes the bound not vacuous. Examples of such a spatio-temporal embedding are used in Example \ref{finite2}. 
\end{Remark}

\begin{Remark}[Hostile Framework]
	\label{whynot}
	Theorem \ref{new_fixedtime} employs a well-known result in the PAC Bayesian literature for dependent and stationary data, analyzed in \cite{Hostile}. The authors determine a general fixed-time bound holding for unbounded losses; see Theorem 1. In particular, let $\delta \in (0,1)$, $p>1$, and $q=\frac{p}{p-1}$, with probability at least $1-\delta$, they prove that for any $\hat{\rho}$ 
	\[	
	\hat{\rho}[R(h)] \leq \hat{\rho}[r(h)] + \Big(\frac{\mathcal{M}_{\phi_q,n}}{\delta} \Big)^{\frac{1}{q}} (D_{\phi_p-1}(\hat{\rho},\pi)+1)^{\frac{1}{p}}.
	\] 
	This bound obviously applies to our set-up, but it cannot be used in its general shape to \emph{guide} the selection of a randomized estimator, which is the primal target of our paper. The reason for this is that we cannot write the term $\mathcal{M}_{\phi_q,n}$ as a function of the $\theta$-coefficients of the process $\bb{L^\epsilon}$, similarly, as done in \cite{Hostile} in the case of $\alpha$-mixing processes. When working with such a dependence notion, the term $\mathcal{M}_{\phi_q,n}$ can be controlled using Lemma 3 (for bounded losses) and Theorem 3 (for unbounded losses) in \cite{D94}. To the best of our knowledge, no proofs in the literature extend such results for $\theta$-weakly dependent processes. Moreover, even if we could determine such a proof, following the methodology described in \cite{Hostile}, we will end up estimating the term $\mathcal{M}_{\phi_q,n}$ from above using  $\sum_{j \in \mathcal{Z}} \theta(j)$. This estimate diverges for specific power decaying $\theta$-coefficient sequences, such as those discussed in Example \ref{coeff_mstou1}. 
	
	Therefore, we developed novel results where the PAC Bayesian bounds depend just on \emph{one single} $\theta$-coefficient and are also not equal to infinity for a given data set generated by a temporal and spatial long-range dependent MMAF. 
\end{Remark}

\begin{Example}[\emph{Guided} randomized estimators of type (\ref{delta}) and comparisons between the bounds (\ref{PAC2}) and (\ref{the_best})]
	\label{finite2}
	We use the data described in Example \ref{finite}. We also maintain the same assumptions on the set $B$ and the reference distribution $\pi$. Moreover, we select parameters $\epsilon=1, M=100, \delta=\frac{0.05}{3}$, and $c=p_t=h_t=1$. From \cite[Proposition 1]{Hostile}, we have that $(D_{\phi_2-1}(\hat{\rho},\pi)+1)=M$. We compute first the bound (\ref{the_best}) for the estimator (\ref{delta}) and the data set observed from the STOU process. We use the bound (\ref{bound_linear}) for the $\theta$-coefficients of the process $\bb{L}^{\epsilon}$, and assume that $\bar{\alpha}=1$. Moreover, we select the parameter as $\eta=\sqrt{\frac{2\log(M/\delta)}{\epsilon^2 m}} $. Such choice minimizes the right-hand side of the bound  (\ref{the_best}) in this framework, we then obtain for $\delta \in (0,1)$ that
	\begin{equation}
		\label{erm_2}
		\mathbb{P}\Big\{ |R^{\epsilon}(\bb{\hat{\beta}}) -  r^{\epsilon}(\bb{\hat{\beta}})| \leq \sqrt{\frac{2\log(M/\delta)}{m}} +\Big(\epsilon \frac{M 2(\pi[\|\beta_1\|_1] +1 )\bar{\alpha} \tilde{\theta}_{lex}(a-p)}{\delta} \Big)^{\frac{1}{2}} \Big\} \geq 1-3\delta.
	\end{equation}
	We then select the smallest parameter $a_t$ such that $\tilde{\theta}_{lex}(a-p) \leq \frac{\delta}{4Mm}$. We obtain an $\bb{S_m^1}$, where $a_t=34$ and $m=588$, and a generalization gap less than or equal to $0.21$ with at least $95 \%$ probability. With the same choice of spatio-temporal embedding and computing directly the bound (\ref{erm_1}) for $\delta=0.025$, we obtain that the generalization gap is less than or equal to $2.99$ with at least $95\%$ probability. Using instead the spatio-temporal embedding defined in Example \ref{finite} for the STOU data set, we obtain that the generalization gap computed with (\ref{erm_2}) is less than $0.28$ versus a $0.98$ obtained from (\ref{erm_1}). 
	
	For the data set observed from the MSTOU process, we now compute the bound (\ref{erm_2}) by employing the selection rule in Remark \ref{selection2}. We then obtain a spatio-temporal embedding $\bb{S_m^2}$ for $a_t=1170$ and $m=17$ and a generalization error less or equal to $27.51$ with probability at least $95\%$. With the same choice of spatio-temporal embedding and computing directly the bound (\ref{erm_1}) for $\delta=0.025$, we obtain a generalization gap less than or equal to $5.66$ with at least $95\%$ probability instead. Using the spatio-temporal embedding defined in Example \ref{finite} for the MSTOU data set, we obtain that the generalization gap computed with (\ref{the_best}) is equal to $18.97$ versus a $9.40$ obtained from (\ref{erm_1}).
	
	From this simple example, we can notice how, for the temporal and spatial short- range data set, the bound obtained in (\ref{erm_2}) is tighter than the one presented in (\ref{erm_1}). In the temporal and spatial long-range case,  it seems however that with all spatio-temporal embeddings so far considered we can just obtain a vacuous bound. However, there is an aspect of the bound (\ref{erm_2}) that we have not truly used so far in our evaluations. All the results above hold under the choice of the accuracy level $\epsilon=1$. If we choose $\epsilon=1000$ (which goes outside the range of validity of the bound (\ref{erm_1})) and compute again the bound in (\ref{erm_2}) for the choice of $a_t=1170$, we obtain a not-vacuous bound. In fact, under this choice of the accuracy level, we obtain a generalization error less or equal than $838.84$ with probability at least of $95\%$. Therefore, for different reasons that in the case of temporal and spatial short-range data, the bound obtained in \ref{erm_2} is the one to employ when analyzing long-range data.
\end{Example}

From the description of possible selection rules for the parameter $a_t$ and the Examples \ref{finite} and \ref{finite2}, different questions may arise. The first regards the vacuousness of the bounds (\ref{PAC1}) and (\ref{the_best}) observed in some of our examples when employing temporal and spatial long-range data. Secondly, it is important to question the dependence of the bound from the constant $Lip(h)$, and the accuracy level parameter $\epsilon$. The three remarks below tackle these issues and present interesting future research directions of our work.

\begin{Remark}[Spatio-temporal embeddings for temporal and spatial long-range dependent data sets]
	In general, the bounds (\ref{PAC1}) and (\ref{the_best}) depend non-linearly on the set of parameters listed in Table \ref{para}, and assessing their magnitude in the temporal and spatial long-range dependence framework for MMAF is still an open problem. 
	
	It is important to highlight that the selection of the spatio-temporal embedding depends on the discretization step in time $h_t$. Such constant is related, in practical applications, to the frequency of the observed data and has a significant impact on the tightness of the bound. From the selection rules in Remarks \ref{selection1} and \ref{selection2}, we can see how the parameter value $a_t$ decreases when $h_t >1$. 
	
	Moreover, the bound (\ref{the_best}) holds for $\epsilon >0$ and this gives us better generalization performance as long as we consider an $\epsilon \geq 3$ (which goes outside the range of validity of the bound (\ref{PAC1})), see Example \ref{finite2}.
\end{Remark}

\begin{Remark}[Are the fixed-time bounds (\ref{PAC1}) and (\ref{the_best}) depending on the value of the $Lip(h)$?]
	There is no explicit dependence on the $Lip(h)$ in the bounds (\ref{PAC1}) and (\ref{the_best}). Such coefficient appears if we employ Proposition \ref{mmaf_absolute} to estimate the $\theta$-coefficients in the bound. However, we do not have a formal proof of the tightness of this estimation.
	
	When working with \emph{deep neural network predictors} computing their Lipschitz constants is a complex numerical task; see \cite{DeepLipschitz2} and \cite{DeepLipschitz}. In this framework, we could use the bound (\ref{bound3}) for the $\theta$-coefficients of the process $\bb{L}^{\epsilon}$ and then work with a numerical approximation for the $Lip(h)$. However, future research should focus on determining tight estimates for the $\theta$-coefficients and assessing their dependence on the $Lip(h)$ in detail. Such analysis could greatly help the empirical computation  of PAC Bayesian bounds for (random) deep learning architectures and extend the range of applicability of MMAF-guided learning in practical applications. 
\end{Remark}

\begin{Remark}[The importance of the accuracy level $\epsilon$]
	\label{accuracy}
	For the time being, the proofs of Theorems \ref{prop_PAC1} and \ref{new_fixedtime} work just if $\bb{L}^{\epsilon}$ is bounded. Such an assumption allows us to apply the projective-type representation of the $\theta$-coefficients discussed in Remark \ref{mix_rule1}. This, in turn, allows us to be capable of \emph{guiding} a randomized estimator, i.e., choosing the parameters defining the spatio-temporal embedding by controlling the magnitude of the $\theta(1)$-coefficient of the process $\bb{L}^{\epsilon}$. It is important to highlight that as observed at the end of Example \ref{finite2} for a temporal and spatial long-range data set, or in the sensitivity analysis conducted for a temporal and spatial short range data set in Section \ref{sec5.1}, the performances of the employed randomized estimators improve when $\epsilon \geq 3$. 
\end{Remark}
In the next theorem, we give an alternative fixed-time bound for the randomized Gibbs estimator, obtained using Theorem (\ref{new_fixedtime}). In this case, the estimator does not minimize anymore the right hand side of the inequality. 

\begin{Theorem}[PAC Bayesian bound for the randomized Gibbs estimator- \emph{Type II}]
	Let $\epsilon>0$ and Assumption \ref{ass1} hold. If $m \geq 1$, $\eta=\frac{1}{\sqrt{m}}$, $\pi$ is a distribution on $\mathcal{H}$ such that $\pi [\theta(1)] < \infty$, $\bar{\rho}$ is the randomized Gibbs estimator defined in (\ref{Gibbs}), and $\delta \in (0,1)$  
	\begin{align}
		\mathbb{P} \Big\{\bar{\rho}[R^{\epsilon}(h)] \leq \inf_{\hat{\rho}}\Big( \hat{\rho}[R^{\epsilon}(h)] + &\Big(KL(\hat{\rho},\pi)+\log \Big(\frac{1}{\delta}\Big)\Big)\frac{2}{\sqrt{m}} \Big) +\frac{ \epsilon^2}{\sqrt{m}} \nonumber\\
		&+ 2\Big( \epsilon \pi\Big[\frac{\theta(1)}{\delta} \Big] (D_{\phi_2-1}(\bar{\rho},\pi)+1) \Big)^{\frac{1}{2}}  \Big \} \geq 1-4\delta, \label{PAC_G2}
	\end{align}
	where $\theta(1)$ is a $\theta$-coefficient of the process $\bb{L}^{\epsilon}$.
\end{Theorem}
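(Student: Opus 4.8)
The plan is to obtain the bound from the two-sided fixed-time estimate of Theorem \ref{new_fixedtime} (equivalently, from its building blocks: the any-time Corollary \ref{any_time_full} together with the Markov step and the projective-type property (\ref{mix2})) combined with the variational characterization of the Gibbs measure (\ref{Gibbs}). The crux is the Donsker--Varadhan / Gibbs variational principle \cite{DV76}: with $\eta=1/\sqrt m$, so that $\eta m=\sqrt m$ is exactly the inverse temperature in the exponent of $\bar{\rho}$, the estimator $\bar{\rho}$ minimizes $\hat{\rho}\mapsto \hat{\rho}[r^{\epsilon}(h)]+\tfrac{1}{\sqrt m}KL(\hat{\rho},\pi)$ over $\mathcal{M}_+^1(\mathcal{H})$, whence
\[
\bar{\rho}[r^{\epsilon}(h)]+\tfrac{1}{\sqrt m}KL(\bar{\rho},\pi)=\inf_{\hat{\rho}}\Big(\hat{\rho}[r^{\epsilon}(h)]+\tfrac{1}{\sqrt m}KL(\hat{\rho},\pi)\Big).
\]
This identity is \emph{deterministic} and is the mechanism that lets the randomized Gibbs estimator reproduce an oracle-type right-hand side indexed by an arbitrary competitor $\hat{\rho}$.

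First I would apply the $R^{\epsilon}\le r^{\epsilon}$ direction of Theorem \ref{new_fixedtime} to $\bar{\rho}$ itself, producing $\tfrac{KL(\bar{\rho},\pi)+\log(1/\delta)}{\sqrt m}+\tfrac{\epsilon^{2}}{2\sqrt m}$ together with the chi-square term $(\epsilon\,\pi[\theta(1)/\delta](D_{\phi_2-1}(\bar{\rho},\pi)+1))^{1/2}$. Substituting the Gibbs identity for $\bar{\rho}[r^{\epsilon}(h)]+\tfrac{1}{\sqrt m}KL(\bar{\rho},\pi)$ and then applying the opposite ($r^{\epsilon}\le R^{\epsilon}$) direction to a generic competitor $\hat{\rho}$, the two $KL$ contributions add to give the factor $\tfrac{2}{\sqrt m}(KL(\hat{\rho},\pi)+\log(1/\delta))$ and the two $\tfrac{\epsilon^{2}}{2\sqrt m}$ terms combine to $\tfrac{\epsilon^{2}}{\sqrt m}$; discarding the nonnegative leftover $-\tfrac{1}{\sqrt m}KL(\bar{\rho},\pi)$ and passing to the infimum over $\hat{\rho}$ then yields the oracle term $\inf_{\hat{\rho}}\big(\hat{\rho}[R^{\epsilon}(h)]+\tfrac{2}{\sqrt m}(KL(\hat{\rho},\pi)+\log(1/\delta))\big)$. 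For the confidence level I would work from the any-time event of Corollary \ref{any_time_full} (probability at least $1-2\delta$, covering both directions for all $\hat{\rho}$ and all $m$ simultaneously, with the residual process still present) and then spend one probabilistic (Markov) step per direction to control the residual moments, so that the union over these events degrades $1-\delta$ to $1-4\delta$.

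The main obstacle is the treatment of the two $f$-divergence contributions generated by the two directions. Controlling the $\bar{\rho}$-residual $\bar{\rho}[\,|\mathrm{res}|\,]$ is clean: a Cauchy--Schwarz change of measure from $\bar{\rho}$ to $\pi$ bounds it by $(D_{\phi_2-1}(\bar{\rho},\pi)+1)^{1/2}\,(\pi[\mathrm{res}^{2}])^{1/2}$, and a Markov bound on the $\pi$-averaged second moment of the residual, evaluated via (\ref{mix2}) as in \cite[Theorem 1]{Hostile}, converts $\pi[\mathrm{res}^{2}]$ into a multiple of $\pi[\theta(1)/\delta]$. The delicate point is that the competitor direction a priori yields $D_{\phi_2-1}(\hat{\rho},\pi)$ rather than $D_{\phi_2-1}(\bar{\rho},\pi)$; collapsing both residual contributions into the single factor $2\,(\epsilon\,\pi[\theta(1)/\delta](D_{\phi_2-1}(\bar{\rho},\pi)+1))^{1/2}$ appearing in the statement is exactly where care is required, and it is this reconciliation that forces the second Markov estimate (hence $1-4\delta$ rather than the $1-3\delta$ of Theorem \ref{new_fixedtime}). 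I would isolate this as a self-contained lemma that bounds the competitor residual through $D_{\phi_2-1}(\bar{\rho},\pi)$, after which the remaining steps --- summing the two gaps, using the variational identity, and taking the infimum --- are routine algebra.
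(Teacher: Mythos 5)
Your scheme is the one the paper intends. The paper never writes out a proof of this theorem --- it only says the bound is ``obtained using Theorem \ref{new_fixedtime}'' --- and the implicit template is the proof of Theorem \ref{oracle}: apply one direction of the fixed-time bound to $\bar{\rho}$, use the Donsker--Varadhan identity of Lemma \ref{kl} with inverse temperature $\eta m=\sqrt{m}$ (so that $\bar{\rho}$ minimizes $\hat{\rho}[r^{\epsilon}(h)]+KL(\hat{\rho},\pi)/\sqrt{m}$), apply the reverse direction to an arbitrary competitor $\hat{\rho}$, and pay for the events with a union bound. Your bookkeeping of the $KL$ terms, of the two $\epsilon^{2}/(2\sqrt{m})$ contributions, of the discarded $-KL(\bar{\rho},\pi)/\sqrt{m}$, and of the confidence level ($2\delta$ for the two-sided any-time event of Corollary \ref{any_time_full} plus one Markov step per direction, giving $1-4\delta$) follows exactly this template.

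The gap is the step you flag as delicate and then postpone: the promised ``self-contained lemma that bounds the competitor residual through $D_{\phi_2-1}(\bar{\rho},\pi)$'' does not exist. The competitor direction requires a bound on $\hat{\rho}[\Delta_m]$, where $\Delta_m(h)=\frac{1}{m}\sum_{i=1}^m\big(\E[L^{\epsilon}(h(\bb{X}_i),\bb{Y}_i)|\mathcal{F}_{i-1}]-\E[L^{\epsilon}(h(\bb{X}_i),\bb{Y}_i)]\big)$, and the only available mechanism (Cauchy--Schwarz change of measure from $\hat{\rho}$ to $\pi$, then Markov and the projective representation (\ref{mix2})) yields $\big(\epsilon\,\pi[\theta(1)/\delta]\,(D_{\phi_2-1}(\hat{\rho},\pi)+1)\big)^{1/2}$: the appearance of the competitor's own divergence is intrinsic. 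Indeed, any bound on $\hat{\rho}[|\Delta_m|]$ that is uniform over all randomized estimators would have to dominate $\sup_h|\Delta_m(h)|$ (take $\hat{\rho}$ a Dirac mass at the worst predictor, a legitimate choice since $\Delta_m$ is $\sigma(\bb{S}_{m-1})$-measurable, hence $\mathcal{G}_m$-measurable), and that quantity carries no information about how far the Gibbs measure $\bar{\rho}$ is from $\pi$. What your scheme (and the paper's template) honestly delivers is the oracle inequality with $\big(\epsilon\,\pi[\theta(1)/\delta]\,(D_{\phi_2-1}(\hat{\rho},\pi)+1)\big)^{1/2}$ \emph{inside} the infimum and the $\bar{\rho}$-term outside --- and in fact with probability $1-3\delta$, because a single Markov event on $\pi[|\Delta_m|]$ controls the residuals of all measures simultaneously, making your second Markov step unnecessary. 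The mismatch with the printed statement (only $\bar{\rho}$'s divergence, with a factor $2$) is a defect of the statement itself, which the paper leaves unproved; but a proposal that resolves it by invoking a lemma that is false in general has a genuine hole at precisely the decisive step.
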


Several results on fixed-time  PAC Bayesian bounds in a dependent framework can be found in the batch setting, but only for time series models. We review them in the remark below.

\begin{Remark}[Are there PAC Bayesian bounds for dependent data with faster convergence rates?] 
	\label{rate_literature}
	In \cite{Hostile}, the authors determine an oracle inequality with a rate of $\mathcal{O}(m^{-\frac{1}{2}})$ under the assumption that $((\bb{X}_i,\bb{Y}_i)^{\top})_{i \in \Z}$ is generated by a stationary and $\alpha$-mixing process, see \cite{Bradley} for a detailed explanation of the properties of this dependence notion,  with coefficients $(\alpha_j)_{j \in \Z}$ such that $\sum_{j \in \Z} \alpha_j < \infty$. Such bound employs the chi-squared divergence and holds for unbounded losses. It is important to highlight that the randomized estimator obtained by minimization of the PAC Bayesian bound is not a Gibbs estimator in this framework. An explicit bound for linear predictors can be found in their \cite[Corollary 2]{Hostile}. This result holds under the assumption that $\pi[\|\beta\|^6] < \infty$. 
	
	In \cite{AW12}, the authors prove oracle inequalities for a Gibbs estimator and data generated by a stationary and bounded $\theta_{1,\infty}$-weakly dependent process-- such dependence notion extends the concept of $\phi$-mixing discussed in \cite{Rio96}-- or a causal Bernoulli shift process. Models with bounded $\theta_{1,\infty}$-weak coefficients are causal Bernoulli shifts with bounded innovations, uniform $\phi$-mixing sequences, and dynamical systems; see \cite{AW12} for more details. The oracle inequality is here obtained for an absolute loss function and has a rate of $\mathcal{O}(m^{-\frac{1}{2}})$. An extension of this work for Lipschitz loss functions under $\phi$-mixing \cite{ibra} can be found in \cite{fast}. Here, the authors show an oracle inequality for a Gibbs estimator with the optimal rate $\mathcal{O}(m^{-1})$. This rate is considered optimal in the i.i.d literature, and for a squared loss function, \cite{optimal}.  
	
	Interesting results in the literature of PAC Bayesian bounds for heavy-tailed data (albeit, identically distributed) can also be found in \cite{Expo} and \cite{H19}. 
\end{Remark}

MMAF-guided learning has the potential to be extended to general $\theta$-lex weakly dependent models because of the results in Proposition \ref{heredithary}. Another possible extension of the methodology is related to using bounded and locally Lipschitz losses; see Remark \ref{loclip}. 	
Considering unbounded losses is beyond the scope of the present paper and the content of future research treated in \cite{CS24}.

\section{Ensemble Forecasts using MMAF-guided learning}
\label{sec5}

\subsection{Practical implementation and casual forecast}
\label{sec5.0}
The knowledge of all the parameters in Table \ref{para} allows us to have a precise definition of the spatio-temporal embedding defined in Section \ref{sec2.1}. So far, we have only discussed the selection of the parameter $a_t$. The parameter $p_t$ is an \emph{hyperparameter} in our learning methodology. By establishing a finite grid of values $I \subset \N$ for the parameter $p_t$, we could introduce in (\ref{PAC1}) and (\ref{the_best}) an explicit dependence on its possible values. So doing, we could approach the general question of minimizing the right-hand-side of the bound (also known as, \emph{exact minimization}) in the function of every possible randomized estimator $\hat{\rho}$ and the grid of values $I$, see Section 2.1 in \cite{User} for exemplary calculations. Similarly, we could approach the selection of the parameter $\epsilon$ and obtain a methodology for the \emph{selection of  all the hyperparameters} involved in MMAF-guided learning. Such an issue is outside the scope of the present paper. However, we provide in our numerical experiments in Section \ref{sec5.1} a sensitivity analysis about the hyperparameters $\epsilon$ and $p_t$.

If we assume that our data are generated by an STOU or an MSTOU process, i.e., we are assuming that the data admits exponential or power-decaying $\theta$-lex coefficients, as discussed in Section \ref{sec_stou} and \ref{sec_mstou}, there are several methodologies available for the estimation of the parameters $c$ and $\lambda$. There are also other feasible model set-ups for time series models, that is when $c=0$. In this framework, several estimation methodologies can be employed to estimate the model's parameters of an MMAF, see \ref{timeseries_app}. 

%\begin{table}[h!]
%	\small
%	\centering
%	\begin{tabular}{|c|c|}
	%		\hline
	%		\textbf{Unknown Parameter} & \textbf{Use} \\
	%		\hline
	%		$\lambda$ &  Necessary for determining the parameters $a_t$ and $k$ \\
	%		\hline
	%		$c$  & Necessary for selecting the input features \\
	%		\hline
	%		$\bar{\alpha}$ & It is important to assess if the PAC Bayesian bound is vacuous or not \\
	%		\hline
	%		$p$  & Necessary for selecting the input features and determining the parameters $a_t$ and $k$  \\
	%		\hline
	%	\end{tabular}
%	\caption{Parameter to be estimated or chosen in MMAF guided learning}
%	\label{para2}
%\end{table}

\begin{figure}[h!]
	\centering \scalebox{1}
	{\includegraphics[width=.35\textwidth]{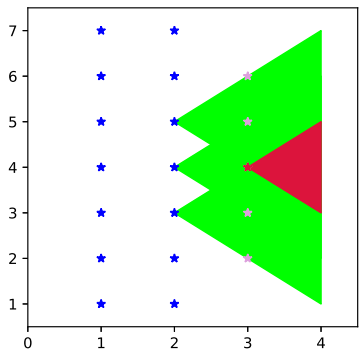}}
	\caption{The x- and y-axes represent the time and spatial dimension, respectively. We picture the last $3$ frames of a data set with spatial dimension $d=1$ where the blue stars represent the pixels used in the definition of the training data set, and the violet stars represent the space-time points where it is possible to provide forecasts with MMAF-guided learning for $p_t=c=h_t=1$. Note that the forecast in the time-spatial position $(4,3)$ lies in the intersection (red area) of the future lightcones $A_2(5)^+$, $A_2(4)^+$ and $A_2(3)^+$ as defined in (\ref{future_lightcone}) and represented with green cones.}\label{causal}
\end{figure}

\begin{Remark}
	When using the spatio-temporal embeddings described in Remark \ref{selection1} and \ref{selection2}, similarly to the kriging literature, we need an inference step before being capable of delivering one-time ahead ensemble forecasts. In this literature, it is often assumed that the estimated parameters used in the calculation of kriging weights and kriging variances are the \emph{true one}, see \cite[Chapter 3]{C93} and \cite[Chapter 6]{CW11} for a discussion on the range of applicability of such estimates. We implicitly make the same assumptions if, for example, we use estimated values for the parameters $\lambda$ and $c$.
	
	It remains an interesting open problem to understand the interplay of the estimates' bias of the parameters involved in the computation of the PAC Bayesian bounds (\ref{PAC1}) and (\ref{the_best}). One of the biggest problems of this analysis relies upon disentangling the effect of the bias of the constant $c$ introduced in the pre-processing step, which changes the length of the input-features vector $X_i$. 
\end{Remark}

We detail now how to \emph{guide} the design of a randomized Gibbs estimation $\bar{\rho}$ and make one-time ahead ensemble forecasts. Differently from the notations so far employed, we indicate the training data set by $S_m^{x^*}$ to remark the dependence of the training data set on the pixel position $x^*$ where we perform our forecasts. In general, the learning methodology applies to any pixel  $x^*$ for which $\mathcal{I}(t_0+ia,x^*) \subseteq \mathbb{T} \times \mathbb{L} $ for all $i=1,\ldots,N$.

\textbf{Application Steps:}
\begin{itemize}
	\item[(i)] We observe a raster data cube whose spatio-temporal index set is described by $N$ frames and we estimate the parameters $\lambda$ and $c$ (and $\bar{\alpha}$ if necessary).
	\item[(ii)]  
	We fix a pixel position $x^*$ and choose a value for the accuracy level $\epsilon$ and the hyperparameter $p_t$. We then select the parameter $a_t$ as suggested in  Remark \ref{selection1} or \ref{selection2} and determine the spatio-temporal embedding.
	\item[(iii)] We determine the training data set $S_m^{x^*}$ using all available $N$ frames, which correspond to a specific realization $\omega \in \Omega$, i.e., $\bb{S}_m^{x^*}(\omega)=S_m^{x^*}$.
	\item[(iv)] We then draw $\beta$ from the distribution $\bar{\rho}(\cdot,\omega)$, following the definition of the randomized Gibbs estimator in (\ref{Gibbs}). 
	\item[(v)] We perform a so-called \emph{ensemble forecast} by repeating point (iv) several times. 
\end{itemize}

A one-time ahead forecast corresponds to the space-time point $((t_0+Na)+h_t,x^*)$ and it is given by $\langle L_p^-((t_0+Na)+h_t), \beta \rangle$, where $\beta$ is a draw from the randomized Gibbs distribution. Therefore, we can make a forecast in a future time point $t=(t_0+Na)+h_t$ as long as the set $\mathcal{I}((t_0+Na)+h_t,x^*)$, as defined in (\ref{index}), has cardinality $a(p,c)$. 

For each $\beta$, the forecast of the field we obtain in the space-time point $((t_0+Na)+h_t,x^*)$ lies in the intersections of the future light cones of the space-time points belonging to $L^-_p((t_0+Na)+h_t,x^*)$. In Figure \ref{causal}, we give an example of a one-time ahead forecasts performed for an MMAF for $d=1$. As we can see, MMAF-guided learning enables us to make forecasts in space-time points that are plausible (under the causality concept induced by the ambit sets, see Section \ref{sec1.2}) starting from the set of inputs we observe. 

\subsection{Linear predictors: an example with simulated data}
\label{sec5.1}

We work in the hypothesis space $\mathcal{H}^{\prime}=\{h_{\beta}(X)= \beta_0+ \beta_1^T X,\, \textrm{for}\,\, \beta:=(\beta_0,\beta_1)^{\top} \in  B\}$, where $B \subset \R^{(a(p,c)+1)}$. We use simulated observations from an STOU, i.e., a temporal and spatial short-range dependent field with exponentially decaying $\theta$-lex weakly dependent coefficients. We simulate four data sets $(Z_t(x))_{(t,x)\in \mathbb{T}\times \mathbb{L}}$ from a zero mean STOU process by employing the diamond grid algorithm introduced in \cite{STOU} for $d=1$. The time and spatial discretization steps are chosen as $h_t=h_s=0.05$ on the spatio-temporal interval $[0,100]\times[0,10]$. Therefore $\mathbb{T}=\{h_t,\ldots,2,000 h_t\}$ and $\mathbb{L}=\{0,\ldots,200\}$. We use the unidimensional frames related to the time indices $\mathbb{T}^{train} =\{0,h_t,\ldots,1,999 h_t\}$ to determine the training data sets and the one corresponding to $\mathbb{T}^{test}=\{2000 h_t\}$ as a test set.
We choose as distribution for the L\'evy seed $\Lambda^{\prime}$ a normal distribution with mean $\mu=0$ and standard deviation $\sigma=0.5$, and an $NIG(\alpha,\beta,\mu,\delta)$ distribution with $\alpha=5,\beta=0, \delta=0.2$ and $\mu=0$; see Exercises \ref{gau} and \ref{nig}. We use the latter distribution to test the behavior of MMAF-guided learning for different sets of heavy-tailed data. We  generate data with different seeds for the L\'evy basis realizations. Moreover, the speed of information propagation $c$ is equal to one for all generated data sets. Finally, we choose different mean reverting parameters, namely $A=1$ or $4$. Therefore,  $\lambda$ is equal to $\frac{1}{2}$ if $A=1$ or $2$ if $A=4$. We call these data sets GAU10, GAU1A4, NIG1A4, and NIG10. For the NIG L\'evy seed described above, we generate two further spatio-temporal data sets called $(Z^i_t(x))_{(t,x)\in \mathbb{T}_i\times \mathbb{L}}$ for $i=1,2$ on the spatio-temporal interval $[0,1000]\times[0,10]$. We choose $h_t=h_s=0.05$, $\mathbb{L}=\{0,\ldots,200\}$ and time indices $\mathbb{T}_1=\{18,000 h_t,\ldots,20,000 h_t\}$ and $\mathbb{T}_2=\{0, h_t,\ldots,20,000 h_t\}$. Such data sets are called NIG1, in the following. A summary scheme of the data's characteristics is given in Table \ref{data}. For the NIG1 data sets, we use the unidimensional frames corresponding to $\mathbb{T}_1^{train} =\{18,000 h_t,\ldots,19,999 h_t\}$ and  $\mathbb{T}_2^{train} =\{0, h_t,\ldots,19,999 h_t\}$ to determine the training data sets, respectively, and $\mathbb{T}_1^{test} =\{20,000 h_t\}$ and $\mathbb{T}_2^{test} =\{20,000 h_t\}$ as test sets. For all the generated data sets, we perform a one-time ahead ensemble forecast for the pixels corresponding to $\mathbb{L}^{\prime}=\{1,\ldots,199\}$.

We follow the steps detailed in Section \ref{sec5.0} to use MMAF-guided learning in practice. We start by estimating the parameters $c$ and $\lambda$. We use the  estimators $(\ref{est_stou})$ presented in Section \ref{sec_stou} and the plug-in estimator $(\ref{plugin})$. Table \ref{result} gives the results for each data set. We then use such estimates and the frames corresponding to $\mathbb{L}^{\prime} \times \mathbb{T}^{train}$, $\mathbb{L}^{\prime} \times \mathbb{T}_1^{train}$, or $\mathbb{L}^{\prime} \times \mathbb{T}_2^{train}$ in the selection of the parameter $a_t$ by following the rules in Remark \ref{selection1} for $k=1$, and Remark  \ref{selection2}.
We then analyze the performance of the different randomized Gibbs estimators obtained from a particular choice of the parameter $a_t$ when the reference distribution is assumed to be multivariate standard Gaussian. We do not give an empirical evaluation of the right-hand side of the bounds (\ref{PAC_G1}) and (\ref{PAC_G2}) and base our assessment of the performance of the different estimators on how narrow their inter-quartile range on a $50$ member ensemble forecast is. Note that each forecast we make has a casual interpretation as described in Figure \ref{causal}.

\begin{table}[h!]
	\small
	\centering
	\begin{tabular}{|c|c|c|c|}
		\hline
		\textbf{Data Set} & \textbf{Mean Reverting Parameter} & \textbf{L\'evy seed} & \textbf{Random generator seed}\\
		\hline
		GAU1A4 & $A=4$ &  Gaussian & $1$\\
		\hline
		GAU10 & $A=1$ & Gaussian & $10$\\
		\hline
		NIG1  & $A=1$ & NIG & 1 \\
		\hline
		NIG1A4  & $A=4$ & NIG & $1$ \\
		\hline
		NIG10 & $A=1$ & NIG & $10$ \\
		\hline
	\end{tabular}
	\caption{Overview on simulated data sets with $c=1$ and spatial dimension $d=1$.}
	\label{data}
\end{table}

We start by conducting two different experiments to showcase the performance of our methodology for $\epsilon=2.99$ and $p_t=1$. We use as baseline model a linear model where the estimation of the parameter vector $\beta$ is performed using the empirical risk minimizer defined in (\ref{ols}).

\begin{table}[h!]
	\small
	\centering
	\begin{tabular}{|c|c|c|c|c|}
		\hline
		\textbf{Data Set} & $\bb{A^*}$ & $\bb{c^*}$ & $\bb{\lambda^*}$ & Frames Used\\
		\hline
		GAU1A4 & $3.9684$ &  $0.9958$ & $1.9715$ & $\mathbb{T} \times \mathbb{L}$\\
		\hline
		GAU10 & $0.8429$ & $0.9978$ & $0.4196$ & $\mathbb{T} \times \mathbb{L}$\\
		\hline
		NIG1 & $1.0186$ & $1.0018$ &  $0.5111$ & $\mathbb{T}_1 \times \mathbb{L}$\\
		\hline
		NIG1 & $1.0186$ &  $1.0019$ & $0.5112$ &  $\mathbb{T}_2 \times \mathbb{L}$\\
		\hline
		NIG1A4  & $4.0308$ & $1.0076$ & $2.0461$ & $\mathbb{T} \times \mathbb{L}$\\
		\hline
		NIG10 & $0.9728$ & $1.0021$ & $0.4884$ & $\mathbb{T} \times \mathbb{L}$ \\
		\hline
	\end{tabular}
	\caption{Estimations of parameters $A$, $c$ and $\lambda$.}
	\label{result}
\end{table}

In the first experiment, we use the data sets GAU1A4, GAU10, NIG1A4, NIG10 and work with the training data sets $S^{x^*}_m$ described by the Tables \ref{akmi} and \ref{akmii}.

\begin{table}[h!]
	\small
	\centering
	\begin{tabular}{|c|c|c|c|c|c|c|c|}	
		\hline
		\multicolumn{8}{|c|}{Selection of parameters as in Remark \ref{selection1}}\\
		\hline
		\multicolumn{2}{|c|}{GAU1A4} &
		\multicolumn{2}{c|}{GAU10} &
		\multicolumn{2}{c|}{NIG1A4} &
		\multicolumn{2}{c|}{NIG10}\\
		\hline
		$a_t$ & 124 & $a_t$ & 346 &$a_t$ & 121 & $a_t$ & 313\\
		m & 16 & m & 6 & m & 17 & m & 7\\
		k & 1 & k & 1  & k & 1 & k & 1 \\
		\hline
	\end{tabular}
	\caption{Parameters defining the training data sets $S^{x^*}_m$ used in the first experiment for each pixel.}
	\label{akmi}
\end{table}

\begin{table}[h!]
	\small
	\centering
	\begin{tabular}{|c|c|c|c|c|c|c|c|}	
		\hline
		\multicolumn{8}{|c|}{Selection of parameters as in Remark \ref{selection2}}\\
		\hline
		\multicolumn{2}{|c|}{GAU1A4} &
		\multicolumn{2}{c|}{GAU10} &
		\multicolumn{2}{c|}{NIG1A4} &
		\multicolumn{2}{c|}{NIG10} \\
		\hline
		$a_t$ & 47 & $a_t$ & 156 &$a_t$ & 45 & $a_t$ & 139\\
		m & 41 & m & 13 & m & 43 & m & 15\\
		k & 1 & k & 1  & k & 1 & k & 1 \\
		\hline
	\end{tabular}
	\caption{Parameters defining the training data sets $S^{x^*}_m$ used in the first experiment for each pixel.}
	\label{akmii}
\end{table}

\begin{figure}[h!]
	\centering
	\subfigure[GAU1A4]{\includegraphics[width=.4\textwidth]{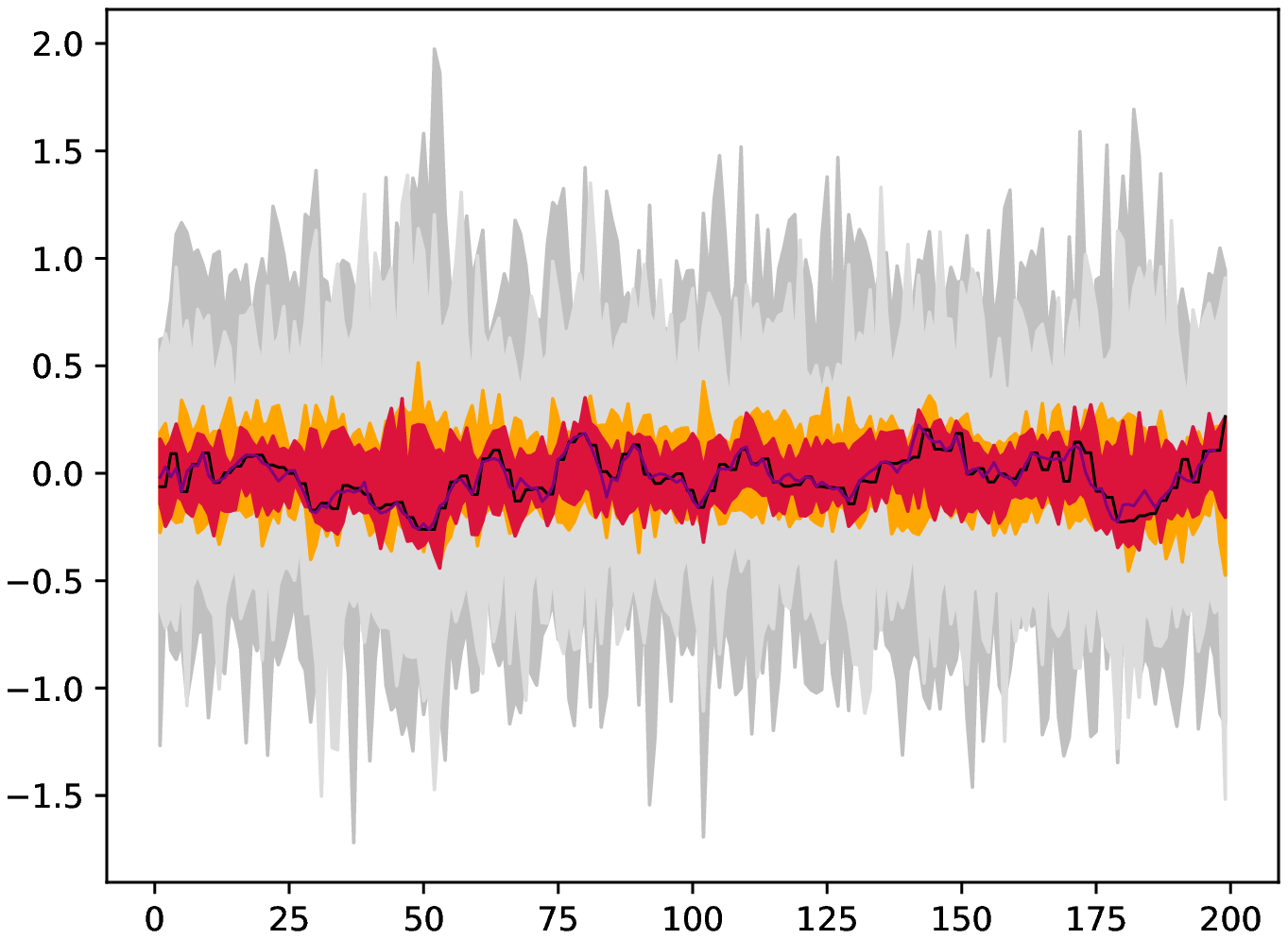}} \hspace{.02\textwidth}
	\subfigure[NIG1A4]{\includegraphics[width=0.4\textwidth]{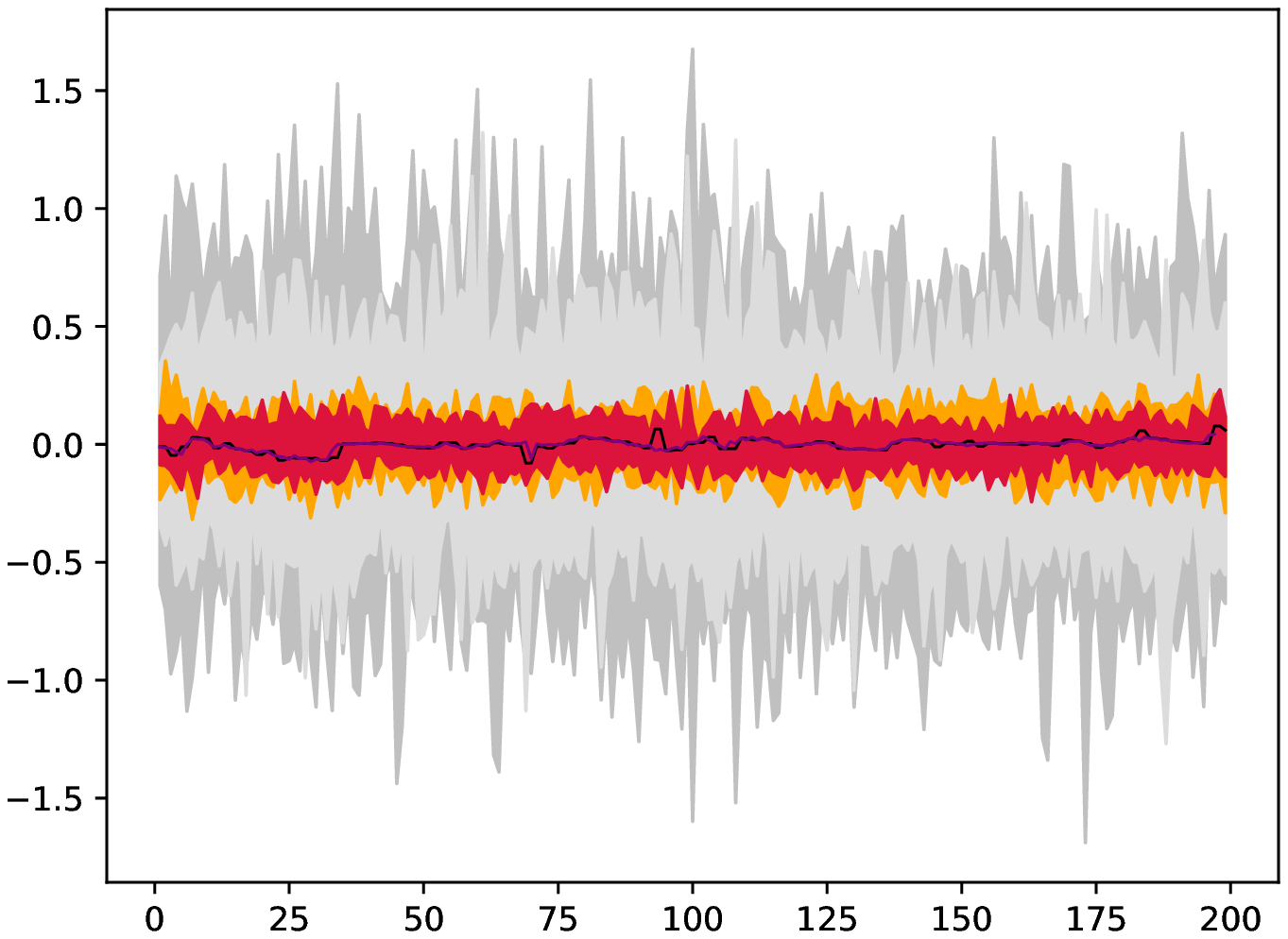   }}\hspace{.02\textwidth}
	\subfigure[GAU10]{\includegraphics[width=0.4\textwidth]{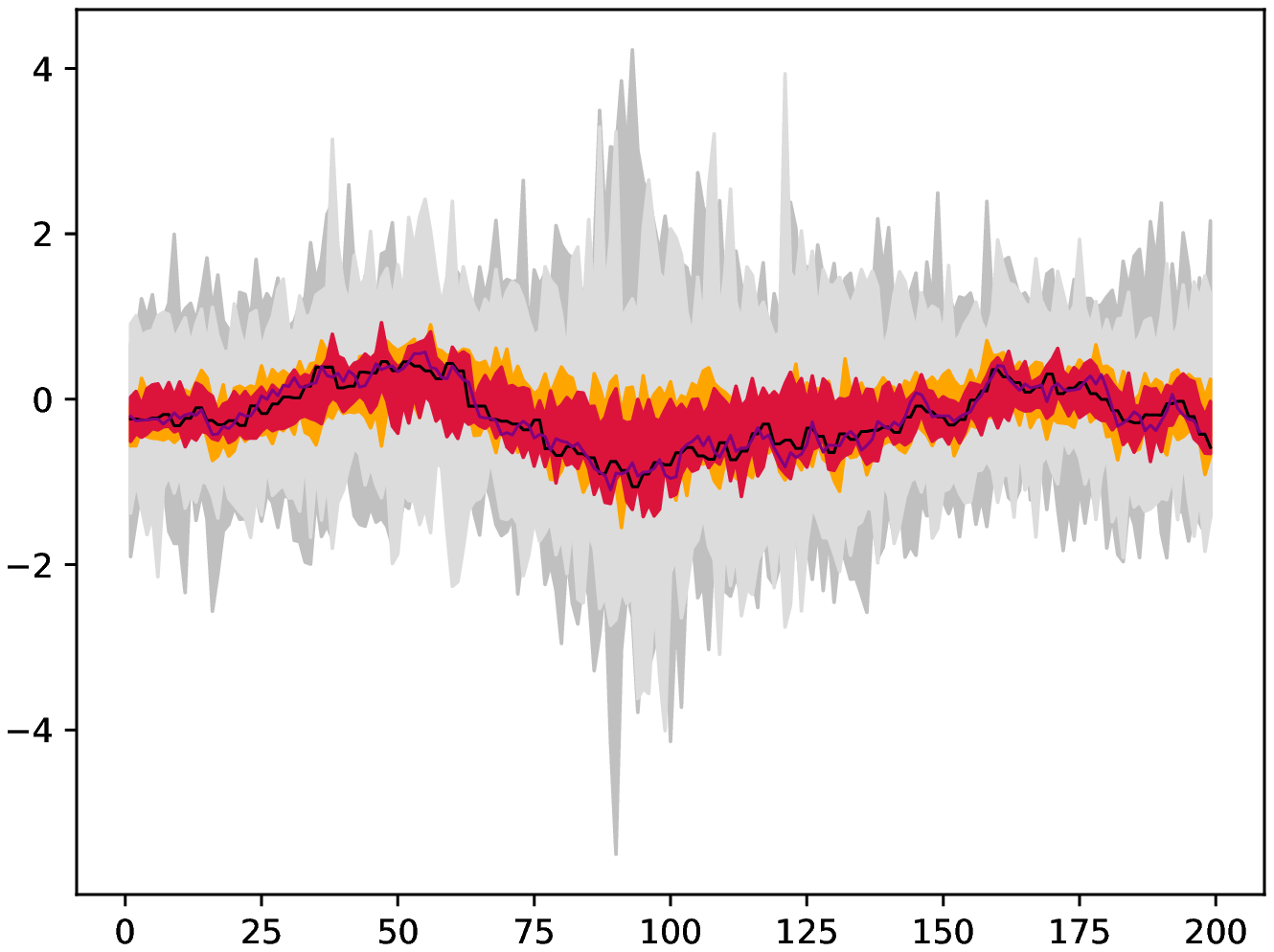}}\hspace{.02\textwidth}
	\subfigure[NIG10]{\includegraphics[width=0.4\textwidth]{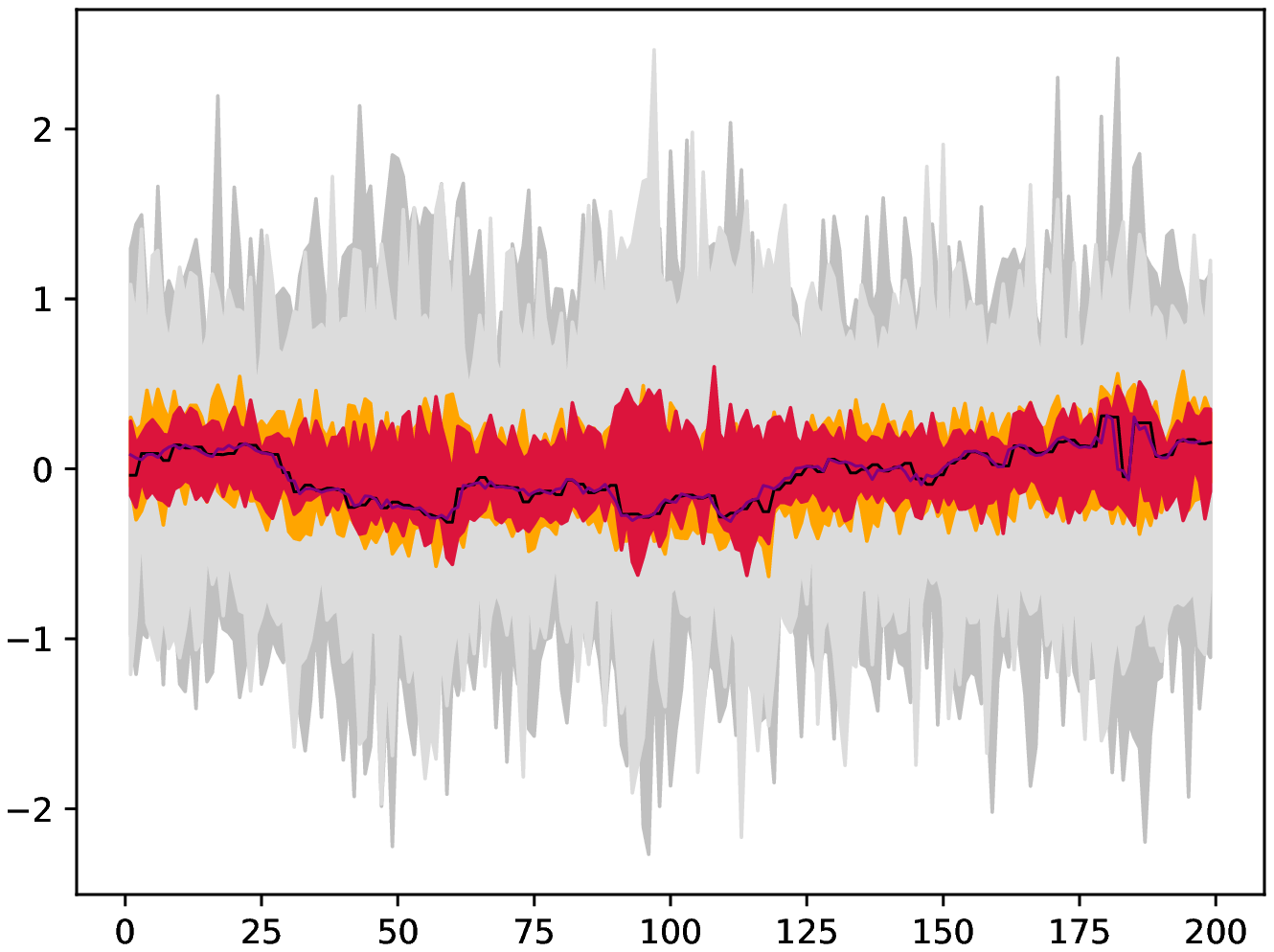}}\hspace{.02\textwidth}
	%\subfigure[NIG10]{\includegraphics[width=.38\textwidth]{N_50_MSENIG10.png}} \hspace{.02\textwidth}
	\caption{Min-max and inter-quartile range of a $50$-member ensemble forecast for the training data sets $S^{x^*}_m$ described in Tables \ref{akmi} and \ref{akmii}.
		Dark grey and orange color represent the ranges related to the use of Table \ref{akmi}, whereas the light grey and red color represent the ranges related to the use of Table \ref{akmii}. The test set is depicted with a black thick line, while the forecasts obtained using the baseline estimator (\ref{ols}) are represented with a violet thick line.  The training data set $S^{x^*}_m$ described in Table \ref{akmii} are used for computing the baseline estimates with respect to the GAU1A4, NIG1A4, GAU10, and NIG10 data sets in (a), (b), (c) and (d), respectively. The $x$-axis represents the pixels in $\mathbb{L}^{\prime}$, whereas the forecast values are along the $y$-axis.}
	\label{prediction}
\end{figure}

An acceptance-rejection algorithm with a Gaussian proposal determines a draw $\beta$ from the randomized Gibbs estimator. We show in Figure \ref{prediction} the min-max range of the two ensemble forecasts (dark grey for the parameters in Table \ref{akmi}, and light grey for the one in Table \ref{akmii}) as well as the inter-quartile ranges of a $50$-member ensemble forecast for each pixel in $\mathbb{L}^{\prime}$ compared with the test set. As the plots clearly show, using a randomized Gibbs estimator, we obtain an inter-quartile range that contains the test set for each spatial position $x^* \in \mathbb{L}^{\prime}$.
Moreover, the ranges seem to have a similar behavior independently of the test set and the L\'evy seeds. The randomized Gibbs estimators guided by the choices of parameters in Table \ref{akmii} have narrower inter-quartile ranges (in red) with respect to the ones guided by the choices of parameters in Table \ref{akmi} (in orange).

Let us define the average Relative  Mean Absolute Error (averRMAE) to compare our forecasts with the baseline model. 
Let $P=|\mathbb{L}^{\prime}|$, we define the average Relative Mean Absolute Error as
\[
\textit{aver}RMAE= \frac{1}{P} \sum_{i=1}^P \frac{  \lvert Z_{t}(x_i)-\hat{Z}_{t}(x_i)\rvert}{\lvert Z_{t}(x_i)\rvert},
\]

where $\hat{Z}_t(x_i)$ is the one-time ahead forecast obtained with the linear model for all $i$. The observations in our simulated data sets have an order of magnitude (on average) of $10^{-3}$. Tables \ref{rmaei} and \ref{rmaeii} show that the empirical risk minimizer cannot capture any significant digit, as also seen in Figure \ref{prediction}. Our ensemble forecasts give, at least, an interval where the one-time ahead forecasts can lie.

\begin{table}[h]
	\small
	\centering
	\begin{tabular}{|l| c |c|c|c|c|}	
		\hline
		&
		GAU1A4 &
		GAU10 &	
		NIG1A4&
		NIG10 &
		NIG1\\
		\hline
		linear & $0.0304$ & $0.0176$ &  $0.0168$ &  $0.0127$ &  $0.0142$\\ 
		\hline
	\end{tabular}
	\caption{averRMAE for the baseline estimator following the selection rule in Remark \ref{selection1}. The NIG1 data set's averRMAE has been computed with respect to $S^{2,x^*}_m$ described in Table \ref{akmCompi}.}	
	\label{rmaei}
	\begin{tabular}{|l| c |c|c|c|c|}	
		\hline
		&
		GAU1A4 &
		GAU10 &	
		NIG1A4&
		NIG10 &
		NIG1\\
		\hline
		linear & $0.0303$  & $0.0176$ &  $0.0156$ &  $0.0128$ &  $0.0138$\\ 
		\hline
	\end{tabular}
	\caption{averRMAE for the baseline estimator following the selection rule in Remark \ref{selection2}. The NIG1 data set's averRMAE has been computed with respect to $S^{2,x^*}_m$ described in Table \ref{akmCompii}.}
	\label{rmaeii}	
\end{table}
In the second experiment, we analyze the performance of randomized Gibbs estimators for the data sets NIG1. For each pixel, we work with the data sets $S^{1,x^*}_m$ and $S^{2,x^*}_m$ described in Tables \ref{akmCompi} and \ref{akmCompii}, obtaining the ensemble forecasts in Figure \ref{NigComp}.

\begin{table}[h!]
	\small
	\centering
	\begin{tabular}{|p{.5cm}|p{.5cm}|p{.5cm}|p{.5cm}|}
		\hline
		\multicolumn{4}{|c|}{Selection of parameters}\\
		\multicolumn{4}{|c|}{ as in Remark \ref{selection1}}\\
		\hline
		\multicolumn{2}{|c|}{$S^{1,x^*}_m$} &
		\multicolumn{2}{c|}{$S^{2,x^*}_m$}\\
		\hline
		\centering $a_t$ &\centering 303 & \centering $a_t$ & 652\\
		\centering m &\centering 7 &\centering m & \;32\\
		\centering k &\centering 1 &\centering k & \;\;1 \\
		\hline
	\end{tabular}
	\caption{Parameters describing the training data sets used in the second experiment at each pixel $x^*$.}
	\label{akmCompi}
\end{table}
\begin{table}
	\small
	\centering
	\begin{tabular}{|p{.5cm}|p{.5cm}|p{.5cm}|p{.5cm}|}
		\hline
		\multicolumn{4}{|c|}{Selection of parameters}\\
		\multicolumn{4}{|c|}{ as in Remark \ref{selection2}}\\
		\hline
		\multicolumn{2}{|c|}{$S^{1,x^*}_m$} &
		\multicolumn{2}{c|}{$S^{2,x^*}_m$}\\
		\hline
		\centering $a_t$ & \centering 134 & \centering$a_t$ & 207\\
		\centering 	m & \centering 15 &\centering m & \;96\\
		\centering k & \centering 1 & k\centering & \;\;1 \\
		\hline
	\end{tabular}
	\caption{Parameters describing the training data sets used in the second experiment at each pixel $x^*$.}
	\label{akmCompii}
\end{table}

By comparing the inter-quartile range of the ensemble forecasts, we see that in both cases, the green range, representing the forecasts related to $S^{2,x^*}_m$ is contained in the purple range, which represents the inter-quartile range of the forecasts made using the data set $S^{1,x^*}_m$. Both of them include the test set. The amplitude of the inter-quartile range reduces when the number of observations in the data set increases. The forecasts of the baseline model are performed using the data set $S^{2,x^*}_m$ and have a high averRMAE as reported in Tables \ref{rmaei} and \ref{rmaeii}.

\begin{figure}[h!]
	\centering 
	\subfigure[NIG1's ensemble forecast based on Table \ref{akmCompi}]{\includegraphics[width=.4\textwidth]{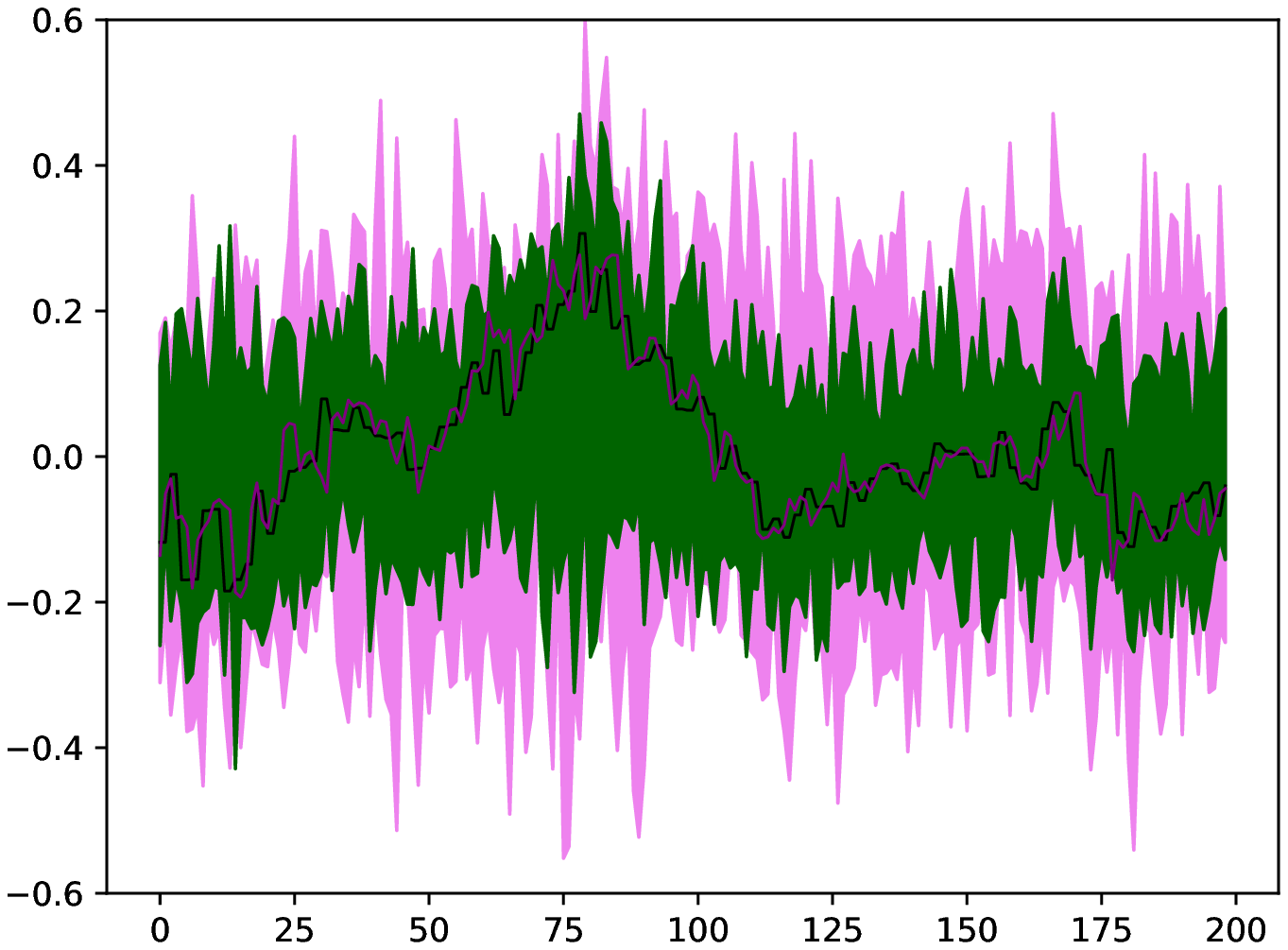}} \hspace{.02\textwidth}
	\subfigure[NIG1's ensembe forecast based on Table \ref{akmCompii}]{\includegraphics[width=.4\textwidth]{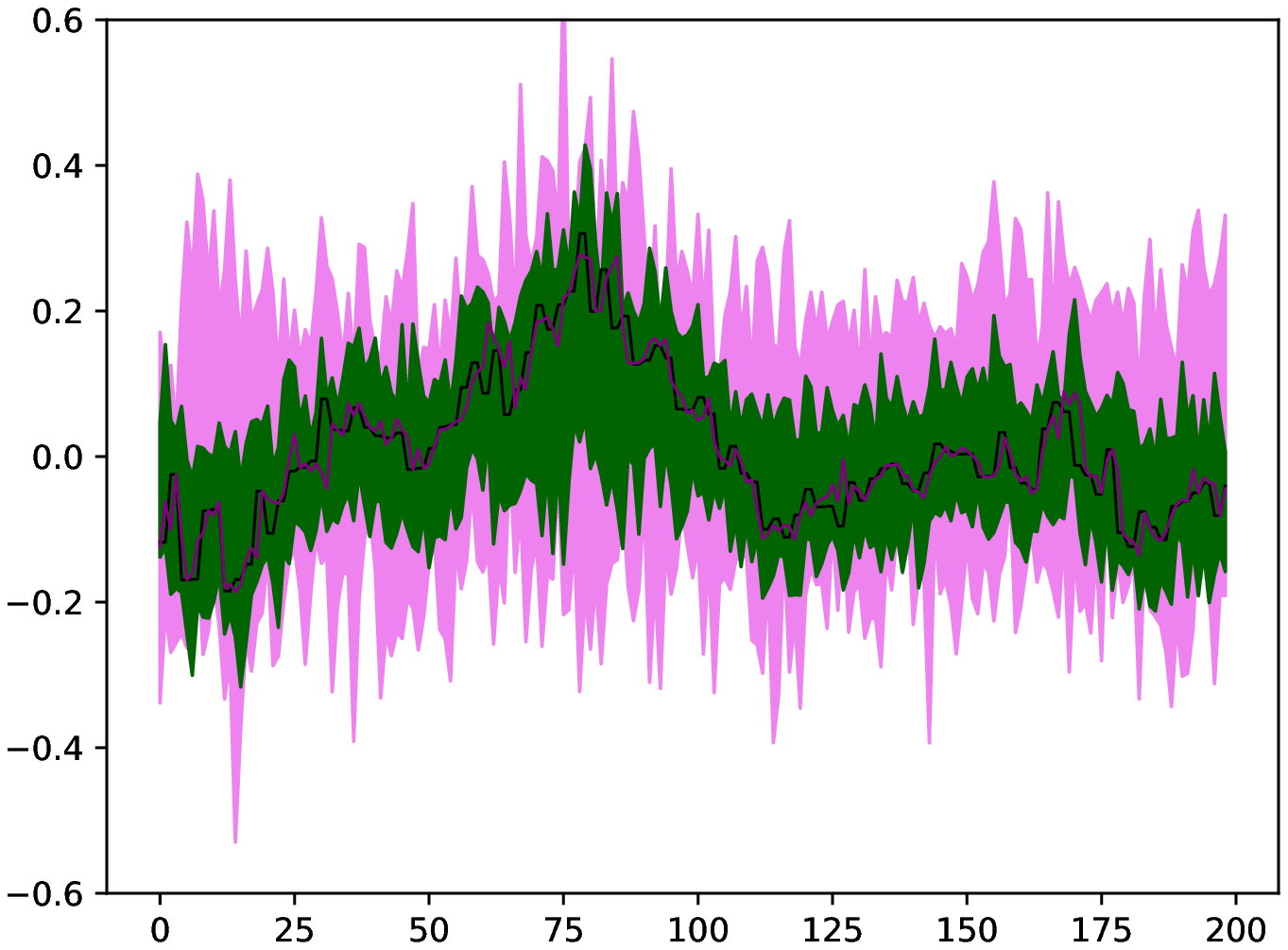}} \hspace{.02\textwidth}
	\caption{Inter-quartile range of a 50-member ensemble forecast using  $S^{1,x^*}_m$ (in violet) and $S^{2,x^*}_m$ (in green) as defined in Table \ref{akmCompi} (a) and Table \ref{akmCompii} (b). The test set and the baseline forecasts are depicted with a black and a violet thick line, respectively. The training data sets $S^{2,x^*}_m$ in Table \ref{akmCompi} and \ref{akmCompii} are respectively used as baseline estimates in (a) and (b). The $x$-axis represents the pixels in $\mathbb{L}^{\prime}$, whereas the forecast values are along the $y$-axis.}
	\label{NigComp}
\end{figure}

We want now to analyze our methodology's sensitivity to the hyperparameter $p_t$ when $a_t$ is selected following the rules in Remarks \ref{selection1} and \ref{selection2}, respectively. The selection of the parameter $a_t$ is proportional to the values of $p_t$. Therefore, the smaller we choose this parameter, the more examples we obtain in $S^{x^*}_m$. We work in this experiment with the data sets GAU1A4 and NIG1A4. However, we obtained the same conclusions for all the other data sets employed in our study. We generate ensemble forecasts (and their respective inter-quartile ranges) for $p_t=1,8,15$. All inter-quartile ranges in Figure \ref{forp} contain the test set, and a significant increase of $p_t$ has a negative impact on the inter-quartile range amplitude. For this reason, we have chosen the parameter $p_t=1$ in our previous experiments.

\begin{table}[h!]
	\small
	\centering
	
	\begin{tabular}{|c c | c c | c c|c c|c c|c c|}
		\hline
		\multicolumn{12}{|c|}{Selection of parameters as in Remark \ref{selection1}}\\	
		\hline
		\multicolumn{6}{|c|}{GAU1A4} &
		\multicolumn{6}{c|}{NIG1A4}\\
		\hline
		\multicolumn{1}{|c}{$p_t$} &
		\multicolumn{1}{c|}{1} &
		\multicolumn{1}{c}{$p_t$} &
		\multicolumn{1}{c|}{8} &
		\multicolumn{1}{c}{$p_t$} &
		\multicolumn{1}{c|}{15} &
		\multicolumn{1}{c}{$p_t$} &
		\multicolumn{1}{c|}{1} &
		\multicolumn{1}{c}{$p_t$}&
		\multicolumn{1}{c|}{8} &
		\multicolumn{1}{c}{$p_t$}&
		\multicolumn{1}{c|}{15} \\
		\hline$a_t$ & 124 & $a_t$ & 129& $a_t$ & 134 & $a_t$ & 121&$a_t$ & 127 & $a_t$ & 132\\
		m & 16 & m & 15 & m & 14 & m & 17 &	m & 15 & m & 14 \\
		k & 1 & k & 1 & k & 1 & k & 1 & k & 1 & k & 1 \\
		\hline
	\end{tabular}
	\caption{Parameters describing the training data sets $S^{x^*}_m$ used in the third experiment for each pixel.}
	\label{pCompi}
	\begin{tabular}{|c c | c c | c c|c c|c c|c c|}
		\hline
		\multicolumn{12}{|c|}{Selection of parameters as in Remark \ref{selection2}}\\	
		\hline
		\multicolumn{6}{|c|}{GAU1A4} &
		\multicolumn{6}{c|}{NIG1A4}\\
		\hline
		\multicolumn{1}{|c}{$p_t$} &
		\multicolumn{1}{c|}{1} &
		\multicolumn{1}{c}{$p_t$} &
		\multicolumn{1}{c|}{8} &
		\multicolumn{1}{c}{$p_t$} &
		\multicolumn{1}{c|}{15} &
		\multicolumn{1}{c}{$p_t$} &
		\multicolumn{1}{c|}{1} &
		\multicolumn{1}{c}{$p_t$}&
		\multicolumn{1}{c|}{8} &
		\multicolumn{1}{c}{$p_t$}&
		\multicolumn{1}{c|}{15} \\
		\hline
		$a_t$ & 47 & $a_t$ & 53 & $a_t$ & 58 & $a_t$ & 45 & $a_t$ & 52 & $a_t$ &  58\\
		m & 41 & m & 37 & m & 33 & m & 43 &	m & 39 & m & 35 \\
		k & 1 & k & 1 & k & 1 & k & 1 & k & 1 & k & 1 \\
		\hline
	\end{tabular}
	\caption{Parameters describing the training data sets $S^{x^*}_m$ used in the third experiment for each pixel.}
	\label{pmCompii}
\end{table}
Finally, we analyze the sensitivity of our methodology to the choice of the accuracy level $\epsilon$. We work in this experiment with the data set GAU10 and the related $S^{x^*}_m$ described in Tables \ref{akmi} and \ref{akmii}. However, we obtained the exact same conclusions for all the other data sets employed in our study. We choose $\epsilon=1,2,2.99$ for the $S^{x^*}_m$ in Table \ref{akmi} and $\epsilon=1,2.99,5$ for the training data set in Table \ref{akmii}. We remind that the bounds (\ref{PAC_G1}) and (\ref{PAC_G2}) work for $0<\epsilon <3 $ and $\epsilon>0$, respectively. Also, in this experiment, we plot the inter-quartile ranges obtained for the different randomized Gibbs estimators in Figure \ref{fore} and observe that the bigger the parameter $\epsilon$, the narrower the amplitude of the ranges. We choose in our experiment $\epsilon=2.99$ because it is the bigger $\epsilon$ for which both the bound (\ref{PAC_G1}) and (\ref{PAC_G2}) are defined.
\begin{figure}[h!]
	\centering
	\subfigure[GAU1A4's ensemble forecast based on Table \ref{pCompi}]{\includegraphics[width=.4\textwidth]{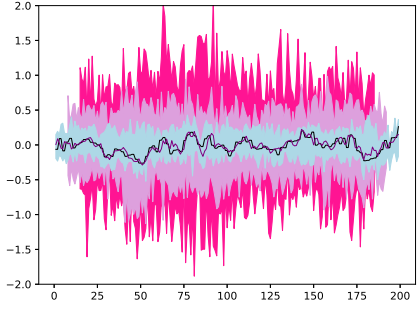}} \hspace{.02\textwidth}
	%\subfigure[GAU1]{\includegraphics[width=0.38\textwidth]{N_50_MSEGAU1.png}} \hspace{.02\textwidth}
	\subfigure[GAU1A4's ensemble forecast based on Table \ref{pmCompii}]{\includegraphics[width=.4\textwidth]{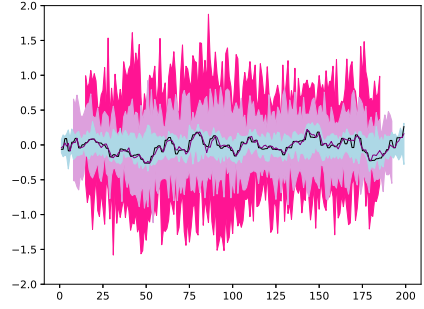}}\hspace{.02\textwidth}
	\subfigure[NIG1A4's ensemble forecast based on Table \ref{pCompi}
	]{\includegraphics[width=.4\textwidth]{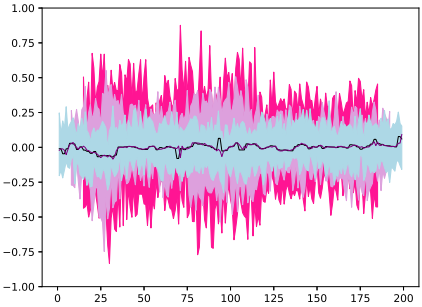}} \hspace{.02\textwidth}
	\subfigure[NIG1A4's ensemble forecast based on Table \ref{pmCompii}]{\includegraphics[width=.4\textwidth]{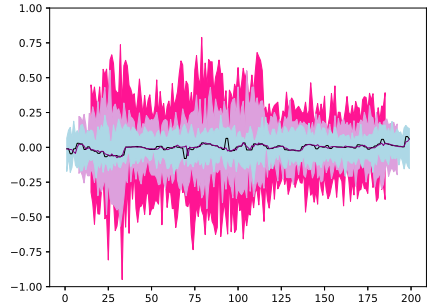}} \hspace{.02\textwidth}
	%\subfigure[NIG10]{\includegraphics[width=.38\textwidth]{N_50_MSENIG10.png}} \hspace{.02\textwidth}
	\caption{Inter-quantile range of a $50$-member ensemble forecast for $p_t=1$ (blue), $p_t=8$ (lilac), $p_t=15$ (magenta) using the training data sets described in Table \ref{pCompi} and Table \ref{pmCompii}. The test set and the baseline forecasts are depicted in black and violet thick lines, respectively. The training data sets $S^{x^*}_m$ in Table \ref{pmCompii} are respectively used as baseline estimates in (a), (b), (c) and (d). The $x$-axis represents the pixels in $\mathbb{L}^{\prime}$, whereas the forecast values are along the $y$-axis.}
	\label{forp}
\end{figure}

\begin{figure}[h!]
	\centering
	\subfigure[GAU10's ensemble forecast based on Table \ref{akmi}]{\includegraphics[width=.4\textwidth]{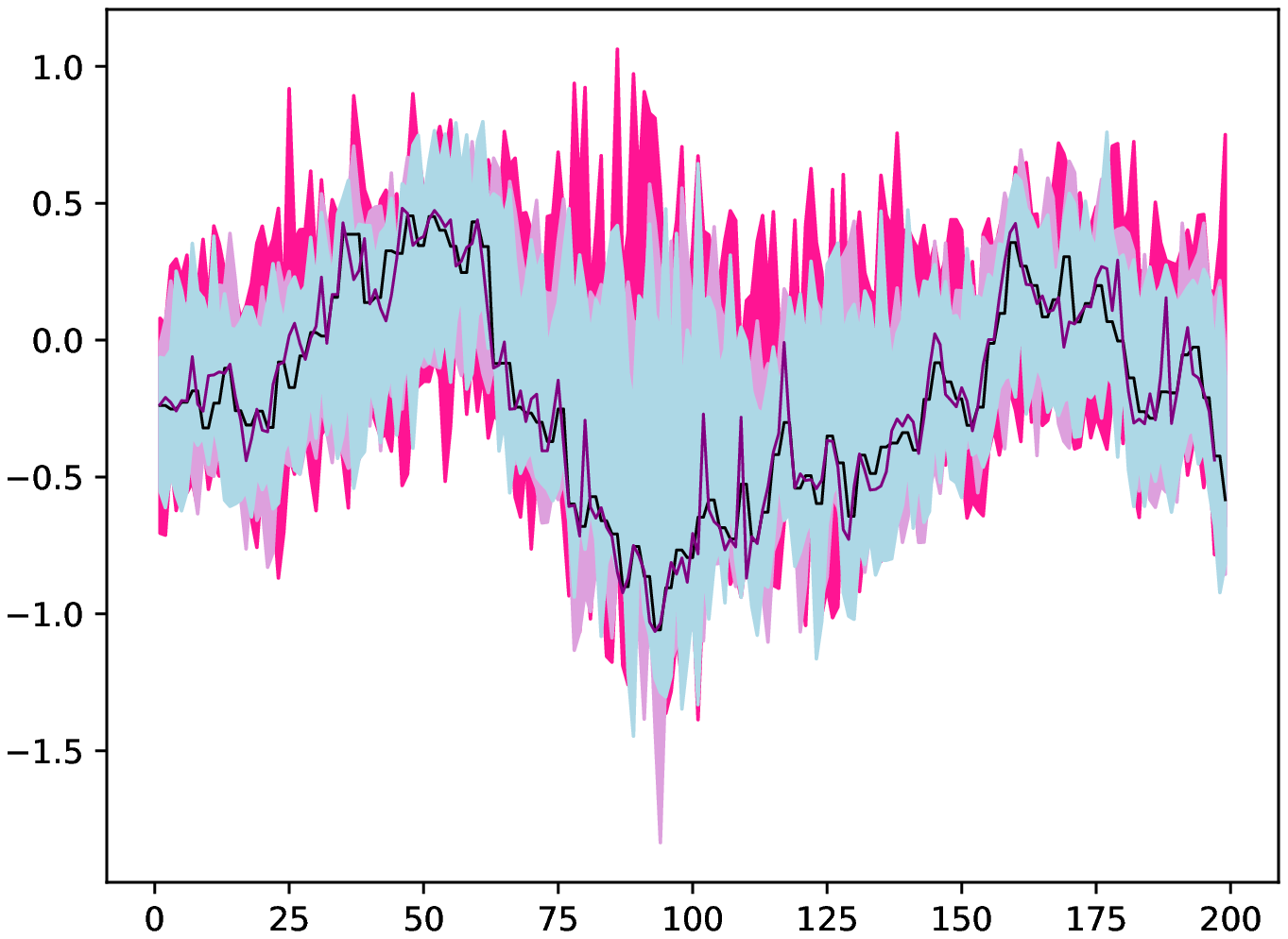}} \hspace{.02\textwidth}
	%\subfigure[GAU1]{\includegraphics[width=0.38\textwidth]{N_50_MSEGAU1.png}} \hspace{.02\textwidth}
	\subfigure[GAU10's ensemble forecast based on Table \ref{akmii}]{\includegraphics[width=.4\textwidth]{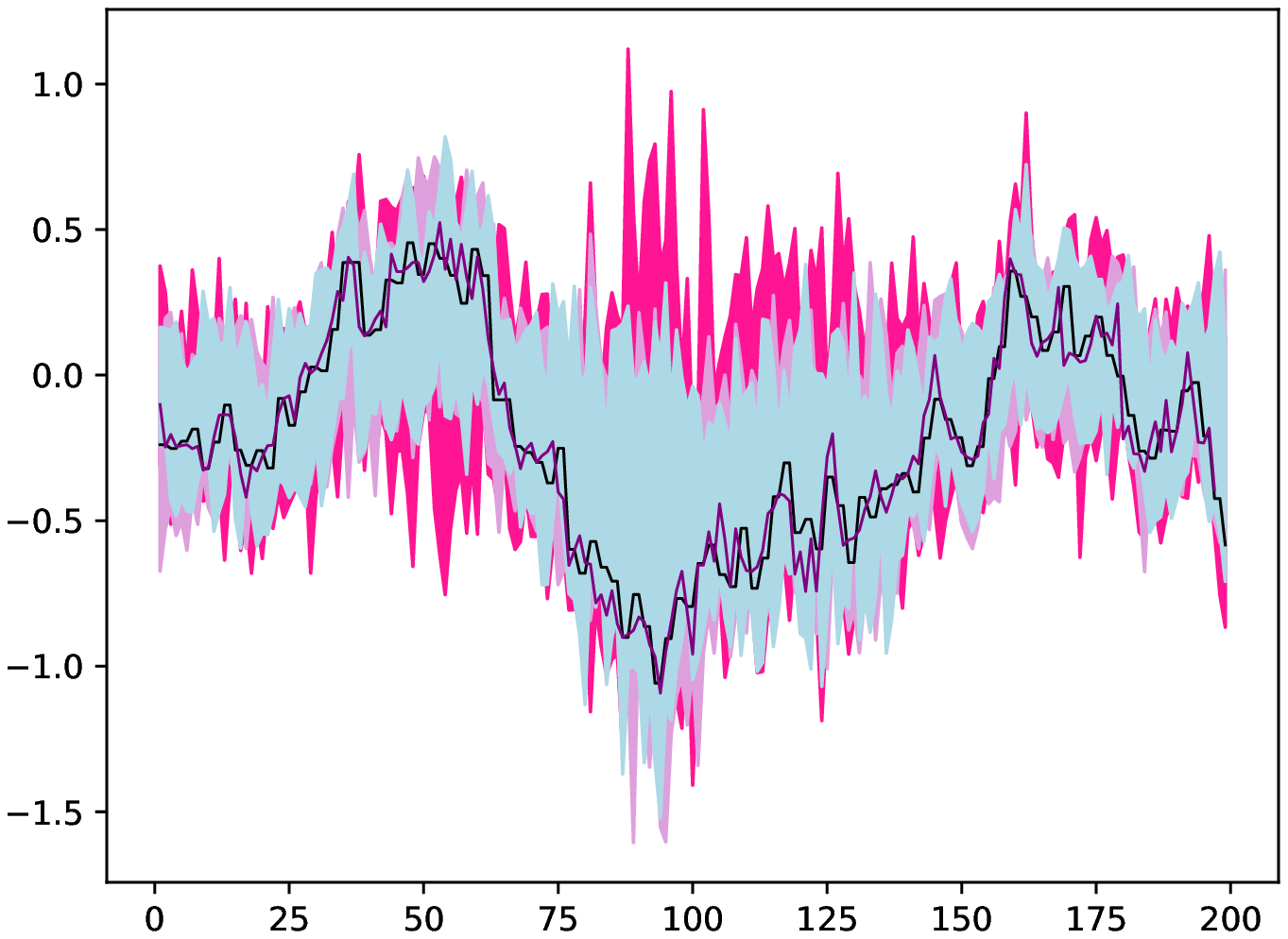}}\hspace{.02\textwidth}
	\caption{(a) Inter-quantile range of a $50$-member ensemble forecast for $\epsilon=1$ (magenta), $\epsilon=2$ (lilac), $\epsilon=2.99$ (blue) using the training data set described in Table \ref{akmi} for the GAU10. The test set and the baseline forecasts are depicted in black and violet thick lines, respectively. The training data sets $S^{x^*}_m$ in Table \ref{akmi} is used as baseline estimates for $\epsilon=2.99$.
		(b) Inter-quantile range of a $50$-member ensemble forecast for $\epsilon=1$ (magenta), $\epsilon=2.99$ (lilac), $\epsilon=5$ (blue) using the training data sets described in Table \ref{akmii} for the GAU10. The test set and the baseline forecasts are depicted in black and violet thick lines, respectively. The training data sets $S^{x^*}_m$ in Table \ref{akmii} is used as baseline estimates for $\epsilon=5$.
		The $x$-axis represents the pixels in $\mathbb{L}^{\prime}$, whereas the forecast values are along the $y$-axis.}
	\label{fore}
\end{figure}

\section{Conclusions}

We define a novel theory-guided machine learning methodology for spatio-temporal data called MMAF-guided learning, which works in the class of the Lipschitz functions, e.g., linear functions and several types of neural network modules. Our methodology applies to raster data cubes, and it works under the assumption that such data are generated by an influenced mixed moving average field (MMAF, in short) defined on a cone-shaped ambit set. Such random fields are strictly stationary, $\theta$-lex weakly dependent, and can be employed to model Gaussian and non-Gaussian distributed data. Moreover, they can be non-Markovian and admit non-separable covariance functions.

We show fixed-time and any-time PAC Bayesian bounds in this framework. All our bounds are determined for a bounded loss and depend on \emph{one single} $\theta$-lex coefficients of the underlying MMAF. In particular, our bounds hold for temporal and spatial short and long-range dependent data.

To enable one-time ahead ensemble forecasts, we need an estimate of the decay rate of the $\theta$-lex coefficients of the underlying MMAF. Such an estimation is feasible, for example, in the case of STOU and MSTOU processes. We can then define spatio-temporal embeddings such that they make the right-hand side of the fixed-time PAC Bayesian bounds, proven in the paper, not vacuous. The analyzed embeddings give us possible training data sets for learning \emph{randomized estimators} that have good generalization performance with a high probability. We can then determine one-time ahead ensemble forecasts.  

In conclusion, we test the learning procedure for a \emph{guided} randomized Gibbs estimator and a Gaussian reference distribution on the class of linear models. Our estimator has a convergence rate of $\mathcal{O}(m^{-\frac{1}{2}})$ . We simulate a set of six data sets from an STOU process with Gaussian and NIG L\'evy seed and determine (50 members) ensemble forecasts. We obtain that the inter-quartile ranges of our forecasts always contain the test set and are narrower when the number of observations in the training data set increases. Moreover, our forecasts have a causal interpretation induced by the ambit sets of the data-generating process known as Rubin's potential outcomes framework.

\appendix

\section{Appendix}
\label{appendix}

\subsection{Weak dependence notions for causal processes and (influenced) MMAF}
\label{sec2}

In this section, we discuss more in details the dependence notions called  $\theta$-weak dependence and $\theta$-lex weak dependence. 
The latter notion has been introduced in \cite[Definition 2.1]{CSS20} as an extension to the random field case of the notion of $\theta$-weak dependence satisfied by causal stochastic processes \cite{DD03}. This notion of dependence is presented in Definition \ref{thetadependent}. However, the notion of $\theta$-lex weak dependence given in Definition \ref{thetaweaklydependent} slightly differs from the one given in \cite[Definition 2.1]{CSS20} and represents an extension to the random field case of the $\theta$-weak dependence notion defined in \cite[Remark 2.1]{D07}. Note that the definitions of $\theta$-weak dependence in \cite{DD03} and \cite[Remark 2.1]{D07} differ because of the cardinality of the marginal distributions on which the function $G$ is computed, namely,  $G \in \mathcal{G}_1$ in the former and  $G \in \mathcal{G}_{\nu}$ for $\nu \in \N$ in the latter.

Let us now analyze the relationship between $\theta$-weak dependence, $\alpha$-mixing, and $\phi$-mixing. Most of the PAC Bayesian literature for stationary and heavy tailed data employs the following two mixing conditions, see Remark \ref{rate_literature}, namely $\alpha$-mixing and $\phi$-mixing. The results in the Lemma below give us a proof that the $\theta$-weak dependence is more general than $\alpha$-mixing and $\phi$-mixing and therefore describes the dependence structure of a bigger class of models.

Let $\mathcal{M}$ and $\mathcal{V}$ be two sub-sigma algebras of $\mathcal{F}$. 
First of all, the strong mixing coefficient \cite{R56} is defined as
\[
\alpha(\mathcal{M},\mathcal{V})=\sup\{ |P(M)P(V) - P(M\cap V) | , M \in \mathcal{M}, V \in \mathcal{V}\}.
\]
A stochastic process $\bb{X}$ is said to be $\alpha$-mixing  if
\[
\alpha(r)=\alpha(\sigma\{\bb{X}_s, s \leq 0\}, \sigma\{\bb{X}_s, s \geq r\})
\]
converges to zero as $r \to \infty$.
The $\phi$-mixing coefficient has been introduced in \cite{ibra} and defined as
\[
\phi(\mathcal{M},\mathcal{V})=\sup\{ |P(V|M) - P(V) | , M \in \mathcal{M}, V \in \mathcal{V}, P(M) >0\}.
\]
A stochastic process $\bb{X}$ is said to be $\phi$-mixing  if
\[
\phi(r)=\phi(\sigma\{\bb{X}_s, s \leq 0\}, \sigma\{\bb{X}_s, s \geq r\})
\]
converges to zero as $r \to \infty$.

\begin{Lemma}
	\label{gen}
	Let $(\bb{X}_t)_{t \in \Z}$ be a stationary real-valued stochastic process such that $\E[|\bb{X}_0|^q]< \infty$ for some $q >1$.
	Then, 
	\begin{itemize}
		\item[(a)]  $\theta(r) \leq  2^{\frac{2q-1}{q}} \alpha(r)^{\frac{q-1}{q}} \|\bb{X}_0\|_q \leq 2^{\frac{q-1}{q}} \phi(r)^{\frac{q-1}{q}} \|\bb{X}_0\|_q $, and 
		\item[(b)] $\theta$-weak dependence is a more general dependence notion than $\alpha$-mixing and $\phi$-mixing.
	\end{itemize}
\end{Lemma}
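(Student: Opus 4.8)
The plan is to reduce both inequalities in (a) to a single covariance inequality between well-separated blocks, and then to read off the generality claim in (b) from that estimate together with a counterexample.

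First I would unwind Definition \ref{thetadependent}. By scale invariance we may normalize $\norm{F}_\infty=1$ and $Lip(G)=1$, so that
\[
\theta(r)=\sup\big|Cov(F(\bb{X}_\Gamma),G(\bb{X}_{j_1}))\big|,
\]
the supremum ranging over $u\in\N$, $F\in\mathcal{G}_u^*$ with $\norm{F}_\infty\le1$, $G\in\mathcal{G}_1$ with $Lip(G)\le1$, and index sets $\Gamma=\{i_1\le\cdots\le i_u\}$ with $i_u+r\le j_1$. Since $F(\bb{X}_\Gamma)$ is $\sigma\{\bb{X}_s:s\le i_u\}$-measurable and $G(\bb{X}_{j_1})$ is $\sigma\{\bb{X}_s:s\ge j_1\}$-measurable with $j_1-i_u\ge r$, stationarity and the monotonicity of the mixing coefficients imply that the strong-mixing (resp. $\phi$-mixing) coefficient between the two generating $\sigma$-algebras is at most $\alpha(r)$ (resp. $\phi(r)$). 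Replacing $G$ by $G-G(0)$ leaves the covariance unchanged, and because the new function is $1$-Lipschitz and vanishes at $0$ it satisfies $\norm{G(\bb{X}_{j_1})-G(0)}_q\le\norm{\bb{X}_{j_1}}_q=\norm{\bb{X}_0}_q$ by stationarity.

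Next I would apply Davydov's covariance inequality (as recorded in \cite{D94}) in the form $|Cov(U,V)|\le 2(2\alpha)^{1/s}\norm{U}_p\norm{V}_q$ with $1/p+1/q+1/s=1$. Taking $U=F(\bb{X}_\Gamma)$ with $p=\infty$ and $V=G(\bb{X}_{j_1})-G(0)$ forces $1/s=(q-1)/q$, and using $\norm{U}_\infty\le1$ together with $\norm{V}_q\le\norm{\bb{X}_0}_q$ yields $|Cov|\le 2^{1+(q-1)/q}\alpha(r)^{(q-1)/q}\norm{\bb{X}_0}_q=2^{(2q-1)/q}\alpha(r)^{(q-1)/q}\norm{\bb{X}_0}_q$; taking the supremum gives the first bound. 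The second bound follows the same template with the Ibragimov-type $\phi$-mixing covariance inequality in place of Davydov's; alternatively one inserts the standard comparison $2\alpha(r)\le\phi(r)$ into the $\alpha$-bound, which reproduces the same rate $\phi(r)^{(q-1)/q}$. The main obstacle lies exactly here: recovering the sharp constant $2^{(q-1)/q}$ requires the appropriate one-bounded-factor form of the $\phi$-inequality, since a naive substitution $2\alpha\le\phi$ into the $\alpha$-estimate only produces the weaker leading constant $2$.

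Finally, for (b) I would establish strict containment in two steps. The inclusion direction is immediate from (a): if $\alpha(r)\to0$ (resp. $\phi(r)\to0$) and $\E[|\bb{X}_0|^q]<\infty$ for some $q>1$, then the right-hand sides tend to $0$, so $\theta(r)\to0$; thus every $\alpha$-mixing process with a finite moment of order $q>1$ is $\theta$-weakly dependent, and since $\phi$-mixing implies $\alpha$-mixing the same holds for $\phi$-mixing processes. For strictness I would invoke the classical counterexample, namely the stationary autoregression $\bb{X}_t=\tfrac12(\bb{X}_{t-1}+\bb{\varepsilon}_t)$ driven by i.i.d. Bernoulli innovations: this is a bounded causal Bernoulli shift, hence $\theta$-weakly dependent with geometrically decaying $\theta$-coefficients (the only routine check), yet it is not strong mixing and therefore neither $\alpha$- nor $\phi$-mixing. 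This exhibits a $\theta$-weakly dependent process outside both mixing classes, proving that the inclusions are strict and establishing the claim.
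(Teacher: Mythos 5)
Your proposal is sound in substance and, in effect, retraces the paper's own proof, which is purely citation-based: the paper obtains the $\alpha$-bound from \cite[Proposition 2.5]{CSS20} (proved there via the projective representation (\ref{mix2}) of the $\theta$-coefficients), passes from $\alpha$ to $\phi$ via the classical comparison $2\alpha\le\phi$ of \cite[Proposition 3.11]{Bradley}, and settles (b) by pointing to \cite[Proposition 2.7]{CSS20}, whose construction is exactly the stationary AR(1) process with Bernoulli innovations that you invoke. Your only real deviation is in the $\alpha$-bound, where you work directly with the covariance in Definition \ref{thetadependent} and apply the Davydov--Rio inequality with one bounded factor (after recentering $G$ by $G(0)$), instead of going through the conditional-expectation form (\ref{mix2}); the two arguments are essentially dual to each other (take $F$ to approximate the sign of $\E[G(\bb{X}_{j_1})\mid\mathcal{M}]-\E[G(\bb{X}_{j_1})]$), and both yield precisely $2\,(2\alpha(r))^{\frac{q-1}{q}}\|\bb{X}_0\|_q=2^{\frac{2q-1}{q}}\alpha(r)^{\frac{q-1}{q}}\|\bb{X}_0\|_q$. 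Your reduction steps (normalization, monotonicity of the mixing coefficients under stationarity, $\|G(\bb{X}_{j_1})-G(0)\|_q\le\|\bb{X}_0\|_q$) are correct, as is the logic of (b): the inclusion direction from (a) together with a $\theta$-weakly dependent, non-strong-mixing process, which is exactly the paper's route.

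The one point that needs correcting is your treatment of the second inequality in (a), where you rightly sensed trouble but offered a remedy that does not work: the one-bounded-factor Ibragimov inequality $|Cov(\xi,\eta)|\le 2\,\phi^{\frac{q-1}{q}}\|\xi\|_{\infty}\|\eta\|_q$ also produces the leading constant $2$, not $2^{\frac{q-1}{q}}$, so neither it nor the substitution $2\alpha\le\phi$ yields the chain as printed. In fact no proof can, because the printed chain is false in general: for the stationary process $\bb{X}_t\equiv\bb{X}_0$ with $\bb{X}_0$ a fair coin one has $\alpha(r)=1/4$ and $\phi(r)=1/2$, so the middle and right expressions equal $2^{1/q}\|\bb{X}_0\|_q$ and $\|\bb{X}_0\|_q$ respectively, and the claimed inequality would force $2^{1/q}\le1$. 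The paper's own proof (substitution via \cite[Proposition 3.11]{Bradley}) has exactly the same defect and honestly delivers only $\theta(r)\le 2\,\phi(r)^{\frac{q-1}{q}}\|\bb{X}_0\|_q$; so this is best read as an erratum to the constant in the Lemma rather than as a gap specific to your argument. Nothing downstream is affected, since part (b) and every later use of the Lemma require only the qualitative implication that $\alpha$- or $\phi$-mixing together with a finite moment of order $q>1$ implies $\theta$-weak dependence.
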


\begin{proof}
	The proof of the first inequality at point (a) is proven in \cite[Proposition 2.5]{CSS20} using the representation of the $\theta$-coefficients (\ref{mix2}). The proof of the second inequality follows from a classical result in \cite[Proposition 3.11]{Bradley}. In \cite[Proposition 2.7]{CSS20}, it is defined a stochastic process which is $\theta$-weak dependent but neither $\alpha$-mixing or $\phi$-mixing.
\end{proof}

As seen in Definition \ref{thetaweaklydependent} by using the lexicographic order in $\R^{1+d}$, an opportune extension of $\theta$-weak dependence valid for random fields can be defined.

The definition of  $\theta$-lex coefficients for $G \in \mathcal{G}_1$ is given in \cite[Definition 2.1]{CSS20}. The latter can be represented as $ \theta_{lex}^v(r):=\sup_{u \in \N} \{ \theta_{u,v}(r)\}$ for $v=1$. Therefore, an alternative way to define the $\theta$-lex coefficients in Definition \ref{thetaweaklydependent} is obviously
\begin{equation}
	\label{alt_theta}
	\theta_{lex}(r)=\sup_{v \in \N} \theta_{lex}^v(r), v \in \N \,\,\,\textrm{for all $r \in \R^+$.}
\end{equation}

The following Lemma has important applications in the following sections.

\begin{Lemma}
	\label{truncated}
	Let $\bb{Z}$ be a $\theta$-lex weakly dependent random field and $M>0$, then $\bb{Z}_i^M=\bb{Z}_{i} \vee (-M) \wedge M$ is $\theta$-lex weakly dependent.
\end{Lemma}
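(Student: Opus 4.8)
The plan is to exploit two elementary facts: truncation is a contraction, and the $\theta$-lex coefficients are built from compositions with bounded, $\|\cdot\|_1$-Lipschitz test functions. Write $\phi_M:\R\to[-M,M]$, $\phi_M(t)=(t\vee(-M))\wedge M$, for the one-dimensional clipping map, so that $\bb{Z}_i^M=\phi_M(\bb{Z}_i)$. The starting observation is that $\phi_M$ is $1$-Lipschitz, hence its coordinatewise extension $\Phi_M^{(n)}(x_1,\dots,x_n)=(\phi_M(x_1),\dots,\phi_M(x_n))$ satisfies $\|\Phi_M^{(n)}(x)-\Phi_M^{(n)}(y)\|_1\le\|x-y\|_1$ for every $n$, since clipping each coordinate can only decrease the corresponding absolute difference. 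Note also that $\bb{Z}^M$ is again an $\R$-valued field, so Definition \ref{thetaweaklydependent} applies to it.

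First I would fix $u,v\in\N$, admissible index sets $\Gamma,\Gamma'$ with $\Gamma\in V_{\Gamma'}^r$, and test functions $F\in\mathcal{G}_u^*$, $G\in\mathcal{G}_v$, and rewrite the covariance entering $\theta_{u,v}$ for the truncated field as a covariance for the original field. Setting $\tilde F:=F\circ\Phi_M^{(u)}$ and $\tilde G:=G\circ\Phi_M^{(v)}$ one has $F(\bb{Z}^M_{\Gamma})=\tilde F(\bb{Z}_{\Gamma})$ and $G(\bb{Z}^M_{\Gamma'})=\tilde G(\bb{Z}_{\Gamma'})$, so the two covariances coincide. The next step is to verify that the composed functions are again admissible test functions for the \emph{original} field $\bb{Z}$: the map $\tilde F$ is bounded with $\norm{\tilde F}_\infty\le\norm{F}_\infty$, so $\tilde F\in\mathcal{G}_u^*$; and, using the contraction property above, $|\tilde G(y)-\tilde G(y')|\le Lip(G)\,\|\Phi_M^{(v)}(y)-\Phi_M^{(v)}(y')\|_1\le Lip(G)\,\|y-y'\|_1$, whence $\tilde G\in\mathcal{G}_v$ with $Lip(\tilde G)\le Lip(G)$.

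Combining these facts, for every admissible $F,G$ the defining ratio for the truncated field, namely $|Cov(\tilde F(\bb{Z}_{\Gamma}),\tilde G(\bb{Z}_{\Gamma'}))|/(\norm{F}_\infty\, v\, Lip(G))$, is bounded above by the same covariance divided by the smaller quantity $\norm{\tilde F}_\infty\, v\, Lip(\tilde G)$, since $\norm{F}_\infty\ge\norm{\tilde F}_\infty$ and $Lip(G)\ge Lip(\tilde G)$; and this latter ratio is at most $\theta_{u,v}(r)$ of $\bb{Z}$ precisely because $\tilde F,\tilde G$ are admissible. Taking suprema over $F,G,\Gamma,\Gamma'$ gives $\theta_{u,v}^{\bb{Z}^M}(r)\le\theta_{u,v}^{\bb{Z}}(r)$, and then the supremum over $u,v$ yields $\theta_{lex}^{\bb{Z}^M}(r)\le\theta_{lex}^{\bb{Z}}(r)$; since the right-hand side tends to $0$ as $r\to\infty$, so does the left, proving that $\bb{Z}^M$ is $\theta$-lex weakly dependent. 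There is no genuine obstacle here; the only points requiring a little care are the direction of the inequalities when replacing the denominators built from $F,G$ by those built from $\tilde F,\tilde G$, and the degenerate cases $\norm{\tilde F}_\infty=0$ or $Lip(\tilde G)=0$, in which the covariance vanishes and the term simply drops out of the supremum. Crucially, the index constraint $\Gamma\in V_{\Gamma'}^r$ is untouched by truncation, since only the values, and not the locations, of the field are modified.
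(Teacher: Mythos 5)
Your proposal is correct and follows essentially the same route as the paper's own proof: both compose the test functions $F,G$ with the coordinatewise clipping map, use that truncation is a $1$-Lipschitz contraction to show the composed functions remain admissible with $\norm{F\circ\Phi_M}_\infty\le\norm{F}_\infty$ and $Lip(G\circ\Phi_M)\le Lip(G)$, and conclude that the $\theta$-lex coefficients of $\bb{Z}^M$ are dominated by those of $\bb{Z}$. Your added remarks on the direction of the inequalities, the degenerate denominators, and the invariance of the index constraint under truncation are fine but not substantively different from the paper's argument.
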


\begin{proof}
	Let $u,v \in \N$, $M >0$, $F \in \mathcal{G}^*_u$, $G \in  \mathcal{G}_v$, and $i_1, i_2, \ldots, i_u \in V_{\Gamma^{\prime}}^r$ where $\Gamma^{\prime}=\{j_1,\ldots,j_v\}$. Let $F^M(\bb{Z}_{i_1},\ldots, \bb{Z}_{i_u})=F(\bb{Z}_{i_1}^M,\ldots, \bb{Z}_{i_u}^M)$, and $G^M(\bb{Z}_{j_1},\ldots, \bb{Z}_{j_v})=G(\bb{Z}_{j_1}^M,\ldots, \bb{Z}_{j_v}^M)$.
	We have that                                                                                                                                                                                                                                                                                                                                                                                                                                             $F^M$ is a bounded function on $(\R^n)^u$ and $G^M$ is a bounded and Lipschitz function on $(\R^n)^{v}$ (with the same Lipschitz coefficients as the function $G$). Let $(\bb{Z}_1 ,\ldots, \bb{Z}_v )$ and $(\tilde{\bb{Z}}_1,\ldots, \tilde{\bb{Z}}_v) \in (\R^n)^{v}$, then
	\begin{align*}
		&|G^M(\bb{Z}_1 ,\ldots, \bb{Z}_v )-G^M(\tilde{\bb{Z}}_1,\ldots, \tilde{\bb{Z}}_v)| \leq Lip(G) \sum_{i=1}^v |\bb{Z}_i^M-\tilde{\bb{Z}}_i^M | \\
		&\leq Lip(G) \sum_{i=1}^v |\bb{Z}_i -\tilde{\bb{Z}}_i |. 
	\end{align*}
	Hence, it holds that 
	\[
	|Cov(F(\bb{Z}_{i_1}^M,\ldots, \bb{Z}_{i_u}^M), G(\bb{Z}_{j_1}^M,\ldots, \bb{Z}_{j_v}^M))|\leq \|F\|_{\infty} v Lip(G) \theta_{lex} (r),
	\]
	where $\theta_{lex}(r)$ are the $\theta$-coefficients of the field $\bb{Z}$. So, the field $\bb{Z}_t^M(x)$ is $\theta$-lex weakly dependent.
\end{proof}

Note that the above result also holds for $\bb{X}$ a $\theta$-weakly dependent process. Therefore, the truncated $\bb{X}_t^M=\bb{X}_t \vee (-M) \wedge M$ is a $\theta$-weakly dependent process.

The notion of $\theta$-lex weak dependence also admits a \emph{projective-type representation}.
\begin{Remark}
	\label{mix_rule2}
	Let $(\bb{Z}_t)_{t \in \Z^{1+d}}$  be a real-valued and $\theta$-lex weakly dependent random field, $\mathcal{L}_1=\{g:\R\to\R, \, g \in \mathcal{G}_v,\,\, Lip(g)\leq 1\}$ and $\Gamma^{\prime}=\{j_1,\ldots,j_{v}\} \in \Z^{1+d}$ such that $|\Gamma^{\prime}|=v$. Let $r \in \N$, and $\mathcal{M}=\sigma\{\bb{Z}_t: t \in V^r_{\Gamma^{\prime}} \subset \Z^{1+d}\}$, then it holds that 
	\begin{equation}
		\label{mix1}
		\theta_{lex}(r)= \sup_{v \in \N} \sup_{\Gamma^{\prime}} \sup_{g \in \mathcal{L}_1}  \|\E[g(\bb{Z}_{j_1},\ldots,\bb{Z}_{j_v})|\mathcal{M}]-\E[g(\bb{Z}_{j_1},\ldots,\bb{Z}_{j_v})]\|_1.
	\end{equation}
	The result above follows by readily applying \cite[Lemma 5.1]{CSS20}. 
\end{Remark}

We now use the representation of the $\theta$-lex coefficients (\ref{mix1}) to understand its relationships to  $\alpha_{\infty,v}$-mixing and $\phi_{\infty,v}$-mixing for $v \in \N \cup \{\infty\}$. These notions are defined in \cite{D98} and they are strong mixing notions used in the study of stationary random fields. 

In general, for $u,v \in \N \cup \{\infty\} $, given coefficients
\[
\alpha_{u,v}(r)=\sup \{\alpha(\sigma(\bb{Z}_{\Gamma}),\sigma(\bb{Z}_{\Gamma^{\prime}})), \Gamma,\Gamma^{\prime} \in \R^{1+d}, |\Gamma| \leq u, |\Gamma^{\prime}|\leq v, dist(\Gamma,\Gamma^{\prime}) \geq r\},
\] 
and
\[
\phi_{u,v}(r)=\sup \{\phi(\sigma(\bb{Z}_{\Gamma}),\sigma(\bb{Z}_{\Gamma^{\prime}})), \Gamma,\Gamma^{\prime} \in \R^{1+d}, |\Gamma| \leq u, |\Gamma^{\prime}|\leq v, dist(\Gamma,\Gamma^{\prime}) \geq r\}.
\] 
a random field $\bb{Z}$ is said to be $\alpha_{u,v}$-mixing or $\phi_{u,v}$-mixing if the coefficients $(\alpha_{u,v}(r))_{r \in \R^+}$ or $(\phi_{u,v}(r))_{r \in \R^+}$ converge to zero as $r \to \infty$.
We then have the following result.

\begin{Lemma}
	\label{gen2}
	Let $(\bb{Z}_t)_{t \in \Z^{1+d}}$ be a stationary real-valued random field such that $\E[|\bb{Z}_0|^q]< \infty$ for some $q >1$.  Then, for $v \in \N \cup \{ \infty\}$,
	\begin{itemize}
		\item[(a)] $ \theta_{lex}(r)\leq 2^{\frac{2q-1}{q}} \alpha_{\infty,v}(r)^{\frac{q-1}{q}} \|\bb{Z}_0\|_q \leq 2^{\frac{q-1}{q}} \phi_{\infty,v}(r)^{\frac{q-1}{q}} \|\bb{Z}_0\|_q $, and
		\item[(b)] it holds that $\theta$-lex weak dependence is more general than $\alpha_{\infty,v}$-mixing and $\alpha$-mixing in the special case of stochastic processes. Moreover, $\theta$-lex weak dependence is more general than $\phi_{\infty,v}$-mixing.
	\end{itemize} 
\end{Lemma}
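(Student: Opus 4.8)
The plan is to prove part (a) directly from the definition of the coefficients $\theta_{u,v}(r)$ in Definition \ref{thetaweaklydependent}, rather than through the projective representation \eqref{mix1}, because the normalisation by $v\,Lip(G)$ built into that definition is precisely what absorbs the size of the ``future'' block. The engine is the classical covariance inequality for strongly mixing $\sigma$-algebras: if $X$ is $\mathcal{A}$-measurable and bounded and $Y$ is $\mathcal{B}$-measurable with $\E[|Y|^q]<\infty$ for some $q>1$, then $|Cov(X,Y)|\leq 2\,(2\alpha(\mathcal{A},\mathcal{B}))^{(q-1)/q}\,\norm{X}_\infty\,\|Y\|_q$, which is the field analogue of the estimate behind \cite[Proposition 2.5]{CSS20} used for Lemma \ref{gen}. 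The whole argument is a transcription of the process-level proof of Lemma \ref{gen} into the lexicographic/field setting, the one genuinely new bookkeeping being the role of the parameter $v$.

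For the first inequality of (a), fix $u,v\in\N$, index sets $\Gamma,\Gamma'\subset\Z^{1+d}$ with $|\Gamma|=u$, $|\Gamma'|=v$ and $\Gamma\in V^r_{\Gamma'}$ (so that $dist(\Gamma,\Gamma')\geq r$), and functions $F\in\mathcal{G}_u^*$, $G\in\mathcal{G}_v$. First I would centre $G$, using $Cov(F(\bb{Z}_\Gamma),G(\bb{Z}_{\Gamma'}))=Cov(F(\bb{Z}_\Gamma),G(\bb{Z}_{\Gamma'})-G(0))$, and apply the covariance inequality with $\mathcal{A}=\sigma(\bb{Z}_\Gamma)$, $\mathcal{B}=\sigma(\bb{Z}_{\Gamma'})$, noting $\alpha(\mathcal{A},\mathcal{B})\leq\alpha_{\infty,v}(r)$ since the past block is unrestricted in cardinality while the future block has cardinality $v$ and the blocks are at distance at least $r$. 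Next, the $\norm{\cdot}_1$-Lipschitz property of $G$ together with Minkowski's inequality and stationarity gives $\|G(\bb{Z}_{\Gamma'})-G(0)\|_q\leq Lip(G)\sum_{l=1}^v\|\bb{Z}_{j_l}\|_q=v\,Lip(G)\,\|\bb{Z}_0\|_q$. Dividing by $\norm{F}_\infty\,v\,Lip(G)$ the factor $v$ cancels and yields $\theta_{u,v}(r)\leq 2^{(2q-1)/q}\alpha_{\infty,v}(r)^{(q-1)/q}\|\bb{Z}_0\|_q$; taking suprema over $u,v$ (and using the monotonicity of $\alpha_{\infty,v}$ in $v$) produces the stated bound on $\theta_{lex}(r)$. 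The second inequality then follows by inserting the classical comparison $\alpha(\mathcal{A},\mathcal{B})\leq\tfrac12\phi(\mathcal{A},\mathcal{B})$ of \cite[Proposition 3.11]{Bradley}, which passes to the suprema to give $\alpha_{\infty,v}(r)\leq\tfrac12\phi_{\infty,v}(r)$, exactly as in the process case of Lemma \ref{gen}.

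For part (b), the first inequality of (a) shows that whenever $\bb{Z}$ is $\alpha_{\infty,v}$-mixing or $\phi_{\infty,v}$-mixing and $\E[|\bb{Z}_0|^q]<\infty$ for some $q>1$, the right-hand side tends to $0$ as $r\to\infty$, hence $\theta_{lex}(r)\to0$ and $\bb{Z}$ is $\theta$-lex weakly dependent; specialising to $d=0$ gives the statement for $\alpha$-mixing processes. To see that the inclusion is strict I would exhibit a $\theta$-lex weakly dependent field that is neither $\alpha_{\infty,v}$- nor $\phi_{\infty,v}$-mixing, reducing to the process-level counterexample of \cite[Proposition 2.7]{CSS20}: viewing that $\theta$-weakly dependent but non-mixing process as a random field that is degenerate along the spatial coordinates produces, through the lexicographic order, a $\theta$-lex weakly dependent field that inherits the failure of both mixing conditions.

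The main obstacle I anticipate is the bookkeeping around $v$: one must verify that the $L_q$-bound on the centred future block $G(\bb{Z}_{\Gamma'})-G(0)$ is exactly linear in $v$ (so that it is annihilated by the $v$ in the denominator of $\theta_{u,v}$) and that the cardinalities of the two blocks are matched to the correct coefficient $\alpha_{\infty,v}$, namely arbitrary past and size-$v$ future. Once this is in place, the covariance inequality and the $\alpha$–$\phi$ comparison hold verbatim for arbitrary pairs of $\sigma$-algebras, so they transfer cleanly to the suprema defining $\alpha_{\infty,v}$ and $\phi_{\infty,v}$, and the rest is routine.
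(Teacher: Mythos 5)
Your per-block computation is sound as far as it goes: Rio's covariance inequality $|Cov(X,Y)|\leq 2(2\alpha(\mathcal{A},\mathcal{B}))^{(q-1)/q}\norm{X}_\infty\|Y\|_q$, the centering of $G$, Minkowski's inequality with stationarity, and the cancellation of $v$ do give $\theta_{u,v'}(r)\leq 2^{\frac{2q-1}{q}}\alpha_{\infty,v'}(r)^{\frac{q-1}{q}}\|\bb{Z}_0\|_q$ for every fixed $u$ and $v'$. The genuine gap is in your final step. The left-hand side of the lemma is $\theta_{lex}(r)=\sup_{u,v'}\theta_{u,v'}(r)$, a supremum over \emph{all} future-block sizes, whereas the right-hand side fixes a single $v$. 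Taking suprema in your inequality couples the two sides: you obtain only $\theta_{lex}(r)\leq 2^{\frac{2q-1}{q}}\alpha_{\infty,\infty}(r)^{\frac{q-1}{q}}\|\bb{Z}_0\|_q$, i.e.\ the case $v=\infty$. Monotonicity of $v\mapsto\alpha_{\infty,v}$ runs the wrong way here: it gives $\alpha_{\infty,v}\leq\alpha_{\infty,\infty}$, so a bound in terms of $\alpha_{\infty,\infty}$ cannot be downgraded to one in terms of $\alpha_{\infty,v}$ for finite $v$, and the finite-$v$ (in particular $v=1$) case is the strongest assertion of the lemma. This is exactly why the paper does not argue from Definition \ref{thetaweaklydependent} with a Davydov-type inequality: it first imports $\theta_{lex}^1(r)\leq 2^{\frac{2q-1}{q}}\alpha_{\infty,1}(r)^{\frac{q-1}{q}}\|\bb{Z}_0\|_q$ from the proof of \cite[Proposition 2.5]{CSS20}, which works through the projective representation of the coefficients, lifts this to $\theta_{lex}$ via (\ref{alt_theta}), and only then weakens $\alpha_{\infty,1}$ to $\alpha_{\infty,v}$. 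Your route inevitably matches the mixing coefficient's block size to the number of arguments of $G$ and cannot decouple them; the same defect propagates into your part (b), since with your inequality $\alpha_{\infty,v}$-mixing for a fixed finite $v$ controls $\theta_{u,v'}(r)$ only for $v'\leq v$ and therefore does not yield $\theta_{lex}(r)\to 0$.

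Part (b) has a second, independent problem: embedding the process counterexample of \cite[Proposition 2.7]{CSS20} as a field ``degenerate along the spatial coordinates'' fails if this means $\bb{Z}_t(x)=\bb{X}_t$ for all $x$. In the lexicographic order of Section \ref{sec2.1}, a site $(t,x')$ with $x'<x$ and $\norm{x-x'}_\infty\geq r$ lies in $V^r_{(t,x)}$, and for a spatially constant field both sites carry the same random variable; taking $F=G$ to be a truncated identity gives $\theta_{1,1}(r)\geq Var(G(\bb{X}_0))>0$ for every $r$, so the embedded field is not $\theta$-lex weakly dependent at all and cannot serve as a counterexample. The paper instead uses the process from \cite[Proposition 2.7]{CSS20} directly, asserting it is $\theta$-lex weakly dependent but neither $\alpha_{\infty,v}$-, $\alpha$- nor $\phi_{\infty,v}$-mixing; if you insist on a genuinely spatial construction you would need something like independent copies of that process along each spatial line, together with a separate argument that the resulting product field remains $\theta$-lex weakly dependent.
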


\begin{proof}
	From the proof of \cite[Proposition 2.5]{CSS20}, we have that
	\[
	\theta_{lex}^1(r) \leq 2^{\frac{2q-1}{q}} \alpha_{\infty,1}(r)^{\frac{q-1}{q}} \|\bb{Z}_0\|_q.
	\]
	Because of (\ref{alt_theta}) and \cite[Proposition 3.11]{Bradley}, we have that
	\[
	\theta_{lex}(r) \leq 2^{\frac{2q-1}{q}} \alpha_{\infty,1}(r)^{\frac{q-1}{q}} \|\bb{Z}_0\|_q \leq 2^{\frac{q-1}{q}} \phi_{\infty,1}(r)^{\frac{q-1}{q}} \|\bb{Z}_0\|_q .
	\]
	Equally,
	\[
	\theta_{lex}(r) \leq 2^{\frac{2q-1}{q}} \alpha_{\infty,v}(r)^{\frac{q-1}{q}} \|\bb{Z}_0\|_q \leq 2^{\frac{q-1}{q}} \phi_{\infty,v}(r)^{\frac{q-1}{q}} \|\bb{Z}_0\|_q.
	\]
	The proof of the point (b) follows directly by \cite[Proposition 2.7]{CSS20}. In fact $\theta_{lex}(r)=\theta_{1,\infty}(r)$ following the notations of \cite[Definition 2.3]{D07} and the process used in the proof of the Proposition is $\theta$-lex weakly dependent but neither $\alpha_{\infty,v}$, $\alpha$ or $\phi_{\infty,v}$-mixing.
	
\end{proof}

%\begin{Remark}
%\label{bha}
%Let $(\bb{X}_t)_{t \in \Z}$ a stochastic process, $\mathcal{M}:=\sigma\{\bb{X}_t: t \leq j_1, t \in \Z\}$ and $\mathcal{V}:=\sigma\{\bb{X}_t: t=j_1, \ldots, j_v\}$. From \cite{Rio96}, we can define 
%\begin{equation}
%\label{special_phi}
%\phi(\mathcal{M},\bb{X})=\sup_{v \in \N} \sup_{\scriptsize \begin{array}{l} \|g\|_{\infty} \leq 1 \\  Lip(g)=1 \end{array}} \{ \|E(g(\bb{X}_{j_1},\ldots,\bb{X}_{j_v})|\mathcal{M}) -\E(g(\bb{X}_{j_1},\ldots,\bb{X}_{j_v}))\|_{\infty}  \},
%\end{equation}
%which are related to the $\phi$-coefficients above because
%\[
%\phi(\mathcal{M},\bb{X}) \leq \phi(\mathcal{M},\mathcal{V}).
%\]
%
%A version of the coefficients (\ref{special_phi}), where $g$ is just a function with Lipschitz constant equal to $1$ (but possibly unbounded) are employed in \cite{AW12}. The authors call it $\theta_{\infty}$-weak dependence and determine PAC Bayesian bounds for processes with bounded $\theta_{\infty}$-coefficients or so-called causal Bernoulli shifts. 
%\end{Remark}

\subsection{Autocovariance Structure of MMAF and Isotropy}
\label{cov}

Moment conditions for MMAFs are typically expressed in function of the characteristic quadruplet of its driving L\'evy basis and the kernel function $f$. 

\begin{Proposition}\label{proposition:MMAmoments}
	Let $\bb{Z}$ be an $\R$-valued MMAF driven by a L\'evy basis with characteristic quadruplet $(\gamma,\sigma^2,\nu,\pi)$ with kernel function $f:H\times\R\times \R^d\rightarrow \R$ and defined on an ambit set $A_t(x) \subset \R \times \R^d$.
	\begin{enumerate}
		\item[(i)] If $\int_{|x|>1} |x| \nu(dx)<\infty$ and $f\in L^1(H \times \R \times \R^d)\cap L^2(H \times \R \times \R^d)$ the first moment of $\bb{Z}$ is given by
		\begin{align*}
			\E[\bb{Z}_t]= \E(\Lambda^{\prime}) \int_{H}\int_{A_t(x)} f(A,-s,-\xi)  ds \, d\xi \, \pi(dA),
		\end{align*}
		where $\E(\Lambda^{\prime}) =\gamma+\int_{|x|\geq 1} x \, \nu(dx)$.
		\item[(ii)] If $\int_{\R} x^2 \, \nu(dx)<\infty$ and $f\in L^2(H \times \R \times \R^d)$, then $\bb{Z} \in L^2(\Omega)$ and
		\begin{align}
			&Var(\bb{Z}_t(x))= Var(\Lambda^{\prime}) \int_{H}\int_{\R \times \R^d }f(A,-s,-\xi)^2  ds \, d\xi \, \pi(dA), \nonumber\\
			&Cov(\bb{Z}_0(0),\bb{Z}_t(x))= Var(\Lambda^{\prime})\int_{H}\int_{A_0(0)\cap A_t(x)} f(A,-s,-\xi) f(A,t-s,x-\xi) \,ds\, d\xi \, \pi(dA) \hspace{0.5cm},  \nonumber\\
			& \text{and} \nonumber\\
			&Corr(\bb{Z}_0(0),\bb{Z}_t(x))= \frac{\int_{H}\int_{A_0(0)\cap A_t(x)} f(A,-s,-\xi) f(A,t-s,x-\xi) \,ds\, d\xi \, \pi(dA)}{ \int_{H}\int_{\R \times \R^d }f(A,-s,-\xi)^2  ds \, d\xi \, \pi(dA)} \nonumber,
		\end{align}
		where $Var(\Lambda^{\prime}) =\sigma^2+\int_{\R^d}xx'\nu(dx)$. 
		\item[(iii)] If $\sigma^2=0$, $\int_{|x|\in \R} |x| \, \nu(dx)<\infty$, and $f\in L^1(H \times \R \times \R^d)$, then the first moment of $\bb{Z}$ is given by
		\begin{align*}
			E[\bb{Z}_t(x)]= \int_{H}\int_{A_t(x)} f(A,-s,-\xi) \Big(\gamma_0+ \int_{\R}x\nu(dx)\Big) ds \, d\xi \, \pi(dA),
		\end{align*}
		where 
		\begin{equation}
			\label{gamma0}
			\gamma_0:= \gamma-\int_{|x| \leq 1} x \, \nu(dx).
		\end{equation}
	\end{enumerate}
\end{Proposition}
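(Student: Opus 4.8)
The plan is to reduce all three statements to the classical moment formulas for stochastic integrals against a factorizable Lévy basis in the sense of Rajput and Rosinski \cite{RR1989}. Writing the MMAF as $\bb{Z}_t(x)=\int_I g_{t,x}\,d\Lambda$ with integrand $g_{t,x}(A,\xi,s)=f(A,x-\xi,t-s)\bb{1}_{A_t(x)}(s,\xi)$, the factorization $\Pi=\pi\times\lambda_{1+d}$ together with (\ref{equation:fact}) gives the log-characteristic function
\[
\log\E\Big[\exp\Big(iu\int_I g_{t,x}\,d\Lambda\Big)\Big]=\int_I \Phi\big(u\,g_{t,x}(y)\big)\,\Pi(dy),
\]
valid whenever $g_{t,x}$ is $\Lambda$-integrable; under the respective integrability hypotheses on $f$ and $\nu$ this integrability is guaranteed by the sufficient conditions recalled in \cite[Section 3.1]{CSS20}. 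First I would record the two elementary identities obtained by differentiating the exponent (\ref{equation:phi}) at the origin, namely $\Phi'(0)=i(\gamma+\int_{|x|>1}x\,\nu(dx))=i\,\E[\Lambda^{\prime}]$ and $\Phi''(0)=-(\sigma^2+\int_\R x^2\,\nu(dx))=-Var(\Lambda^{\prime})$, which hold precisely when $\int_{|x|>1}|x|\nu(dx)<\infty$ and $\int_\R x^2\,\nu(dx)<\infty$, respectively.

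Differentiating the displayed cumulant in $u$ and interchanging derivative and integral then yields the two master formulas
\[
\E\Big[\int_I g\,d\Lambda\Big]=\E[\Lambda^{\prime}]\int_I g\,d\Pi,\qquad Var\Big(\int_I g\,d\Lambda\Big)=Var(\Lambda^{\prime})\int_I g^2\,d\Pi .
\]
Applying the first to $g=g_{t,x}$ and performing the change of variables $(s,\xi)\mapsto(t-s,x-\xi)$, which is legitimate by the translation invariance $A_t(x)=A_0(0)+(t,x)$ in (\ref{equation:ambitset}), gives statement (i); here $f\in L^1\cap L^2$ ensures both that the integral defining $\E[\bb{Z}_t]$ converges and that $g_{t,x}$ is integrable. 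Applying the second formula gives $Var(\bb{Z}_t(x))$ in (ii). The one technical point requiring care is the interchange of differentiation and $\Pi$-integration, which I would justify by dominated convergence using $f\in L^1\cap L^2$ together with the growth bounds on $\Phi,\Phi',\Phi''$ near the origin.

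For the covariance in (ii), the key structural fact is that $\Lambda$ is independently scattered, so that the bilinear map $(g_1,g_2)\mapsto Cov(\int g_1\,d\Lambda,\int g_2\,d\Lambda)$ is supported on $\{g_1g_2\neq0\}$ and, by polarization of the variance formula, equals $Var(\Lambda^{\prime})\int_I g_1 g_2\,d\Pi$. Taking $g_1=g_{0,0}$, supported on $A_0(0)$, and $g_2=g_{t,x}$, supported on $A_t(x)$, localizes the integral to the overlap $A_0(0)\cap A_t(x)$ and produces the stated covariance; dividing by $\sqrt{Var(\bb{Z}_0(0))\,Var(\bb{Z}_t(x))}$ and invoking spatio-temporal stationarity (Definition \ref{statio}), which forces both marginal variances to equal $Var(\Lambda^{\prime})\int_{\R\times\R^d}f^2$, yields the correlation. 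Finally, part (iii) is a matter of rewriting the exponent in the finite-variation regime: when $\sigma^2=0$ and $\int_\R|x|\nu(dx)<\infty$, the compensating term $-iux\bb{1}_{[0,1]}(|x|)$ in (\ref{equation:phi}) may be absorbed into the drift, so that $\Phi(u)=i\gamma_0 u+\int_\R(e^{iux}-1)\nu(dx)$ with $\gamma_0$ as in (\ref{gamma0}); differentiating once at the origin now gives $\Phi'(0)/i=\gamma_0+\int_\R x\,\nu(dx)$, and substituting this value into the first master formula yields the claimed expression for $\E[\bb{Z}_t(x)]$. The main obstacle throughout is the rigorous justification of differentiating the cumulant under the integral sign; once that interchange is secured, every formula follows by a direct computation and the independently scattered property.
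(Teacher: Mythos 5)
Your proposal is correct, and it takes essentially the same route as the paper: the paper's own proof is the one-line citation ``Immediate from \cite[Section 25]{S} and \cite[Theorem 3.3]{CSS20}'', and those references contain precisely the ingredients you reconstruct — Sato's moment criteria for infinitely divisible laws (your conditions $\int_{|x|>1}|x|\,\nu(dx)<\infty$, $\int_{\R}x^2\,\nu(dx)<\infty$, and the $\gamma_0$ rewriting in the finite-variation case) and the Rajput--Rosinski cumulant formula differentiated at the origin, with polarization giving the covariance. In other words, you have simply made self-contained the standard argument the paper delegates to its citations, so there is no substantive difference in approach.
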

\begin{proof}
	Immediate from  \cite[Section 25]{S} and \cite[Theorem 3.3]{CSS20}.
\end{proof}

From Proposition \ref{proposition:MMAmoments}, we can evince that the autocovariance function of an MMAF depends on the variance of the L\'evy seed $\Lambda^{\prime}$, the kernel function $f$ and the distribution $\pi$ of the random parameter $A$.

We give below the explicit expression of the autocovariance functions for an STOU and MSTOU process.

\begin{Example}
	Let $\bb{Z}$ ad defined in Example \ref{prop_stou}, $u \in \R^d$, $\tau \in \R$, and $\E[\bb{Z}_t(x)^2] <\infty$. Then,
	\begin{align}
		&Cov(\bb{Z}_t(x),\bb{Z}_{t+\tau}(x+u))=Var(\Lambda^{\prime}) \exp(-A \tau) \int_{A_t(x)\cap A_{t+\tau}(x+u)}  \exp(-2A (t-s) )  ds\, d\xi,\,\,\,  \label{cov_stou}\\
		& \textrm{and} \nonumber\\
		&Corr(\bb{Z}_t(x),\bb{Z}_{t+\tau}(x+u))=\frac{\exp(-A \tau) \int_{A_t(x)\cap A_{t+\tau}(x+u)}  \exp(-2A (t-s) ) \,\,  ds\, d\xi}{\int_{A_t(x)}  \exp(-2A(t-s)) \,\, ds \, d\xi}. \label{cor_stou}
	\end{align}
\end{Example}

\begin{Example}
	\label{corr_stou1}
	Let $\bb{Z}$ ad defined in Example \ref{prop_stou} and $d=1$, then
	\begin{align}
		&\rho^T(\tau):=Corr(\bb{Z}_t(x),\bb{Z}_{t+\tau}(x))=\exp(-A |\tau|)  \label{cort_stou},\\
		&\rho^S(u):=Corr(\bb{Z}_t(x),\bb{Z}_{t}(x+u))=\exp\Big(-A\frac{|u|}{c}\Big) \label{cors_stou},\\
		&\rho^{ST}(\tau,u):=Corr(\bb{Z}_t(x),\bb{Z}_{t+\tau}(x+u))=\min\Big( \exp(-A|\tau|), \exp\Big(-\frac{A|u|}{c}\Big)  \Big). \label{corst_stou}
	\end{align}
\end{Example}

\begin{Example}
	Let $\bb{Z}$ be defined as in Example \ref{prop_mstou}, $u \in \R^d$ and $\tau \in \R$, and $\E[\bb{Z}_t(x)^2] <\infty$, then
	\begin{align}
		Cov(\bb{Z}_t(x),&\bb{Z}_{t+\tau}(x+u))= \nonumber\\ &= Var(\Lambda^{\prime}) \exp(-A \tau) \int_0^{\infty} \int_{A_t(x)\cap A_{t+\tau}(x+u)}  \exp(-2A (t-s) )  ds\, d\xi\, l(A) dA, \label{cov_mstou}\\
		Corr(\bb{Z}_t(x),&\bb{Z}_{t+\tau}(x+u))= \nonumber \\ &\frac{\exp(-A \tau) \int_0^{\infty} \int_{A_t(x)\cap A_{t+\tau}(x+u)}  \exp(-2A (t-s) ) \,\, ds\, d\xi\, l(A) dA}{\int_0^{\infty} \int_{A_t(x)}  \exp(-2A(t-s)) \,\, ds\, d\xi\, l(A) dA}. \label{cor_mstou}
	\end{align}
\end{Example}

\begin{Example}
	\label{ex_mstou} 
	Let $\bb{Z}$ be defined as in Example \ref{prop_mstou} for $d=1$, Assumption \ref{ass_light} hold, and $l(A)=\frac{\beta^{\alpha}}{\Gamma(\alpha)} A^{\alpha-1} \exp(-\beta A)$ be the Gamma density with shape and rate parameters $\alpha >d+1$ and $\beta>0$.  For $d=1$, $u \in \R$ and $\tau \in \R$
	\begin{align}
		&Var(\bb{Z}_t(x))= \frac{Var(\Lambda^{\prime})c \beta^2}{2(\alpha-2)(\alpha-1)}  \label{var_mstou}\\
		&Cov(\bb{Z}_t(x),\bb{Z}_{t+\tau}(x+u))= \frac{Var(\Lambda^{\prime})c\beta^{\alpha}}{2(\beta+\max\{|\tau|,|u|/c\})^{\alpha-2} (\alpha-2)(\alpha-1)}  \label{cost_mstou},\\
		&\rho^{ST}(\tau,u):=Corr(\bb{Z}_t(x),\bb{Z}_{t+\tau}(x+u))= \Bigg( \frac{\beta}{\beta+\max\{|\tau|,|u|/c\}} \Bigg )^{\alpha-2}. \label{corst_mstou}
	\end{align}
\end{Example}

It follows the definition of an \emph{isotropic} spatio-temporal random field.

\begin{Definition}[Isotropy]
	\label{isotropy}
	Let $t \in \R$ and $x \in \R^d$. A spatio-temporal random field $(\bb{Z}_t(x))_{(t,x)\in \R\times \R^d}$ is called \emph{isotropic} if its spatial covariance:
	\[Cov(\bb{Z}_t(x),\bb{Z}_t(x+u))=C(|u|),\,\, \forall u \in \R^d\]
	for some positive definite function $C$.
\end{Definition}
STOU and MSTOU processes defined on cone-shaped ambit sets are isotropic random fields.

\subsection{Inference on STOU processes}
\label{sec_stou}
Let us start by explaining the available estimation methodologies for the parameter vector $\theta_0=\{A,c, Var(\Lambda^{\prime})\}$ under the STOU modeling assumption when the spatial dimension $d=1$. Throughout, we refer to the notations used in Example \ref{coeff_stou1}. 

We have two ways of estimating the parameter vector $\theta_0$ in such a scenario. The first one is presented in \cite{STOU}. Here, the parameters $A$ and $c$ are first estimated using \emph{normalized spatial and temporal variograms} defined as
\begin{equation}
	\label{vario_s}
	\gamma^S(u):= \frac{\E((\bb{Z}_t(x)-\bb{Z}_t(x-u))^2)}{Var(\bb{Z}_t(x))}=2 (1-\rho^S(u))=2 \Big( 1-\exp\Big(-\frac{Au}{c} \Big)\Big),
\end{equation}
and
\begin{equation}
	\label{vario_t}
	\gamma^T(\tau):= \frac{\E((\bb{Z}_t(x)-\bb{Z}_{t-\tau}(x))^2)}{Var(\bb{Z}_t(x))}=2(1-\rho^T(\tau))=2(1-\exp(-A\tau)),
\end{equation}
where $\rho^S$ and $\rho^T$ are defined in Example \ref{corr_stou1}.
Note that normalized variograms are used to separate the estimation of the parameters $A$ and $c$ from the parameter $Var(\Lambda^{\prime})$. Let $N(u)$ be the set containing all the pairs of indices at mutual spatial distance $u$ for $u>0$ and the same observation time. Let $N(\tau)$ be the set containing all the pairs of indices where the observation times are at a distance $\tau >0$ and have the same spatial position. $|N(u)|$ and $|N(\tau)|$ give the number of the obtained pairs, respectively. Moreover, let $\bb{\hat{k}}_2$ be the empirical variance which is defined as
\begin{equation}
	\label{emp_var}
	\bb{\hat{k}}_2=\frac{1}{(D-1)}\sum_{i=1}^D \bb{Z^2}_{t_i}(x_i), 
\end{equation}
where $D$ denotes the sample size.
The empirical normalized spatial and temporal variograms are then defined as follows:
\begin{align}
	&\bb{\hat{\gamma}}^S(u)=\frac{1}{|N(u)|} \sum_{i,j \in N(u)} \frac{(\bb{Z}_{t_i}(x_i)-\bb{Z}_{t_j}(x_j))^2}{\bb{\hat{k}_2}} \label{emp_vario_s}\\
	&\bb{\hat{\gamma}}^T(\tau)=\frac{1}{|N(\tau)|} \sum_{i,j \in N(\tau)} \frac{(\bb{Z}_{t_i}(x_i)-\bb{Z}_{t_j}(x_j))^2}{\bb{\hat{k}_2}} \label{emp_vario_t}.
\end{align}
By matching the empirical and the theoretical forms of the normalized variograms, we can estimate $A$ and $c$ by employing the estimators
\begin{equation}
	\label{est_stou}
	\bb{A^{*}}=-\tau^{-1} \log \Big( 1-\frac{\bb{\hat{\gamma}^T}(\tau)}{2}  \Big), \,\,\, \textrm{and} \,\,\, \bb{c^{*}}=-\frac{\bb{A^*}u}{\log \Big(1-\frac{\bb{\hat{\gamma}^S}(u)}{2}\Big)}.
\end{equation}
Alternatively, we can use a least square methodology to estimate the parameters $A$ and $c$, i.e. (\ref{emp_vario_s}) and (\ref{emp_vario_t}) are computed at several lags, and a least-squares estimation is used to fit the computed values to the theoretical curves. The authors in \cite{STOU} use the methodology discussed in \cite{LS} to achieve the last target. We refer the reader also to \cite[Chapter 2]{C93} for further discussions and examples of possible variogram model fitting. The parameter $Var(\Lambda^{\prime})$ can be estimated by matching the second-order cumulant of the STOU with its empirical counterpart.
The consistency of this estimation procedure is proven in \cite[Theorem 12]{STOU}.

A second possible methodology for estimating the vector $\theta_0$ employs a generalized method of moment estimator (GMM), as in \cite{MSTOU}. It is essential to notice that by using such an estimator, we cannot separate the parameter $Var(\Lambda^{\prime})$ from the estimation of the parameters $A$ and $c$. Instead, all moment conditions must be combined into one optimization criterion, and all the estimations must be found simultaneously. Consistency and asymptotic normality of the GMM estimator are discussed in \cite{MSTOU} and \cite{CSS20}, respectively.

For $d \geq 2$, a least square methodology is still applicable for estimating the variogram's parameters. The estimator used in \cite{STOU} is a normalized version of the least-square estimator for spatial variogram's parameters discussed in \cite{LS}, which also applies for $d >1$. This method, paired with a method of moments (matching the second order cumulant of the field $\bb{Z}$ with its empirical counterparts), allows estimating the parameter $Var(\Lambda^{\prime})$. The GMM methodology discussed in \cite{MSTOU} also continues to apply for $d \geq 2$. However, when the spatial dimension is increasing, the shape of the \emph{normalized variograms} and the \emph{field's moments} become more complex, and higher computational effort is required to navigate through the high dimensional surface of the optimization criterion behind least-squares or GMM estimators.

\subsection{Inference for MSTOU processes}
\label{sec_mstou}
When estimating the parameter vector $\theta_1=\{\alpha, \beta,c, Var(\Lambda^{\prime})\}$ under an MSTOU modeling assumption-- see, for example, solely the shape of the coefficients in Example \ref{coeff_mstou1}-- it is evident that the shape of the autocorrelation function, and therefore of the normalized temporal and spatial variograms, become more complex for increasing $d$. As already addressed in the previous sections, when estimating the parameters $(\alpha, \beta,c)$ alone, we can use the least-squares type estimator discussed in \cite{LS}. Moreover, by pairing the latter with a method of moments or using a GMM estimator, we can estimate the complete vector $\theta_1$.

\subsection{Time series models}
\label{timeseries_app}
In the MMAF framework, we can also find time series models. The latter are $\theta$-weakly dependent.

\begin{Example}[Time series case]\label{ex:timeseriescase}
	The supOU process studied in \cite{BN01} and \cite{BNS11} is an example of a \emph{causal} mixed moving average process.
	Let the kernel function $f(A,s)= \mathrm{e}^{-As} 1_{[0,\infty)}(s)$, $A \in \R^{+}$, $s \in \R$ and $\Lambda$ a  L\'evy basis on $\R^{+}\times\R$ with generating quadruple $(\gamma,\sigma^2,\nu,\pi)$ such that
	\begin{equation}
		\label{supou_ass}
		\int_{|x|>1} \log(|x|) \, \nu(dx) < \infty, \,\, \textrm{and} \,\, \int_{\R^{+}} \frac{1}{A} \pi(dA) < \infty,
	\end{equation}
	then the process
	\begin{equation}
		\label{supou}
		\bb{Z}_t=\int_{\R^{+}} \int_{-\infty}^t \mathrm{e}^{-A (t-s)} \,\Lambda(dA,ds)
	\end{equation}
	is well defined for each $t \in \R$ and strictly stationary and called a supOU process where $A$ represents a random mean reversion parameter.
	
	If $\E(\Lambda^{\prime})=0$ and $\int_{|x|>1} |x|^2 \nu(dx) < \infty$, the supOU process  is $\theta$-weakly dependent
	with coefficients
	\begin{equation}
		\label{theta_sup}
		\theta_{Z}(r)\leq  \Big( \int_{\R^{+}} \int_{-\infty}^{r} \mathrm{e}^{-2As} \sigma^2 \, ds \, \pi(dA) \Big)^{\frac{1}{2}}=\Big[ Var(\Lambda^{\prime}) \int_{\R^{+}} \frac{\mathrm{e}^{-2Ar}}{2A} \, \pi(dA)\Big]^{\frac{1}{2}}
	\end{equation}
	\[
	= Cov(\bb{Z}_0,\bb{Z}_{2r})^{\frac{1}{2}},
	\]
	where $Var(\Lambda^{\prime})= \sigma^2+ \int_{\R} x^2 \nu(dx)$, by using Theorem 3.11 in \cite{BNS11}.
	
	If $\E(\Lambda^{\prime})=\mu$ and $\int_{|x|>1} |x|^2 \nu(dx) < \infty$, the supOU process is $\theta$-weakly dependent
	with coefficients
	\begin{equation}
		\label{theta_sup_1}
		\theta_{Z}(r) \leq  \Big(Cov(\bb{Z}_0,\bb{Z}_{2r})+\frac{4\mu^2}{Var(\Lambda^{\prime})^2}Cov(\bb{Z}_0,\bb{Z}_r)^2\Big)^{\frac{1}{2}}.
	\end{equation}
	If $\int_{\R} |x| \nu(dx) < \infty$, $\sigma^2=0$, $\gamma_0=\gamma-\int_{|x|\leq 1} x \,\nu(dx) >0 $ and $\nu(\R^{-})=0 $, where $\R^{-}$ identifies the set of the negative real numbers, then the supOU process admits $\theta$-coefficients
	\begin{equation}
		\label{theta_sup_2}
		\theta_{Z}(r) \leq  \mu \int_{\R^{+}} \frac{\mathrm{e}^{-Ar}}{A} \, \pi(dA),
	\end{equation}
	and when in addition $\int_{|x|>1} |x|^2 \nu(dx) < \infty$
	\begin{equation}
		\label{theta_sup_3}
		\theta_{Z}(r) \leq \frac{2\mu}{Var(\Lambda^{\prime})} Cov(\bb{Z}_0,\bb{Z}_r).
	\end{equation}
	
	Note that the necessary and sufficient condition $\int_{\R^{+}} \frac{1}{A} \,\pi(dA)$ for the supOU process to exist is satisfied by many continuous and discrete distributions $\pi$, see \cite[Section 2.4]{STW15}
	for more details. For example, a probability measure $\pi$ being absolutely continuous with density $\pi^{\prime}=x^{h} l(x)$ and regularly varying at zero from the right with $h>0$, i.e., l is slowly varying at zero, satisfies the above condition. If moreover, $l(x)$ is continuous in $(0.+\infty)$ and $\lim_{x \to 0^{+}} l(x) >0$ exists, it holds that 
	\[
	Cov(\bb{Z}_0,\bb{Z}_r)\sim \frac{C}{r^{h}}, \,\,\textrm{with a constant $C>0$ and $r \in \R^+$} 
	\]
	where for $h \in (0,1)$ the supOU process exhibits long memory and for $h > 1$ short memory.
	In this set-up, concrete examples where the covariances are calculated explicitly can be found in \cite{BN05} and \cite{CS19}. 
	
\end{Example}

Another interesting example of MMAFs is given by the class of \emph{trawl processes}. A distinctive feature of these processes is that one can model the correlation structure independently from the marginal distribution, see \cite{Trawl} for further details on their definition. In the case of trawl processes, we also have available in the literature likelihood-based methods for estimating their parameters; see \cite{Trawls2} for further details. 

In general, the generalized method of moments is employed to estimate the parameters of an MMAF, see \cite{CS19,STOU, MSTOU}.

\section{Appendix}

\subsection{Bounds for the $\theta$-lex coefficients of MMAF}
\label{appb1}

%\begin{align*}
%	|\bb{Z}_i^M-\tilde{\bb{Z}}_i^M |
%	&=| \bb{Z}_i \mathbbm{1}_{\{ |\bb{Z}_i | \leq M \}} + M \mathbbm{1}_{\{ \bb{Z}_i >M \}} -M \mathbbm{1}_{\{ \bb{Z}_i< -M\}} - \tilde{\bb{Z}}_i \mathbbm{1}_{\{ |\tilde{\bb{Z}}_i | \leq M \}} + M \mathbbm{1}_{\{ \tilde{\bb{Z}}_i >M \}} -M \mathbbm{1}_{\{ \tilde{\bb{Z}}_i< -M\}} |\\ &=\left\{ \begin{array}{ll}
	%		|\bb{Z}_i -\tilde{\bb{Z}}_i |  &\textrm{if} \,\, |\bb{Z}_i|, |\tilde{\bb{Z}}_i | \leq M \\
	%		|\bb{Z}_i -M |= M- \bb{Z}_i \leq \tilde{\bb{Z}}_i - \bb{Z}_i= |\bb{Z}_i -\tilde{\bb{Z}}_i |  \, \, &\textrm{if} \,\, |\bb{Z}_i| \leq M, \tilde{\bb{Z}}_i > M \\
	%		|\bb{Z}_i + M |= \bb{Z}_i + M \leq  \bb{Z}_i-  \tilde{\bb{Z}}_i = |\bb{Z}_i -\tilde{\bb{Z}}_i |    &\textrm{if} \,\, |\bb{Z}_i| \leq M, \tilde{\bb{Z}}_i < -M \\
	%		|M - \tilde{\bb{Z}}_i |=M- \tilde{\bb{Z}}_i \leq \bb{Z}_i- \tilde{\bb{Z}}_i= |\bb{Z}_i -\tilde{\bb{Z}}_i |    &\textrm{if} \,\, \bb{Z}_i > M, |\tilde{\bb{Z}}_i| \leq M \\
	%		|-M- \tilde{\bb{Z}}_i |= M+ \tilde{\bb{Z}}_i \leq \tilde{\bb{Z}}_i - \bb{Z}_i= |\bb{Z}_i -\tilde{\bb{Z}}_i |  &\textrm{if} \,\, \bb{Z}_i <- M, |\tilde{\bb{Z}}_i| \leq M \\
	%		|M+M| \leq \bb{Z}_i - \tilde{\bb{Z}}_i = |\bb{Z}_i -\tilde{\bb{Z}}_i | &\textrm{if} \,\, \bb{Z}_i > M, \tilde{\bb{Z}}_i < -M \\
	%		|-M-M|=M+M \leq \tilde{\bb{Z}}_i - \bb{Z}_i= |\bb{Z}_i -\tilde{\bb{Z}}_i | &\textrm{if} \,\, \bb{Z}_i <- M, \tilde{\bb{Z}}_i > M \\
	%		0  &\textrm{if} \,\, \bb{Z}_i > M, \tilde{\bb{Z}}_i > M \,\, \textrm{or} \,\, \bb{Z}_i <- M, \tilde{\bb{Z}}_i <- M
	%	\end{array} \right..
%\end{align*}

In \cite[Proposition 3.11]{CSS20}, it is given a general methodology to show that an MMAF $\bb{Z}$ is $\theta$-lex weakly dependent. Given that the definition of $\theta$-lex-weak dependence used in the paper slightly differs from the one given in \cite{CSS20}, the proof of Proposition \ref{proposition:mmathetaweaklydep} differs from the one of \cite[Proposition 3.11]{CSS20}. Proposition \ref{example:spatiotemporaldata} is a novel computations of a bound of the $\theta$-lex coefficients of an MMAF when the kernel does not depend on the spatial component.

Before giving a detailed account of these proofs, let us state first some notations.
Let $r >0$, $\{(t_{j_1},x_{j_1}),\ldots(t_{j_v},x_{j_v})\}= \Gamma^{\prime}\in \R^{1+d}$ and $\{(t_{i_1},x_{i_1}),\ldots,(t_{i_u},x_{i_u})\}= \Gamma \in V_{\Gamma^{\prime}}^r$ such that $|\Gamma|=u$ and $|\Gamma^{\prime}|=v$ for $(u,v)\in \N\times \N$. We call the \emph{truncated} (influenced) MMAF the vector
\begin{align}
	\label{truncated_rf}
	\bb{Z}_{\Gamma'}^{(\psi)}&= \left(\bb{Z}^{(\psi)}_{t_{j_1}}(x_{j_1}),\ldots,\bb{Z}^{(\psi)}_{t_{j_v}}(x_{j_v})\right)^{\top},
\end{align}
where $\psi:=\psi(r)$ for $r>0$. In particular, for all $a\in \{1,\ldots,u\}$ and a $b \in \{1,\ldots,v\}$ , $\psi$ has to be chosen such that it exists a set  $B^{\psi}_{t_{j_b}}(x_{j_b})$ with the following properties.
\begin{itemize}
	\item $|B^{\psi}_{t_{j_b}}(x_{j_b})|\to \infty$ as $r \to \infty$ for all $ b$, and
	\item  $I_{i_a}=H\times A_{t_{i_a}}(x_{i_a})$ and $I_{j_b}=H\times B^{\psi}_{t_{j_b}}(x_{j_b}) $ are disjoint sets or intersect on a set $H\times O$, where $O\in\R^{1+d}$ and $dim(O)<d+1$, for all $a$ and $b$
\end{itemize}

Let us now assume that it is possible to construct the sets $B^{\psi}_{t_{j_b}}(x_{j_b})$. 
Then, since $\pi\times\lambda_{1+d}(H\times O)=0$ and by the definition of a L\'evy basis, it follows that
\begin{align*}
	\bb{Z}_{t_{i_a}}(x_{i_a}) &= \int_H\int_{A_{t_{i_a}}(x_{i_a})}f(A,t_{i_a}-s,x_{i_a}-\xi)\Lambda(dA,ds,d\xi) \text{ and }\\
	\bb{Z}_{t_{j_b}}^{(\psi)}(x_{j_b}) &= \int_H\int_{B^{\psi}_{t_{j_b}}(x_{j_b})} f(A,t_{j_b}-s,x_{j_b}-\xi)\Lambda(dA,ds,d\xi),
\end{align*}
and $\bb{Z}_\Gamma$ and $\bb{Z}_{\Gamma'}^{(\psi)}$ are independent. Hence, for $F\in\mathcal{G}_{u}^*$ and $G\in\mathcal{G}_v$, $F(\bb{Z}_\Gamma)$ and $G(\bb{Z}_{\Gamma'}^{(\psi)})$ are also independent. Now 
\begin{align}
	\begin{aligned}
		&|Cov(F(\bb{Z}_{\Gamma}),G(\bb{Z}_{\Gamma'}))| \\
		&\leq |Cov(F(\bb{Z}_{\Gamma}),G(\bb{Z}_{\Gamma'}^{(\psi)}))|+|Cov(F(\bb{Z}_{\Gamma}),G(\bb{Z}_{\Gamma'})-G(\bb{Z}_{\Gamma'}^{(\psi)}))|\\
		&= |E[(G(\bb{Z}_{\Gamma'})-G(\bb{Z}_{\Gamma'}^{(\psi)}) )F(\bb{Z}_{\Gamma})]-E[G(\bb{Z}_{\Gamma'})-G(\bb{Z}_{\Gamma'}^{(\psi)})]E[F(\bb{Z}_{\Gamma})]| \\
		&\leq 2 \norm{F}_\infty E[|G(\bb{Z}_{\Gamma'})-G(\bb{Z}_{\Gamma'}^{(\psi)})|]\leq 2 \text{Lip}(G) \norm{F}_\infty \sum_{l=1}^{v} E[|\bb{Z}_{t_{j_l}}(x_{j_l})-\bb{Z}_{t_{j_l}}^{(\psi)}(x_{j_l})|] =\\
		&= 2 \text{Lip}(G) \norm{F}_\infty v E[|\bb{Z}_{t_{j_1}}(x_{j_1})-\bb{Z}_{t_{j_1}}^{(\psi)}(x_{j_1})|], 
	\end{aligned}
\end{align}
because an (influenced) MMAF is a stationary random field. To show that a field satisfy Definition \ref{thetaweaklydependent}, is then enough to prove that $E[|\bb{Z}_{t_{j_1}}(x_{j_1})-\bb{Z}_{t_{j_1}}^{(\psi)}(x_{j_1})|]$ in the above inequality converges to zero as $r \to \infty$. The proofs of Proposition \ref{proposition:mmathetaweaklydep} and \ref{example:spatiotemporaldata} below differ in the definition of the sequence $\psi$ and the sets $B^{\psi}_{t_{j_b}}(x_{j_b})$.

\begin{Proposition}\label{proposition:mmathetaweaklydep}
	Let $\Lambda$ be an $\R$-valued L\'evy basis with characteristic quadruplet $(\gamma,\sigma^2,\nu,\pi)$, $f:H\times\R^{1+d}\ra \R $ a $\mathcal{B}(H\times\R^{1+d})$-measurable function and $\bb{Z}_t(x)$ be defined as in (\ref{mmaf_i}).
	\begin{itemize}
		\item [(i)] If $\int_{|x|>1} x^2\nu(dx)<\infty$, $\gamma+\int_{|x|>1}x\nu(dx)=0$ and $f\in L^2(H\times \R^{1+d})$, then $\bb{Z}$ is $\theta$-lex-weakly dependent and
		\begin{align*}
			\theta_{lex}(r)\leq 2 \Big(\int_H\int_{-\infty}^{\rho(r)} Var(\Lambda^{\prime}) \int_{\norm{\xi}\leq cs} f(A,-s,-\xi)^2 dsd\xi\pi(dA)\Big)^{\frac{1}{2}}.
		\end{align*}
		\item [(ii)] If $\int_{|x|>1} \, x^2\nu(dx)<\infty$ and $f\in L^2(H\times \R^{1+d})\cap L^1(H\times \R^{1+d})$, then $\bb{Z}$ is $\theta$-lex-weakly dependent and
		\begin{align*}
			\theta_{lex}(r)\leq 2& \bigg(\int_H\int_{-\infty}^{\rho(r)} Var(\Lambda^{\prime}) \int_{\norm{\xi}\leq cs} f(A,-s)^2 \, ds d\xi\pi(dA)\\
			&+\bigg|\int_S\int_{-\infty}^{\rho(r)} \E(\Lambda^{\prime}) \int_{\norm{\xi}\leq cs}  f(A,-s) \,   dsd\xi\pi(dA)\bigg|^2 \bigg)^{\frac{1}{2}}.
		\end{align*}
		\item [(iii)] If $\int_{\R } |x| \, \nu(dx)<\infty$, $\sigma^2=0$ and $f\in L^1(H \times \R^{1+d})$ with $\gamma_0$ defined in (\ref{gamma0}), then $\bb{Z}$ is $\theta$-lex-weakly dependent and 
		\begin{align*}
			\theta_{lex}(r)\leq 2& \bigg(\int_H\int_{-\infty}^{\rho(r)} \int_{\|\xi\|\leq cs} |f(A,-s)\gamma_0| \,ds\, d\xi\pi(dA)\\
			+&\int_H\int_{-\infty}^{\rho(r)} \int_{\|\xi\|\leq cs} \int_{\R} |f(A,-s)x| \,\nu(dx)\,ds\pi(dA)\bigg).
		\end{align*}
		
		%		\item [(iv)] If $\int_{\norm{x}>1}\norm{x}\nu(dx)<\infty$ and $f\in L^1(S\times \R\times \R^d,\pi\otimes\lambda)\cap L^2(S\times \R\times \R^d,\pi\otimes\lambda)$, then $Z$ is $\theta$-lex-weakly dependent with $\theta$-lex-coefficients satisfying 
		%		\begin{align*}
			%			\theta_{lex}(r)\leq& 2\bigg(\int_S\int_{A_0\cap V_0^{\psi(r)}}\textup{tr}(f(A,-s)\Sigma_{\Lambda_1} f(A,-s)')ds\pi(dA)\\
			%			&+\Big\lVert\int_S\int_{A_0\cap V_0^{\psi(r)}} f(A,-s)\gamma  ds\pi(dA)\Big\rVert^2\bigg)^{\frac{1}{2}}\\
			%			&+2\int_S\int_{A_0\cap V_0^{\psi(r)}} \int_{\norm{x}>1}\norm{f(A,-s)x} \nu(dx)ds\pi(dA)=:\hat{\theta}_Z^{(iv)}(r).
			%		\end{align*}
	\end{itemize}
	The results above hold for all $r>0$ with 
	\begin{align}\label{eq:psi}
		\rho(r)=\frac{-r \min(1/c,1)}{\sqrt{(d+1)(c^2+1)}},
	\end{align}
	$Var(\Lambda^{\prime}) =\sigma^2+\int_{\R}x^2 \, \nu(dx)$ and $\E(\Lambda^{\prime}) =\gamma+\int_{|x|\geq1} x\nu(dx)$.
\end{Proposition}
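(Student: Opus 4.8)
The plan is to specialize the general scheme set up immediately above the statement: the covariance inequality already reduces everything to bounding the single truncation error $\E[|\bb{Z}_{t_{j_1}}(x_{j_1})-\bb{Z}_{t_{j_1}}^{(\psi)}(x_{j_1})|]$, and by stationarity I may evaluate this at the space-time origin, i.e.\ bound $\E[|\bb{Z}_0(0)-\bb{Z}_0^{(\psi)}(0)|]$. First I would fix the truncation by cutting each past cone at time depth $\psi$, namely $B^{\psi}_t(x)=A_t(x)\cap\{(s,\xi): s>t-\psi\}$, so that the discarded region is the deep past $\{(s,\xi)\in A_t(x): s\leq t-\psi\}$; by the non-anticipative cone geometry (\ref{lightcone}) this removed region carries the spatial cross-section $\norm{\xi}\leq c|s|$ and shrinks to a $\pi\times\lambda_{1+d}$-null set as $\psi\ra\infty$. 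The depth will be tied to $r$ through $\psi(r)=-\rho(r)$ with $\rho(r)$ as in (\ref{eq:psi}).

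The geometric step is the main obstacle. I must take $\psi=\psi(r)$ as large as possible while guaranteeing that $B^{\psi}_{t_{j_b}}(x_{j_b})$ and every $A_{t_{i_a}}(x_{i_a})$ with $(t_{i_a},x_{i_a})\in\Gamma$ meet only in a set of dimension $<d+1$, which is exactly the condition making the two stochastic integrals independent. From $\Gamma\in V_{\Gamma'}^{r}$ each $\Gamma$-apex lies lexicographically below $(t_{j_b},x_{j_b})$ and satisfies $\norm{(t_{j_b},x_{j_b})-(t_{i_a},x_{i_a})}_{\infty}\geq r$. I would split into the case of dominant time separation, where the earlier full cone lives entirely at times $\leq t_{i_a}$ and is disjoint from the kept part as soon as $t_{i_a}\leq t_{j_b}-\psi$, and the case of dominant spatial separation, where for a hypothetical common point $(s,\xi)$ the triangle inequality $\norm{x_{j_b}-x_{i_a}}\leq\norm{x_{j_b}-\xi}+\norm{\xi-x_{i_a}}\leq c|s|+c(t_{i_a}-s)$ bounds the admissible spatial gap by $2c\psi$ over the kept time window, forcing a contradiction once the gap exceeds the cone's reach. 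Converting the $\norm{\cdot}_{\infty}$ separation into the Euclidean space-time quantities the cone actually sees (via $\norm{v}_{\infty}\geq\norm{v}/\sqrt{d+1}$ together with the aperture factor $\sqrt{c^2+1}$) is what yields the admissible depth $-\rho(r)$, and I expect the careful bookkeeping of the constant $\sqrt{(d+1)(c^2+1)}$ and the factor $\min(1/c,1)$ to be the delicate part.

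With disjointness established, $\bb{Z}_{\Gamma}$ and $\bb{Z}_{\Gamma'}^{(\psi)}$ are independent, the covariance inequality of the preamble applies, and it remains to estimate the truncation error over the removed cone $\{s\leq\rho(r)\}$ under each hypothesis. For (i) the moment condition $\gamma+\int_{|x|>1}x\,\nu(dx)=0$ makes the removed integral centered, so Jensen gives $\E[|D|]\leq(\E[D^2])^{1/2}$, and the $L^2$-isometry of Rajput--Rosinski integration underlying Proposition \ref{proposition:MMAmoments}(ii) evaluates $\E[D^2]$ as $Var(\Lambda')$ times the $L^2$-mass of $f$ on the removed region, reproducing the stated bound. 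For (ii) the mean need not vanish, so I would use $\E[|D|]\leq(Var(D)+(\E D)^2)^{1/2}$ with $Var(D)=Var(\Lambda')\int f^2$ and $\E D=\E(\Lambda')\int f$ over the removed region, the $L^1$ hypothesis ensuring finiteness of the mean term and producing the extra squared-mean summand. For (iii), with $\sigma^2=0$ and $f\in L^1$, I would instead bound $\E[|D|]$ directly by the drift-plus-jump first-moment estimate of Proposition \ref{proposition:MMAmoments}(iii), splitting it into the $|\gamma_0 f|$ contribution and the $\int_{\R}|f x|\,\nu(dx)$ contribution integrated over the removed cone.

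Finally I would let $r\ra\infty$: since $\rho(r)\ra-\infty$, each removed-region integral tends to $0$ by dominated convergence (using $f\in L^2$, resp.\ $f\in L^1$), so $\theta_{lex}(r)\ra0$. Taking suprema over $F\in\mathcal{G}^*_u$, $G\in\mathcal{G}_v$ and over $\Gamma,\Gamma'$ in the covariance inequality, and inserting the factor $2$ from the preamble, then yields the three explicit upper bounds and establishes that $\bb{Z}$ is $\theta$-lex weakly dependent.
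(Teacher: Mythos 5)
Your proposal is correct, and it follows the paper's overall scheme — the preamble's reduction of the covariance to $\E[|\bb{Z}_0(0)-\bb{Z}_0^{(\psi)}(0)|]$, the moment estimates of Proposition \ref{proposition:MMAmoments} (Jensen plus centering in (i), variance-plus-squared-mean in (ii), the drift-plus-jump $L^1$ bound in (iii)), and dominated convergence — but your truncation differs from the paper's in the one step where the geometry actually matters, and the comparison is instructive. The paper removes the lexicographically far part of the cone, taking $B^{\psi}_{t}(x)=A_{t}(x)\backslash V_{(t,x)}^{\psi}$ with $\psi=r/\sqrt{(d+1)(c^2+1)}$; since inside the cone $\|(s,\xi)-(t,x)\|_\infty\leq \max(1,c)\,(t-s)$, the removed set $A_t(x)\cap V^{\psi}_{(t,x)}$ is contained in the time slab $\{s\leq t+\rho(r)\}$, and the stated bounds follow by enlarging the integration region to that slab. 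You instead cut the cone directly along that slab, so your removed region coincides with the integration region in the statement and no enlargement is needed; the price is a different disjointness check, which you rightly flag as the delicate point. It does go through: if $|t_{j_b}-t_{i_a}|\geq r$, the cones are time-separated because $|\rho(r)|<r$; otherwise $\|x_{j_b}-x_{i_a}\|\geq\|x_{j_b}-x_{i_a}\|_\infty\geq r$, while your triangle inequality gives a spatial gap $<2c|\rho(r)|$ on the kept slab, and $2c|\rho(r)|\leq r$ holds for all $c>0$ and $d\geq 1$ precisely because of the factor $\min(1/c,1)$ in (\ref{eq:psi}) (for $c\geq1$ it reduces to $(d+1)(c^2+1)\geq4$, for $c<1$ to $4c^2\leq(d+1)(c^2+1)$). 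Two remarks: your invocation of $\|v\|_\infty\geq\|v\|/\sqrt{d+1}$ is never actually needed — only the trivial direction $\|v\|\geq\|v\|_\infty$ enters — so the constant in (\ref{eq:psi}) is more conservative than your truncation requires; and the paper's lexicographic truncation keeps more of the cone (its disjointness check needs only $(1+c)\psi\leq r$, because points of the kept set are additionally within sup-distance $\psi$ of the apex in space), so its intermediate bound is tighter before the enlargement, though both routes deliver exactly the stated inequalities. Finally, note that the paper's write-up leaves its own disjointness verification implicit, so spelling yours out, as you sketch, is a genuine completion of the argument rather than a redundancy.
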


\begin{proof}[Proof of Proposition \ref{proposition:mmathetaweaklydep}]
	In this proof, we assume that $B^{\psi}_{t_{j_b}}(x_{j_b})=A_{t_{j_b}}(x_{j_b})\backslash V_{(t_{j_b},x_{j_b})}^{\psi} $,
	where $\psi=\psi(r):= \frac{r}{\sqrt{(d+1)(c^2+1)}}$. 
	
	\begin{itemize}
		\item[(i)] 
		Using the translation invariance of $A_t(x)$ and $V_{(t,x)}^{(\psi)}$ we obtain
		\begin{align*}
			&E[|\bb{Z}_{t_{j_1}}(x_{j_1})-\bb{Z}_{t_{j_1}}^{(\psi)}(x_{j_1})|] \leq \left( \int_H \int_{A_{0}(0)\cap V_{0}^{\psi}}  Var(\Lambda^{\prime}) f(A,-s,-\xi)^2 d\xi ds  \pi(dA) \right)^{\frac{1}{2}}\\
			&\quad = \left( \int_H \int_{-\infty}^{\frac{-r \min(1/c,1)}{\sqrt{(d+1)(c^2+1)}}} Var(\Lambda^{\prime})\int_{\norm{\xi}\leq cs} f(A,-s,-\xi)^2  \, d\xi ds \pi(dA) \right)^{\frac{1}{2}},
		\end{align*}
		where we have used Proposition \ref{proposition:MMAmoments}-(ii) to bound the $L_1$-distance from above. Overall, we obtain
		\begin{align*}
			\theta_{lex}(r)\leq 2\left( \int_H \int_{-\infty}^{\frac{-r \min(1/c,1)}{\sqrt{(d+1)(c^2+1)}}} Var(\Lambda^{\prime}) \int_{\norm{\xi}\leq cs} f(A,-s,-\xi)^2  d\xi ds \pi(dA) \right)^{\frac{1}{2}},
		\end{align*}
		which converges to zero as $r$ tends to infinity by applying the dominated convergence theorem.
		\item[(ii)] By applying Proposition \ref{proposition:MMAmoments}-(i) and (ii), we obtain 
		\begin{align*}
			&E[|\bb{Z}_{t_{j_1}}(x_{j_1})-\bb{Z}_{t_{j_1}}^{(\psi)}(x_{j_1})|]
		\end{align*}
		\begin{align*}
			&\leq \Bigg( \int_H \int_{-\infty}^{\frac{-r \min(1/c,1)}{\sqrt{(d+1)(c^2+1)}}} Var(\Lambda^{\prime}) \int_{\norm{\xi}\leq cs} f(A,-s,-\xi)^2 d\xi ds \pi(dA)\\
			&\quad + \left(\int_H\int_{-\infty}^{\frac{-r \min(1/c,1)}{\sqrt{(d+1)(c^2+1)}}} \E(\Lambda^{\prime}) \int_{\norm{\xi}\leq cs} f(A,-s,-\xi)d\xi ds \pi(dA)  \right)^2 \Bigg)^{\frac{1}{2}}.
		\end{align*}
		Finally, we proceed similarly to proof (i) and obtain the desired bound.
		\item[(iii)] We apply now Proposition \ref{proposition:MMAmoments}-(iii). Then,
		\begin{align*}
			&E[|\bb{Z}_{t_{j_1}}(x_{j_1})-\bb{Z}_{t_{j_1}}^{(\psi)}(x_{j_1})|]\\
			&\leq \Bigg( \int_S \int_{-\infty}^{\frac{-r \min(1/c,1)}{\sqrt{(d+1)(c^2+1)}}}\int_{\norm{\xi}\leq cs} |f(A,-s,-\xi)\gamma_0| d\xi ds  \pi(dA)\\
			&\quad + \int_H\int_{-\infty}^{\frac{-r \min(1/c,1)}{\sqrt{(d+1)(c^2+1)}}} \int_{\norm{\xi}\leq cs}\int_{\R} |f(A,-s,-\xi)y| \nu(dy)d\xi ds \pi(dA)   \Bigg).
		\end{align*}
		The bound for the $\theta$-lex-coefficients is obtained following the proof line in (i).
	\end{itemize}
	\hfill
\end{proof}

Proposition \ref{proposition:mmathetaweaklydep} gives general bounds for the $\theta$-lex coefficients of MMAF. For example, it can also be used to compute upper bounds for the $\theta$-lex-coefficients of an MSTOU process for $d >2$ which Proposition \ref{example:spatiotemporaldata} does not cover.

\begin{Corollary}
	\label{gen_coeff}
	Let $\bb{Z}$ be an MSTOU process as in Definition \ref{prop_mstou} and  $(\gamma,\sigma^2,\nu,\pi)$ be the characteristic quadruplet of its driving L\'evy basis. Moreover, let the mean reversion parameter $A$ be $Gamma(\alpha,\beta)$ distributed
	with density $l(A)=\frac{\beta^{\alpha}}{\Gamma(\alpha)} A^{\alpha-1} \exp(-\beta A)$ where $\alpha >d+1$ and $\beta>0$.
	\begin{enumerate}
		\item[(i)] If $\int_{|x|>1} x^2 \,\nu(dx)<\infty$ and $\gamma+\int_{|x|>1}x\nu(dx)=0$, then $\bb{Z}$ is $\theta$-lex-weakly dependent. Let $c\in[0,1]$, then for
		\begin{align*}
			d&=1,\quad && \theta_{lex}(r)\leq2\left(\frac{c Var(\Lambda^{\prime}) \beta^\alpha}{2\Gamma(\alpha)} \left(\frac{\Gamma(\alpha-2)}{(2\psi+\beta)^{\alpha-2}}+\frac{2\psi\Gamma(\alpha-1)}{(2\psi+\beta)^{\alpha-1}} \right)\right)^{\frac{1}{2}}, \\
			\textrm{and for}  & && \\
			d& \geq 2,\quad &&\theta_{lex}(r)\leq2\left(V_d(c)\frac{d!Var(\Lambda^{\prime}) \beta^{\alpha} }{2^{d+1}} \sum_{k=0}^d \frac{(2\psi)^k}{k!(2\psi+\beta)^{\alpha-d-1+k}}\frac{\Gamma(\alpha-d-1+k)}{\Gamma(\alpha)}\right)^{\frac{1}{2}}.
		\end{align*}
		Let $c>1$, then for
		\begin{alignat*}{2}
			d&\in\N,\ \ &&\theta_{lex}(r)\leq2\Bigg(V_d(c)\frac{d!Var(\Lambda^{\prime}) \beta^{\alpha} }{2^{d+1}} \sum_{k=0}^d \frac{\left(\frac{2\psi}{c}\right)^k}{k!\left(\frac{2\psi}{c}+\beta\right)^{\alpha-d-1+k}}\frac{\Gamma(\alpha-d-1+k)}{\Gamma(\alpha)}\Bigg)^{\frac{1}{2}}.
		\end{alignat*}
		The above implies that, in general, $\theta_{lex}(r)=\mathcal{O}(r^{\frac{(d+1)-\alpha}{2}})$. 
		\item [(ii)] If $\int_{\R} |x| \,\nu(dx)<\infty$, $\Sigma=0$ and $\gamma_0$ as defined in (\ref{gamma0}), then $\bb{Z}$ is $\theta$-lex-weakly dependent. Let $c \in (0,1]$, then for 
		\begin{alignat*}{2}
			d&\in\N,\qquad&& \theta_{lex}(r)\leq2V_d(c) d! \beta^{\alpha} \gamma_{abs}  \sum_{k=0}^d \frac{\psi^k}{k!(\psi+\beta)^{\alpha-d-1+k}}\frac{\Gamma(\alpha-d-1+k)}{\Gamma(\alpha)},
		\end{alignat*}
		whereas for $c>1$ and
		\begin{alignat*}{2}
			d&\in\N,\qquad &&\theta_{lex}(r)\leq2V_d(c)d! \beta^{\alpha}\gamma_{abs} \sum_{k=0}^d \frac{\left(\frac{\psi}{c}\right)^k}{k!\left(\frac{\psi}{c}+\beta\right)^{\alpha-d-1+k}}\frac{\Gamma(\alpha-d-1+k)}{\Gamma(\alpha)},
		\end{alignat*}
		where $\gamma_{abs}=|\gamma_0|+\int_\R|x|\nu(dx)$, $V_d(c)$ denotes the volume of the $d$-dimensional ball with radius $c$, and $\psi:=\psi(r)=\frac{1}{\sqrt{c^2+1}} \frac{r}{d+1}$. 
	\end{enumerate}
\end{Corollary}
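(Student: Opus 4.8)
The plan is to reduce everything to Proposition \ref{proposition:mmathetaweaklydep}, into which I substitute the MSTOU kernel and then evaluate the resulting integrals explicitly against the Gamma mixing density. Since the kernel $f(A,t-s)=\exp(-A(t-s))$ carries no spatial component, each of the two cases falls directly under one part of that proposition: part (i) supplies the bound for case (i), where $\gamma+\int_{|x|>1}x\,\nu(dx)=0$ and the $L^1$-distance is controlled by the variance, i.e.\ by $f^2$; part (iii) supplies the bound for case (ii), where $\sigma^2=0$ and the $L^1$-distance is controlled linearly in $f$. In both cases I first carry out the spatial integration over the cone cross-section, using $\int_{\norm{\xi}\le c|s|}d\xi=V_d(c)\,|s|^{d}$ with $V_d(c)$ the volume of the $d$-ball of radius $c$; for $d=1$ this is $2c|s|$, which is the source of the prefactor $c$ in the $d=1$ display of (i).

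After the spatial integration the temporal integral has, under the substitution $u=-s$, the form $\int_{\psi}^{\infty}u^{d}\,e^{-\lambda u}\,du$ with $\lambda=2A$ in case (i) (because the variance bound involves $f(A,-s)^2=e^{2As}$) and $\lambda=A$ in case (ii). Repeated integration by parts, equivalently the upper incomplete Gamma identity
\[
\int_{\psi}^{\infty}u^{d}e^{-\lambda u}\,du=\frac{d!}{\lambda^{d+1}}e^{-\lambda\psi}\sum_{k=0}^{d}\frac{(\lambda\psi)^{k}}{k!},
\]
turns this into a finite sum. Integrating the result against $l(A)=\frac{\beta^{\alpha}}{\Gamma(\alpha)}A^{\alpha-1}e^{-\beta A}$ then collapses each summand to a one-dimensional Gamma integral: the factor $e^{-\lambda\psi}$ merges with $e^{-\beta A}$ to produce the denominators $(\beta+2\psi)$ in case (i) and $(\beta+\psi)$ in case (ii), while the power $A^{k}$ raises the exponent so that each term yields $\Gamma(\alpha-d-1+k)/(\beta+\lambda\psi/A\cdots)^{\alpha-d-1+k}$ as in the statement. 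Specializing this computation to $d=1$ and keeping only the $k=0,1$ terms reproduces the explicit two-term formula in (i).

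The remaining ingredient is the choice of the truncation depth $\psi=\psi(r)$ and the verification that the truncated field $\bb{Z}^{(\psi)}$ is built from a region of the L\'evy basis independent of the conditioning region; this is exactly the disjointness requirement on the sets $B^{\psi}_{t_{j_b}}(x_{j_b})$ used in the proof of Proposition \ref{proposition:mmathetaweaklydep}. For the cone-shaped ambit set the lower limit of the $u$-integral is $\psi$ when $c\le1$ and $\psi/c$ when $c>1$, according to whether the temporal or the spatial coordinate realises the $\norm{\cdot}_\infty$-separation $r$ between the lexicographically ordered index sets; this is precisely what produces the separate $c>1$ displays with the $\tfrac{2\psi}{c}$ and $\tfrac{\psi}{c}$ factors. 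The order statement $\theta_{lex}(r)=\mathcal{O}(r^{((d+1)-\alpha)/2})$ then follows by retaining the dominant $k=0$ term: since $\psi\propto r$, one has $(\beta+2\psi)^{-(\alpha-d-1)}\asymp r^{-(\alpha-d-1)}$, and the outer square root halves the exponent.

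The step I expect to be the main obstacle is the geometric one: pinning down the intersection $A_0(0)\cap V_0^{\psi}$ precisely, and hence the correct map $r\mapsto\psi$, in particular separating the regimes $c\le1$ and $c>1$ at which the coordinate dominating the $\norm{\cdot}_\infty$-distance changes and the effective truncation limit switches from $\psi$ to $\psi/c$. Once this limit is fixed, the incomplete-Gamma evaluation and the Gamma-density integration are routine, and case (ii) is handled identically after replacing the variance bound of Proposition \ref{proposition:mmathetaweaklydep}-(i) by the first-order bound of part (iii); this replacement explains the absence of the outer square root, the appearance of $\gamma_{abs}=|\gamma_0|+\int_\R|x|\,\nu(dx)$, and the occurrence of $\psi$ rather than $2\psi$.
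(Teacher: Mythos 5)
Your proposal is correct and is essentially the paper's own argument: the paper's proof is a one-line reference to modifying the computations of \cite[Section 3.7]{CSS20} in line with Proposition \ref{proposition:mmathetaweaklydep}, and your plan---substituting the MSTOU kernel into that proposition's truncation machinery, using the spatial cross-section volume $V_d(c)|s|^d$, applying the incomplete-Gamma identity to the temporal integral, and integrating against the Gamma density---reconstructs exactly those computations and reproduces every stated constant, including the $d=1$ specialization, the $\psi$ versus $\psi/c$ split between $c\leq 1$ and $c>1$, and the rate $\mathcal{O}(r^{((d+1)-\alpha)/2})$. The geometric step you flag is the only point needing care: Proposition \ref{proposition:mmathetaweaklydep} uses truncation depth $r/\sqrt{(d+1)(c^2+1)}$ while the corollary's $\psi(r)=r/((d+1)\sqrt{c^2+1})$ is smaller, but since the integrated bound is non-increasing in the truncation depth and disjointness of the relevant integration regions only becomes easier as $\psi$ decreases, the stated inequalities follow from either choice.
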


\begin{proof}
	Proof of this corollary can be obtained by modifying the proof of \cite[Section 3.7]{CSS20} in line with the calculations performed in Proposition \ref{proposition:mmathetaweaklydep}.
\end{proof}

The results of the Corollary above imply that, in general, $\theta_{lex}(r)=\mathcal{O}(r^{(d+1)-\alpha})$. We give now the proof of Proposition \ref{example:spatiotemporaldata}.

\begin{proof}[Proof of Proposition \ref{example:spatiotemporaldata}]

	\begin{enumerate}
		\item[(i)]  Without loss of generality, let us determine the truncated set when $(t_{j_b},x_{j_b})=(0,0)$. We use, to this end, two auxiliary ambit sets translated by a value $\psi>0$ along the spatial axis, namely, the cones $A_{0}(\psi)$ and $A_{0}(-\psi)$, as illustrated in Figure \ref{figure:truncatedformstou}-(a),(c) for $c\leq1$ and in Figure \ref{figure:truncatedformstou}-(b),(d) for $c>1$. Then, we set the truncated integration set to $B_0^{\psi}(0)=A_{0}(0)\backslash (A_{0}(\psi)\cup A_{0}(-\psi))$. 
		Since $(t_{i_a},x_{i_a})\in V_{(0,0)}^r$, it is sufficient to choose $\psi$ such that the integration set of $\bb{Z}_{0}^{(\psi)}(0)$ is a subset of $(V_{(0,0)}^r)^c$. To this end, the three intersecting points $(\tfrac{-\psi}{2c},\tfrac{-\psi}{2})$, $(\tfrac{-\psi}{c},0)$ and $(\tfrac{-\psi}{2c},\tfrac{\psi}{2})$ have to be inside the set $(V_{(0,0)}^r)^c$, as illustrated in Figure \ref{figure:truncatedformstou}-(e) for $c\leq1$ and in Figure \ref{figure:truncatedformstou}-(f) for $c>1$. Clearly, this leads to the conditions $\psi\leq rc$, $\psi\leq 2r$ and $\psi\leq 2rc$, which are satisfied for $\psi=r\min(2,c)$. Hence, by using Proposition \ref{proposition:MMAmoments}-(ii), we have that 
		\begin{align*}
			&\theta_{lex}(r) \leq 2 Var(\Lambda^{\prime})^{1/2} \left( \int_0^{\infty} \int_{A_0(0)\cap (A_0(\psi)\cup A_0(-\psi))}f(A,-s)^2 ds d\xi \pi(d\lambda) \right)^{1/2}\\
			&=  2 Var(\Lambda^{\prime})^{1/2}  \Bigg( \int_0^{\infty} \int_{A_0(0)\cap A_0(\psi) }f(A,-s)^2 ds d\xi \pi(d\lambda) \\
			&+\int_0^{\infty} \int_{A_0(0)\cap  A_0(-\psi)}f(A,-s)^2 ds d\xi \pi(d\lambda)\\ &\qquad\qquad\quad -\int_0^{\infty} \int_{A_0(0)\cap  A_0(\psi)\cap A_0(-\psi)}f(A,-s)^2 ds d\xi \pi(d\lambda) \Bigg)^{1/2}\\
			&\leq  2 \sqrt{2Cov(\bb{Z}_0(0),\bb{Z}_0(r\min(2,c)))}.
		\end{align*}
		which converges to zero as $r \to \infty$ for the dominated convergence theorem.
		\begin{figure}
			\centering
			\subfigure[\label{Plot1} Integration set $A_0(0)$ for $c=1/\sqrt{2}$ together with the complement of  $V_{(0,0)}^r$ for $r=3$.]{\includegraphics[width=.48\textwidth]{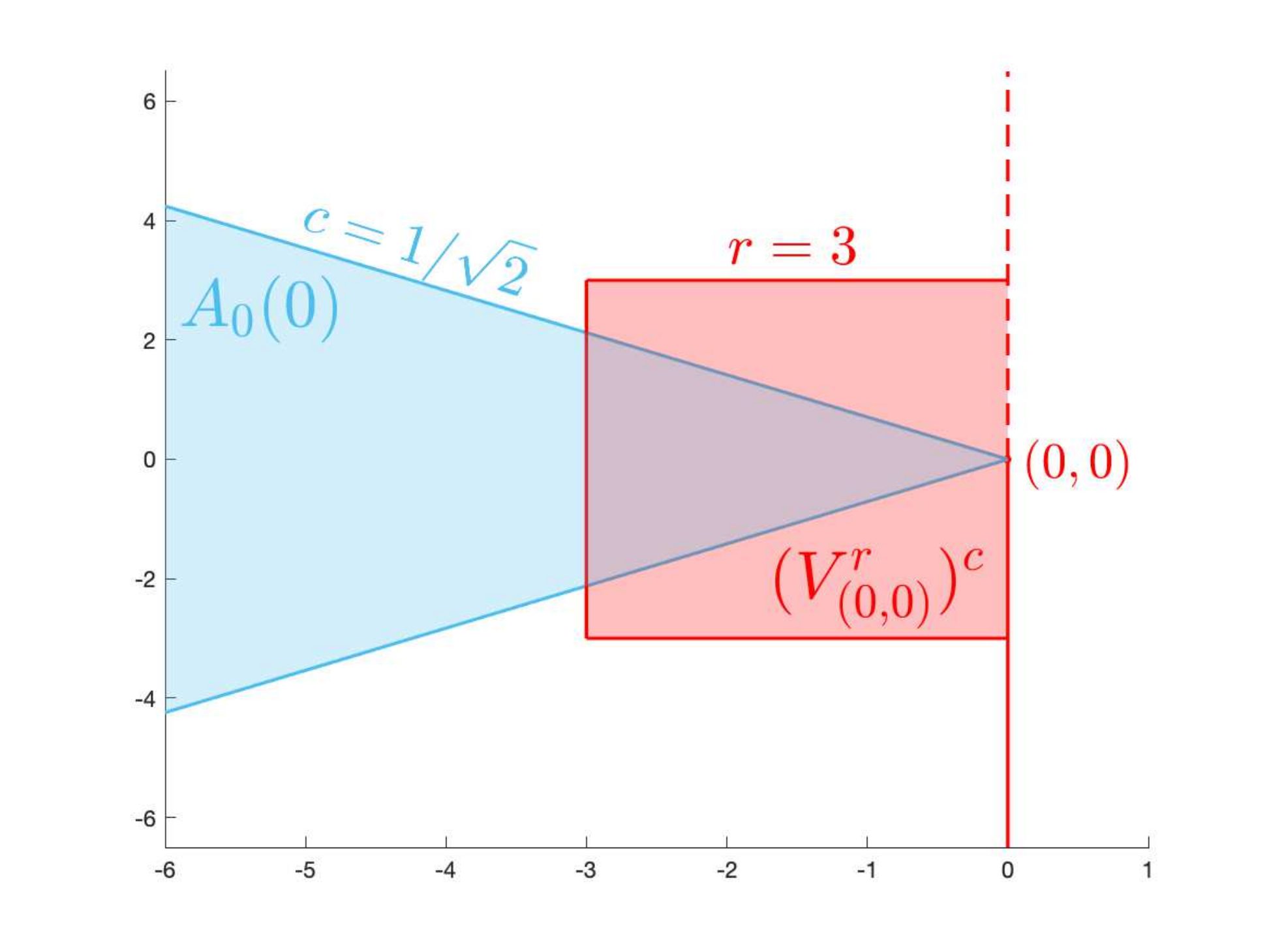}} \hspace{.02\textwidth}
			\subfigure[\label{Plot2} Integration set $A_0(0)$ for $c=2\sqrt{2}$ together with the complement of  $V_{(0,0)}^r$ for $r=3$.]{\includegraphics[width=0.48\textwidth]{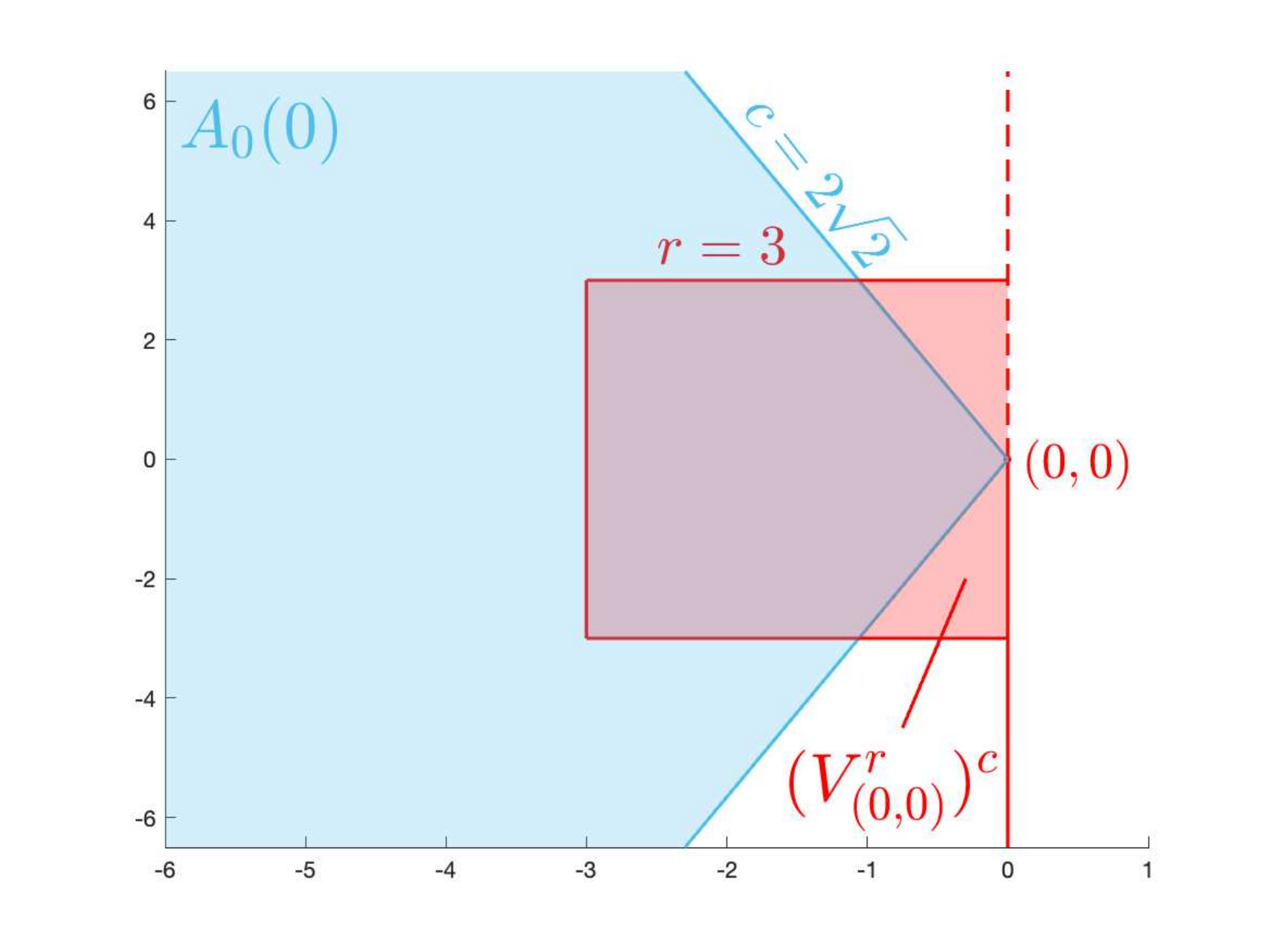}} \hspace{.02\textwidth}
			\subfigure[\label{Plot3} Integration set $A_0(0)$ together with $A_0(\psi)$ and $A_0(-\psi)$ for $\psi=r\min(2,c)$.]{\includegraphics[width=.48\textwidth]{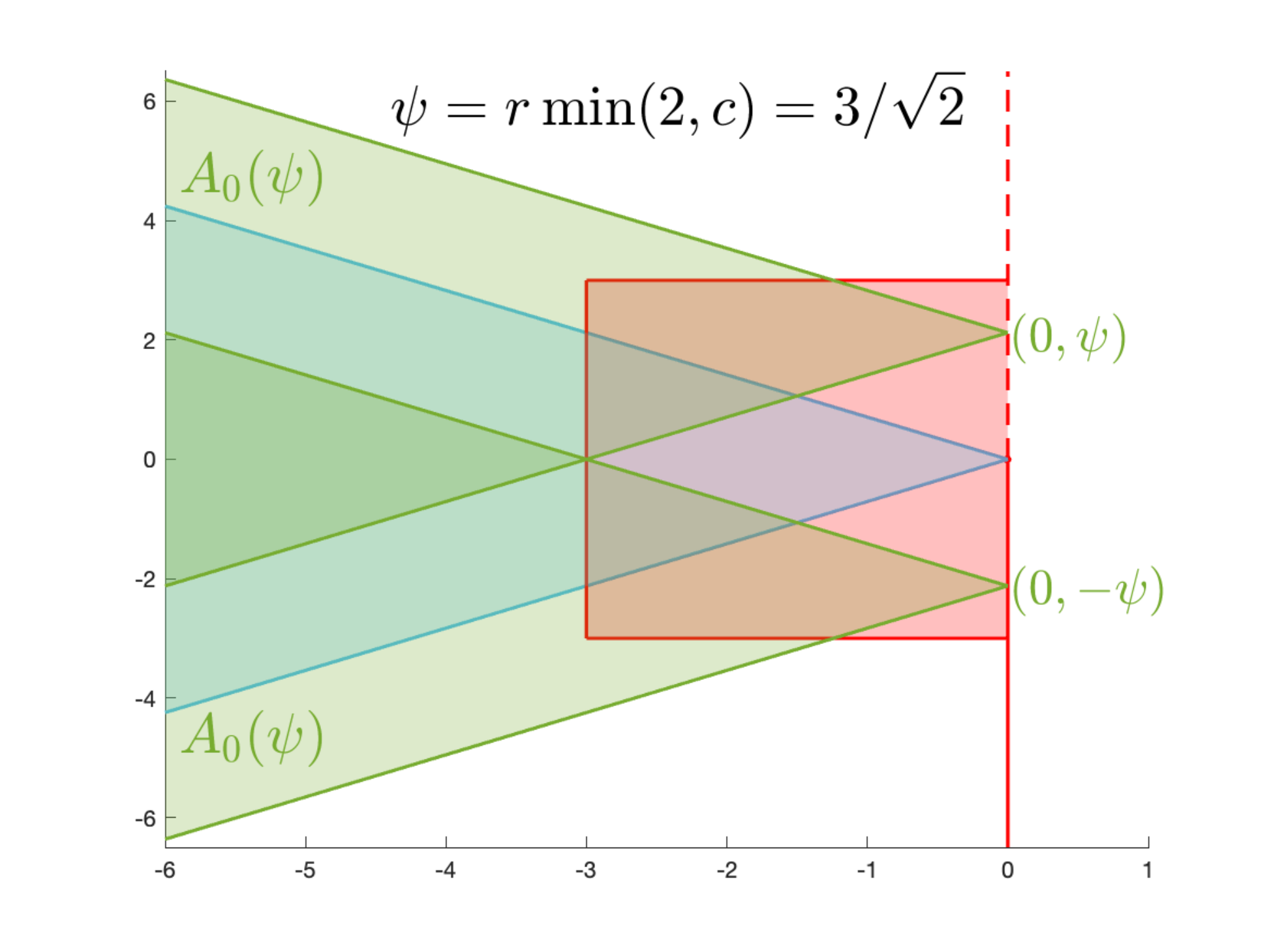}}\hspace{.02\textwidth}
			\subfigure[\label{Plot4} Integration set $A_0(0)$ together with $A_0(\psi)$ and $A_0(-\psi)$ for $\psi=r\min(2,c)$.]{\includegraphics[width=.48\textwidth]{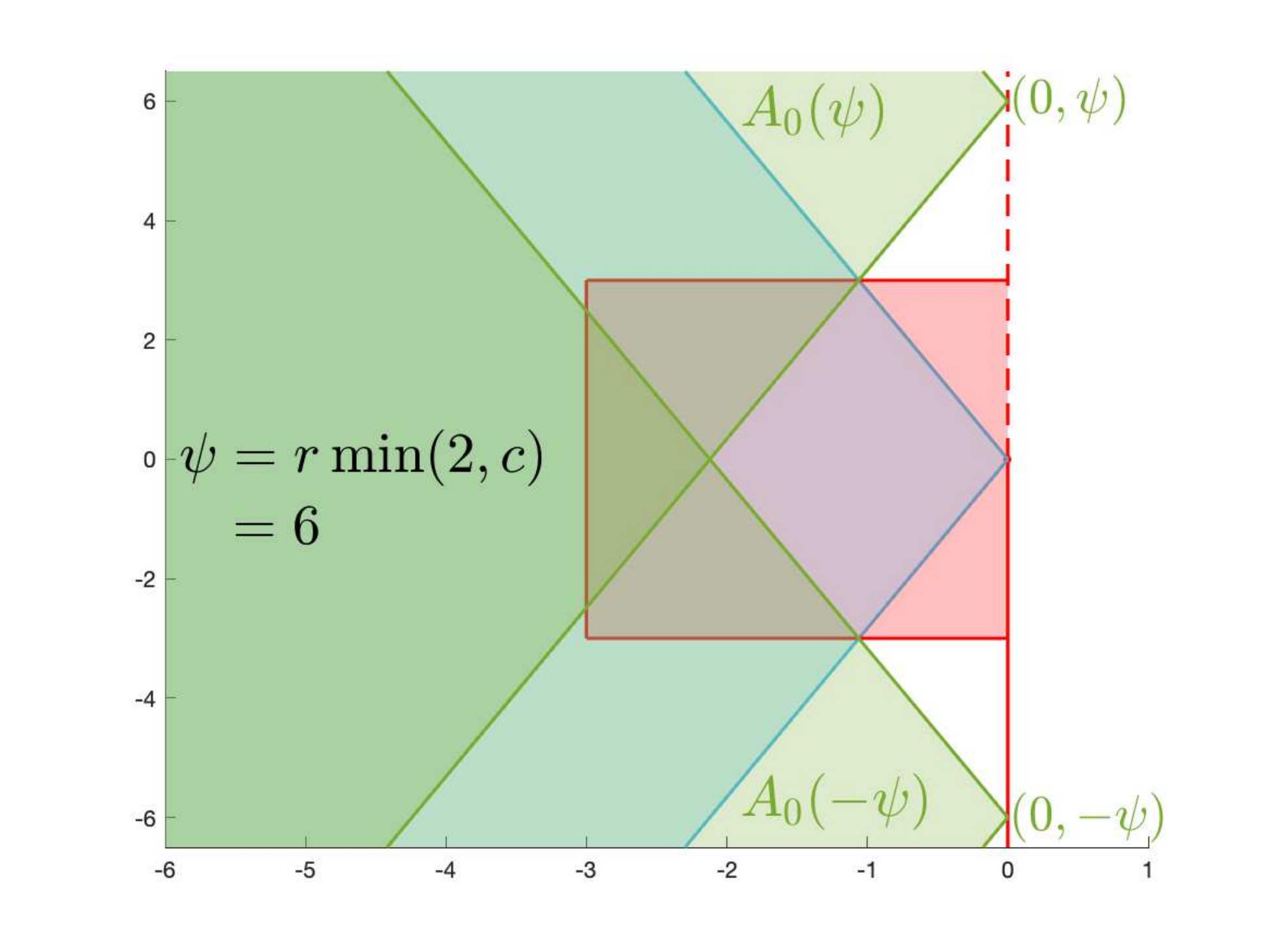}} \hspace{.02\textwidth}
			\subfigure[\label{Plot5} Integration set of $\bb{Z}_0^{(\psi)}(0)$ together with the complement of $V_{(0,0)}^r$.]{\includegraphics[width=0.48\textwidth]{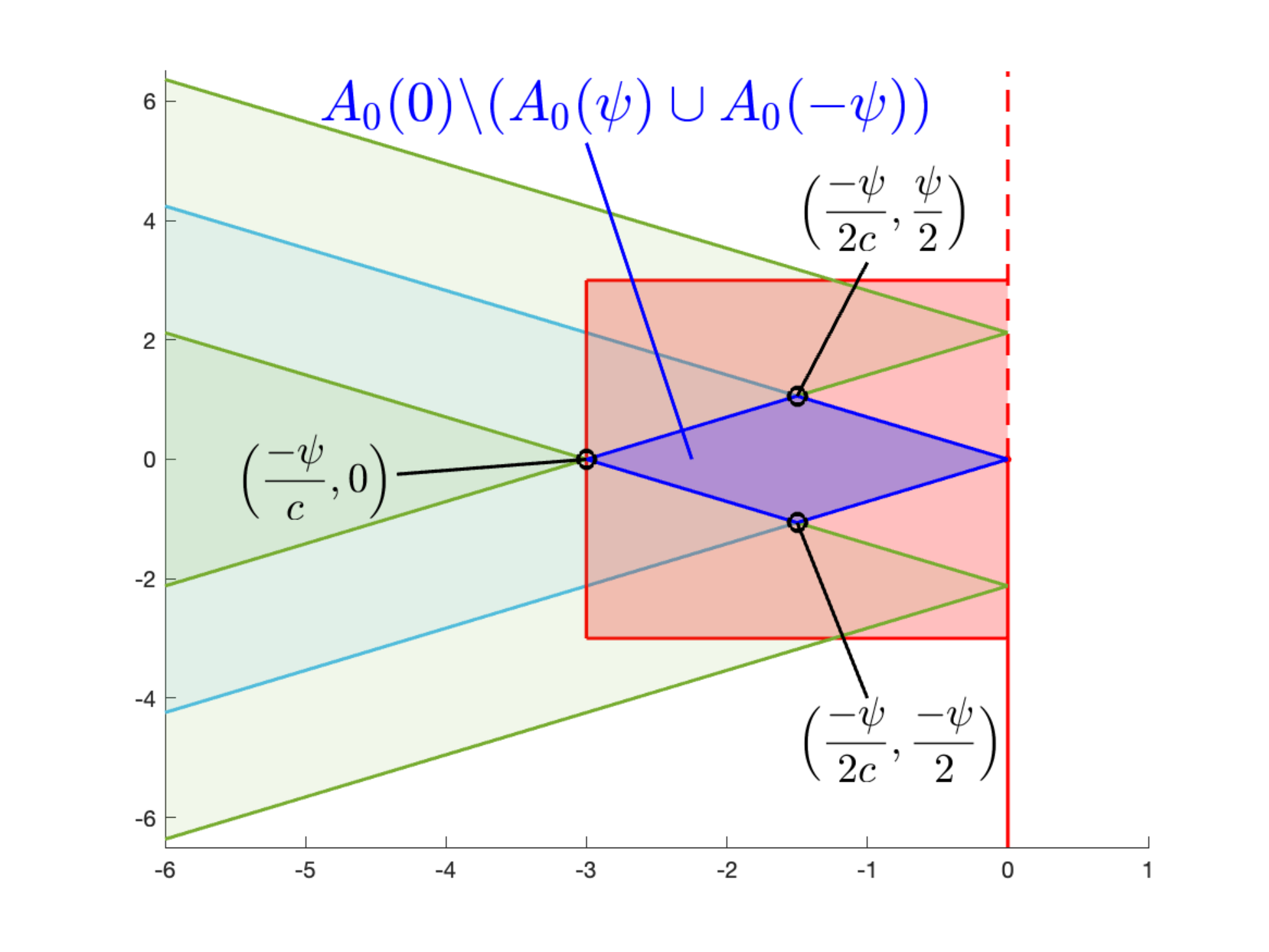}} \hspace{.02\textwidth}
			\subfigure[\label{Plot6} Integration set of $\bb{Z}_0^{(\psi)}(0)$ together with the complement of $V_{(0,0)}^r$.]{\includegraphics[width=.48\textwidth]{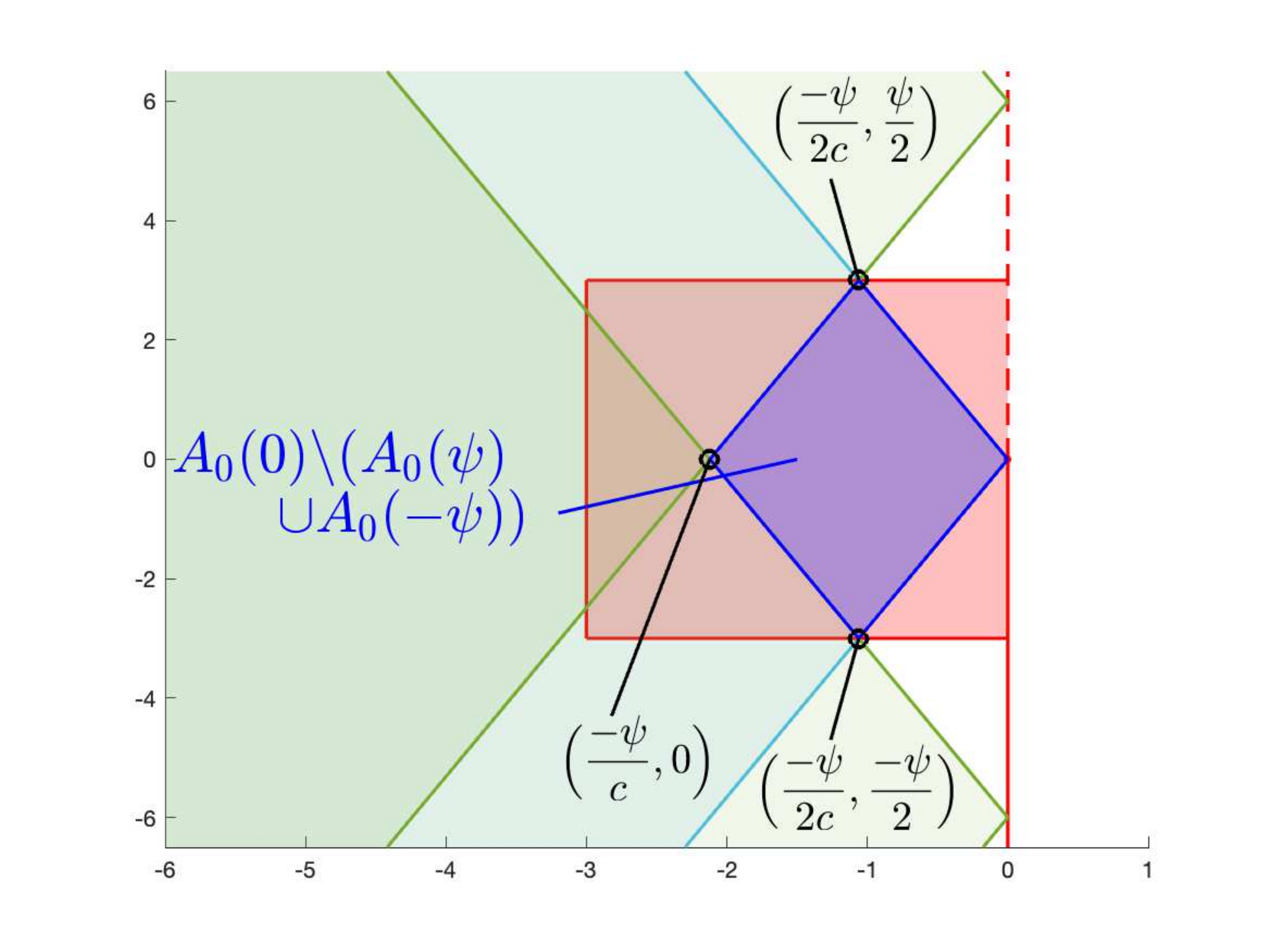}}\vspace{-0.2cm}
			\caption{Exemplary integration set and truncated integration set of an MMAF $\bb{Z}_{t}(x)$ for $(t,x)=(0,0)$.}\label{figure:truncatedformstou}
		\end{figure}

		\item[(ii)] In this proof, we indicate the spatial components and write $ x_{i_a}=(y_{i_a},z_{i_a})\in\R\times\R$ and $ x_{j_b}=(y_{j_b},z_{j_b})\in\R\times\R$ for $a\in\{1,\ldots,u\}$ and $b\in\{1,\ldots,v\}$. Without loss of generality, let us then determine the truncated set when $(t_{j_b},y_{j_b},z_{j_b})=(0,0,0)$. To this end, we use four additional ambit sets that are translated by a value $\psi>0$ along both spatial axis, namely, the cones $A_{0}(\psi,\psi)$, $A_{0}(\psi,-\psi)$, $A_{0}(-\psi,\psi)$ and $A_{0}(-\psi,-\psi)$, as illustrated in Figure \ref{figure:truncatedformstoud2}-(c) for $c\leq1$ and in Figure \ref{figure:truncatedformstoud2}-(d) for $c>1$). Then, we set the truncated integration set to $B_0^{\psi}(0)=A_{0}(0,0)\backslash( A_{0}(\psi,\psi)\cup A_{0}(\psi,-\psi)\cup A_{0}(-\psi,\psi)\cup A_{0}(-\psi,-\psi))$.
		Since $(t_{i_a},y_{i_a},z_{i_a})\in V_{(0,0,0)}^r$, it is sufficient to choose $\psi$ such that the integration set of $\bb{Z}_{0}^{(\psi)}(0,0)$ is a subset of $(V_{(0,0,0)}^r)^c$, i.e. \begin{align}\label{eq:conditiononB}
			\sup_{b\in B_0^{\psi}(0)} \lVert b \rVert_\infty \leq r.
		\end{align}
		In the following we prove that the choice $\psi = r\min(1,c/\sqrt{2})$ is sufficient for (\ref{eq:conditiononB}) to hold. We investigate cross sections of the truncated integration set $B_0^{\psi}(0)$ along the time axis. For a fixed time point $t$, we call this cross-section $B^t$ and, similarly, we denote the cross-section of an ambit set by $A^t$. Note that the cross sections of our ambit sets along the time axis are circles with radius $|ct|$ (see also Figure \ref{figure:crosssections}).
		\begin{enumerate}
			\item[$t \in \Big{(} \tfrac{-\psi}{\sqrt{2}c},0 \Big{]}$:] As the distance between the center of the circle $A_0^t(0,0)$ and the centers of the circles $A_0^t(\psi,\psi)$, $A_0^t(-\psi,\psi)$, $A_0^t(\psi,-\psi)$, $A_0^t(-\psi,-\psi)$ is $\sqrt{2}\psi$, respectively, the set $A_{0}^t(0,0)$ (which is a circle with radius $|ct|$) is disjoint from every of the additional ambit sets' cross-sections at $t$ and hence $B^t= A^t_0(0,0)$ (see Figure \ref{figure:crosssections}-(a)). Clearly, we obtain 
			\begin{align}\label{eq:supB1}
				\sup_{t \in  \big{(}-\psi/(\sqrt{2}c),0 \big{]} } 
				\sup_{b\in B^t}\lVert b \rVert =\sup_{t \in  \big{(}-\psi/(\sqrt{2}c),0 \big{]} }  \max(c|t|,|t|)= \max\left(\frac{\psi}{\sqrt{2}},\frac{\psi}{\sqrt{2} c} \right).
			\end{align}
			\item[$t \in \Big{(} \tfrac{-\psi}{c},\tfrac{-\psi}{\sqrt{2}c} \Big{]}$:] For such $t$ the set $A_{0}^t(0,0)$ intersects with every additional ambit sets' cross-section (see Figure \ref{figure:crosssections}-(b)). However, as the additional ambit sets' cross-sections do not intersect with each other, the point $p_1(t)=(t,c|t|,0)\in B^t$ on the boundary of $A_0^t(0,0)$ (see the red point in Figure \ref{figure:crosssections}-(b)) is not excluded from $B^t$ by any additional ambit set. Note that symmetry makes it sufficient to look at $p_1(t)$. Hence, we obtain 
			\begin{align}\label{eq:supB2}
				\sup_{t \in  \big{(}-\psi/c, -\psi/(\sqrt{2}c) \big{]} } \sup_{b\in B^t}\lVert b \rVert =\sup_{t \in  \big{(}-\psi/c, -\psi/(\sqrt{2}c) \big{]} } \lVert p_1(t) \rVert = \max\left(\psi,\frac{\psi}{c} \right).
			\end{align}
			\item[$t \in \Big{(} \tfrac{-\sqrt{2}\psi}{c} ,\tfrac{-\psi}{c} \Big{]}$:] For such $t$ the set $A_{0}^t(0,0)$ intersects with every additional ambit sets' cross-section. Such intersection additionally restrict $A_0^t(0,0)$ (see Figure \ref{figure:crosssections}-(c)). Straightforward calculations show that the point where the boundaries of $A_0^t(-\psi,\psi)$ and $A_0^t(-\psi,\psi)$ as well as the set $A_0^t(0,0)$ intersect, say $p_2(t)$, is given by $(t,0,\psi - \sqrt{(ct)^2-\psi^2})$ (see red point in Figure \ref{figure:crosssections}-(c)). Note that symmetry makes it sufficient to look at $p_2(t)$. We obtain 
			\begin{align}\label{eq:supB3}
				\sup_{t \in  \big{(} -\sqrt{2}\psi/c,-\psi/c \big{]} } \sup_{b\in B^t}\lVert b \rVert =\sup_{t \in  \big{(} -\sqrt{2}\psi/c,-\psi/c \big{]} }  \lVert p_2(t) \rVert = \max\left(\psi,\frac{\sqrt{2}\psi}{c} \right).
			\end{align}
			\item[$t \leq\tfrac{-\sqrt{2}\psi}{c}$:] In the following, we show that for such $t$, the set $A^t(0,0)$ is entirely included in the union of the additional ambit sets' cross sections. Clearly, this is true if the upper point where the boundaries of $A_0^t(-\psi,\psi)$ and $A_0^t(-\psi,\psi)$ intersect, say $p_3(t)$, is outside of $A_0^t(0,0)$ (see the red point in Figure \ref{figure:crosssections}-(d)). Note that symmetry makes it sufficient to look at $p_3(t)$.  As straightforward calculations show that $p_3(t)= (t,0,\psi + \sqrt{(ct)^2-\psi^2})$, this is true if $\psi + \sqrt{(ct)^2-\psi^2} \geq c |t|$, or equivalently $(\psi + \sqrt{(ct)^2-\psi^2})^2 \geq (ct)^2$. Moreover, we have
			\begin{align}\label{eq:supB4}
				\psi^2 + 2\psi\sqrt{(ct)^2-\psi^2} + (ct)^2-\psi^2\geq (ct)^2 \iff \psi\geq 0.
			\end{align}
		\end{enumerate}
		In view of condition (\ref{eq:conditiononB}) we combine (\ref{eq:supB1}), (\ref{eq:supB2}) and (\ref{eq:supB3}) and set $\psi = r \min(1,c/\sqrt{2})$, which also satisfies (\ref{eq:supB4}).
		
		In addition to the cross sectional views from Figure \ref{figure:crosssections}, we give a full three-dimensional view of the set $B_0^{\psi}(0)$ for $c\leq1$ in Figure \ref{figure:truncatedformstoud2}-(e) and for $c>1$ in Figure \ref{figure:truncatedformstoud2}-(f) that highlight the points on the boundary of $B_0^{\psi}(0)$ with maximal $\infty$-norm for $\psi = r \min(1,c/\sqrt{2})$. \\
		
		\noindent  Therefore, because of Proposition \ref{proposition:MMAmoments}-(ii), we can conclude that
		\begin{small}
			\begin{align*}
				&\theta_{lex}(r) \leq 2 Var(\Lambda^{\prime})^{1/2}\left( \int_0^{\infty} \int_{A_0(0,0)\cap (A_0(\psi,\psi)\cup A_0(-\psi,\psi)\cup A_0(\psi,-\psi)\cup A_0(-\psi,-\psi))}f(A,-s)^2 ds d\xi \pi(d\lambda) \right)^{1/2}\\
				&=2 Var(\Lambda^{\prime})^{1/2}\Bigg( \int_0^{\infty} \int_{A_0(0,0)\cap A_0(\psi,\psi) }f(A,-s)^2 ds d\xi \pi(d\lambda)\\
				&\qquad+\int_0^{\infty} \int_{A_0(0,0)\cap A_0(-\psi,\psi) }f(A,-s)^2 ds d\xi \pi(d\lambda)\\
				&\qquad+\int_0^{\infty} \int_{A_0(0,0)\cap A_0(\psi,-\psi) }f(A,-s)^2 ds d\xi \pi(d\lambda)\\
				&\qquad\qquad\qquad+\int_0^{\infty} \int_{A_0(0,0)\cap A_0(-\psi,-\psi) }f(A,-s)^2 ds d\xi \pi(d\lambda)\\
				&\qquad-\int_0^{\infty} \int_{A_0(0)\cap  A_0(\psi,-\psi)\cap A_0(-\psi,-\psi)}f(A,-s)^2 ds d\xi \pi(d\lambda)\\
				&\qquad -\int_0^{\infty} \int_{A_0(0)\cap  A_0(-\psi,\psi)\cap (A_0(\psi,-\psi)\cup A_0(-\psi,-\psi))}f(A,-s)^2 ds d\xi \pi(d\lambda)\\
				&\qquad -\int_0^{\infty} \int_{A_0(0)\cap  A_0(\psi,\psi)\cap (A_0(-\psi,\psi)\cup A_0(\psi,-\psi)\cup A_0(-\psi,-\psi))}f(A,-s)^2 ds d\xi \pi(d\lambda) \Bigg)^{1/2}\\
				&\leq 2  \sqrt{2Cov(\bb{Z}_0(0,0),\bb{Z}_0(\psi,\psi))+2Cov(\bb{Z}_0(0,0),\bb{Z}_0(\psi,-\psi))}.
			\end{align*}
		\end{small}
		
	\end{enumerate}
	which converges to zero as $r \to \infty$ for the dominated convergence theorem.
\end{proof}

\begin{figure}
	\centering
	\subfigure[\label{Plot15} Cross sections of the auxiliary ambit sets and $A_0^t(0,0)$ for $t=-1$ and $c=2$.]{\includegraphics[width=.40\textwidth]{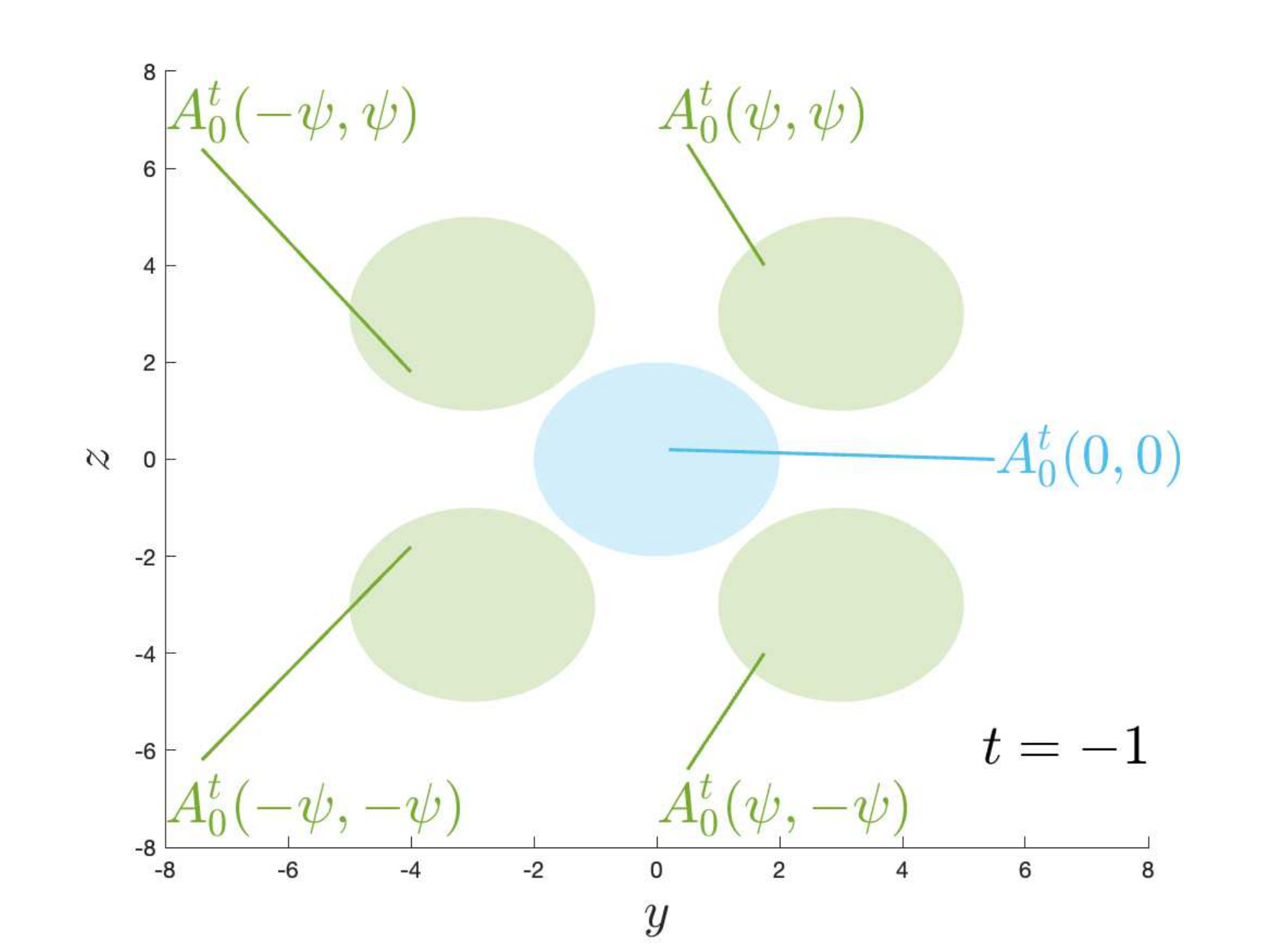}} \hspace{.02\textwidth}
	\subfigure[\label{Plot16} Cross sections of the auxiliary ambit sets and $A_0^t(0,0)$ for $t=-5/4$ and $c=2$.]{\includegraphics[width=0.40\textwidth]{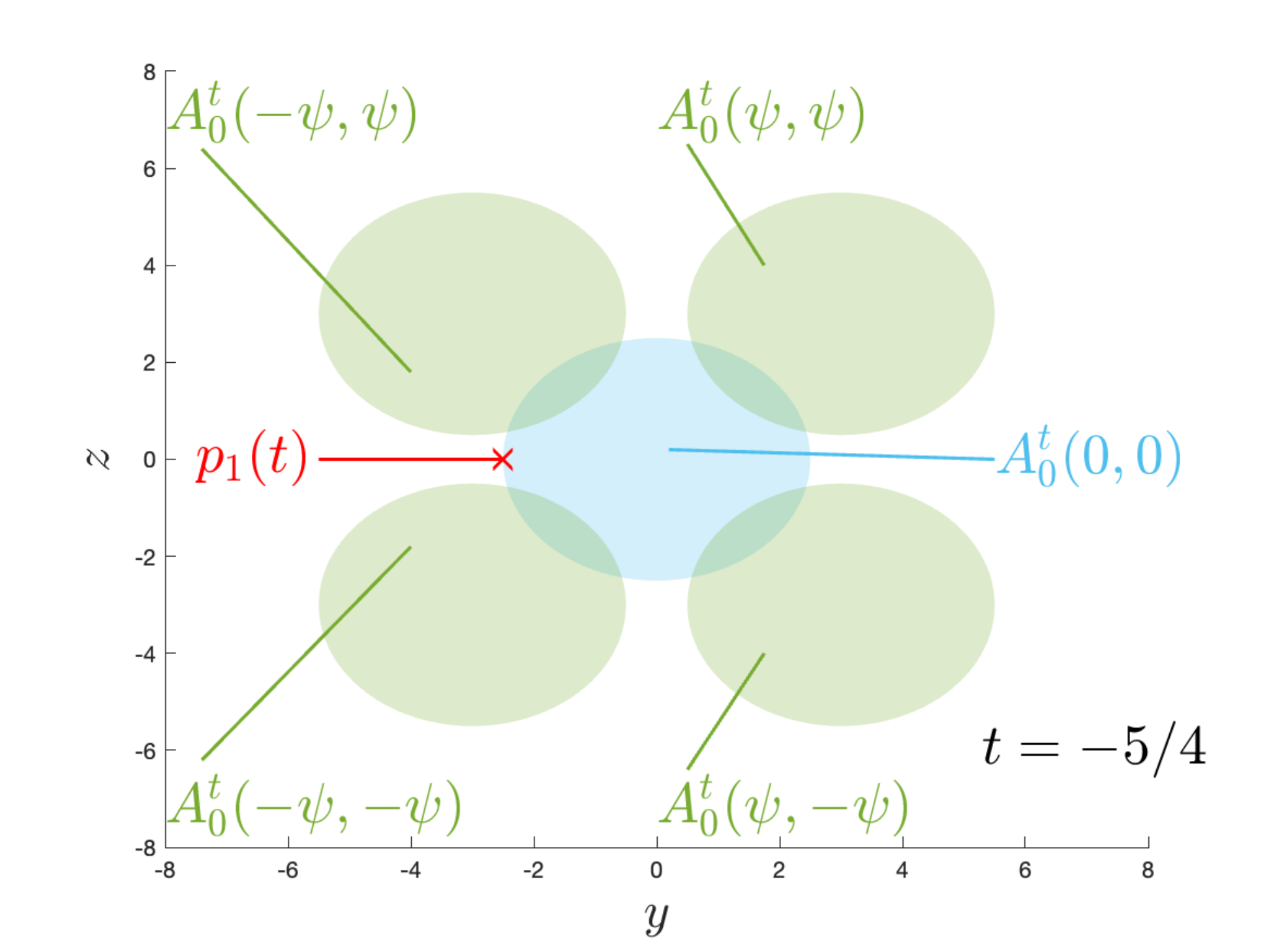}} \hspace{.02\textwidth}
	\subfigure[\label{Plot17} Cross sections of the auxiliary ambit sets and $A_0^t(0,0)$ for $t=-7/4$ and $c=2$.]{\includegraphics[width=0.40\textwidth]{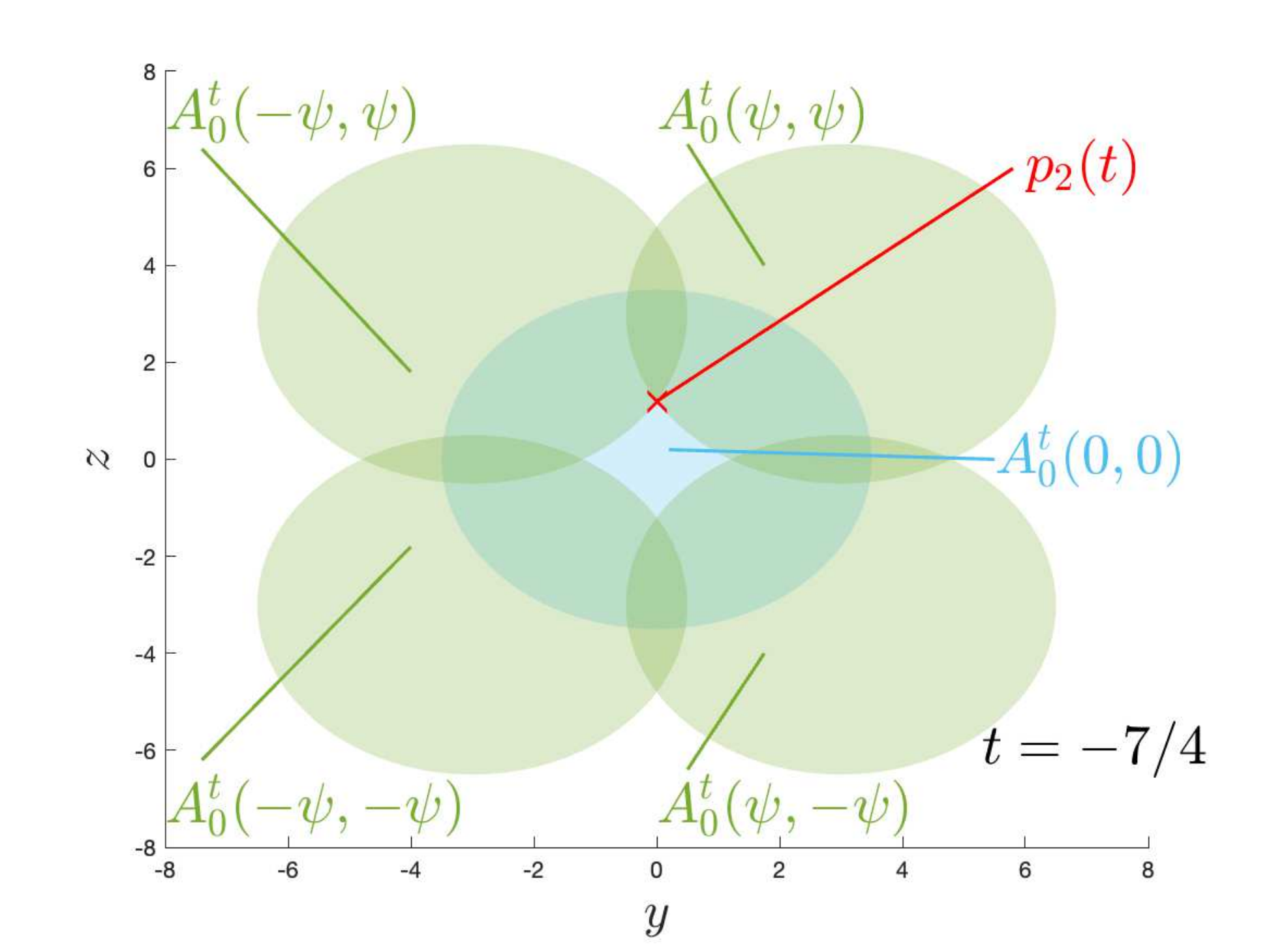}} \hspace{.02\textwidth}
	\subfigure[\label{Plot18} Cross sections of the auxiliary ambit sets and $A_0^t(0,0)$ for $t=-2.2$ and $c=2$.]{\includegraphics[width=.40\textwidth]{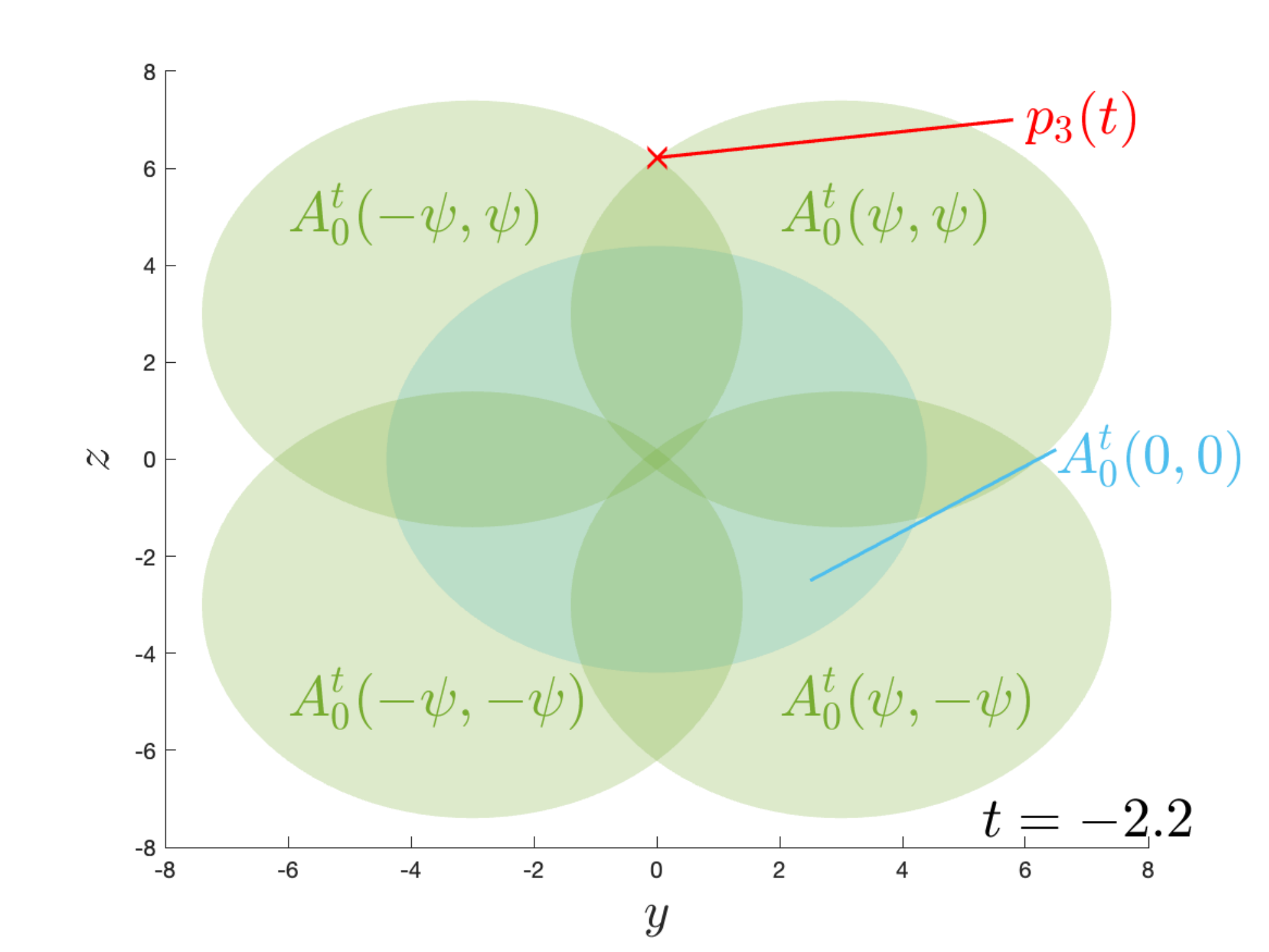}}\vspace{-0.2cm}
	\caption{Cross sections of the auxiliary ambit sets and $A_0^t(0,0)$ at different time points for $\psi =3$.}\label{figure:crosssections}
\end{figure}

\begin{figure}
	\centering
	\subfigure[\label{Plot1d2} Integration set $A_0(0,0)$ for $c=1/\sqrt{2}$ together with the complement of  $V_{(0,0,0)}^r$ for $r=3$.]{\includegraphics[width=.48\textwidth]{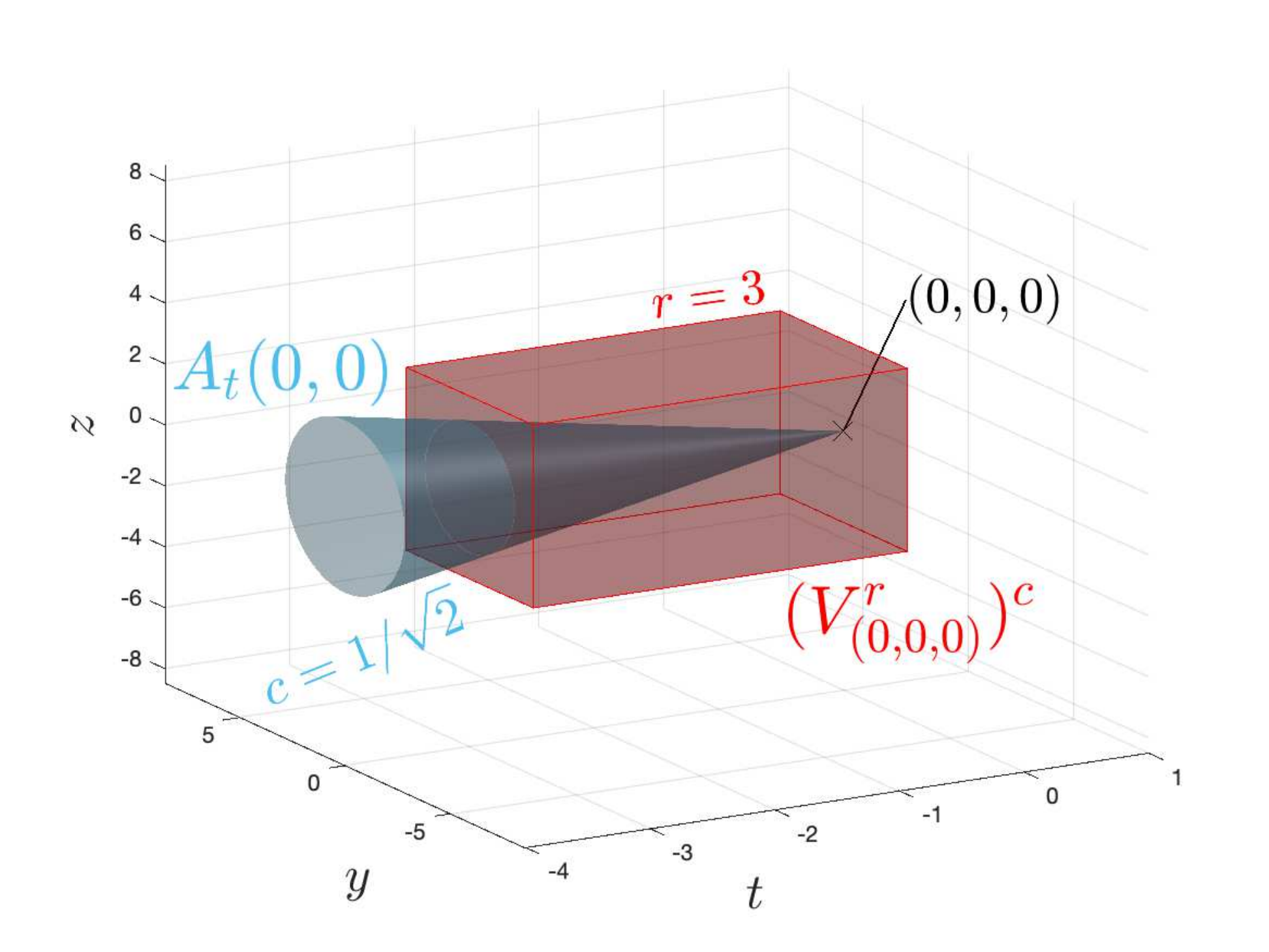}} \hspace{.02\textwidth}
	\subfigure[\label{Plot2d2}  Integration set $A_0(0,0)$ for $c=2$ together with the complement of  $V_{(0,0,0)}^r$ for $r=3$.]{\includegraphics[width=0.48\textwidth]{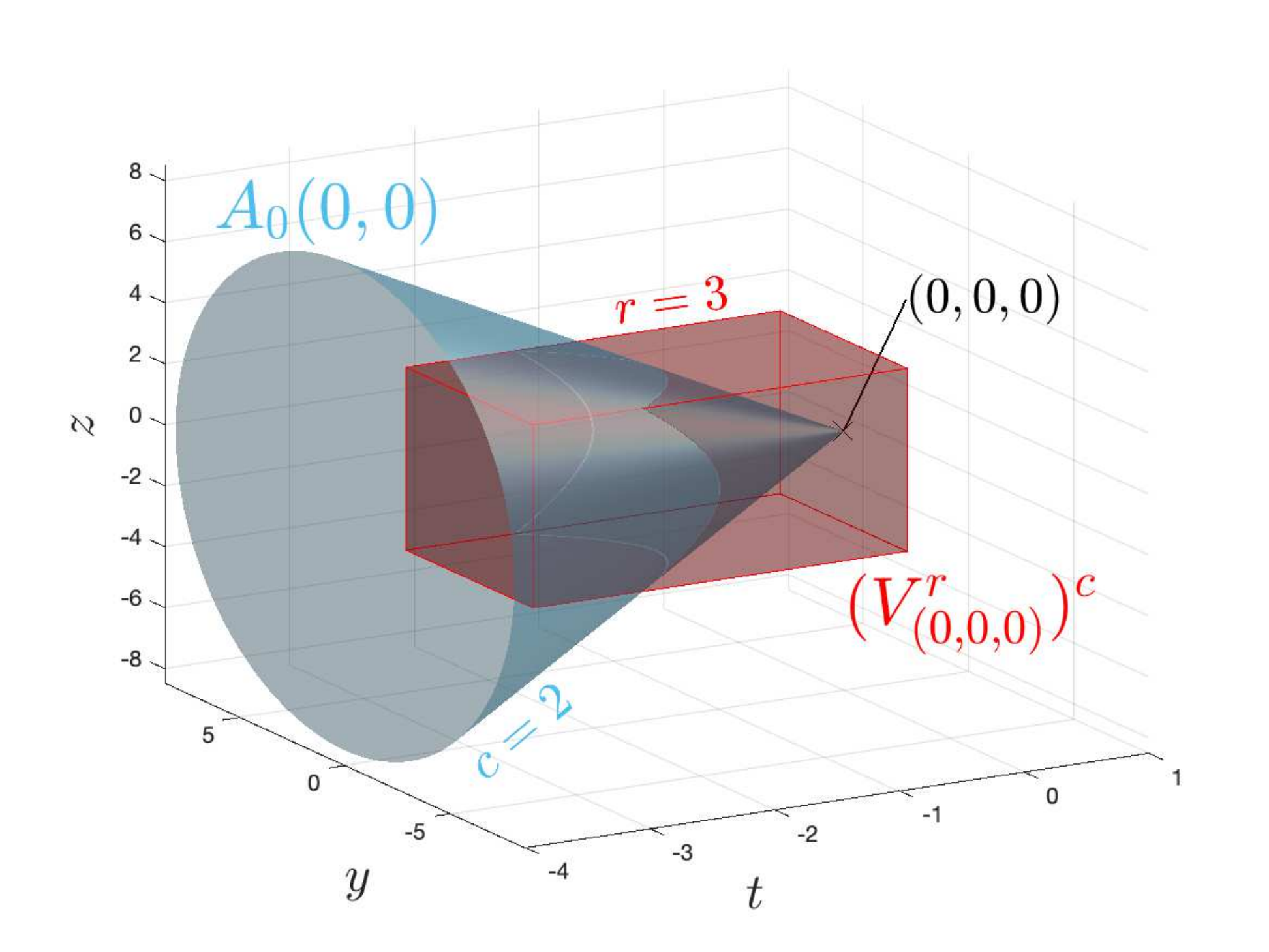}} \hspace{.02\textwidth}
	\subfigure[\label{Plot3d2} Integration set $A_0(0,0)$ together with $A_0(\psi,\psi)$, $A_0(-\psi,\psi)$, $A_0(\psi,-\psi)$ and $A_0(-\psi,-\psi)$ for $\psi=r\min(1,c/\sqrt{2})$.]{\includegraphics[width=.48\textwidth]{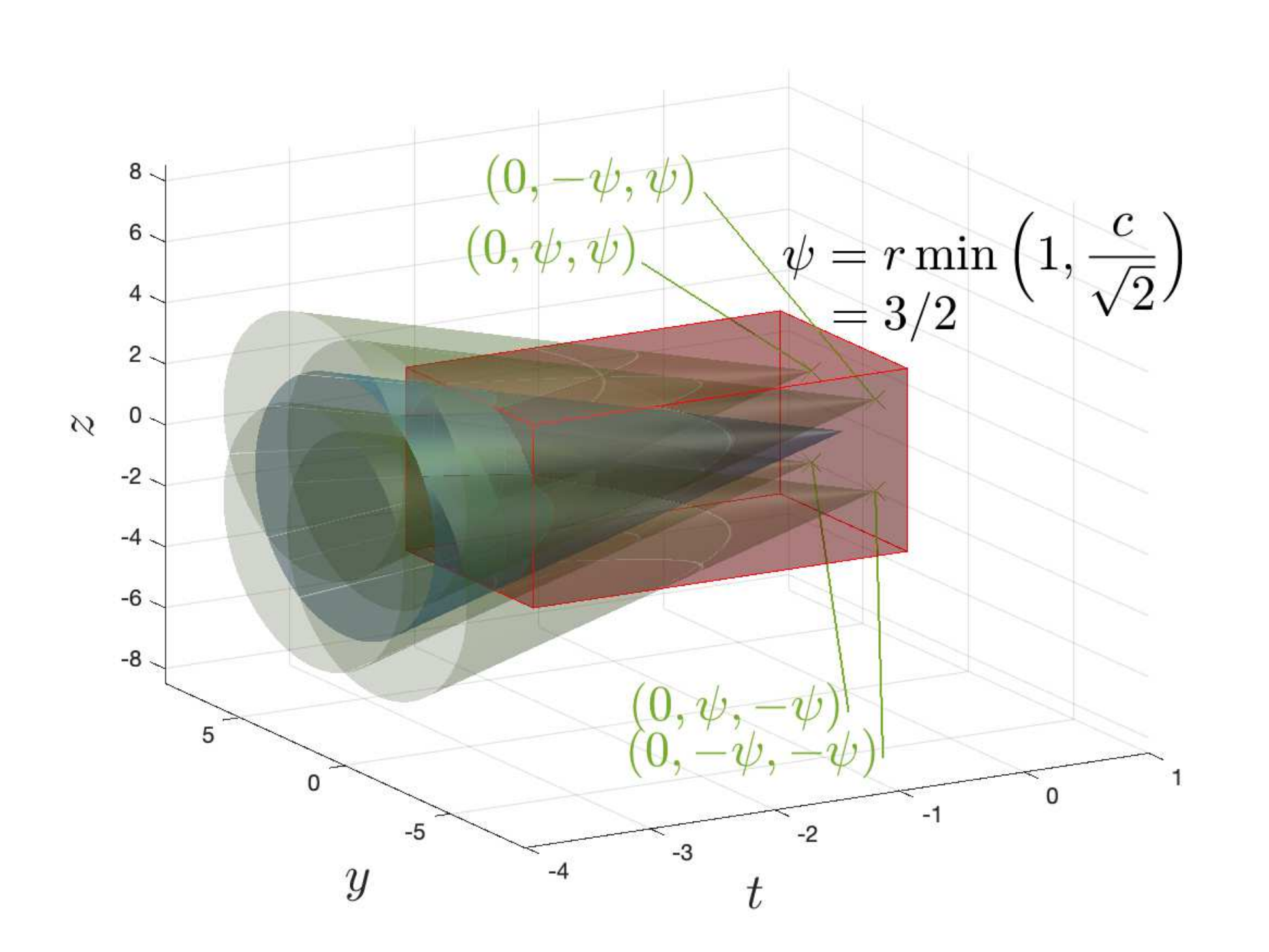}}\hspace{.02\textwidth}
	\subfigure[\label{Plot4d2} Integration set $A_0(0,0)$ together with $A_0(\psi,\psi)$, $A_0(-\psi,\psi)$, $A_0(\psi,-\psi)$ and $A_0(-\psi,-\psi)$ for $\psi=r\min(1,c/\sqrt{2})$.]{\includegraphics[width=.48\textwidth]{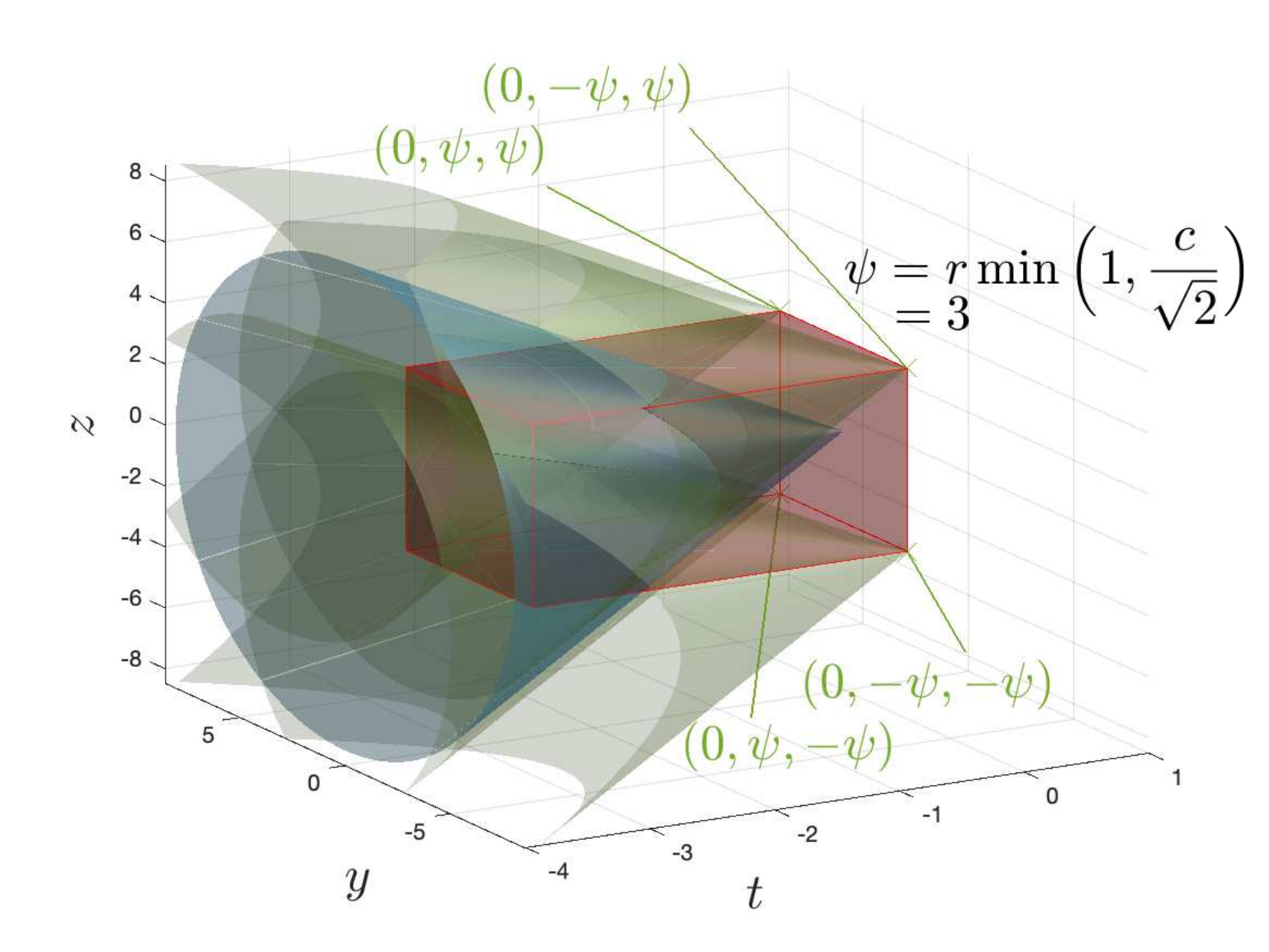}} \hspace{.02\textwidth}
	%\subfigure[\label{Plot5d2} Cross section of Figure (a) at $t=-3$. In particular, we visualize the light cones used to construct the truncated together with the complement of $V_{(t,y,z)}^r$.]{\includegraphics[width=0.48\textwidth]{}} \hspace{.02\textwidth}
	%\subfigure[\label{Plot6d2} Cross section of Figure (b) at $t=-3/2$. In particular, we visualize the light cones used to construct the truncated together with the complement of $V_{(t,y,z)}^r$.]{\includegraphics[width=.48\textwidth]{}}\vspace{-0.2cm}
	%\caption{Integration set and truncated integration set of an MSTOU process $\bb{Z}_{t}(x_1,x_2)$, exemplary for $(t,x_1,x_2)=(0,0,0)$.}\label{figure:truncatedformstou}
	%\end{figure}
	%\begin{figure}
	%\centering
	\subfigure[\label{Plot7d2} $B$ is the integration set of $\bb{Z}_0^{(\psi)}(0,0)$. In addition, we illustrate $A_0(-\psi,-\psi)$ for $c=1/\sqrt{2}$ together with the complement of  $V_{(0,0,0)}^r$ for $r=3$.]{\includegraphics[width=.48\textwidth]{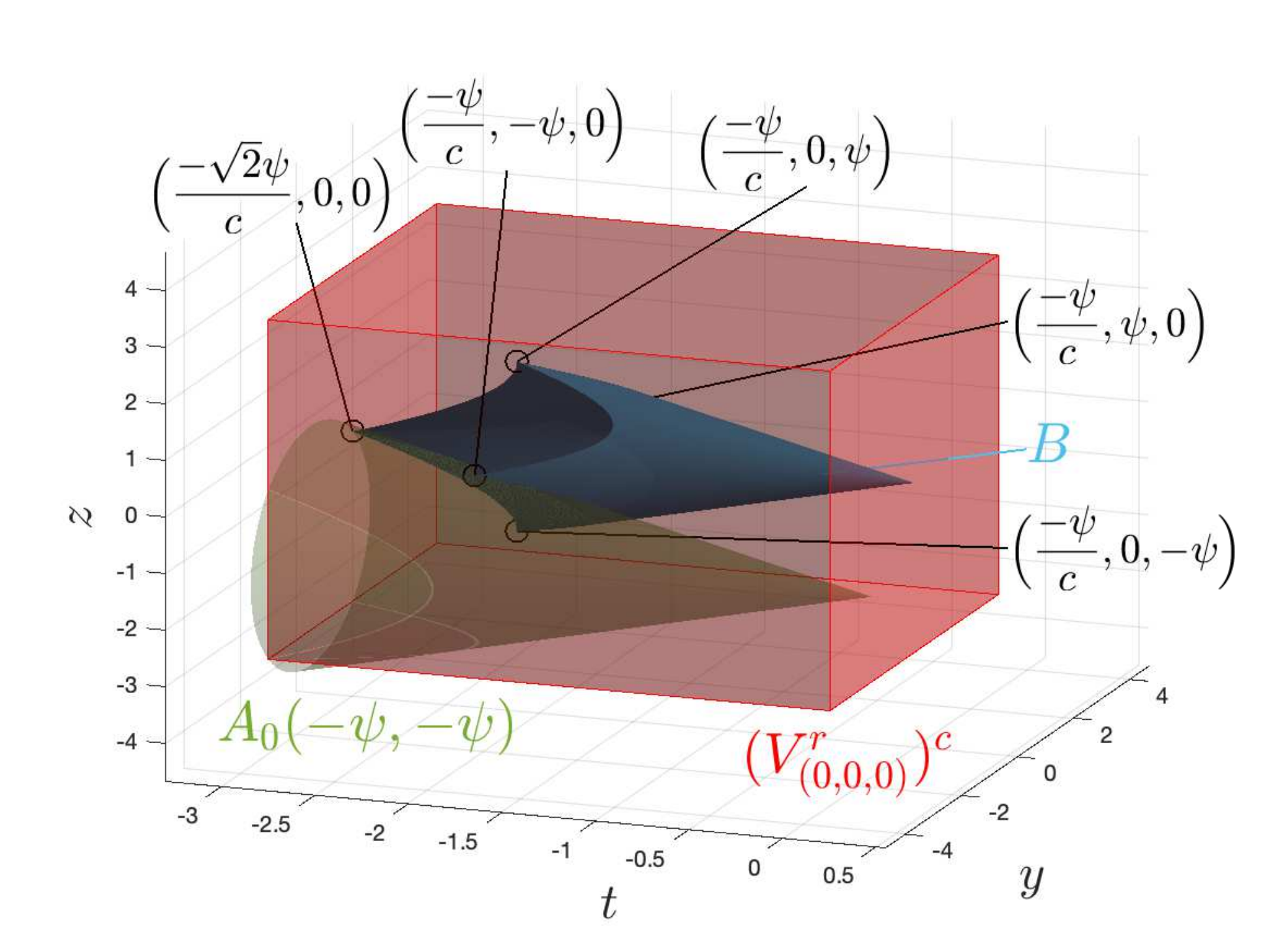}} \hspace{.02\textwidth}
	\subfigure[\label{Plot8d2} $B$ is the integration set of $\bb{Z}_0^{(\psi)}(0,0)$. In addition, we illustrate $A_0(-\psi,-\psi)$ for $c=2$ together with the complement of  $V_{(0,0,0)}^r$ for $r=3$.]{\includegraphics[width=0.48\textwidth]{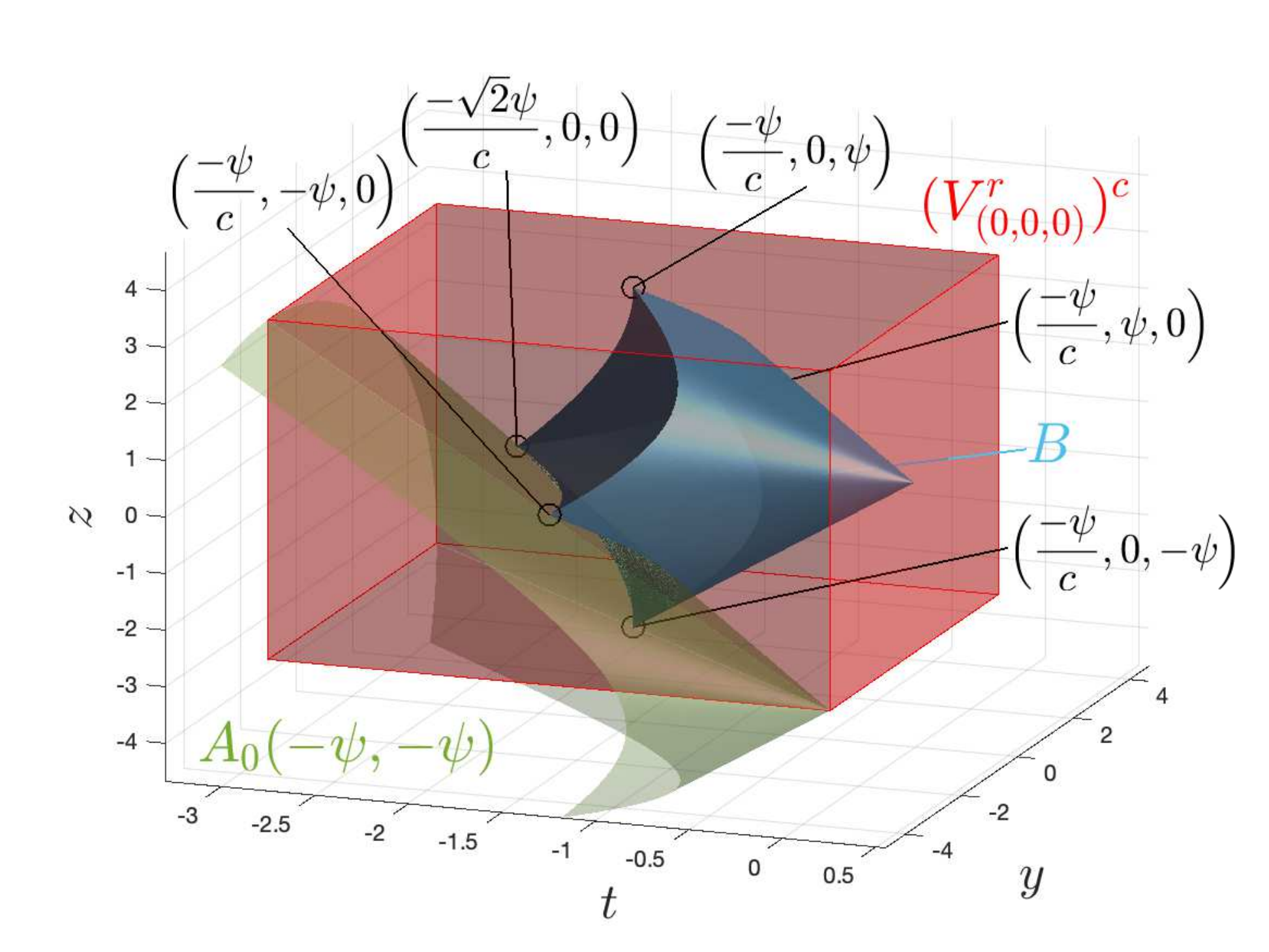}} \vspace{-0.5cm}
	\caption{Integration set and truncated integration set of an MMAF $\bb{Z}_{t}(y,z)$ for $d=2$.}\label{figure:truncatedformstoud2}
\end{figure}

\subsection{Proofs of Section \ref{sec4}}

\begin{proof}[Proof of Proposition \ref{heredithary}]
	We drop the bold notations indicating random fields and stochastic processes in the following.
	Let $h \in \mathcal{H}$, we call $L_i=L(h(X_i),Y_i)$ for $i \in \Z$, $Z_t^{(M)}(x):=Z_t(x) \vee (-M) \wedge M$ for $M>1$, and $L^{(M)}_i=L(h(X_i^{(M)}),Y_i^{(M)})$ where  
	\[
	X_i^{(M)}=L^{-(M)}_p(t_0+ia,x^*), \, \,\,\, \textrm{and} \,\,\,\,\, Y_i^{(M)}=Z_{t_0+ia}^{(M)}(x^*), \,\,\,\textrm{for $i \in \Z$},
	\]
	and
	\[
	L^{-(M)}_p(t,x^*)=\{Z_{s}^{(M)}(\xi): (s,\xi)\in \Z\times \mathbb{L}, \,\, \|x^*-\xi \| \leq c \, (t-s)\,\, \textrm{and}\,\, t-s\leq p \}.
	\]
	for $t=t_0+ia$ with $i \in \Z$.
	For $u \in \N$, $i_1\leq i_2 \leq \ldots \leq i_u < i_u+k = j$ with $k \in \N$, let us consider the marginal of the field
	
	\begin{equation}
		\label{sample_gen}
		\Big((X_{i_1},Y_{i_1}),\ldots, (X_{i_u},Y_{i_u}),(X_{j},Y_{j}) \Big),
	\end{equation}
	and let us define
	\[
	\Gamma=\{\textrm{$(t_i,x_i) \in \Z^{1+d}$: $Z_{t_i}(x_i) \in L_p^-(t_0+{i_s}a,x^*)$ or $ (t_i,x_i)=(t_0+{i_s}a,x^*)$ for $s=1,\ldots,u$}  \},
	\]
	and 
	\[
	\Gamma^{\prime}=\{ \textrm{$(t_i,x_i) \in \Z^{1+d}$: $Z_{t_i}(x_i) \in L_p^-(t_0+{j}a,x^*)$ or $ (t_i,x_i)=(t_0+{j}a,x^*)$} \}.	
	\]
	Then $r=dist(\Gamma,\Gamma^{\prime})$. In particular $\Gamma \in V_{\Gamma^{\prime}}^{r}$, and  $r =(j-i_u)a-p$.
	For $F \in \mathcal{G}^*_u$ and $G \in \mathcal{G}_1$, then
	\begin{align}
		|Cov(F(L_{i_1}, \ldots,L_{i_u}), G(L_j))|& \label{total} \\
		\leq &|Cov(F(L_{i_1}, \ldots,L_{i_u}), G(L_j)-G(L_j^{(M)}))| \label{quattro_gen}\\
		& + |Cov(F(L_{i_1}, \ldots,L_{i_u}), G(L_j^{(M)}))|. \label{cinque_gen}
	\end{align}
	
	The summand (\ref{quattro_gen}) is less than or equal to
	\begin{align*}
		2 \|F\|_{\infty} Lip(G) \E[|L_j-L_j^{(M)}|] &\leq 2 \|F\|_{\infty} Lip(G) ( \E[|Y_j-Y_j^{(M)}|] +\E[|h(X_j)-h(X_j^{(M)})|] ) \\
		& \leq 2 \|F\|_{\infty} Lip(G) (Lip(h) a(p,c)+1) \E[|Z_{t_1}(x_1)-Z_{t_1}^{(M)}(x_1)|]
	\end{align*}
	by stationarity of the field Z, and because $L$ and $h$ are Lipschitz functions.
	Moreover, the function $G(L_j^{(M)})$ belongs to $\mathcal{G}_{a(p,c)+1}$. Let $(X,Y), (X^{\prime},Y^{\prime}) \in \R^{a(p,c)+1}$, then
	\begin{align*}
		|G(L(h(X^{(M)}), Y^{(M)})) & -G(L(h(X^{\prime(M)}), Y^{\prime(M)})) | \leq Lip(G) |L(h(X^{(M)}), Y^{(M)})-L(h(X^{\prime(M)}), Y^{\prime(M)})|\\
		&\leq Lip(G) (|h(X^{(M)}))-h(X^{\prime(M)})| + |Y^{(M)}-Y^{\prime(M)}|\\
		&\leq Lip(G) (Lip(h)+1) (\|X-X^{\prime}\|_1 + |Y-Y^{\prime}|),
	\end{align*}
	and $Lip(G(L_j^{(M)})) \leq  Lip(G) (Lip(h)+1) $.
	
	Because $Z$ is a $\theta$-lex weakly dependent random field, (\ref{cinque_gen}) is less than or equal to
	\[
	\tilde{d} \|F\|_{\infty}Lip(G)(Lip(h)+1) \theta_{lex}(r).
	\]
	We choose now $M=r$ and obtain that (\ref{total}) is less than or equal than
	\[
	\|F\|_{\infty} Lip(G) \tilde{d}(Lip(h)a(p,c)+1) \Big( \frac{2}{\tilde{d}} \E[|Z_{t_1}(x_1)-Z_{t_1}^{(r)}(x_1)|] +\theta_{lex}(r) \Big).
	\]
	The quantity above converges to zero as $r \to \infty$. Therefore, $(L_i)_{i \in \Z}$ is a $\theta$-weakly dependent process.
\end{proof}

\begin{Remark}
	Note that when working with a general $\theta$-weakly dependent random field, as for example in Proposition \ref{heredithary}, we do not employ the truncated random field defined in (\ref{truncated_rf}). This happens because a $\theta$-lex weakly dependent field is not generally defined as an integral driven by a L\'evy basis on an ambit set. The field (\ref{truncated_rf}) can just be employed in the MMAF framework. In a more general framework, without further information on the structure of the field, we can just employ the truncated random field $\bb{Z}_t^{(M)}(x):=\bb{Z}_t(x) \vee (-M) \wedge M$. 
\end{Remark}

\begin{proof}[Proof of Proposition \ref{mmaf_absolute}]
	We drop the bold notations indicating random fields and stochastic processes in the following.
	We call $L_i=L(h(X_i),Y_i)$ for $i \in \Z$, as defined in Proposition \ref{proposition:mmathetaweaklydep}. Moreover we employ the truncated field $Z_t^{(\psi)}(x)$ and define $L^{(\psi)}=(L(h(X_i^{(\psi)}),Y_i^{(\psi)}))_{i \in \Z}$ where  
	\[
	X_i^{(\psi)}=L^{-(\psi)}_p(t_0+ia,x^*), \, \,\,\, \textrm{and} \,\,\,\,\, Y_i^{(\psi)}=Z_{t_0+ia}^{(\psi)}(x^*), \,\,\,\textrm{for $i \in \Z$},
	\]
	and
	\[
	L^{-(\psi)}_p(t,x^*)=\{Z_{s}^{(\psi)}(\xi): (s,\xi)\in \Z\times \mathbb{L}, \,\, \|x^*-\xi \| \leq c \, (t-s)\,\, \textrm{and}\,\, t-s\leq p \}.
	\]
	for $t=t_0+ia$ with $i \in \Z$.
	For $u \in \N$, $i_1\leq i_2 \leq \ldots \leq i_u < i_u+k = j$ with $k \in \N$, let us consider the marginal of the field
	
	\begin{equation}
		\label{sample}
		\Big((X_{i_1},Y_{i_1}),\ldots, (X_{i_u},Y_{i_u}),(X_{j},Y_{j}) \Big),
	\end{equation}
	and let us define
	\[
	\Gamma=\{\textrm{$(t_i,x_i) \in \Z^{1+d}$: $Z_{t_i}(x_i) \in L_p^-(t_0+{i_s}a,x^*)$ or $ (t_i,x_i)=(t_0+{i_s}a,x^*)$ for $s=1,\ldots,u$}  \},
	\]
	and 
	\[
	\Gamma^{\prime}=\{ \textrm{$(t_i,x_i) \in \Z^{1+d}$: $Z_{t_i}(x_i) \in L_p^-(t_0+{j}a,x^*)$ or $ (t_i,x_i)=(t_0+{j}a,x^*)$} \}.	
	\]
	Then $r=dist(\Gamma,\Gamma^{\prime})$. In particular $\Gamma \in V_{\Gamma^{\prime}}^{r}$, and  $r =(j-i_u)a-p$.
	For $F \in \mathcal{G}^*_u$ and $G \in \mathcal{G}_1$, then
	\begin{align}
		|Cov(F(L_{i_1}, \ldots,L_{i_u}), G(L_j))|& \nonumber \\
		\leq &|Cov(F(L_{i_1}, \ldots,L_{i_u}), G(L_j)-G(L_j^{(\psi)}))| \label{quattro}\\
		& + |Cov(F(L_{i_1}, \ldots,L_{i_u}), G(L_j^{(\psi)}))|. \label{cinque}
	\end{align}
	The summand (\ref{cinque}) is equal to zero because $\Gamma \in V_{\Gamma^{\prime}}^{r}$, see proof of Proposition \ref{example:spatiotemporaldata} for more details about this part of the proof.
	We can then bound (\ref{quattro}) from above by
	\begin{align}
		2 \|F\|_{\infty} Lip(G) \E[|L_j-L_j^{(\psi)}|] 
		& \leq 2 \|F\|_{\infty} Lip(G) (\E[|Y_j-Y_j^{(\psi)}|] + \E[|h(X_j)-h(X_j^{(\psi)})|]) \label{bound1}\\
		& \leq 2 \|F\|_{\infty} Lip(G) (Lip(h)a(p,c)+1) \E[|Z_{t_1}(x_1)-Z_{t_1}^{(\psi)}(x_1)|] \label{bound2}
	\end{align}
	where (\ref{bound1}) holds because $L$ is a function with Lipschitz constant equal to one, and (\ref{bound2}) holds given that $h$ is Lipschitz. 
	
	When we work with linear functions, we consider $h$ parameterized with respect to the set $B \in \R^{a(p,c)}$
	\begin{equation}
		\label{bound4}
		\E[|h_{\beta}(X_j)-h_{\beta}(X_j^{(\psi)})\|]=\E \Big[\Big|\sum_{l=1}^{a(p,c)} \beta_{1,l} (Z_{t_l}(x_l)-Z_{t_l}^{(\psi)}(x_l)) \Big| \Big].
	\end{equation}
	By stationarity of the field $Z$, we have that (\ref{bound4}) is smaller or equal than $\|\beta_1\|_1 \E[|Z_{t_1}(x_1)-Z_{t_1}^{(\psi)}(x_1)|]$.
	Overall, we have that (\ref{quattro}) is smaller or equal than
	\begin{align*}
		&2 \|F\|_{\infty} Lip(G) (Lip(h)a(p,c) +1) \E[|Z_{t_1}(x_1)-Z_{t_1}^{(\psi)}(x_1)|] \,\,\, \textrm{for $h$ a Lipschitz function,} 
	\end{align*}
	or it is smaller or equal than
	\begin{align*}
		&2 \|F\|_{\infty} Lip(G) (\|\beta_1\|_1 +1) \E[|Z_{t_1}(x_1)-Z_{t_1}^{(\psi)}(x_1)|] \,\,\, \textrm{for $h_{\beta}$ a linear function}.
	\end{align*}
	Because of the properties of the truncated field $Z_{t_1}^{(\psi)}(x_1)$, we have that the above bounds converge to zero as $r \to \infty$. Therefore, $L$ is a $\theta$-weakly dependent process.
\end{proof}

The proof of Theorem \ref{new} uses a blocks technique introduced in the papers \cite{Cmixing} and \cite{Thelight}. Such results are based on the use of several lemmas. To ease the complete understanding of the proof of Theorem \ref{new}, we prove these Lemmas below, given that they undergo several modifications in our framework. Let us start by partitioning a set $\{1,2,\ldots,m\}$ into $k$ blocks. Each block will contain $l=\lfloor\frac{m}{k} \rfloor$ terms. Let $h=m-k \,l<k$ denote the remainder when we divide $m$ by $k$. We now construct $k$ blocks such that the number of elements in the $j$th-block is defined by
\[
\bar{l}_j=\left\{ \begin{array}{ll}
	l+1 \, \,\, &\textrm{if} \, \,\,j=1,2,\ldots,h	\\
	l \, \, &\textrm{if} \,\,\, j=h+1,\ldots,k
\end{array} \right..
\]
Let $(\bb{U}_i)_{i \in \Z}$ a stationary process, and $\bb{V}_m=\sum_{i=1}^m \bb{U}_i$, for $j=1,\ldots, k$ we define the $j$-th block as 
\[
\bb{V}_{j,m}=\bb{U}_j+\bb{U}_{j+k}+\ldots \bb{U}_{j+(\bar{l}_j-1)\,k}=\sum_{i=1}^{\bar{l}_j} \bb{U}_{j+(i-1) \,k}
\]
such that
\[
\bb{V}_m=\sum_{j=1}^k \bb{V}_{j,m}=\sum_{j=1}^k \sum_{i=1}^{\bar{l}_j} \bb{U}_{j+(i-1) \,k}.
\]
For $j=1,2,\ldots,k$, let us define $p_j=\frac{\bar{l}_j}{m}$. It follows that $\sum_{j=1}^k p_j=\frac{1}{m} \sum_{j=1}^k \bar{l}_j=1$.

\begin{Lemma}
	\label{lem1}
	For all $s \in \R$
	\[
	\E\Big[\exp\Big(s\frac{\bb{V}_m}{m}\Big)\Big] \leq \sum_{j=1}^k p_j \E\Big[\exp\Big(s \frac{\bb{V}_{j,m}}{\bar{l}_j}\Big) \Big]
	\]
\end{Lemma}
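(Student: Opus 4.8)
The plan is to recognize that the normalized full sum $\bb{V}_m/m$ is exactly a convex combination of the normalized block sums $\bb{V}_{j,m}/\bar{l}_j$, and then to invoke Jensen's inequality for the convex map $t \mapsto \exp(st)$. This is a purely deterministic (pointwise) inequality followed by taking expectations, so no dependence structure of the underlying process is needed at this stage.

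First I would record the algebraic identity
\[
\frac{\bb{V}_m}{m} = \frac{1}{m}\sum_{j=1}^k \bb{V}_{j,m} = \sum_{j=1}^k \frac{\bar{l}_j}{m}\,\frac{\bb{V}_{j,m}}{\bar{l}_j} = \sum_{j=1}^k p_j\,\frac{\bb{V}_{j,m}}{\bar{l}_j},
\]
which holds $\omega$-by-$\omega$ and uses only the definition $\bb{V}_m=\sum_{j=1}^k \bb{V}_{j,m}$ together with $p_j=\bar{l}_j/m$. The key structural point, already established in the preamble preceding the lemma, is that the weights satisfy $p_j\geq 0$ and $\sum_{j=1}^k p_j = \frac{1}{m}\sum_{j=1}^k \bar{l}_j = 1$, so the right-hand side is a genuine finite convex combination of the random variables $\bb{V}_{j,m}/\bar{l}_j$.

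Next, for any fixed $s\in\R$ the function $t\mapsto \exp(st)$ is convex on $\R$, so the finite (discrete) form of Jensen's inequality applies pointwise to the convex combination above, yielding
\[
\exp\Big(s\,\frac{\bb{V}_m}{m}\Big) = \exp\Big(s\sum_{j=1}^k p_j\,\frac{\bb{V}_{j,m}}{\bar{l}_j}\Big) \leq \sum_{j=1}^k p_j\,\exp\Big(s\,\frac{\bb{V}_{j,m}}{\bar{l}_j}\Big).
\]
Taking expectations on both sides and using linearity of $\E[\cdot]$ together with the nonnegativity and constancy of the weights $p_j$ then gives the claimed bound
\[
\E\Big[\exp\Big(s\,\frac{\bb{V}_m}{m}\Big)\Big] \leq \sum_{j=1}^k p_j\,\E\Big[\exp\Big(s\,\frac{\bb{V}_{j,m}}{\bar{l}_j}\Big)\Big].
\]

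I do not expect any genuine obstacle here: the statement reduces entirely to convexity of the exponential and to the normalization $\sum_j p_j = 1$, which was arranged by the block partition. The only item requiring care is the bookkeeping that the blocks of sizes $\bar{l}_j$ partition $\{1,\dots,m\}$ so that the $\bb{V}_{j,m}$ indeed sum to $\bb{V}_m$ and the $\bar{l}_j$ sum to $m$; this is immediate from the definitions of $h$, $l$, and $\bar{l}_j$ given just above the lemma. Note in particular that the inequality holds for every $s\in\R$ and requires no assumption beyond integrability of the exponential moments, which is what makes it a convenient first step toward the exponential inequality of Theorem \ref{new}.
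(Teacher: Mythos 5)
Your proof is correct and coincides with the classical argument the paper invokes: the paper does not spell out a proof but attributes the lemma to Hoeffding, whose original argument is precisely your decomposition of $\bb{V}_m/m$ as the convex combination $\sum_{j=1}^k p_j \bb{V}_{j,m}/\bar{l}_j$ followed by Jensen's inequality for $t \mapsto \exp(st)$ and taking expectations. (Minor remark: since $\exp$ is nonnegative, the final step needs no integrability hypothesis at all—the inequality holds trivially if the right-hand side is infinite.)
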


The proof of the result above is due to Hoeffding \cite{Hoeffding}.

\begin{Lemma}
	\label{lem2}
	Let the assumptions of Theorem \ref{new} hold and define the process $(\bb{U}_i)_{i \in \Z}$ such that $\bb{U}_i:=f(\bb{Z}_i)-\E[f(\bb{Z}_i)]$. For all $j=1,2,\ldots,k$, $l \geq2$ and $0<s<\frac{3l}{|b-a|}$
	\begin{equation}
		\label{Mj}
		M_{\bar{l}_j}(s)= \Big | \E \Big[ \prod_{i=1}^{\bar{l}_j} \exp\Big(\frac{s\, \bb{U}_{j+(i-1) \,k} }{\bar{l}_j}\Big)\Big] -\prod_{i=1}^{\bar{l}_j} \E \Big[ \exp\Big(\frac{s\, \bb{U}_{j+(i-1) \,k} }{\bar{l}_j}\Big)  \Big] \Big|\leq \exp( s \, |b-a|) \theta(k) s
	\end{equation}
	The same result holds when defining the process $(\bb{U}_i)_{i \in \Z}$ for $\bb{U}_i=\E[f(\bb{X}_i)] -f(\bb{X}_i)$.
\end{Lemma}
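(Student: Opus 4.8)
The plan is to prove (\ref{Mj}) by a block-telescoping of the covariance, exploiting that within the $j$-th block the indices $j, j+k, \ldots, j+(\bar{l}_j-1)k$ are mutually separated by at least $k$, so that peeling off one factor at a time produces a covariance controlled by the single coefficient $\theta(k)$. Write $n=\bar{l}_j$ and $W_i=\exp(s\,\bb{U}_{j+(i-1)k}/\bar{l}_j)$ for $i=1,\ldots,n$, so the two products in (\ref{Mj}) are $\E[\prod_{i=1}^n W_i]$ and $\prod_{i=1}^n\E[W_i]$. Since $f$ takes values in $[a,b]$, the centred variable satisfies $|\bb{U}_i|\leq |b-a|$, whence $\exp(-s|b-a|/\bar{l}_j)\leq W_i\leq \exp(s|b-a|/\bar{l}_j)$ for $s>0$. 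I would also record that $(\bb{U}_i)_{i\in\Z}$ shares the $\theta$-coefficients $\theta(k)$ of $(f(\bb{Z}_i))_{i\in\Z}$, since by stationarity the two differ by a single additive constant and covariances are shift-invariant.

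For the telescoping I would set, for $0\leq m\leq n$,
\[
B_m=\E\Big[\prod_{i=1}^m W_i\Big]\prod_{i=m+1}^n\E[W_i],
\]
with empty products equal to $1$, so that $B_n$ and $B_0$ are exactly the two terms in $M_{\bar{l}_j}(s)$ and $M_{\bar{l}_j}(s)=|B_n-B_0|\leq\sum_{m=1}^n|B_m-B_{m-1}|$. Each increment factors as
\[
B_m-B_{m-1}=\Big(\prod_{i=m+1}^n\E[W_i]\Big)\,Cov\Big(\prod_{i=1}^{m-1}W_i,\;W_m\Big),
\]
the $m=1$ term vanishing since $Cov(1,W_1)=0$. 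The decisive structural point is the orientation of this covariance: $\prod_{i=1}^{m-1}W_i$ is a bounded function of the past block $(\bb{U}_j,\ldots,\bb{U}_{j+(m-2)k})$, while $W_m$ is a function of the single, strictly later variable $\bb{U}_{j+(m-1)k}$, sitting at distance exactly $k$ from the last past index. This matches Definition \ref{thetadependent} applied to $(\bb{U}_i)_{i\in\Z}$.

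To invoke Definition \ref{thetadependent} I would take $F=\prod_{i=1}^{m-1}W_i\in\mathcal{G}_{m-1}^*$ and $G=W_m\in\mathcal{G}_1$; here one must note that $z\mapsto\exp(sz/\bar{l}_j)$ is bounded and Lipschitz only on the support $[-|b-a|,|b-a|]$ of the $\bb{U}_i$, so I would first replace it by a globally bounded Lipschitz function agreeing with it there, which leaves every covariance unchanged. With this, $\|F\|_\infty\leq\exp(s(m-1)|b-a|/\bar{l}_j)$, the Lipschitz constant of $G$ is at most $\tfrac{s}{\bar{l}_j}\exp(s|b-a|/\bar{l}_j)$ (the supremum of the derivative on the support), and $|\prod_{i=m+1}^n\E[W_i]|\leq\exp(s(n-m)|b-a|/\bar{l}_j)$. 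Multiplying these and applying $\theta$-weak dependence gives
\[
|B_m-B_{m-1}|\leq\exp\Big(\tfrac{s|b-a|}{\bar{l}_j}\big[(m-1)+1+(n-m)\big]\Big)\tfrac{s}{\bar{l}_j}\theta(k)=\exp(s|b-a|)\tfrac{s}{\bar{l}_j}\theta(k),
\]
where $n=\bar{l}_j$ collapses the exponent. Summing the (at most $\bar{l}_j$) increments yields the claimed bound $\exp(s|b-a|)\theta(k)s$. The case $\bb{U}_i=\E[f(\bb{X}_i)]-f(\bb{X}_i)$ is identical, merely flipping the sign of $\bb{U}_i$ while leaving every estimate intact.

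I expect the only genuine subtlety to be the bookkeeping that places the covariance in the correct past-bounded/future-Lipschitz-single-point orientation demanded by Definition \ref{thetadependent}, together with the truncation that makes $W_m$ a legitimate element of $\mathcal{G}_1$; the hypotheses $l\geq2$ and $0<s<3l/|b-a|$ from Theorem \ref{new} are not actually used in this estimate and are carried along only because they are the ambient assumptions of the theorem.
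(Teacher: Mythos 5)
Your proof is correct, and its skeleton coincides with the paper's: the paper establishes (\ref{Mj}) by peeling off the last factor and iterating, producing the recursion $M_{\bar{l}_j}\leq \theta(k)u^{\bar{l}_j}s/\bar{l}_j+u\,M_{\bar{l}_j-1}$ with $u=\exp(s|b-a|/\bar{l}_j)$, whose unrolling is exactly your telescoping sum $\sum_m|B_m-B_{m-1}|$; both arguments even land on the same intermediate constant $(\bar{l}_j-1)\exp(s|b-a|)\theta(k)s/\bar{l}_j$ before relaxing it to $\exp(s|b-a|)\theta(k)s$. The genuine difference is the form of weak dependence invoked on each increment. The paper inserts a conditional expectation given the full past $\sigma$-algebra $\mathcal{F}_{j+(\bar{l}_j-2)k}$ and then appeals to the projective-type representation (\ref{mix2}) of Remark \ref{mix_rule1}, with a normalized, truncated exponential playing the role of $g\in\mathcal{L}_1$. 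You never condition: each error term stays a covariance between a bounded function of the finitely many block variables $(\bb{U}_j,\ldots,\bb{U}_{j+(m-2)k})$ and a (truncated, hence bounded Lipschitz) function of the single later variable $\bb{U}_{j+(m-1)k}$ at gap exactly $k$ --- precisely the configuration of Definition \ref{thetadependent} --- so $|Cov(F,G)|\leq\|F\|_\infty Lip(G)\,\theta(k)$ holds by the very definition of the coefficients. This makes your route marginally more elementary and self-contained: it bypasses the projective representation (a cited result rather than the definition), and your truncation of $z\mapsto\exp(sz/\bar{l}_j)$ outside $[-|b-a|,|b-a|]$ is cleaner than the paper's indicator-weighted function $g$, which as written is not globally Lipschitz. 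Your side remarks --- that $(\bb{U}_i)$ inherits the $\theta$-coefficients of $(f(\bb{Z}_i))$ since the two differ by a constant, that the $m=1$ increment vanishes, and that the hypotheses $l\geq 2$ and $0<s<3l/|b-a|$ are idle in this estimate (they are only needed in Lemma \ref{lem3}) --- are all accurate.
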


\begin{proof}
	Let us first discuss the case when $\bb{U}_i=f(\bb{X}_i)-\E[f(\bb{X}_i)]$, we have that the process  $\bb{U}$ has mean zero and $|\bb{U}_i|\leq |b-a|$.	
	Let us define $\mathcal{F}_j=\sigma(\bb{U}_i, i\leq j)$.
	\begin{align*}
		M_{\bar{l}_j}&:= \Bigg| \E\Big[ \prod_{i=1}^{\bar{l}_j} \exp\Big( \frac{s \bb{U}_{j+(i-1)k}}{\bar{l}_j}  \Big)  \Big] - \prod_{i=1}^{\bar{l}_j} \E \Big[ \exp\Big ( \frac{s \bb{U}_{j+(i-1)k}}{\bar{l}_j} \Big)  \Big]   \Bigg|\\
		&= \Bigg| \E\Big[ \prod_{i=1}^{\bar{l}_j-1} \exp\Big( \frac{s \bb{U}_{j+(i-1)k}}{\bar{l}_j}  \Big) \E \Big[ \exp \Big( \frac{s \bb{U}_{j+(\bar{l}_j -1)k}}{\bar{l}_j}  \Big) \Big | \mathcal{F}_{j+(\bar{l}_j-2)k} \Big] \Big] - \prod_{i=1}^{\bar{l}_j} \E \Big[ \exp\Big ( \frac{s \bb{U}_{j+(i-1)k}}{\bar{l}_j} \Big)  \Big]   \Bigg|\\
		&\leq \Bigg| \E\Big[ \prod_{i=1}^{\bar{l}_j-1} \exp\Big( \frac{s \bb{U}_{j+(i-1)k}}{\bar{l}_j}  \Big) \Big( \E \Big[ \exp \Big( \frac{s \bb{U}_{j+(\bar{l}_j -1)k}}{\bar{l}_j}  \Big) \Big | \mathcal{F}_{j+(\bar{l}_j-2)k} \Big]- \E \Big[  \exp \Big( \frac{s \bb{U}_{j+(\bar{l}_j -1)k}}{\bar{l}_j}  \Big) \Big] \Big) \Big] \Bigg| \\
		&+ \Bigg| \E \Big[ \exp \Big( \frac{s \bb{U}_{j+(\bar{l}_j -1)k}}{\bar{l}_j}  \Big) \Big]  \Bigg |  \,\, \Bigg|  \E \Big[ \prod_{i=1}^{\bar{l}_j-1} \exp\Big( \frac{s \bb{U}_{j+(i-1)k}}{\bar{l}_j}  \Big) \Big]- \prod_{i=1}^{\bar{l}_j-1} \E \Big[ \exp\Big ( \frac{s \bb{U}_{j+(i-1)k}}{\bar{l}_j} \Big)  \Big]   \Bigg| 
	\end{align*}
	\begin{align*}
		& \leq \Bigg \|  \prod_{i=1}^{\bar{l}_j-1} \exp\Big( \frac{s \bb{U}_{j+(i-1)k}}{\bar{l}_j}  \Big)  \Bigg \|_{\infty} \E\Big[ \Big| \E \Big[ \exp \Big( \frac{s \bb{U}_{j+(\bar{l}_j -1)k}}{\bar{l}_j}  \Big) \Big | \mathcal{F}_{j+(\bar{l}_j-2)k} \Big]- \E \Big[  \exp \Big( \frac{s \bb{U}_{j+(\bar{l}_j -1)k}}{\bar{l}_j}  \Big) \Big] \Big| \Big] \\
		&+ \Bigg \|  \exp \Big( \frac{s \bb{U}_{j+(\bar{l}_j -1)k}}{\bar{l}_j}  \Big)   \Bigg \|_{\infty} M_{\bar{l}_j-1}
	\end{align*}
	The above is then less than or equal to
	\begin{align}
		& \exp\Big( \frac{s (\bar{l}_j-1) |b-a|}{\bar{l}_j} \Big) \exp\Big( \frac{-s |a|}{\bar{l}_j} \Big) \E\Bigg[ \Bigg| \E \Bigg[ \exp \Bigg( \frac{s f(\bb{Z}_{j+(\bar{l}_j -1)k})}{\bar{l}_j}  \Bigg) \Bigg | \mathcal{F}_{j+(\bar{l}_j-2)k} \Bigg] \label{difference}\\
		&- \E \Bigg[  \exp \Bigg( \frac{s f(\bb{Z}_{j+(\bar{l}_j -1)k})}{\bar{l}_j}  \Bigg) \Bigg] \Bigg| \Bigg] + \exp\Big( \frac{s |b-a|}{\bar{l}_j} \Big)  M_{\bar{l}_j-1}.\nonumber
	\end{align}
	Note that the function $g(x)=\frac{\exp\Big( \frac{s x}{\bar{l}_j} \Big)}{\exp \Big( \frac{s |b|}{\bar{l}_j}\Big) \frac{s}{\bar{l}_j}} 1_{\mathcal{A}}(x)$ is in $\mathcal{L}_1$ for each $s$, where $\mathcal{A}=\{x:|x|\leq |b-a|\}$. We then use the projective-type representation of the $\theta$-coefficients of $f(\bb{Z})$, see Remark \ref{mix_rule1}, and obtain that
	\begin{align*}
		M_{\bar{l}_j} \leq 	\exp\Big( \frac{s \bar{l}_j |b-a|}{\bar{l}_j} \Big) \theta(k) \frac{s}{\bar{l}_j} + \exp\Big( \frac{s |b-a|}{\bar{l}_j} \Big)  M_{\bar{l}_j-1}.
	\end{align*}
	Let now, $u= \exp\Big( \frac{s |b-a|}{\bar{l}_j} \Big)$, we have that
	\begin{align*}
		M_{\bar{l}_j} &\leq \theta(k) u^{\bar{l}_j}\frac{s}{\bar{l}_j}+ u M_{\bar{l}_j-1} \\
		& \leq (\bar{l}_j-2) \theta(k) u^{\bar{l}_j}\frac{s}{\bar{l}_j} +u^{\bar{l}_j-2} \Big| \E\Big[ \exp \Big( \frac{s \bb{U}_j}{ \bar{l}_j}\Big) \exp\Big( \frac{s\bb{U}_{j+k}}{\bar{l}_j} \Big) \Big] - \E \Big[ \exp\Big( \frac{s\bb{U}_j}{\bar{l}_j} \Big) \Big] \E \Big[ \exp\Big( \frac{s \bb{U}_{j+k}}{\bar{l}_j} \Big) \Big] \Big|\\
		&\leq (\bar{l}_j-2) \theta(k) u^{\bar{l}_j}\frac{s}{\bar{l}_j} +u^{\bar{l}_j-1} \Big| \E\Big[\exp\Big( \frac{s \bb{U}_{j+k}}{\bar{l}_j} \Big) |\mathcal{F}_j \Big]- \E\Big[\exp\Big( \frac{s U_{j+k}}{\bar{l}_j} \Big) \Big] \Big|\\
		& \leq (\bar{l}_j-1) \theta(k) u^{\bar{l}_j} \frac{s}{\bar{l}_j} =  (\bar{l}_j-1) \exp\Big( s  |b-a|\Big) \theta(k) \frac{s}{\bar{l}_j} .
	\end{align*}
	In conclusion, for all $j=1,\ldots,k$ (and remembering that $ \bar{l}_j=l$ or $\bar{l}_j=l+1$)
	\[
	M_{\bar{l}_j} \leq  \exp( s \, |b-a|) \theta(k) s.
	\]
	Similar calculations apply when $\bb{U}_i=\E[f(\bb{Z}_i)]-f(\bb{Z}_i)$.	
\end{proof}

\begin{Remark}
	Note that by showing Lemma \ref{lem2} for $\bb{U}_i=\E[f(\bb{Z}_i)]-f(\bb{Z}_i)$ there is a slight change in the proof at point (\ref{difference}). However, in the end, the result (\ref{Mj}) equally holds.
\end{Remark}

\begin{Lemma}
	\label{lem3}
	Let the Assumptions of Theorem \ref{new} hold and define the process $(\bb{U}_i)_{i \in \Z}$ such that $\bb{U}_i=f(\bb{Z}_i)-\E[f(\bb{Z}_i)]$. For all $j=1,2,\ldots,k$, $l \geq 2$ and $0< s < \frac{3l}{|b-a|}$ 
	\[
	\E\Big[ \exp\Big( \frac{s \bb{V}_{j,m}}{\bar{l}_j} \Big)\Big] \leq \exp \Bigg( \frac{s^2 \E[\bb{U}_1^2] }{2l \Big( 1-\frac{s|b-a|}{3l} \Big) } \Bigg)  +\exp( s \, |b-a|) \theta(k) s
	\]
	The same result holds when defining the process $(\bb{U}_i)_{i \in \Z}$ for $\bb{U}_i=\E[f(\bb{Z}_i)] -f(\bb{Z}_i)$.
\end{Lemma}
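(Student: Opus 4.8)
The plan is to combine Lemma \ref{lem2} with a classical Bernstein-type estimate for the moment generating function of a single bounded, centered summand. First I would note that, since $\bb{V}_{j,m} = \sum_{i=1}^{\bar{l}_j} \bb{U}_{j+(i-1)k}$, one has the identity $\E[\exp(s \bb{V}_{j,m}/\bar{l}_j)] = \E\big[\prod_{i=1}^{\bar{l}_j}\exp(s\bb{U}_{j+(i-1)k}/\bar{l}_j)\big]$. Lemma \ref{lem2} then lets me decouple the factors at the cost of the dependence error: for $l \geq 2$ and $0 < s < 3l/|b-a|$,
\[
\E\Big[\exp\Big(\tfrac{s\bb{V}_{j,m}}{\bar{l}_j}\Big)\Big] \leq \prod_{i=1}^{\bar{l}_j}\E\Big[\exp\Big(\tfrac{s\bb{U}_{j+(i-1)k}}{\bar{l}_j}\Big)\Big] + \exp(s\,|b-a|)\theta(k)s .
\]
By stationarity of $(f(\bb{Z}_i))_{i\in\Z}$ each factor equals $\E[\exp(s\bb{U}_1/\bar{l}_j)]$, so the product collapses to $\big(\E[\exp(s\bb{U}_1/\bar{l}_j)]\big)^{\bar{l}_j}$.

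Second, I would bound the single-term MGF. Since $\bb{U}_1 = f(\bb{Z}_1) - \E[f(\bb{Z}_1)]$ is centered with $|\bb{U}_1| \leq |b-a|$, expanding the exponential, using $\E[\bb{U}_1]=0$, the moment bound $|\E[\bb{U}_1^k]| \leq |b-a|^{k-2}\E[\bb{U}_1^2]$ for $k \geq 2$, and the factorial estimate $k! \geq 2\cdot 3^{k-2}$ yields, for $0 < \lambda |b-a| < 3$,
\[
\E[\exp(\lambda\bb{U}_1)] \leq 1 + \frac{\lambda^2 \E[\bb{U}_1^2]/2}{1-\lambda|b-a|/3} \leq \exp\Big(\frac{\lambda^2\E[\bb{U}_1^2]/2}{1-\lambda|b-a|/3}\Big),
\]
where the last step uses $1+x\leq e^x$. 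Applying this with $\lambda = s/\bar{l}_j$ (admissible since $s < 3l/|b-a| \leq 3\bar{l}_j/|b-a|$) and raising to the power $\bar{l}_j$ gives
\[
\Big(\E\Big[\exp\Big(\tfrac{s\bb{U}_1}{\bar{l}_j}\Big)\Big]\Big)^{\bar{l}_j} \leq \exp\Big(\frac{s^2\E[\bb{U}_1^2]/2}{\bar{l}_j - s|b-a|/3}\Big).
\]

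Third, I would pass from $\bar{l}_j$ to $l$ in the denominator. Writing $\beta := s|b-a|/3$, the constraint $s < 3l/|b-a|$ guarantees $l - \beta > 0$, and since $\bar{l}_j \in \{l, l+1\}$ we have $\bar{l}_j - \beta \geq l - \beta > 0$, whence $\tfrac{1}{\bar{l}_j - \beta} \leq \tfrac{1}{l-\beta} = \tfrac{1}{l(1 - s|b-a|/(3l))}$. Chaining the three steps produces exactly
\[
\E\Big[\exp\Big(\tfrac{s\bb{V}_{j,m}}{\bar{l}_j}\Big)\Big] \leq \exp\Big(\frac{s^2\E[\bb{U}_1^2]}{2l(1-s|b-a|/(3l))}\Big) + \exp(s\,|b-a|)\theta(k)s,
\]
which is the claim. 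The case $\bb{U}_i = \E[f(\bb{Z}_i)] - f(\bb{Z}_i)$ is identical: $-\bb{U}_1$ is again centered with the same range and second moment, and Lemma \ref{lem2} is stated for both sign choices, so no step changes.

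The main obstacle is purely the bookkeeping in the Bernstein step coupled with the monotonicity in the block length: one must check that the range $0<s<3l/|b-a|$, stated in terms of $l$, already suffices for the single-term estimate at exponent $s/\bar{l}_j$, and that replacing $\bar{l}_j$ by the smaller value $l$ in the variance term moves the bound in the correct (upper) direction. Both reduce to the elementary facts $\bar{l}_j \geq l$ and $l-\beta>0$, so there is no genuine difficulty beyond careful verification of these inequalities.
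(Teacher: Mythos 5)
Your proof is correct and follows essentially the same route as the paper: triangle-inequality decomposition into the decoupled product plus the error term $M_{\bar{l}_j}$ controlled by Lemma \ref{lem2}, stationarity to collapse the product, a Bernstein moment-generating-function bound for each factor, and the monotonicity $\bar{l}_j \geq l$ to replace $\bar{l}_j$ by $l$. The only (immaterial) difference is that you derive the single-term Bernstein bound from scratch via the Taylor expansion and $k! \geq 2\cdot 3^{k-2}$, whereas the paper cites it as the Bernstein moment condition from the literature.
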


\begin{proof}
	\begin{align}
		\E\Big[ \exp\Big( \frac{s \bb{V}_{j,m}}{\bar{l}_j} \Big)\Big] &= \E \Big[ \exp \Big( \sum_{i=1}^{\bar{l}_j} \frac{s \bb{U}_{j+(i-1)\,k}}{\bar{l}_j} \Big)  \Big] \leq \prod_{i=1}^{\bar{l}_j} \E \Big[ \exp\Big( \frac{s \bb{U}_{j+(i-1)\,k}}{\bar{l}_j} \Big) \Big] \nonumber\\ 
		&+ \Big|\E \Big[ \prod_{i=1}^{\bar{l}_j}  \exp\Big( \frac{s \bb{U}_{j+(i-1)\,k}}{\bar{l}_j} \Big) \Big] -
		\prod_{i=1}^{\bar{l}_j} \E \Big[ \exp\Big( \frac{s \bb{U}_{j+(i-1)\,k}}{\bar{l}_j} \Big)  \Big] \Big| \nonumber\\
		&= \E \Big[ \exp\Big( \frac{s \bb{U}_{j+(i-1)\,k}}{\bar{l}_j} \Big)  \Big]^{\bar{l}_j} + M_{\bar{l}_j} \label{imp}
	\end{align}
	
	We have that $\E[\bb{U}_{j+(i-1)\,k}]=0$ by definition of the process $\bb{U}$, and $\frac{\bb{U}_{j+(i-1)\,k}}{\bar{l}_j}$ satisfies the Bernstein moment condition (Remark A1 \cite{Thelight}) with $K_1=\frac{|b-a|}{3\bar{l}_j}$. Hence, for $\bar{l}_j\geq 2$ and $0 <s <\frac{3\bar{l}_j}{|b-a|}$
	\begin{equation}
		\label{Bernstein}
		\E \Big[ \exp\Big( \frac{s \bb{U}_{j+(i-1)\,k}}{\bar{l}_j} \Big)  \Big] \leq \exp \Bigg( \frac{s^2 \E[(\bb{U}_{j+(i-1)k}/\bar{l}_j)^2] }{2 \Big( 1-\frac{s|b-a|}{3\bar{l}_j} \Big) } \Bigg). 
	\end{equation}
	Because $\bar{l}_j \geq l$, we can conclude that the inequality above holds for $l \geq 2$.
	Moreover, by stationarity of the process $\bb{U}$ and since for all $j=1,2,\ldots,k$, we can bound (\ref{imp}) uniformly with respect to the index $j$ by using Lemma \ref{lem2}, and noticing that
	\begin{align*}
		&0< s < \frac{3l}{|b-a|} \leq \frac{3\bar{l}_j}{|b-a|},\\
		%&s|b-a|\leq 3\bar{l}_j  \Longrightarrow \bar{l}_j-2+s|b-a|\leq 4\bar{l}_j-2\leq 4l+2,\\
		&\textrm{and then}\\
		&\Big( 1-\frac{s|b-a|}{3\bar{l}_j} \Big) \geq \Big( 1-\frac{s|b-a|}{3l} \Big).
	\end{align*}
	
	The same proof applies when defining $\bb{U}_i=\E[f(\bb{Z}_i)]-f(\bb{Z}_i)$.
	
\end{proof}

\begin{proof}[Proof of Theorem \ref{new}]
	By combining Lemmas \ref{lem1}, \ref{lem2}, \ref{lem3}, we can bound the process $\bb{U}:=f(\bb{Z})-\E[f(\bb{Z})]$ for $ 0< s < \frac{3l}{|b-a|}$ as follows:
	
	\begin{align}
		\E\Big[\exp\Big(s \frac{1}{m}\sum_{i=1}^m f(\bb{Z}_i)-\E[f(\bb{Z}_i)]\Big)\Big]&= \E \Big[  \exp\Big( s \frac{1}{m} \sum_{i=1}^m \bb{U}_i \Big) \Big] = \E \Big[  \exp\Big( \frac{s}{m} \sum_{j=1}^k \bb{V}_{j,m} \Big) \Big] \nonumber\\
		&= \E \Big[  \exp\Big(  \frac{s}{m} \sum_{j=1}^k \sum_{i=1}^{\bar{l}_j} \bb{U}_{j+(i-1)\,k} \Big) \Big] \nonumber\\
		&\leq \sum_{j=1}^k p_j \E \Big[  \exp\Big( \frac{s \bb{V}_{j,m}}{\bar{l}_j} \Big)  \Big] \label{ciao}\\
		&\leq  \exp \Bigg(\frac{s^2 Var(f(\bb{Z}_1))}{2 l \Big(1-\frac{s|b-a|}{3l} \Big)}  \Bigg) + \exp( s \, |b-a|) \theta(k) s \label{ciao2},
	\end{align}
	where $\bb{V}_{j,m}=\sum_{i=1}^{\bar{l}_j} \bb{U}_{j+(i-1) \,k}$.
	The inequalities (\ref{ciao}) and (\ref{ciao2}) hold because of Lemma \ref{lem1} and Lemma \ref{lem3}, respectively. We have then proved the inequality (\ref{laplace}). The same proof applies for showing the bound (\ref{laplace2}) by defining $\bb{U}_i=\E[f(\bb{Z}_i)]-f(\bb{Z}_i)$ .
\end{proof}

We remind the reader that the proof of Theorem \ref{prop_PAC1} and \ref{new_fixedtime} make use of the below Lemma that we recall for completeness.

\begin{Lemma}[Legendre transform of the Kullback-Leibler divergence function]
	\label{kl}
	For any $\pi \in \mathcal{M}^1_+(B)$, for any measurable function $h: B \to \R$ such that $\pi[\exp(h)]\leq \infty$, we have that
	\[
	\pi[\exp(h)] =\exp \Big( \sup_{\hat{\rho} \in \mathcal{M}^1_+(B)} \hat{\rho}[h]- KL(\hat{\rho},\pi) \Big),
	\]
	with the convention $\infty-\infty=-\infty$. Moreover, as soon as  $h$ is upper bounded on the support of $\pi$, the supremum with respect to $\hat{\rho}$ in the right-hand side is reached for the Gibbs distribution with Radon-Nikodym derivative w.r.t. $\pi$ equal to $\frac{\exp(h)}{\pi[\exp(h)]}$.
\end{Lemma}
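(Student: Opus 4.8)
The plan is to prove the equivalent statement obtained by taking logarithms, namely the Donsker--Varadhan variational formula
\[
\log \pi[\exp(h)] = \sup_{\hat{\rho} \in \mathcal{M}^1_+(B)} \big( \hat{\rho}[h] - KL(\hat{\rho},\pi) \big).
\]
First I would dispose of the degenerate terms in the supremum: whenever $\hat{\rho}$ is not absolutely continuous with respect to $\pi$ one has $KL(\hat{\rho},\pi)=+\infty$ by definition, so such $\hat{\rho}$ contribute $-\infty$ and may be discarded. Thus the supremum can be restricted to those $\hat{\rho}$ with $\hat{\rho}\ll\pi$, for which $KL(\hat{\rho},\pi)=\hat{\rho}\big[\log \frac{d\hat{\rho}}{d\pi}\big]$.

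The key step is to introduce the Gibbs measure $g$ defined by the Radon--Nikodym derivative $\frac{dg}{d\pi}=\frac{\exp(h)}{Z}$, where $Z:=\pi[\exp(h)]$, and to rewrite everything relative to $g$. Since $\exp(h)>0$ everywhere, $g$ and $\pi$ are mutually absolutely continuous, so $\hat{\rho}\ll\pi$ is equivalent to $\hat{\rho}\ll g$ and the chain rule $\frac{d\hat{\rho}}{d\pi}=\frac{d\hat{\rho}}{dg}\cdot\frac{dg}{d\pi}$ applies. Substituting this factorization into $KL(\hat{\rho},\pi)$ and expanding the logarithm as $\log \frac{d\hat{\rho}}{d\pi}=\log \frac{d\hat{\rho}}{dg}+h-\log Z$ yields, after cancelling the term $\hat{\rho}[h]$, the identity
\[
\hat{\rho}[h]-KL(\hat{\rho},\pi)=\log Z - KL(\hat{\rho},g),
\]
valid for every $\hat{\rho}\ll\pi$.

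From here the conclusion is immediate. By the non-negativity of the Kullback--Leibler divergence, which follows from Jensen's inequality applied to the convex function $t\mapsto t\log t$ (equivalently, from the concavity of $\log$), one has $KL(\hat{\rho},g)\geq 0$ with equality if and only if $\hat{\rho}=g$. Hence $\hat{\rho}[h]-KL(\hat{\rho},\pi)\leq \log Z$, and taking the supremum over $\hat{\rho}$ gives the claimed identity, the value $\log Z=\log \pi[\exp(h)]$ being attained precisely at $\hat{\rho}=g$, i.e. at the Gibbs distribution with density $\frac{\exp(h)}{\pi[\exp(h)]}$. When $h$ is upper bounded on the support of $\pi$, then $Z<\infty$ and $g$ is a genuine probability measure, which is exactly the hypothesis making the attainment statement meaningful.

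The main obstacle is bookkeeping rather than ideas: I must justify the chain-rule manipulation of Radon--Nikodym derivatives through the equivalence $g\sim\pi$, ensure the expectations entering the identity are well-defined (this is where the integrability of $\exp(h)$, respectively the upper bound on $h$, enters), and treat the convention $\infty-\infty=-\infty$ together with the possibility $Z=+\infty$. In the latter case $g$ is no longer defined, so the lower bound $\sup\geq\log Z=+\infty$ must be recovered by approximation: applying the finite-$Z$ result to the truncations $h\wedge n$ and letting $n\to\infty$, the Gibbs measures of $h\wedge n$ drive $\hat{\rho}[h]-KL(\hat{\rho},\pi)$ to $+\infty$ by monotone convergence of $\pi[\exp(h\wedge n)]$.
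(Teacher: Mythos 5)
Your proof is correct, but it takes a genuinely different route from the paper for a simple reason: the paper does not prove this lemma at all. Its ``proof'' is a citation — Kullback \cite{KU} for the case of a finite space $B$ and Donsker and Varadhan \cite{DV76} for the general case. Your argument is the standard self-contained derivation: discard $\hat{\rho}\not\ll\pi$ (which contribute $-\infty$), introduce the Gibbs measure $g$ with $\frac{dg}{d\pi}=\frac{\exp(h)}{\pi[\exp(h)]}$, use the chain rule for Radon--Nikodym derivatives together with $g\sim\pi$ to obtain the identity $\hat{\rho}[h]-KL(\hat{\rho},\pi)=\log \pi[\exp(h)]-KL(\hat{\rho},g)$, and conclude from non-negativity of the Kullback--Leibler divergence, with equality exactly at $\hat{\rho}=g$; the case $\pi[\exp(h)]=+\infty$ is then recovered by applying the finite case to the truncations $h\wedge n$ and letting $n\to\infty$ via monotone convergence. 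What your route buys over the paper's citation is twofold: it is self-contained, and it delivers the attainment statement (the second assertion of the lemma) constructively, by exhibiting the maximizer as the Gibbs distribution rather than delegating it to the references; what the citation buys is brevity for a classical result. The bookkeeping issues you flag are real but benign: since $KL(\hat{\rho},g)$ is always well defined in $[0,\infty]$ (the integrand $t\log t$ is bounded below by $-1/e$, and Jensen's inequality gives non-negativity), the conventions for infinite $\hat{\rho}[h]$ or $KL(\hat{\rho},\pi)$ reduce to routine case distinctions, and in the truncation step one indeed has $g_n[h]-KL(g_n,\pi)\geq g_n[h\wedge n]-KL(g_n,\pi)=\log\pi[\exp(h\wedge n)]\to+\infty$, so the argument closes.
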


The proof of Lemma (\ref{kl}) has been known since the work of Kullback \cite{KU} in the case of a finite space $B$, whereas the general case has been proved by Donsker and Varadhan \cite{DV76}. Given this result, we are now ready to prove our PAC Bayesian bounds.

\begin{proof}[Proof of Theorem \ref{prop_PAC1}]
	Let us choose $f(s)=\frac{s}{\epsilon}$ for $0<s<\epsilon$ , which satisfies the assumptions of Theorem \ref{new} and has support in $[0,1]$. We have that $R^{\epsilon}(h)-r^{\epsilon}(h)=\frac{\epsilon}{m}(\sum_{i=1}^m \E[f(\bb{L}_i^{\epsilon})]-f(\bb{L_i^{\epsilon}}))$. We have in this case that $\bb{U}_i=\E[f(\bb{L_i})]-f(\bb{L_i^{\epsilon}})$ such that $\E[\bb{U}_i]=0$ and $|\bb{U}_i|\leq 1$. Note that the process $f(\bb{L}_i^{\epsilon})_{i \in \Z}$ has the same $\theta$-weak coefficients of the process $(\bb{L}_i^{\epsilon})_{i \in \Z}$ because $f$ is a 1-Lipschitz function. By Theorem \ref{new} applied for $0 < \epsilon \sqrt{l} < 3 \sqrt{l}$, and the bound (\ref{bound3}), 
	\begin{align}
		\E\Big[\exp\Big(\sqrt{l}\, (R^{\epsilon}(h)-r^{\epsilon}(h))  \Big)\Big]=\E\Big[\exp\Big(\epsilon \sqrt{l} \frac{1}{m} \sum_{i=1}^m \bb{U}_i \Big)\Big]&\leq \exp\Big( \frac{3\epsilon^2 }{2(3-\epsilon)  }  \Big) \nonumber\\
		&+ 3 \sqrt{l} \exp( 3 \sqrt{l})\theta(k) \label{uao}
	\end{align}
	where the last equality holds because of the particular shape of the chosen function $f$.
	
	We follow the scheme of proof developed by \cite{BG16}.
	\begin{align}
		\label{scheme}
		&\sqrt{l}\, \hat{\rho}[R^{\epsilon}(h)]-\hat{\rho}[r^{\epsilon}(h)]= \hat{\rho}[\sqrt{l}\, (R^{\epsilon}(h)-r^{\epsilon}(h))] \hspace{0.5cm} \nonumber \\
		&\leq KL(\hat{\rho} || \pi)  + \log(\pi[\exp(\sqrt{l} \, (R^{\epsilon}(h)-r^{\epsilon}(h)))]) \hspace{0.5cm} \textrm{($\mathbb{P}$-almost surely by Lemma \ref{kl})}.
	\end{align}
	We have that $ \pi[\exp(\sqrt{l} \, (R^{h}(\beta)-r^{\epsilon}(h)))]:= \bb{A_{m}}$ is a random variable on $\bb{S_m}$. By Markov's inequality, for $\delta \in (0,1)$
	\[
	\mathbb{P} \Big( \bb{A_m} \leq \frac{\E[\bb{A_m}]}{\delta} \Big) \geq 1-\delta.
	\]
	This in turn implies that with probability at least $1-\delta$ over $\bb{S_m}$ 
	\begin{align}
		\hat{\rho}[R^{\epsilon}(h)]-\hat{\rho}[r^{\epsilon}(h)] &\leq \frac{KL(\hat{\rho} || \pi) + \log \frac{1}{\delta}}{\sqrt{l}} +\frac{1}{\sqrt{l}} \log\Big(\pi\Big[ \E \Big[\exp \Big( \sqrt{l} (R^{\epsilon}(h)-r^{\epsilon}(h)) \Big) \Big] \Big]\Big) \label{p1} \\
		&\leq  \frac{KL(\hat{\rho} || \pi) + \log \frac{1}{\delta}}{\sqrt{l}} +\frac{1}{\sqrt{l}} \log\Big(\pi\Big[ \exp\Big( \frac{3\epsilon^2 }{2(3-\epsilon)  }  \Big)  + \nonumber\\
		& 3 \sqrt{l} \exp( 3 \sqrt{l})\theta(k) \Big]\Big) \label{p2},
	\end{align}
	where (\ref{p1}) holds by swapping the expectation over $\bb{S}_m$ and over $\pi$ using Fubini's Theorem, and (\ref{p2}) is obtained by using (\ref{uao}).
	Similarly, it can be proven that the bound (\ref{p2}) holds for $\hat{\rho}[R^{\epsilon}(h)]-\hat{\rho}[r^{\epsilon}(h)]$. We then conclude by using an union bound.
\end{proof}

\begin{proof}[Proof of Theorem \ref{oracle}]
	Let $\frac{d\bar{\rho}}{d \pi}=\frac{\exp(-\sqrt{m}r^{\epsilon}(h))}{\pi[\exp(-\sqrt{m}r^{\epsilon}(h))]}$.
	%	\begin{align*}
		%		K(\hat{\rho}||\bar{\rho})&=\hat{\rho}\Big[ \log\Big(\frac{d\hat{\rho}}{d \bar{\rho}}\Big) \Big]=\hat{\rho} \Big[ \log \Big( \frac{d\hat{\rho}}{d\pi} \frac{d\pi}{d \bar{\rho}} \Big)   \Big] \\
		%		&=\hat{\rho} \Big[ \log\Big(\frac{d\hat{\rho}}{d\pi} \Big)+ \log \Big( \frac{d\pi}{d \bar{\rho}} \Big)\Big]\\
		%		&= \hat{\rho} \Big[ \log\Big(\frac{d\hat{\rho}}{d\pi} \Big) +\log \pi[\exp(h)] -h       \Big]\\
		%		&= K(\hat{\rho} || \pi) + \log \pi[\exp(h)] - \hat{\rho}[h]
		%	\end{align*}
	%	
	%	By non-negativity of the  K\"ullback-Leibler divergence, the left-hand side is minimized when $\hat{\rho}=\bar{\rho}$. 
	By Lemma \ref{kl}, we have that
	
	\[
	\bar{\rho}=arg\inf_{\hat{\rho}} \Big( KL(\hat{\rho} || \pi) - \hat{\rho}[-\sqrt{m}r^{\epsilon}(h)]  \Big)= arg\inf_{\hat{\rho}} \Big ( \frac{KL(\hat{\rho} || \pi)}{\sqrt{m}} + \hat{\rho}[r^{\epsilon}(h)]  \Big), 
	\]
	and by using one side of the bound (\ref{PAC1}) for $k=1$, for all $  \delta \in (0,1)$ 
	\begin{align*}
		\label{oracle_p}
		\mathbb{P}\Big\{\bar{\rho}[R^{\epsilon}(h)] \leq \inf_{\hat{\rho}} \Big ( \hat{\rho}[r^{\epsilon}(h)] &+ \Big(KL(\hat{\rho},\pi)+ \log \Big(\frac{1}{\delta}\Big)  \Big) \frac{1}{\sqrt{m}} \Big) \\ &+\frac{1}{\sqrt{m}} \log\Big(\pi\Big[ \exp\Big( \frac{3\epsilon^2 }{2(3-\epsilon)  }  \Big)  
		+ 3 \sqrt{m} \exp( 3 \sqrt{m})\theta(k) \Big]\Big)  \Big\} \geq 1-\delta
	\end{align*}
	We now substitute to $\hat{\rho}[r^{\epsilon}(\beta)]$ the other side of the bound (\ref{PAC1}), we obtain that 
	\begin{align}
		\mathbb{P}\Big\{\bar{\rho}[R^{\epsilon}(h)] \leq \inf_{\hat{\rho}} \Big ( \hat{\rho}[R^{\epsilon}(h)] &+ \Big(KL(\hat{\rho},\pi)+ \log \Big(\frac{1}{\delta}\Big)  \Big) \frac{2}{\sqrt{m}} \Big) \\
		&+\frac{2}{\sqrt{m}} \log\Big(\pi\Big[ \exp\Big( \frac{3\epsilon^2 }{2(3-\epsilon)  }  \Big)  
		+ 3 \sqrt{m} \exp( 3 \sqrt{m})\theta(k) \Big]\Big) \Big\} \geq 1-2\delta
	\end{align}
	by using a union bound.
\end{proof}

\begin{proof}[Proof of Theorem \ref{anytimePAC}]

	First of all, we show that the process $(\exp(f_i(\bb{S},h))_{i \in \N_0}$ is a super-martingale. Note that
	\[
	\left\{ \begin{array}{ll}
		f_0(\bb{S},h) & =0 \\
		f_m(\bb{S},h)& = \eta \sum_{i=1}^m (L^{\epsilon}(h(X_i),Y_i)- \E[L^{\epsilon}(h(X_i),Y_i)| \mathcal{F}_{i-1}]) -\frac{\eta^2}{2} m \epsilon^2 , \,\, \textrm{for $m \in \N$}.
	\end{array} \right.
	\]
	In fact, the process $\eta \sum_{i=1}^m (L^{\epsilon}(h(X_i),Y_i)-\E[L^{\epsilon}(h(\bb{X}_i),\bb{Y}_i)])$ minus the residual process is equal to a sum of martingale differences with respect to the filtration $(\mathcal{F}_{m})_{m \in \N_0}$, namely
	$$\eta \sum_{i=1}^m (L^{\epsilon}(h(X_i),Y_i)-\E[L^{\epsilon}(h(\bb{X}_i),\bb{Y}_i)| \mathcal{F}_{i-1}]).$$ 
	
	We have that 
	\begin{align*}
		\E_{m-1}[\exp(f_m(\bb{S},h)]
		&=\exp(f_{m-1}(\bb{S},h)) \E_{m-1}[ \exp(\eta(L^{\epsilon}(h(\bb{X}_m),\bb{Y}_m) \\ & - \E[L^{\epsilon}(h(\bb{X}_m),\bb{Y}_m)|\mathcal{F}_{m-1}] ) 
		-\frac{\eta^2}{2} \epsilon^2)] \leq 1,
	\end{align*}
	by applying the conditional  Hoeffding’s Lemma.	
	By direct application of Theorem 4 in \cite{Uniform}, we obtain that for all $m \geq 1$ and $\delta \in (0,1)$, we have that
	\begin{align*}
		\hat{\rho}\Big[\eta \sum_{i=1}^m (L^{\epsilon}(h(X_i),Y_i)&-\E[L^{\epsilon}(h(\bb{X}_i),\bb{Y}_i)]) \Big] \leq KL(\hat{\rho},\pi) + \log\Big(\frac{1}{\delta} \Big)+ \frac{\eta^2}{2} m \epsilon^2 \\
		&+ \hat{\rho} \Big[ \eta \sum_{i=1}^m \E[L^{\epsilon}(h(\bb{X}_i),\bb{Y}_i)|\mathcal{F}_{i-1}] -\E[L^{\epsilon}(h(\bb{X}_i),\bb{Y}_i)] \Big]
	\end{align*}
	with probability at least $1-\delta$.
	Dividing both side of the inequality by $m$, we obtain that for all $m \geq 1$ and $\delta \in (0,1)$ 
	\begin{align}
		\hat{\rho}[r^{\epsilon}(h)-R^{\epsilon}(h)] &\leq \frac{KL(\hat{\rho},\pi) + \log\Big(\frac{1}{\delta}\Big)}{\eta m} + \frac{\eta}{2} \epsilon^2 \nonumber \\
		&+ \hat{\rho} \Big[ \frac{1}{m} \sum_{i=1}^m\E[L^{\epsilon}(h(\bb{X}_i),\bb{Y}_i)|\mathcal{F}_{i-1}] -\E[L^{\epsilon}(h(\bb{X}_i),\bb{Y}_i)] \Big]. \label{coeff}
	\end{align}
	with probability at least $1-\delta$.
	
\end{proof}
\begin{proof}[Proof of Theorem \ref{new_fixedtime}]
	
	Let us apply the scheme of proof used in \cite[Theorem 1]{Hostile} to the residual process $\Delta_m(h)=\frac{1}{m} \sum_{i=1}^m\E[L^{\epsilon}(h(\bb{X}_i),\bb{Y}_i)|\mathcal{F}_{i-1}] -\E[L^{\epsilon}(h(\bb{X}_i),\bb{Y}_i)]$.
	We have that 
	\begin{align*}
		\hat{\rho}[ |\Delta_m(h)|]&= \int_{\mathcal{H}} |\Delta_m(h)| \frac{d\hat{\rho}}{d\pi} d\pi\\
		&\leq \Big( \int_{\mathcal{H}}  |\Delta_m(h)|^2 \, d\pi\Big)^{\frac{1}{2}} \Big( \int_{\mathcal{H}} \Big(\frac{d\hat{\rho}}{d\pi}\Big)^2 \, d\pi \Big)^{\frac{1}{2}}  \,\,\,\,\text{(Cauchy-Schwarz's Inequality)} \\
		&\leq \Big( \epsilon \int_{\mathcal{H}}  |\Delta_m(h)| \, d\pi\Big)^{\frac{1}{2}} \Big( D_{\phi_2-1}(\hat{\rho},\pi)+1 \Big)^{\frac{1}{2}}.
	\end{align*}
	We then apply the Markov's inequality to $\int_{\mathcal{H}}  |\Delta_m(h)| \, d\pi$, and for $\delta \in (0,1)$ obtain that with probability at least $1-\delta$
	\begin{align*}
		\hat{\rho}[ |\Delta_m(h)|]	&\leq \Big( \epsilon  \frac{\E[\int_{\mathcal{H}}  |\Delta_m(h)| \, d\pi]}{\delta} \Big)^{\frac{1}{2}} \Big( D_{\phi_2-1}(\hat{\rho},\pi)+1 \Big)^{\frac{1}{2}} \\
		&\leq \Big( \epsilon  \frac{\pi[ \E[ |\Delta_m(h)|]}{\delta} \Big)^{\frac{1}{2}} \Big( D_{\phi_2-1}(\hat{\rho},\pi)+1 \Big)^{\frac{1}{2}}\,\,\,\,\text{(Fubini's Theorem)}\\
		&\leq \Big( \epsilon \frac{\pi[ \theta(1)]}{\delta} \Big)^{\frac{1}{2}} \Big( D_{\phi_2-1}(\hat{\rho},\pi)+1 \Big)^{\frac{1}{2}} \,\,\, \text{(Definition \ref{mix2})}
	\end{align*}
	The last inequality holds because $\bb{L}$ is a $\theta$-weakly dependent process as proven in Proposition \ref{mmaf_absolute}, and the truncation of this process trough the accuracy level $\epsilon$ means applying a projection function to the process $\bb{L}$ which has $Lip(h)=1$, see Erratum in the arXiv version of \cite{CS19} for a detailed explanation.
	We can then apply the projective-type representation of the $\theta$-coefficients in Remark \ref{mix_rule1}. We then use a union bound to combine the result above with the any-time bound in Corollary \ref{any_time_full}.
\end{proof}

\section*{Acknowledgments}
We thank the \emph{German Research Foundation (DFG)} for the financial support through the research grant GZ:CU 512/1-1. Moreover, we are grateful to the two anonymous referees and the editor for their helpful and insightful comments which considerably improved this work.

\bibliographystyle{plain}
\bibliography{Immaf}

\end{document}